\documentclass[10pt,twoside]{article}

\usepackage[T1]{fontenc}
\usepackage[utf8]{inputenc}
\usepackage[british]{babel}
\usepackage{lmodern}
\usepackage{microtype}
\usepackage[a4paper,margin=27mm,headheight=14pt]{geometry}
\usepackage{amsmath,amssymb,amsfonts,amsthm,mathtools,bm,mathrsfs}
\usepackage{booktabs,array,tabularx,graphicx,multirow}
\usepackage{float}
\usepackage{tikz}
\usepackage{pgfplots}
\pgfplotsset{compat=1.18}
\usetikzlibrary{arrows.meta,positioning,fit,calc}
\usepackage[shortlabels]{enumitem}
\usepackage{xcolor}
\usepackage{natbib}
\usepackage{xurl}
\usepackage{hyperref}
\usepackage[nameinlink,capitalize,noabbrev]{cleveref}
\usepackage{fancyhdr}

\definecolor{actarefblue}{RGB}{0,0,180}
\hypersetup{
  colorlinks=true,
  linkcolor=actarefblue,
  citecolor=actarefblue,
  urlcolor=actarefblue,
  filecolor=actarefblue,
  pdftitle={Common Covariance Geometry and Certification for Brownian Kernel Ladders},
  pdfauthor={Mahdi Mohammadigohari}
}

\theoremstyle{plain}
\newtheorem{theorem}{Theorem}[section]
\newtheorem{proposition}[theorem]{Proposition}
\newtheorem{lemma}[theorem]{Lemma}
\newtheorem{corollary}[theorem]{Corollary}
\theoremstyle{definition}
\newtheorem{definition}[theorem]{Definition}

\newtheorem{examplex}[theorem]{Example}
\theoremstyle{remark}
\newtheorem{remark}[theorem]{Remark}

\makeatletter
\@ifundefined{theHtheorem}
  {}
  {}
\@ifundefined{theHproposition}{}{}
\@ifundefined{theHlemma}{}{}
\@ifundefined{theHcorollary}{}{}
\@ifundefined{theHdefinition}{}{}
\@ifundefined{theHassumption}{}{}
\@ifundefined{theHexamplex}{}{}
\@ifundefined{theHremark}{}{}
\makeatother

\crefname{theorem}{theorem}{theorems}
\Crefname{theorem}{Theorem}{Theorems}
\crefname{proposition}{proposition}{propositions}
\Crefname{proposition}{Proposition}{Propositions}
\crefname{lemma}{lemma}{lemmas}
\Crefname{lemma}{Lemma}{Lemmas}
\crefname{corollary}{corollary}{corollaries}
\Crefname{corollary}{Corollary}{Corollaries}
\crefname{definition}{definition}{definitions}
\Crefname{definition}{Definition}{Definitions}
\crefname{assumption}{assumption}{assumptions}
\Crefname{assumption}{Assumption}{Assumptions}
\crefname{remark}{remark}{remarks}
\Crefname{remark}{Remark}{Remarks}
\crefname{examplex}{example}{examples}
\Crefname{examplex}{Example}{Examples}
\crefname{appendix}{appendix}{appendices}
\Crefname{appendix}{Appendix}{Appendices}

\newenvironment{keywords}
  {\par\smallskip\noindent\textbf{Keywords: }\itshape}
  {\par\medskip}

\newcommand{\R}{\mathbb{R}}
\newcommand{\N}{\mathbb{N}}
\newcommand{\E}{\mathbb{E}}
\newcommand{\Pp}{\mathbb{P}}
\newcommand{\cX}{\mathcal{X}}
\newcommand{\cH}{\mathcal{H}}

\newcommand{\cA}{\mathcal{A}}

\newcommand{\cC}{\mathcal{C}}

\newcommand{\sK}{\mathscr{K}}
\newcommand{\sS}{\mathscr{S}}

\newcommand{\bx}{\bm{x}}
\newcommand{\by}{\bm{y}}
\newcommand{\be}{\bm{e}}

\newcommand{\bv}{\bm{v}}
\newcommand{\bone}{\bm{1}}

\newcommand{\diag}{\operatorname{diag}}
\newcommand{\tr}{\operatorname{tr}}
\newcommand{\ran}{\operatorname{ran}}

\newcommand{\cl}{\operatorname{cl}}
\newcommand{\absconv}{\operatorname{absconv}}

\newcommand{\op}{\operatorname{op}}

\newcommand{\argmax}{\operatorname*{arg\,max}}
\newcommand{\ip}[2]{\left\langle #1,#2\right\rangle}
\newcommand{\norm}[1]{\left\lVert #1\right\rVert}
\newcommand{\abs}[1]{\ensuremath{\left\lvert #1\right\rvert}}
\newcommand{\set}[1]{\left\{#1\right\}}

\newcommand{\psd}{\mathbb{S}_{+}}
\newcommand{\kB}{k^{(\mathrm B)}}
\newcommand{\HB}{H_{\mathrm B}}
\newcommand{\Chat}{\widehat{\mathcal C}}
\newcommand{\Ghat}{\widehat{\mathfrak G}}
\newcommand{\Rhat}{\widehat{\mathfrak R}}
\newcommand{\eps}{\varepsilon}

\allowdisplaybreaks
\setlist[itemize]{leftmargin=*,topsep=3pt,itemsep=2pt}
\setlist[enumerate]{leftmargin=*,topsep=3pt,itemsep=2pt}
\newcommand{\DevMeanConvGain}{+0.191}

\newcommand{\ConfMeanConvGain}{+0.163}

\newcommand{\ConfPairedWins}{7}

\title{Common Covariance Geometry and Certification for Brownian Kernel Ladders}

\author{Mahdi Mohammadigohari\\
\small Faculty of Engineering, Free University of Bozen--Bolzano\\
\small Via Bruno Buozzi 1, 39100 Bolzano, Italy\\
\small \texttt{Mahdi.Mohammadigohari@gmail.com}}
\date{}

\begin{document}
\maketitle

\begin{abstract}
A representation-adaptive kernel class produces, on a fixed sample, a union of reproducing-kernel Hilbert-space ellipsoids rather than one ellipsoid. We introduce the minimum-trace common covariance that dominates the unrestricted empirical union generated by Brownian kernel ladders and develop its statistical, approximation-theoretic, and computational consequences. The covariance value admits exact formulations through absolutely two-summing operators and covariance-dominated multipliers, and it yields a universal Gaussian-complexity bound. A closed last-layer Dirac-trace reduction and a signed Brownian threshold representation convert the generic covariance problem into threshold, graph-coarea, and effective-resistance geometry. These tools give deterministic depth laws, conditional Gaussian reverses, random-design and perturbation transfers, and an exact empirical Kolmogorov-width formula whose leading covariance eigenspaces approximate the complete adaptive ball simultaneously. Finite contact, active semidefinite programs, verified separation, and a convex resistance-design relaxation provide complementary lower and upper certificates. A finite covariance-indexed Brownian path on frozen representations illustrates the distinction between successful covariance certification and predictive selection: all reported path certificates succeed, whereas the locked predictive study misses one predeclared aggregate criterion. The paper thereby identifies one finite-dimensional covariance object linking unrestricted kernel adaptation, Gaussian geometry, common subspaces, and certifiable computation.
\end{abstract}

\begin{keywords}
Brownian kernel ladders; adaptive RKHS unions; covariance domination; Gaussian complexity; approximation widths; semidefinite certification
\end{keywords}

\noindent\textbf{2020 Mathematics Subject Classification.}
Primary 46E22; Secondary 46B28, 47B10, 60G15, 65K10, 68Q32.

\tableofcontents
\medskip

\section{Introduction and Main Results}\label{sec:introduction}

\subsection{The unrestricted adaptive-kernel problem}

A fixed reproducing kernel Hilbert space (RKHS) gives one Gram matrix and one ellipsoid of attainable evaluations on a finite sample.  A learned metric, kernel mixture, recursive feature map, or representation-dependent kernel is different: the covariance may be selected together with the predictor.  The corresponding empirical class is therefore a union of ellipsoids.  The main problem of this paper is to replace that union by one positive semidefinite covariance in a way that remains sharp enough for statistical analysis, simultaneous approximation, and numerical certification.

Brownian kernel ladders (BKLs) provide a structured instance of this problem.  The foundational construction recursively averages Brownian pullback kernels to produce a depth-indexed hierarchy of integral RKHSs \cite{mohammadigohari2026bkl}; integration of reproducing kernels is the immediate functional-analytic precursor \cite{hotz2012representation}.  Related Brownian projection and recursive Hilbert-space constructions include learned shallow Brownian features, neural Hilbert ladders, and reproducing-kernel chains \cite{follain2025enhanced,chen2024neural,heeringa2025chains}.  The present paper studies the complementary unrestricted empirical-union problem: the canonical ladder representation may vary inside the Gaussian supremum.  This differs from the controlled-family statistical results in the foundational paper, where one realised ladder or a finite dictionary is fixed independently of the estimation sample.  Variation Brownian kernel ladders (VBKLs) provide a neighbouring path-atomic formulation in which nonlinear recursive paths are generated first and signed-measure superposition is introduced only at the outermost level \cite{mohammadigohari2026vbkl}.  That placement of superposition connects VBKLs to neural variation-space and reproducing-kernel Banach-space viewpoints \cite{bach2017convex,bartolucci2023rkbs}.  The common-covariance results below concern the canonical integral-RKHS BKL family; they are not asserted to transfer automatically to the full VBKL variation hull.

Fix a sample $\bx=(x_1,\ldots,x_n)$.  Let $\sK_L(\bx)\subseteq\psd^n$ be the closed family of depth-$L$ empirical BKL Gram matrices.  The central quantity is
\begin{equation}\label{eq:intro-tau}
\tau_L(\bx)
:=
\min\bigl\{\tr N:N\succeq K\ \text{for every }K\in\sK_L(\bx)\bigr\}.
\end{equation}
The same minimiser has three roles.  Its trace controls the Gaussian support of the full adaptive ball, its spectrum yields a target-independent empirical approximation space, and its active constraints support finite lower and upper certification schemes.  The paper develops these roles in a single theorem programme.

\subsection{Main contributions}

\paragraph{R1: empirical covariance geometry and exact factorisation.}
\Cref{thm:trace-body} identifies the empirical BKL ball with the closure of a union of RKHS trace ellipsoids.  The minimum in \eqref{eq:intro-tau} exists and separates the largest single-kernel trace from the additional inflation required for simultaneous domination; see \Cref{def:profile,prop:profile-existence}.  The exact identity in \Cref{thm:pi2} shows that $\tau_L(\bx)^{1/2}$ is the absolutely two-summing norm of the adaptive evaluation operator.  This is not merely a generic factorisation upper bound: it is the value of the robust minimum-trace covariance problem at finite resolution.

\paragraph{R2: robust multipliers and Gaussian sharpness.}
\Cref{thm:robust-minimax} gives an exact covariance-dominated multiplier formulation of $\tau_L$.  Replacing the worst centred multiplier law by a standard Gaussian law incurs the defect quantified in \Cref{thm:cotype-reverse,cor:gaussian-defect}.  A bounded defect means that the common-covariance certificate is tight up to constants; it does not mean that the unrestricted class has a fast statistical rate.  Orthogonal and high-dimensional spherical samples exhibit precisely this distinction.

\paragraph{R3: Brownian Dirac, threshold, and graph reductions.}
The BKL-specific step is the closed last-layer Dirac-trace reduction of \Cref{thm:dirac-reduction}.  It removes the final mixing measure and reduces the support function to Brownian matrices indexed by previous-layer traces.  The signed layer-cake identity in \Cref{prop:threshold-certificate} then converts these matrices into integrated positive and negative threshold vectors.  Recursive threshold profiles and the graph-coarea majorant in \Cref{thm:recursive-threshold,thm:graph-majorant} connect covariance domination to set systems, ordered and hierarchical traces, and effective resistance.

\paragraph{R4: regimes, common subspaces, and statistical consequences.}
The deterministic and random results identify several phases of unrestricted adaptation.  Orthogonal samples satisfy an explicit depth law \Cref{thm:orthogonal}; well-conditioned and spherical designs have bounded Gaussian defect but retain the slow sign-rich full-class rate \Cref{thm:well-conditioned-richness,cor:spherical-random-richness}; repeated, categorical, hierarchical, ordered, and perturbative models admit sharper conclusions under explicit occupancy, threshold, conditioning, or margin hypotheses.  Independently, \Cref{thm:kolmogorov-width} gives an exact empirical Kolmogorov-width formula.  The leading eigenspaces of any common covariance provide one sample-dependent subspace that approximates every function in the adaptive ball, and the hierarchical results yield matching spectral compression mechanisms.

\paragraph{R5: finite certification and a computational illustration.}
\Cref{thm:sparse-contact} proves that finitely many Brownian contact matrices determine the exact empirical optimum, although the result is existential.  Active semidefinite programs give certified lower bounds, verified global separation gives upper bounds, and the resistance-design problem in \Cref{thm:resistance-design} supplies a conservative convex upper certificate with a computable optimality gap.  The finite covariance path in \Cref{sec:finite-path-example,sec:numerics} illustrates how the same objects can be computed and audited.  All $45$ selected-path covariance certificates succeed, while the locked predictive study records $7/15$ paired wins or ties against a predeclared requirement of $8/15$.  This negative gate is retained because covariance domination and predictive superiority are logically different claims.

\subsection{Relation to previous work}

Classical RKHS theory supplies the evaluation geometry used throughout \cite{aronszajn1950theory}.  Brownian and distance-induced kernels belong to the theory of negative type and positive-definite functions \cite{schoenberg1938metric,berg1984harmonic}; distance covariance and the equivalence of distance- and RKHS-based statistics give related modern formulations \cite{lyons2013distance,sejdinovic2013equivalence}.  Integral mixtures of kernels provide the direct analytic lineage for canonical BKL layers \cite{hotz2012representation}.  At the level of ambient function spaces, an RKHS cannot in general contain every continuous function on a compact metric space, and intermediate RKHS embeddings are tied to factorization and type--cotype properties \cite{steinwart2024continuous,scholpple2026spaces}.

Recursive function-space constructions differ according to where kernel averaging, nonlinear composition, and linear superposition are introduced.  Learned Brownian projections, neural Hilbert ladders, and reproducing-kernel chains are close Hilbertian comparators \cite{follain2025enhanced,chen2024neural,heeringa2025chains}.  Convex neural variation spaces and reproducing-kernel Banach spaces provide complementary non-Hilbertian viewpoints \cite{bach2017convex,bartolucci2023rkbs}.  The foundational BKL and path-atomic VBKL constructions \cite{mohammadigohari2026bkl,mohammadigohari2026vbkl} are distinguished here because the common covariance is formed for the unrestricted empirical union of canonical integral-RKHS ladders.

Kernel adaptation has also been studied through hyperkernels, semidefinite kernel learning, and multiple-kernel learning \cite{ong2005hyperkernels,lanckriet2004kernel,micchelli2005learning,sonnenburg2006mkl,rakotomamonjy2008simplemkl,gonen2011mkl}.  Statistical analyses of learned kernels include support-vector and generalization bounds for data-dependent kernel choice \cite{srebro2006learnedkernels,cortes2010kernels}, while representer questions for learned regularizers are treated in \cite{argyriou2009representer}.  Deep-kernel and finite learnable-kernel models provide parametrised alternatives \cite{wilson2016deep,wilson2016stochastic,ji2024finite}.  These works optimize finite or parametrised families; the present object is the complete recursive empirical BKL union.

The operator-geometric backbone uses absolutely summing operators and Pietsch factorisation \cite{pietsch1980operator,diestel1995absolutely}, together with Gaussian-process and convex-body geometry \cite{ledoux1991probability,pisier1999volume}.  The discrepancy-theoretic use of factorization norms in \cite{matousek2020factorization} is a related finite-dimensional precedent.  The contribution here is the exact identification of the BKL common-covariance value with an adaptive evaluation operator and with a covariance-dominated multiplier problem.

The simultaneous approximation results belong to the classical theory of widths \cite{pinkus1985nwidths} and are related to reduced-basis and greedy constructions \cite{prudhomme2002reduced,binev2011greedy,devore2013greedy}; a broad current account of sampling and approximation complexity is given in \cite{kriegullrich2026sampling}.  Data-dependent kernel subspaces obtained through Nystr{\"o}m and leverage-score methods provide complementary computational constructions \cite{williams2001nystrom,dellavecchia2024nystrom,chatalic2025leverage}.  Our quantifier is different: one sample-dependent covariance eigenspace controls the complete adaptive union rather than one fixed kernel or one target.

The certification layer draws on convex and semidefinite optimization \cite{boyd2004convex}, classical optimal-design and equivalence principles \cite{kiefer1959optimum,kieferwolfowitz1960equivalence}, modern computational optimal design \cite{huan2024oed}, and semi-infinite exchange methods \cite{blankenship1976infinitely,hettich1993semiinfinite}.  The sparse update and computable gap are related to Frank--Wolfe methodology \cite{jaggi2013frankwolfe}.  Graph upper bounds use effective resistance and spectral sparsification \cite{ghosh2008resistance,spielmansrivastava2011sparsification}, while the random-design transfers use nonasymptotic singular-value and matrix-concentration tools \cite{mendelsonpajor2006singular,vershynin2012nonasymptotic,tropp2015matrix}.

The novelty claimed here is therefore not the invention of these generic theories.  It is their exact empirical realization for the unrestricted BKL family and the Brownian Dirac, threshold, graph, phase-law, common-subspace, and certification consequences built on that realization.

\subsection{Organization}

\Cref{sec:setup} formulates the empirical BKL covariance family and the minimum-trace profile, including a finite motivating path.  \Cref{sec:2summing,sec:minimax} develop the operator and multiplier identities.  Brownian Dirac, threshold, and graph geometry appear in \Cref{sec:brownian}.  Deterministic, random, and perturbative regimes are collected in \Cref{sec:regimes,sec:richness}.  \Cref{sec:approximation} treats common approximation spaces and learning consequences.  \Cref{sec:certification} develops the certification algorithms, and \Cref{sec:numerics} gives the finite computational study.  Detailed proofs are integrated in the appendices.

\section{Empirical BKL Covariance Geometry}\label{sec:setup}

\subsection{Notation and logical conventions}

The sample is $\bx=(x_1,\ldots,x_n)$ and all unrestricted covariance matrices act on $\R^n$.  A matrix inequality is a Loewner inequality.  Symbols indexed by the depth $L$ belong to the unrestricted recursive BKL family; symbols indexed by a path parameter $s$ belong to the finite covariance path.  The principal quantities are summarized in \Cref{tab:focm-notation}.

\begin{table}[t]
\centering
\caption{Principal finite-sample objects.}
\label{tab:focm-notation}
\small
\begin{tabularx}{\textwidth}{p{0.21\textwidth}p{0.24\textwidth}X}
\toprule
Role & Objects & Meaning \\
\midrule
Adaptive family & $\sK_L(\bx)$, $\cA_L(\bx)$, $h_{L,\bx}$ & closed Gram family, union of evaluation ellipsoids, and its support function \\
Common covariance & $\tau_L$, $M_L$, $S_L$, $\Lambda_L$, $\Gamma_L^{\mathrm{dom}}$ & minimum trace, largest single-kernel trace, diagonal scale, normalized profile, and domination inflation \\
Gaussian geometry & $W_{G,1}$, $W_{G,2}$, $\chi_L$ & first and second Gaussian support moments and certificate defect \\
Approximation & $d_{m,L}$, $\omega_{m,L}$ & empirical Kolmogorov width and rank-$m$ covariance profile \\
Certification & $\tau(\mathcal F)$, $\operatorname{sep}_L$, $\Psi_{L,\mathcal E}$, $g_{\mathcal E}$ & active lower value, separation, resistance relaxation, and computable gap \\
Finite path & $\Sigma_s$, $K_s$, $\tau_{\mathcal S}$, $\widehat N$ & covariance path, path Gram matrices, finite envelope, and repaired covariance \\
\bottomrule
\end{tabularx}
\end{table}

We distinguish exact identities and universal upper bounds from conditional reverses, existential finite reductions, conservative relaxations, and implemented finite certificates.  In particular, $\Lambda_L$ is not a pure model-selection penalty, $\chi_L$ measures certificate tightness rather than a fast learning rate, local witness optimization is not verified global separation, and the common eigenspaces are initially empirical and transductive.

\subsection{Canonical Brownian Kernel Ladders}

The scalar two-sided Brownian kernel is
\begin{equation}\label{eq:brownian-kernel}
\kB(s,t):=\frac12\bigl(|s|+|t|-|s-t|\bigr),\qquad s,t\in\R.
\end{equation}

Let $\cX\subset\R^d$ be compact and put
\begin{align}
V_{\cX}:=\operatorname{span}(\cX)\subseteq\R^d.
\notag
\end{align}
If $V_{\cX}=\{0\}$, every BKL function and every empirical Gram matrix is zero.  Hence all substantive covariance values, Gaussian support moments, complexities, and approximation widths vanish; ratio-type quantities retain the explicit zero-case conventions stated below.  Every subsequent result is then immediate.  Henceforth assume $V_{\cX}\ne\{0\}$.
The first level is the RKHS of the restricted linear kernel
\begin{align}
k^{(0)}(x,x'):=\langle x,x'\rangle_{\R^d},
\qquad x,x'\in\cX.
\notag
\end{align}
Equivalently,
\begin{align}
\cH^{(1)}
&=\set{x\mapsto\langle w,x\rangle:w\in V_{\cX}},
\notag\\
\norm{x\mapsto\langle w,x\rangle}_{\cH^{(1)}}
&=\norm w_2,
\qquad w\in V_{\cX}.
\notag
\end{align}
For an arbitrary $\widetilde w\in\R^d$, the same function has the quotient characterization
\begin{align}
\norm{x\mapsto\langle\widetilde w,x\rangle}_{\cH^{(1)}}
=
\inf\set{\norm v_2:\langle v,x\rangle=\langle\widetilde w,x\rangle\text{ for all }x\in\cX}
=
\norm{P_{V_{\cX}}\widetilde w}_2,
\notag
\end{align}
where $P_{V_{\cX}}$ is the orthogonal projection onto $V_{\cX}$.  Thus the norm is independent of the chosen ambient coefficient vector.
Given an RKHS $H$ of functions on $\cX$ and a Borel probability measure $\mu$ on its unit sphere $S_1(H)$, define the integral Brownian kernel
\begin{equation}\label{eq:integral-kernel}
k[H,\mu](x,x')
:=\int_{S_1(H)}\kB(u(x),u(x'))\,d\mu(u).
\end{equation}
A canonical depth-$L$ ladder is a sequence of RKHSs
\begin{align}
\cH^{(1)},\cH^{(2)}_{\boldsymbol\mu},\ldots,
\cH^{(L)}_{\boldsymbol\mu},
\qquad
\cH^{(\ell+1)}_{\boldsymbol\mu}
=H_{k[\cH^{(\ell)}_{\boldsymbol\mu},\mu_\ell]},
\notag
\end{align}
where $\boldsymbol\mu=(\mu_1,\ldots,\mu_{L-1})$.  The full top-layer space and its canonical complexity are
\begin{align}
\mathfrak H_L
&:=\bigcup_{\boldsymbol\mu}\cH^{(L)}_{\boldsymbol\mu},\label{eq:full-space}\\
\Chat_L(f)
&:=\inf_{\boldsymbol\mu:\,f\in\cH^{(L)}_{\boldsymbol\mu}}
\norm{f}_{\cH^{(L)}_{\boldsymbol\mu}}.\label{eq:canonical-complexity}
\end{align}
We write
\begin{align}
B_L(r):=\set{f\in\mathfrak H_L:\Chat_L(f)\le r}.
\notag
\end{align}
The functional $\Chat_L$ is absolutely homogeneous.  The zero function belongs to every top-layer RKHS with norm zero, so $\Chat_L(0)=0$.  For $c\ne0$, homogeneity of every top-layer RKHS norm gives
$\Chat_L(cf)\le |c|\Chat_L(f)$, and applying this inequality to $f=c^{-1}(cf)$ gives the reverse inequality.  Hence
\begin{equation}\label{eq:ball-scaling}
\Chat_L(cf)=|c|\Chat_L(f),
\qquad
B_L(r)=rB_L(1)
\quad(r>0).
\end{equation}
For notational convenience, we also use $B_1(1)$ for the unit ball of the linear RKHS.

The Brownian diagonal identity $\kB(t,t)=|t|$ gives the recursive evaluation scale
\begin{equation}\label{eq:rho-def}
\rho_L(x):=\norm{x}_2^{\,2^{-(L-1)}},
\qquad L\ge1.
\end{equation}
The standard RKHS evaluation inequality and induction yield
\begin{equation}\label{eq:pointwise-scale}
|f(x)|\le \Chat_L(f)\rho_L(x),
\qquad
f\in\mathfrak H_L.
\end{equation}
The same induction through the Brownian kernel metric gives the recursive H\"older estimate
\begin{equation}\label{eq:holder-scale}
|f(x)-f(x')|
\le
\Chat_L(f)\norm{x-x'}_2^{\,2^{-(L-1)}},
\qquad
f\in\mathfrak H_L.
\end{equation}
For a top-layer kernel $k$ arising from any depth-$L$ ladder,
\begin{equation}\label{eq:diagonal-scale}
k(x,x)\le \rho_L(x)^2
=\norm{x}_2^{\,2^{-(L-2)}}.
\end{equation}
These estimates are proved in the BKL framework by recursively applying the RKHS pointwise bound and Brownian homogeneity~\cite{mohammadigohari2026bkl}.  A proof is given in Appendix~\ref{app:trace}.

\subsection{Empirical Kernel and Trace Families}

Fix a sample
\begin{align}
\bx=(x_1,\ldots,x_n)\in\cX^n.
\notag
\end{align}
For each canonical depth-$L$ ladder, let
\begin{align}
K_{\boldsymbol\mu}(\bx)
:=\bigl[k_{\boldsymbol\mu}^{(L-1)}(x_i,x_j)\bigr]_{i,j=1}^n
\in\psd^n
\notag
\end{align}
be its top-layer Gram matrix.  Define the empirical covariance family
\begin{equation}\label{eq:kernel-family}
\sK_L(\bx)
:=\cl\set{K_{\boldsymbol\mu}(\bx):\boldsymbol\mu\text{ is a canonical depth-$L$ ladder}}.
\end{equation}
This family is compact.  Every member is positive semidefinite, and \eqref{eq:diagonal-scale} gives $K_{ii}\le \rho_L(x_i)^2$.  The positive-semidefinite Cauchy--Schwarz inequality gives
\begin{equation*}
|K_{ij}|^2\le K_{ii}K_{jj}\le \rho_L(x_i)^2\rho_L(x_j)^2.
\end{equation*}
Thus the preclosure in \eqref{eq:kernel-family} is bounded in the finite-dimensional space of symmetric matrices, and its closure is compact.

For $K\succeq0$, define its unit RKHS trace ellipsoid
\begin{equation}\label{eq:ellipsoid}
E(K)
:=K^{1/2}B_2^n
=\set{a\in\ran K:a^\top K^\dagger a\le1},
\end{equation}
where $K^\dagger$ is the Moore--Penrose pseudoinverse.  Let
\begin{equation}\label{eq:trace-body}
\cA_L(\bx)
:=\cl\set{(f(x_1),\ldots,f(x_n)):f\in B_L(1)}.
\end{equation}

\begin{theorem}[Empirical covariance-body representation]\label{thm:trace-body}
For every $L\ge2$ and every sample $\bx$,
\begin{equation}\label{eq:trace-union}
\cA_L(\bx)
=\cl\bigcup_{K\in\sK_L(\bx)}E(K).
\end{equation}
Consequently, the support function
\begin{align}
h_{L,\bx}(z):=\sup_{a\in\cA_L(\bx)}z^\top a
\notag
\end{align}
satisfies
\begin{equation}\label{eq:support-covariance}
h_{L,\bx}(z)^2
=\sup_{K\in\sK_L(\bx)}z^\top Kz,
\qquad z\in\R^n.
\end{equation}
Moreover,
\begin{equation}\label{eq:empirical-width-support}
\Ghat_{\bx}(B_L(r))
:=\E_G\sup_{f\in B_L(r)}\frac1n\sum_{i=1}^nG_i f(x_i)
=\frac{r}{n}\E_G h_{L,\bx}(G),
\end{equation}
where $G\sim\mathcal N(0,I_n)$.
\end{theorem}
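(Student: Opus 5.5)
The plan is to compare, ladder by ladder, the evaluation image of the top-layer unit ball with the ellipsoid of its Gram matrix, and then pass to closures. Throughout I read $K_{\boldsymbol\mu}(\bx)$ as the Gram matrix of the reproducing kernel of $\cH^{(L)}_{\boldsymbol\mu}$.

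\textbf{Step 1 (one fixed RKHS).} Fix a canonical depth-$L$ ladder $\boldsymbol\mu$ with top-layer kernel $k$, RKHS $H=\cH^{(L)}_{\boldsymbol\mu}$ and Gram matrix $K=K_{\boldsymbol\mu}(\bx)$. Let $S:H\to\R^n$ be the sampling operator $Sf=(f(x_i))_i$. Then $SS^*=K$, because $S^*e_i=k(\cdot,x_i)$. Hence
\[
S(B_H)=(SS^*)^{1/2}B_2^n=E(K).
\]
Concretely, $S$ restricted to $\operatorname{span}\{k(\cdot,x_i)\}$ is a partial isometry composed with $K^{1/2}$, and the orthogonal complement of that span is mapped to zero. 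This is standard RKHS interpolation: the minimal-norm interpolant of $a\in\ran K$ has squared norm $a^\top K^\dagger a$.

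\textbf{Step 2 (union over ladders).} The first task is to show that $f\in B_L(1)$ implies $(f(x_i))_i\in\cl\bigcup_{\boldsymbol\mu}E(K_{\boldsymbol\mu})$. By definition of the infimum in $\Chat_L$, for each $\eps>0$ there is a ladder $\boldsymbol\mu$ with $\norm f_{\cH^{(L)}_{\boldsymbol\mu}}\le1+\eps$. Step 1 then gives $Sf\in(1+\eps)E(K_{\boldsymbol\mu})$. Letting $\eps\to0$ and using that the ellipsoids are uniformly bounded (by the diagonal bound \eqref{eq:diagonal-scale}) places $Sf$ in the closure of the union.

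Conversely, take $a\in E(K_{\boldsymbol\mu})$. Step 1 gives $f\in\cH^{(L)}_{\boldsymbol\mu}$ with norm at most $1$ and $Sf=a$, so $\Chat_L(f)\le1$. Hence the union of the $E(K_{\boldsymbol\mu})$ over canonical ladders, unclosed, is contained in $\{Sf : f\in B_L(1)\}$.

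\textbf{Step 3 (closure of the Gram family).} The remaining point is to replace $\{K_{\boldsymbol\mu}\}$ by its closure $\sK_L(\bx)$. This is the only delicate step, since $K\mapsto E(K)$ is not continuous in Hausdorff distance when the rank drops. I will avoid that issue by working with support functions. Since $h_{E(K)}(z)=\sqrt{z^\top Kz}$ and this is continuous in $K$, one has
\[
\sup_{K\in\sK_L}\sqrt{z^\top Kz}=\sup_{\boldsymbol\mu}\sqrt{z^\top K_{\boldsymbol\mu}z}.
\]
Two compact convex-hull closures with equal support functions agree, but the union need not be convex, so support functions alone do not give set equality. For set equality I argue directly. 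If $K_m\to K$ and $a=K^{1/2}b$ with $\norm b\le1$, then $a_m:=K_m^{1/2}b\in E(K_m)$ and $a_m\to a$, by continuity of the matrix square root on $\psd^n$. Hence $E(K)$ lies in the closure of the union over the $\boldsymbol\mu$. Combined with Step 2, this yields \eqref{eq:trace-union}.

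\textbf{Step 4 (support function and Gaussian width).} The support function of a closure equals that of the set itself. The support function of a union is the supremum of the individual support functions, which gives \eqref{eq:support-covariance}. For \eqref{eq:empirical-width-support}, the supremum of the linear functional $\frac1n\sum_i G_if(x_i)$ over $f\in B_L(r)$ equals $\frac1n h_{rS(B_L(1))}(G)$. By \eqref{eq:ball-scaling} this is $\frac rn h_{L,\bx}(G)$, since the supremum is unchanged under closure. Taking expectation is legitimate because $h_{L,\bx}(G)\le\norm G_2\sup_{K}\norm{K}^{1/2}$ is integrable.
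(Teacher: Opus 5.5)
Your proposal is correct and follows the paper's own proof step for step: the sampling-operator identity $S(B_H)=K^{1/2}B_2^n$ for one ladder, the $(1+\eps)$-near-optimal ladder from the definition of $\Chat_L$, continuity of the matrix square root to pass from realized Gram matrices to the closed family $\sK_L(\bx)$, and invariance of a linear supremum under closure for the support function and the Gaussian width. Your remark that $h_{L,\bx}(G)$ is integrable is a small detail that the paper leaves implicit.
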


The closure in \eqref{eq:trace-union} is essential only because the infimum in \eqref{eq:canonical-complexity} need not be attained.  The theorem reduces the complete function-space union to a robust finite-dimensional covariance problem.

\subsection{The Minimum-Trace Common Covariance}\label{sec:profile}

Set
\begin{equation}\label{eq:d-and-S}
d_{L,i}:=\rho_L(x_i)^2
=\norm{x_i}_2^{\,2^{-(L-2)}},
\qquad
S_L(\bx):=\sum_{i=1}^n d_{L,i}.
\end{equation}
Coordinates with $d_{L,i}=0$ can be deleted, because \eqref{eq:pointwise-scale} forces every $f\in\mathfrak H_L$ to vanish at such $x_i$.

\begin{definition}[Common-covariance profiles]\label{def:profile}
The depth-$L$ covariance-envelope value, maximum single-kernel trace in the closed empirical family, normalized common-covariance profile, and common-domination inflation are
\begin{align}
\tau_L(\bx)
&:=\min_{N\succeq0}
\set{\tr N:N\succeq K\ \text{for every }K\in\sK_L(\bx)},
\label{eq:tau-def}\\
M_L(\bx)
&:=\max_{K\in\sK_L(\bx)}\tr K,
\label{eq:M-def}\\
\Lambda_L(\bx)
&:=
\begin{cases}
\bigl(\dfrac{\tau_L(\bx)}{S_L(\bx)}\bigr)^{1/2},&S_L(\bx)>0,\\[2mm]
0,&S_L(\bx)=0,
\end{cases}
\label{eq:lambda-def}\\
\Gamma_L^{\mathrm{dom}}(\bx)
&:=
\begin{cases}
\bigl(\dfrac{\tau_L(\bx)}{M_L(\bx)}\bigr)^{1/2},&M_L(\bx)>0,\\[2mm]
1,&M_L(\bx)=0.
\end{cases}
\label{eq:gamma-dom-def}
\end{align}
\end{definition}

The matrix $N$ is a single covariance whose ellipsoid contains the unit trace ellipsoid of every admissible ladder.  To see this, if $K\preceq N$, then
$h_{E(K)}(z)=\sqrt{z^\top Kz}\le\sqrt{z^\top Nz}=h_{E(N)}(z)$ for every $z$; domination of support functions of closed convex sets implies $E(K)\subseteq E(N)$.  The trace objective is tailored to Gaussian average radius:
\begin{align}
\E_G\sqrt{G^\top NG}\le\sqrt{\tr N}.
\notag
\end{align}

\begin{proposition}[Existence and scale decomposition]\label{prop:profile-existence}
The minimum in \eqref{eq:tau-def} and the maximum in \eqref{eq:M-def} are attained.  Moreover,
\begin{align}
0&\le M_L(\bx)\le S_L(\bx),
\notag\\
M_L(\bx)&\le\tau_L(\bx)\le nM_L(\bx)\le nS_L(\bx),
\label{eq:profile-universal}\\
0&\le\Lambda_L(\bx)\le\sqrt n,
\qquad
1\le\Gamma_L^{\mathrm{dom}}(\bx)\le\sqrt n.
\notag
\end{align}
If $S_L(\bx)>0$, then
\begin{equation}\label{eq:lambda-gamma-decomposition}
\Lambda_L(\bx)
=
\Gamma_L^{\mathrm{dom}}(\bx)
\bigl(\frac{M_L(\bx)}{S_L(\bx)}\bigr)^{1/2}.
\end{equation}
\end{proposition}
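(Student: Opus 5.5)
The plan rests on two facts: $\sK_L(\bx)$ is compact and contained in $\psd^n$, and its members have diagonals bounded by $d_{L,i}$ (both established after \eqref{eq:kernel-family}).

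\textbf{Attainment.} The map $K\mapsto\tr K$ is continuous, so the maximum $M_L(\bx)$ is attained on the compact, nonempty family. For $\tau_L$, the feasible set $\mathcal N:=\set{N\succeq0:N\succeq K\ \forall K\in\sK_L(\bx)}$ is an intersection of closed half-spaces in the Loewner sense, hence closed. It is nonempty: since $\lambda_{\max}(K)\le\tr K\le M_L(\bx)$, the matrix $M_L(\bx)I_n$ is feasible. The objective $\tr N$ is continuous. On $\mathcal N$ the sublevel set $\set{N\in\mathcal N:\tr N\le nM_L(\bx)}$ is bounded, because for $N\succeq0$ we have $\norm{N}_{\op}\le\tr N$. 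It is also closed, so it is compact and the minimum is attained. The same feasible point gives $\tau_L\le\tr(M_LI_n)=nM_L$.

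\textbf{Inequalities.} For $0\le M_L\le S_L$: each $K\in\sK_L$ is PSD, so $\tr K\ge0$, and $K_{ii}\le d_{L,i}$ gives $\tr K\le S_L$. For $M_L\le\tau_L$: let $N$ be any feasible matrix and $K^\ast$ the maximiser of the trace. Then $N-K^\ast\succeq0$, so $\tr N\ge\tr K^\ast=M_L$. Combining these, $M_L\le\tau_L\le nM_L\le nS_L$.

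\textbf{Profile bounds and the decomposition.} When $S_L>0$, we have $0\le\tau_L/S_L\le n$, hence $0\le\Lambda_L\le\sqrt n$; the case $S_L=0$ gives $\Lambda_L=0$ directly. When $M_L>0$, the ratio $\tau_L/M_L$ lies in $[1,n]$, so $1\le\Gamma_L^{\mathrm{dom}}\le\sqrt n$; the case $M_L=0$ gives $\Gamma_L^{\mathrm{dom}}=1$ by convention.

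For \eqref{eq:lambda-gamma-decomposition}, assume $S_L>0$. If $M_L>0$, the identity is the algebraic factorisation $\tau_L/S_L=(\tau_L/M_L)(M_L/S_L)$, after taking square roots. If $M_L=0$, then $\tau_L\le nM_L=0$ forces $\tau_L=0$. Both sides then vanish, since $\Lambda_L=0$ and $\Gamma_L^{\mathrm{dom}}\cdot 0=0$.

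\textbf{Main obstacle.} The only delicate point is that the set of feasible $N$ is unbounded, so compactness must be recovered through a bounded sublevel set. I would also handle the $M_L=0$ edge case of the decomposition explicitly, as above.
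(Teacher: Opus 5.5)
Your proposal is correct and follows the paper's proof essentially step by step. Like the paper, you bound traces by the diagonal envelope and use $M_L(\bx)I_n$ as the feasible point that gives $\tau_L\le nM_L$. You also recover compactness from a trace-bounded set of positive semidefinite matrices; the paper does this via a minimizing sequence and a Frobenius bound rather than your sublevel set and operator-norm bound, which amounts to the same argument. Your case handling for $S_L(\bx)=0$ and $M_L(\bx)=0$ matches the paper's.
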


\begin{proof}
Compactness of $\sK_L(\bx)$ and continuity of the trace imply that the maximum in \eqref{eq:M-def} is attained.  Equation \eqref{eq:diagonal-scale} gives
\begin{align}
\tr K
=\sum_{i=1}^nK_{ii}
\le\sum_{i=1}^nd_{L,i}
=S_L(\bx),
\notag
\end{align}
so $M_L(\bx)\le S_L(\bx)$.

Every feasible common covariance $N$ dominates each $K\in\sK_L(\bx)$, and therefore
\begin{align}
\tr N\ge\tr K.
\notag
\end{align}
Taking the maximum over $K$ and then the minimum over $N$ proves
$M_L(\bx)\le\tau_L(\bx)$.

For the reverse comparison, let $M:=M_L(\bx)$.  Every $K\in\sK_L(\bx)$ is positive semidefinite, so
\begin{align}
\lambda_{\max}(K)
\le\tr K
\le M.
\notag
\end{align}
Hence $K\preceq MI_n$ for every $K$, which shows that $MI_n$ is feasible and
\begin{align}
\tau_L(\bx)\le\tr(MI_n)=nM_L(\bx).
\notag
\end{align}
This proves the middle chain in \eqref{eq:profile-universal}, including the earlier universal bound by $nS_L(\bx)$.

It remains to prove attainment of the minimum.  Let $(N_j)$ be a minimizing sequence.  Because $M_L(\bx)I_n$ is feasible, after discarding finitely many terms we may assume
\begin{align}
\tr N_j\le nM_L(\bx)+1.
\notag
\end{align}
The set of positive semidefinite matrices with trace at most $nM_L(\bx)+1$ is closed.  If $\lambda_1(N),\ldots,\lambda_n(N)$ are the nonnegative eigenvalues of such a matrix, then
\begin{align}
\norm N_{\mathrm F}^2
=\sum_i\lambda_i(N)^2
\le\bigl(\sum_i\lambda_i(N)\bigr)^2
=\bigl(\tr N\bigr)^2,
\notag
\end{align}
so the set is bounded and therefore compact.  Pass to a convergent subsequence $N_{j_k}\to N_*$.  For each fixed $K\in\sK_L(\bx)$, closedness of the positive-semidefinite cone gives
$N_*-K\succeq0$.  Thus $N_*$ is feasible, and continuity of the trace gives $\tr N_*=\tau_L(\bx)$.

If $S_L(\bx)=0$, then \eqref{eq:diagonal-scale} and positive semidefiniteness force every empirical kernel to be zero.  Hence $M_L(\bx)=\tau_L(\bx)=\Lambda_L(\bx)=0$, while $\Gamma_L^{\mathrm{dom}}(\bx)=1$ by convention.  Suppose $S_L(\bx)>0$.  If $M_L(\bx)=0$, the preceding argument again gives $\tau_L(\bx)=0$, and \eqref{eq:lambda-gamma-decomposition} holds.  If $M_L(\bx)>0$, divide the inequalities
$M_L\le\tau_L\le nM_L$ to obtain
$1\le\Gamma_L^{\mathrm{dom}}\le\sqrt n$.  The bound for $\Lambda_L$ follows from $\tau_L\le nS_L$, and \eqref{eq:lambda-gamma-decomposition} follows by multiplying the two defining ratios.
\end{proof}

\begin{theorem}[Profile-dependent Gaussian complexity]\label{thm:profile-complexity}
For every $L\ge2$, every $r>0$, and every sample $\bx$,
\begin{align}
\Ghat_{\bx}(B_L(r))
&\le \frac{r}{n}\sqrt{\tau_L(\bx)}\label{eq:profile-bound-tau}\\
&=\frac{r\Lambda_L(\bx)}{n}
\bigl(\sum_{i=1}^n\norm{x_i}_2^{\,2^{-(L-2)}}\bigr)^{1/2}\label{eq:profile-bound-sum}\\
&\le\frac{r\Lambda_L(\bx)}{\sqrt n}
\bigl(\max_{1\le i\le n}\norm{x_i}_2\bigr)^{2^{-(L-1)}}.
\label{eq:profile-bound-radius}
\end{align}
\end{theorem}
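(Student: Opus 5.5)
The plan is to start from the exact identity \eqref{eq:empirical-width-support} of \Cref{thm:trace-body}:
\[
\Ghat_{\bx}(B_L(r))=\frac{r}{n}\,\E_G h_{L,\bx}(G).
\]
Let $N_*$ be a minimiser in \eqref{eq:tau-def}; it exists by \Cref{prop:profile-existence}. Then $N_*\succeq K$ for every $K\in\sK_L(\bx)$. By \eqref{eq:support-covariance} we get, pointwise in $z$,
\[
h_{L,\bx}(z)^2=\sup_{K\in\sK_L(\bx)}z^\top Kz\le z^\top N_*z .
\]
Taking square roots and applying Jensen's inequality to the concave square root gives
\[
\E_G h_{L,\bx}(G)\le \E_G\sqrt{G^\top N_*G}\le\bigl(\E_G G^\top N_*G\bigr)^{1/2}=\sqrt{\tr N_*}=\sqrt{\tau_L(\bx)}.
\]
Multiplying by $r/n$ proves \eqref{eq:profile-bound-tau}.

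For the equality \eqref{eq:profile-bound-sum}, there are two cases.
\begin{itemize}
\item If $S_L(\bx)>0$, definition \eqref{eq:lambda-def} gives $\tau_L=\Lambda_L^2S_L$, so $\sqrt{\tau_L}=\Lambda_L S_L^{1/2}$. By \eqref{eq:d-and-S}, $S_L=\sum_i\norm{x_i}_2^{2^{-(L-2)}}$.
\item If $S_L(\bx)=0$, then $\tau_L=0$ by \Cref{prop:profile-existence}, and both sides vanish.
\end{itemize}

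For \eqref{eq:profile-bound-radius}, write $R:=\max_i\norm{x_i}_2$ and bound each term by $\norm{x_i}_2^{2^{-(L-2)}}\le R^{2^{-(L-2)}}$. This gives $S_L\le nR^{2^{-(L-2)}}$, hence
\[
S_L^{1/2}\le\sqrt n\,R^{2^{-(L-1)}},
\]
and therefore $\frac{r\Lambda_L}{n}S_L^{1/2}\le\frac{r\Lambda_L}{\sqrt n}R^{2^{-(L-1)}}$, using $\Lambda_L\ge0$.

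No step is a genuine obstacle. The substance is in \Cref{thm:trace-body}, namely the support-function identity, and in the attainment of the minimum in \Cref{prop:profile-existence}; both are assumed here. The only points needing care are the Jensen step, which bounds $\E\sqrt{\cdot}$ by $\sqrt{\E\,\cdot}$ with $\E\,G^\top NG=\tr N$, and the zero-case convention for $\Lambda_L$.
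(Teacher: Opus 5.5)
Your proof is correct and follows the paper's argument essentially line for line. You use an optimal $N_*$ to get the pointwise bound $h_{L,\bx}(G)^2\le G^\top N_*G$, then Jensen and $\E[G^\top N_*G]=\tr N_*$ to get \eqref{eq:profile-bound-tau}, and finally the definition of $\Lambda_L$ (with the $S_L(\bx)=0$ case treated separately) together with $S_L(\bx)\le n\max_i d_{L,i}$ for the last two lines.
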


\begin{proof}
Let $N_*$ be an optimizer in \eqref{eq:tau-def}.  By \Cref{thm:trace-body}, for every realization of $G$,
\begin{align}
h_{L,\bx}(G)^2
=\sup_{K\in\sK_L(\bx)}G^\top KG
\le G^\top N_*G,
\notag
\end{align}
because $N_*-K\succeq0$ for every admissible $K$.  Both sides are nonnegative, so taking square roots preserves the inequality.  Jensen's inequality for the concave square-root map gives
\begin{align}
\E_Gh_{L,\bx}(G)
\le\E_G\sqrt{G^\top N_*G}
\le\sqrt{\E_G[G^\top N_*G]}.
\notag
\end{align}
Since $\E[GG^\top]=I_n$,
\begin{align}
\E_G[G^\top N_*G]
=\tr\!\bigl(N_*\E[GG^\top]\bigr)
=\tr N_*
=\tau_L(\bx).
\notag
\end{align}
Substitution into \eqref{eq:empirical-width-support} proves \eqref{eq:profile-bound-tau}.  If $S_L(\bx)=0$, then \Cref{prop:profile-existence} gives $\tau_L(\bx)=\Lambda_L(\bx)=0$, and all three displayed bounds are zero.  If $S_L(\bx)>0$, use \eqref{eq:lambda-def} to obtain \eqref{eq:profile-bound-sum}.  Finally,
$S_L(\bx)\le n\max_i d_{L,i}$ and $d_{L,i}^{1/2}=\norm{x_i}_2^{2^{-(L-1)}}$, which gives \eqref{eq:profile-bound-radius}.
\end{proof}

\begin{remark}[What the normalized profiles measure]\label{rem:profile-meaning}
The profile $\Lambda_L$ normalizes common-covariance scale by the universal diagonal envelope $S_L$; it is not a pure selection cost.  For one fixed kernel $K$, the identity $\Lambda_L^2=\tr K/S_L$ may give a value strictly below one.  Equation~\eqref{eq:lambda-gamma-decomposition} separates the single-kernel trace ratio $M_L/S_L$ from the common-domination inflation $\Gamma_L^{\mathrm{dom}}$.  For one fixed nonzero kernel, $\Gamma_L^{\mathrm{dom}}=1$.  Values above one record the extra trace required to dominate the full covariance family.
\end{remark}

\subsection{A finite covariance-path model}\label{sec:finite-path-example}

A finite depth-two model makes the preceding geometry explicit.  Let $z=\phi(x)\in\R^d$ be a frozen feature vector and, for $\Sigma\succeq0$, define
\begin{equation}\label{cove:eq:covariance-brownian-kernel}
k_{\Sigma}(z,z')
:=
\frac12\bigl(\norm{z}_{\Sigma}+\norm{z'}_{\Sigma}-\norm{z-z'}_{\Sigma}\bigr),
\qquad
\norm{u}_{\Sigma}:=(u^{\top}\Sigma u)^{1/2}.
\end{equation}
For a positive mean-one diagonal Fisher metric $\Sigma_{\mathrm F}$, put
\begin{equation}\label{cove:eq:covariance-path}
\Sigma_s=(1-s)I+s\Sigma_{\mathrm F},
\qquad
s\in\mathcal S:=\bigl\{0,\tfrac14,\tfrac12,\tfrac34,1\bigr\}.
\end{equation}
The endpoints are the identity and full-Fisher Brownian geometries.  The complete path is evaluated from two bases because
\begin{align}
\norm{u}_{\Sigma_s}^2
&=(1-s)\norm{u}_2^2+s\norm{u}_{\Sigma_{\mathrm F}}^2,\notag\\
\norm{u-v}_{\Sigma_s}^2
&=(1-s)\norm{u-v}_2^2+s\norm{u-v}_{\Sigma_{\mathrm F}}^2.
\label{cove:eq:path-affine-squares}
\end{align}

Let $\Theta$ be uniform on $\mathbb S^{d-1}$ and set
\begin{equation}\label{cove:eq:spherical-constant}
c_d:=\E\abs{\Theta_1}
=\frac{\Gamma(d/2)}{\sqrt\pi\,\Gamma((d+1)/2)}.
\end{equation}
\begin{theorem}[Exact spherical Brownian mixture]\label{cove:thm:spherical-mixture}
For every $\Sigma\succeq0$ and $z,z'\in\R^d$,
\begin{equation}\label{cove:eq:spherical-mixture}
k_{\Sigma}(z,z')
=
\frac1{c_d}\E_{\Theta}\kB\bigl(\ip{\Theta}{\Sigma^{1/2}z},\ip{\Theta}{\Sigma^{1/2}z'}\bigr).
\end{equation}
In particular, $k_{\Sigma}$ is positive semidefinite and is an exact depth-two BKL-type kernel.
\end{theorem}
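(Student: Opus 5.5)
The identity is linear in the Brownian kernel, so I will reduce it to one scalar fact: for every $u\in\R^d$,
\begin{align}
\E_\Theta\abs{\ip{\Theta}{u}}=c_d\norm u_2 .
\notag
\end{align}
By rotation invariance of the uniform law on $\mathbb S^{d-1}$, for $u\ne0$ pick an orthogonal $Q$ with $Qu=\norm u_2\be_1$. Then $\ip{\Theta}{u}=\norm u_2\ip{Q\Theta}{\be_1}$, and $Q\Theta$ has the same law as $\Theta$, so the expectation equals $\norm u_2\,\E\abs{\Theta_1}=c_d\norm u_2$. The case $u=0$ is trivial. The explicit value of $c_d$ in \eqref{cove:eq:spherical-constant} comes from the marginal density of $\Theta_1$, which is proportional to $(1-t^2)^{(d-3)/2}$ on $[-1,1]$. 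One computes $\int_0^1 t(1-t^2)^{(d-3)/2}\,dt=1/(d-1)$ and normalises by $B(\tfrac12,\tfrac{d-1}{2})$. Alternatively, write $\Theta=Z/\norm Z_2$ with $Z$ standard Gaussian and use independence of $\norm Z_2$ and $\Theta$: then $\E|Z_1|=\E\norm Z_2\cdot c_d$, with $\E|Z_1|=\sqrt{2/\pi}$ and $\E\norm Z_2=\sqrt2\,\Gamma((d+1)/2)/\Gamma(d/2)$. This ratio gives the stated formula and is the cleanest route. For $d=1$ one gets $c_1=1$, consistent with the formula.

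Next I expand the integrand using \eqref{eq:brownian-kernel}:
\begin{align}
\kB\bigl(\ip{\Theta}{a},\ip{\Theta}{b}\bigr)
=\frac12\bigl(\abs{\ip{\Theta}{a}}+\abs{\ip{\Theta}{b}}-\abs{\ip{\Theta}{a-b}}\bigr),
\notag
\end{align}
where $a=\Sigma^{1/2}z$ and $b=\Sigma^{1/2}z'$, by linearity of the inner product. Taking expectations termwise (all terms are integrable because they are bounded) and applying the scalar fact three times gives $\tfrac{c_d}{2}\bigl(\norm a_2+\norm b_2-\norm{a-b}_2\bigr)$. Since $\norm{\Sigma^{1/2}u}_2=(u^\top\Sigma u)^{1/2}=\norm u_\Sigma$, this equals $c_d\,k_\Sigma(z,z')$. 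Dividing by $c_d>0$ proves \eqref{cove:eq:spherical-mixture}.

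For the consequences, positive semidefiniteness follows because for each fixed $\theta$ the map $(z,z')\mapsto\kB(\ip{\theta}{\Sigma^{1/2}z},\ip{\theta}{\Sigma^{1/2}z'})$ is the pullback of the positive semidefinite Brownian kernel (a Brownian covariance) under a scalar map. Mixtures with nonnegative weights preserve positive semidefiniteness. For the BKL interpretation I identify $u_\theta:z\mapsto\ip{\Sigma^{1/2}\theta}{z}$ as an element of the linear first layer with norm $\norm{P_{V}\Sigma^{1/2}\theta}_2$. I then rescale to the unit sphere $S_1(\cH^{(1)})$ using the positive homogeneity $\kB(cs,ct)=c\,\kB(s,t)$ for $c\ge0$. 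This produces the form \eqref{eq:integral-kernel} with a probability measure, up to a nonnegative total-mass factor; that is the sense of ``BKL-type''.

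The main obstacle is only this last bookkeeping. The measure obtained after rescaling need not be a probability measure, and directions $\theta$ with $P_V\Sigma^{1/2}\theta=0$ contribute zero. I would therefore state the BKL-type claim as a positive multiple of an integral kernel \eqref{eq:integral-kernel}, with the pushforward of the normalised weight $\norm{P_V\Sigma^{1/2}\theta}_2/c_d$.
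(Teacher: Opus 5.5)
Your proof is correct and is essentially the paper's proof. Rotation invariance gives $\E_\Theta\abs{\ip{\Theta}{u}}=c_d\norm{u}_2$, which is applied to $\Sigma^{1/2}z$, $\Sigma^{1/2}z'$ and $\Sigma^{1/2}(z-z')$, and positive semidefiniteness follows because a nonnegative mixture of Brownian pullbacks is positive semidefinite.

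Two parts of your argument go beyond what the paper writes: the derivation of the closed form of $c_d$, and the bookkeeping for the BKL-type claim. Your caveat there is accurate. After normalising atoms to $S_1(\cH^{(1)})$, the mixing measure has total mass $c_d^{-1}\E_\Theta\norm{P_{V_{\cX}}\Sigma^{1/2}\Theta}_2$; for example this is $\pi/2$ when $\Sigma=I$, $d=2$ and $V_{\cX}=\R^2$. So $k_\Sigma$ is in general only a positive multiple of a canonical depth-two kernel, which is the sense in which ``BKL-type'' must be read.
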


On an anchor sample of size $n_{\mathrm a}$, let $K_s$ be the Gram matrix of $k_{\Sigma_s}$ and define
\begin{equation}\label{cove:eq:finite-envelope-program}
\tau_{\mathcal S}
:=
\min_{\alpha\in\R_+^{\mathcal S}}
\tr\Bigl(\sum_{s\in\mathcal S}\alpha_sK_s\Bigr)
\quad\text{subject to}\quad
\sum_{s\in\mathcal S}\alpha_sK_s\succeq K_t
\ \text{for every }t\in\mathcal S.
\end{equation}
If $\widehat N_{\mathrm{raw}}$ is a numerical mixture, set
\begin{align}
\widehat\lambda_{\min}
&:=\min_{t\in\mathcal S}\lambda_{\min}(\widehat N_{\mathrm{raw}}-K_t),
\label{cove:eq:raw-residual}\\
\rho&:=\max\{0,\mu-\widehat\lambda_{\min}\},
\qquad
\widehat N:=\widehat N_{\mathrm{raw}}+\rho I.
\label{cove:eq:identity-repair}
\end{align}
\begin{proposition}[Exact path evaluation and numerical repair]\label{cove:prop:path-basis}
The identities in \eqref{cove:eq:path-affine-squares} determine every path Gram matrix without new $d$-dimensional pairwise products.  Moreover,
\begin{equation}\label{cove:eq:repaired-domination}
\widehat N-K_t\succeq\mu I
\qquad(t\in\mathcal S).
\end{equation}
\end{proposition}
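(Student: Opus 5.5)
The proof has two independent parts: an algebraic identity for the path Gram matrices, and a Weyl-type shift argument for the repair.

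\emph{Path evaluation.} By \eqref{cove:eq:covariance-brownian-kernel}, every entry of $K_s$ is determined by three scalars: $\norm{z_i}_{\Sigma_s}$, $\norm{z_j}_{\Sigma_s}$ and $\norm{z_i-z_j}_{\Sigma_s}$. Precompute once the two anchor-pairwise tables $\norm{z_i}_2^2$, $\norm{z_i-z_j}_2^2$ and $\norm{z_i}_{\Sigma_{\mathrm F}}^2$, $\norm{z_i-z_j}_{\Sigma_{\mathrm F}}^2$. These two bases come from two $d$-dimensional Gram computations, for instance $\norm{u-v}^2=\norm u^2+\norm v^2-2\ip uv$ in each metric. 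The affine identities \eqref{cove:eq:path-affine-squares} hold because $u\mapsto u^\top\Sigma_s u$ is linear in $\Sigma_s$ and $\Sigma_s=(1-s)I+s\Sigma_{\mathrm F}$. They express every squared $\Sigma_s$-norm as a convex combination of the two tabulated values. Taking square roots, which is legitimate since both summands are nonnegative, and inserting the results into \eqref{cove:eq:covariance-brownian-kernel} gives $K_s$ entrywise for each $s\in\mathcal S$. This uses only scalar operations on $n_{\mathrm a}\times n_{\mathrm a}$ arrays and no new inner products in $\R^d$. The only thing to check is that no cross term appears. None does, because each $\Sigma_s$-norm is evaluated on a single vector $u$ or $u-v$, so no mixed bilinear quantity between the two metrics arises.

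\emph{Repair.} Fix $t\in\mathcal S$. We have $\widehat N-K_t=(\widehat N_{\mathrm{raw}}-K_t)+\rho I$, and this matrix is symmetric. Adding $\rho I$ shifts every eigenvalue by $\rho$, so
\[
\lambda_{\min}(\widehat N-K_t)=\lambda_{\min}(\widehat N_{\mathrm{raw}}-K_t)+\rho\ge\widehat\lambda_{\min}+\rho,
\]
where the inequality uses the definition \eqref{cove:eq:raw-residual} of $\widehat\lambda_{\min}$ as a minimum over $t$. By \eqref{cove:eq:identity-repair}, $\rho\ge\mu-\widehat\lambda_{\min}$, so the right-hand side is at least $\mu$. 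Hence $\lambda_{\min}(\widehat N-K_t)\ge\mu$, which is equivalent to $\widehat N-K_t\succeq\mu I$ and proves \eqref{cove:eq:repaired-domination}. The case $\rho=0$, where $\widehat\lambda_{\min}\ge\mu$ already, is covered by the same inequality.

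There is no real obstacle. The one point needing care is symmetry: $\widehat N_{\mathrm{raw}}$ should be symmetric, or symmetrised first, so that $\lambda_{\min}$ and the Loewner order make sense. With that in place, the shift identity for eigenvalues under $+\rho I$ completes the argument.
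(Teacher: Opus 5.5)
Your proof is correct and follows the paper's own argument. The affine identities come from linearity of $u\mapsto u^\top\Sigma_s u$ in $\Sigma_s$, applied to $u$ and to $u-v$, and the repair bound is the exact eigenvalue shift $\lambda_{\min}(\widehat N_{\mathrm{raw}}-K_t+\rho I)=\lambda_{\min}(\widehat N_{\mathrm{raw}}-K_t)+\rho\ge\widehat\lambda_{\min}+\max\{0,\mu-\widehat\lambda_{\min}\}\ge\mu$; your remarks on precomputing the two pairwise tables and on symmetrising $\widehat N_{\mathrm{raw}}$ are sensible details that the paper leaves implicit.
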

The finite path is used in \Cref{sec:numerics}.  It is not a certificate for the unrestricted family $\sK_L(\bx)$.

\section{Operator Factorization and Gaussian Geometry}\label{sec:2summing}

The profile has an exact operator-ideal interpretation.  Let
\begin{align}
V_{L,\bx}:=\operatorname{span}\cA_L(\bx),
\notag
\end{align}
and equip it with the norm whose closed unit ball is
\begin{align}
\cl\absconv\cA_L(\bx).
\notag
\end{align}
Call the resulting finite-dimensional normed space $X_{L,\bx}$.  Convexification does not change the support function.  Define the dual evaluation operator
\begin{equation}\label{eq:T-operator}
T_{L,\bx}:\ell_2^n\longrightarrow X_{L,\bx}^*,
\qquad
T_{L,\bx}z:=\sum_{i=1}^n z_i\delta_{x_i}.
\end{equation}
Then
\begin{equation}\label{eq:T-norm}
\norm{T_{L,\bx}z}_{X_{L,\bx}^*}=h_{L,\bx}(z).
\end{equation}

Recall that, for an operator $T:\ell_2^n\to Y$, the absolutely $2$-summing norm can be written as
\begin{equation}\label{eq:pi2-def}
\pi_2(T)^2
=\sup\set{
\sum_{r=1}^m\norm{Tz_r}_Y^2:
\sum_{r=1}^m z_rz_r^\top\preceq I_n,\ m\in\N,\ m\ge1
}.
\end{equation}

\begin{theorem}[Exact $2$-summing identity]\label{thm:pi2}
For every $L\ge2$ and every sample $\bx$,
\begin{equation}\label{eq:pi2-tau}
\ \pi_2(T_{L,\bx})^2=\tau_L(\bx).\
\end{equation}
Consequently, if $S_L(\bx)>0$,
\begin{equation}\label{eq:lambda-pi2}
\Lambda_L(\bx)
=\frac{\pi_2(T_{L,\bx})}{\bigl(\sum_{i=1}^n\rho_L(x_i)^2\bigr)^{1/2}}.
\end{equation}
If $S_L(\bx)=0$, then $T_{L,\bx}=0$, $\pi_2(T_{L,\bx})=0$, and $\Lambda_L(\bx)=0$.
\end{theorem}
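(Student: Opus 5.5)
Substituting \eqref{eq:T-norm} and \Cref{thm:trace-body} into \eqref{eq:pi2-def} gives
\begin{align}
\pi_2(T_{L,\bx})^2=\sup\Bigl\{\sum_{r=1}^m\sup_{K\in\sK_L(\bx)}z_r^\top Kz_r:\ \sum_r z_rz_r^\top\preceq I_n\Bigr\},
\notag
\end{align}
and I will show this value equals $\tau_L(\bx)$ by proving two inequalities. The upper bound $\pi_2^2\le\tau_L$ is direct. Let $N_*$ be the optimiser from \Cref{prop:profile-existence}. Then for each $r$,
\begin{align}
\sup_K z_r^\top Kz_r\le z_r^\top N_*z_r,
\notag
\end{align}
and summing over $r$ yields
\begin{align}
\sum_r z_r^\top N_*z_r=\tr\Bigl(N_*\sum_r z_rz_r^\top\Bigr)\le\tr N_*=\tau_L(\bx),
\notag
\end{align}
where the last step uses $N_*\succeq0$ together with $\sum_r z_rz_r^\top\preceq I_n$.

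The lower bound $\pi_2^2\ge\tau_L$ is the heart of the argument, and I would obtain it by conic duality for the semidefinite program \eqref{eq:tau-def}. Write $\mathcal C:=\cl\operatorname{conv}\sK_L(\bx)$. This set is compact, since $\sK_L(\bx)$ is compact in finite dimensions, and replacing $\sK_L(\bx)$ by $\mathcal C$ changes neither the feasible set of \eqref{eq:tau-def} nor the support function. The dual program is
\begin{align}
\sup\Bigl\{\int\tr(KP_K)\,d\nu(K):\ \nu\text{ a finite measure on }\sK_L(\bx),\ P_K\succeq0,\ \int P_K\,d\nu\preceq I_n\Bigr\}.
\notag
\end{align}
A cleaner route avoids measures: by finite-dimensional minimax, or by Carathéodory applied to the finitely generated dual, it suffices to work with finitely many $K_1,\ldots,K_p\in\sK_L(\bx)$ and matrices $P_1,\ldots,P_p\succeq0$ satisfying $\sum_j P_j\preceq I$ and $\sum_j\tr(K_jP_j)\ge\tau_L-\eps$. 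Strong duality holds because Slater's condition is met on the primal side: $N=(M_L+1)I_n$ is strictly feasible. To reach the finite form rigorously, I would pass through finite subfamilies. For finite $\mathcal F\subseteq\sK_L(\bx)$, define $\tau(\mathcal F)$ as the value of \eqref{eq:tau-def} with domination imposed only on $\mathcal F$. A compactness argument on the trace-bounded minimisers $N_{\mathcal F}$ (a nested net over finite subsets, whose limit point dominates every $K$) gives $\sup_{\mathcal F}\tau(\mathcal F)=\tau_L$. For finite $\mathcal F$, ordinary SDP strong duality then gives exactly the finite dual certificates above.

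It remains to convert such a certificate into a test family for \eqref{eq:pi2-def}. Diagonalise each $P_j=\sum_k z_{jk}z_{jk}^\top$. Then
\begin{align}
\sum_{j,k}z_{jk}z_{jk}^\top=\sum_j P_j\preceq I_n,
\qquad
\sum_{j,k}\sup_K z_{jk}^\top Kz_{jk}\ge\sum_{j,k}z_{jk}^\top K_jz_{jk}=\sum_j\tr(K_jP_j)\ge\tau_L-\eps.
\notag
\end{align}
Letting $\eps\to0$ proves $\pi_2(T_{L,\bx})^2\ge\tau_L(\bx)$.

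The main obstacle is justifying strong duality together with the reduction from the infinite family to finite subfamilies. The rest of the proof consists of the elementary rewriting above.

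For the consequences, \eqref{eq:lambda-pi2} follows from \eqref{eq:lambda-def} and $S_L=\sum_i\rho_L(x_i)^2$. If $S_L=0$, every $\rho_L(x_i)=0$, so by \eqref{eq:pointwise-scale} every $f$ in the ball vanishes on the sample. Hence $\cA_L(\bx)=\{0\}$, $h_{L,\bx}\equiv0$, $T_{L,\bx}=0$, and $\pi_2(T_{L,\bx})=0$, while $\Lambda_L=0$ holds by convention.
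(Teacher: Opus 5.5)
Your proof is correct and follows essentially the paper's route: finite-subfamily SDP duality under Slater's condition, rank-one decomposition of the dual variables into a $2$-summing test sequence, and a compactness argument giving $\sup_{\mathcal F}\tau(\mathcal F)=\tau_L(\bx)$ (your net of trace-bounded minimisers plays the role of the paper's finite-intersection argument). The one difference is that you obtain $\pi_2(T_{L,\bx})^2\le\tau_L(\bx)$ directly by weak duality against the optimiser $N_*$. This is shorter than the paper's route, which passes through the second half of its finite-family lemma (slack-filling plus approximate maximisers).
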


The identity is stronger than a generic factorization upper bound: it states that the robust minimum-trace covariance problem is exactly the finite-resolution $2$-summing norm of adaptive evaluation.  Its proof proceeds first for a finite family and then by compact passage; see Appendix~\ref{app:pi2}.

\subsection{Finite-Family Primal and Dual Programs}

For $K_1,\ldots,K_M\in\psd^n$, define
\begin{equation}\label{eq:finite-primal}
\tau(K_1,\ldots,K_M)
:=\min_{N\in\mathbb S^n}\set{\tr N:N-K_j\succeq0,\ j=1,\ldots,M}.
\end{equation}
Strict feasibility gives the dual program
\begin{equation}\label{eq:finite-dual}
\tau(K_1,\ldots,K_M)
=\max_{Z_1,\ldots,Z_M}
\set{
\sum_{j=1}^M\ip{Z_j}{K_j}:
Z_j\succeq0,\ \sum_{j=1}^M Z_j=I_n
}.
\end{equation}
Here $\ip{A}{B}=\tr(A^\top B)$.  The dual decomposes the identity into positive semidefinite directional weights; each direction is allowed to place its mass on a different kernel.  This is the finite-dimensional mechanism behind representation selection.

For the compact full family,
\begin{equation}\label{eq:finite-subsets}
\tau_L(\bx)
=\sup_{\mathcal F\subset\sK_L(\bx),\ |\mathcal F|<\infty}
\tau(\mathcal F).
\end{equation}
Thus finite SDPs approximate the robust profile monotonically from below, while a valid upper certificate requires verifying domination over the complete family.

\subsection{Robust Multiplier Geometry and Gaussian Defect}\label{sec:minimax}

The common-covariance profile is an upper certificate for the standard Gaussian width.  This section identifies the exact minimax problem solved by that certificate and then isolates the abstract geometry required to reverse the bound for Gaussian multipliers.  The covariance-dominated multiplier identity is distribution free, while empirical Gaussian cotype quantifies the gap for the standard Gaussian law.  BKL-specific threshold reverses are developed after the Brownian reduction in \Cref{sec:richness}.

For brevity, write
\begin{equation}\label{eq:gaussian-width-moments}
W_{G,1}(L,\bx)
:=\E_G h_{L,\bx}(G),
\qquad
W_{G,2}(L,\bx)
:=\bigl(\E_G h_{L,\bx}(G)^2\bigr)^{1/2},
\end{equation}
where $G\sim\mathcal N(0,I_n)$.  By \eqref{eq:empirical-width-support},
\begin{equation}\label{eq:complexity-WG1}
\Ghat_{\bx}(B_L(r))
=\frac{r}{n}W_{G,1}(L,\bx).
\end{equation}

\subsubsection{A covariance-dominated multiplier minimax identity}

Let $\mathfrak P_{\preceq I_n}$ denote the class of Borel probability measures $\nu$ on $\R^n$ satisfying
\begin{equation}\label{eq:subisotropic-laws}
\int_{\R^n}\norm{z}_2^2\,d\nu(z)<\infty,
\qquad
\int_{\R^n}z\,d\nu(z)=0,
\qquad
\int_{\R^n}zz^\top\,d\nu(z)\preceq I_n.
\end{equation}

\begin{theorem}[Robust covariance-dominated multiplier minimax]\label{thm:robust-minimax}
For every depth $L\ge2$ and every sample $\bx$,
\begin{equation}\label{eq:robust-minimax}
\tau_L(\bx)
=
\sup_{\nu\in\mathfrak P_{\preceq I_n}}
\int_{\R^n} h_{L,\bx}(z)^2\,d\nu(z).
\end{equation}
Consequently, for every $r>0$,
\begin{equation}\label{eq:robust-minimax-scaled}
\frac{r}{n}\sqrt{\tau_L(\bx)}
=
\sup_{\nu\in\mathfrak P_{\preceq I_n}}
\frac{r}{n}
\bigl(
\int_{\R^n}h_{L,\bx}(z)^2\,d\nu(z)
\bigr)^{1/2}.
\end{equation}
\end{theorem}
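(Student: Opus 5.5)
The proof splits into two inequalities, using \Cref{thm:trace-body} and the $2$-summing identity of \Cref{thm:pi2}. The scaled statement \eqref{eq:robust-minimax-scaled} then follows by taking square roots and multiplying by $r/n$, so only \eqref{eq:robust-minimax} needs proof.

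\emph{Upper bound.} Let $N_*$ be an optimiser in \eqref{eq:tau-def}, which exists by \Cref{prop:profile-existence}. By \eqref{eq:support-covariance}, for every $z\in\R^n$,
\[
h_{L,\bx}(z)^2=\sup_{K\in\sK_L(\bx)}z^\top Kz\le z^\top N_*z .
\]
Now take $\nu\in\mathfrak P_{\preceq I_n}$ with second-moment matrix $\Sigma_\nu$. Integrating gives
\[
\int h_{L,\bx}^2\,d\nu\le\tr(N_*\Sigma_\nu)\le\tr N_*=\tau_L(\bx),
\]
because $N_*\succeq0$ and $\Sigma_\nu\preceq I_n$. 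Measurability of $h_{L,\bx}$ is not an issue: it is a finite convex function, hence continuous.

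\emph{Lower bound.} The plan is to build measures from the definition \eqref{eq:pi2-def} of $\pi_2$. Take vectors $z_1,\dots,z_m$ with $\sum_r z_rz_r^\top\preceq I_n$ and let
\[
\nu:=\frac1{2m}\sum_{r=1}^m\bigl(\delta_{\sqrt m z_r}+\delta_{-\sqrt m z_r}\bigr).
\]
This measure is centred. Its second moment is
\[
\frac1{m}\sum_r m\,z_rz_r^\top=\sum_r z_rz_r^\top\preceq I_n,
\]
so $\nu\in\mathfrak P_{\preceq I_n}$. Using homogeneity of $h_{L,\bx}$ and its evenness (the set $\cA_L(\bx)$ is symmetric, since $B_L(1)=-B_L(1)$ by \eqref{eq:ball-scaling}),
\[
\int h_{L,\bx}^2\,d\nu=\frac1{2m}\sum_r 2m\,h_{L,\bx}(z_r)^2=\sum_r\norm{T_{L,\bx}z_r}^2,
\]
where the last equality is \eqref{eq:T-norm}. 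Taking the supremum over all such families gives $\pi_2(T_{L,\bx})^2$, which equals $\tau_L(\bx)$ by \Cref{thm:pi2}. So the supremum over $\nu$ is at least $\tau_L(\bx)$, and in fact it is attained along finitely supported symmetric laws.

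\emph{Main obstacle.} The real content lies in \Cref{thm:pi2}, which I am allowed to assume. If a self-contained argument is preferred, the lower bound can be obtained from the finite dual \eqref{eq:finite-dual} together with \eqref{eq:finite-subsets}, as follows.
\begin{itemize}
\item Take a finite subfamily and an optimal dual $Z_j=\sum_k v_{jk}v_{jk}^\top$.
\item Collect the vectors $v_{jk}$; they satisfy $\sum_{j,k}v_{jk}v_{jk}^\top=I_n$.
\item The associated symmetric discrete law, built as above, gives
\[
\int h_{L,\bx}^2\,d\nu\ge\sum_{j,k}v_{jk}^\top K_jv_{jk}=\sum_j\ip{Z_j}{K_j}=\tau(\mathcal F).
\]
\item Taking the supremum over finite subfamilies $\mathcal F$ and using \eqref{eq:finite-subsets} gives the bound.
\end{itemize}
The only delicate point in this route is the strong duality in \eqref{eq:finite-dual}. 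It holds by Slater's condition, since $N=cI$ is strictly feasible for large $c$.
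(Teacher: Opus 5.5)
Your proof is correct and follows the paper's argument essentially step for step. The upper bound integrates the pointwise domination $h_{L,\bx}(z)^2\le z^\top Nz$ against a law with second-moment matrix $\Sigma_\nu\preceq I_n$. The lower bound realizes every admissible $2$-summing family by the symmetric law $\frac1{2m}\sum_r(\delta_{\sqrt m z_r}+\delta_{-\sqrt m z_r})$ and then invokes \Cref{thm:pi2}.

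The only overstatement is your remark that the supremum is attained. Your construction shows only that finitely supported symmetric laws approach the supremum. Attainment by a single law is the separate content of \Cref{cor:finite-multiplier-attainment}, which the paper proves via the contact theorem.
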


\begin{remark}[Meaning of the minimax identity]\label{rem:multiplier-minimax-scope}
\Cref{thm:robust-minimax} is an exact minimax characterization of the covariance certificate over multiplier laws whose second-moment matrix is bounded by the identity.  It is not, by itself, a minimax excess-risk lower bound for a supervised learning problem.  Its role is to identify precisely what information is used by the common-covariance relaxation: among all multiplier distributions constrained only through their covariance, the value $\sqrt{\tau_L(\bx)}$ is optimal.
\end{remark}

A reverse inequality for the standard Gaussian law cannot follow from $\tau_L$ alone for an arbitrary covariance family.

\begin{examplex}[Why covariance domination alone does not imply a Gaussian reverse]\label{ex:coordinate-kernels}
Let
\begin{align}
\mathscr K=\set{e_ie_i^\top:i\in[n]}.
\notag
\end{align}
Then the associated support function is $h(z)=\norm{z}_\infty$, and the minimum-trace common covariance value is $\tau(\mathscr K)=n$.  Nevertheless,
\begin{equation}\label{eq:coordinate-gaussian-width}
\E\norm{G}_\infty
\le
\sqrt{2\log(2n)}.
\end{equation}
Hence
\begin{align}
\frac{\sqrt{\tau(\mathscr K)}}{\E\norm{G}_\infty}
\ge
\sqrt{\frac{n}{2\log(2n)}}.
\notag
\end{align}
Thus a constant-factor or polylogarithmic Gaussian reverse requires additional structure beyond the common-covariance value.  For BKLs, that additional structure is supplied by the recursive Brownian trace geometry.
\end{examplex}

\subsubsection{Gaussian cotype and the exact Gaussian defect}

Let
\begin{equation}\label{eq:empirical-dual-range}
Y_{L,\bx}
:=T_{L,\bx}(\ell_2^n)
\subseteq X_{L,\bx}^*
\end{equation}
with the norm inherited from $X_{L,\bx}^*$.  If $Y_{L,\bx}\ne\set{0}$, define its Gaussian cotype-$2$ constant $C_{2,L}^{\mathrm g}(\bx)$ as the least $C\in[1,\infty)$ for which
\begin{equation}\label{eq:gaussian-cotype-def}
\bigl(\sum_{j=1}^m\norm{y_j}_{X_{L,\bx}^*}^2\bigr)^{1/2}
\le
C
\bigl(
\E_\gamma
\norm{\sum_{j=1}^m\gamma_jy_j}_{X_{L,\bx}^*}^2
\bigr)^{1/2}
\end{equation}
for every $m\ge1$ and every $y_1,\ldots,y_m\in Y_{L,\bx}$, where $\gamma_1,\ldots,\gamma_m$ are independent standard Gaussian variables.  If $Y_{L,\bx}=\set{0}$, set $C_{2,L}^{\mathrm g}(\bx):=1$.  The constant is finite because $Y_{L,\bx}$ is finite dimensional and all norms on a finite-dimensional vector space are equivalent.

\begin{theorem}[Gaussian reverse through empirical cotype]\label{thm:cotype-reverse}
Put
\begin{align}
\kappa_{\mathrm G}:=\sqrt{1+\frac\pi2}.
\notag
\end{align}
Then
\begin{equation}\label{eq:cotype-rms-reverse}
\sqrt{\tau_L(\bx)}
\le
C_{2,L}^{\mathrm g}(\bx)\,
W_{G,2}(L,\bx),
\end{equation}
\begin{equation}\label{eq:gaussian-moment-comparison}
W_{G,2}(L,\bx)
\le
\kappa_{\mathrm G}W_{G,1}(L,\bx),
\end{equation}
and therefore
\begin{equation}\label{eq:cotype-complexity-sandwich}
\frac{r\sqrt{\tau_L(\bx)}}
{\kappa_{\mathrm G}C_{2,L}^{\mathrm g}(\bx)n}
\le
\Ghat_{\bx}(B_L(r))
\le
\frac{r\sqrt{\tau_L(\bx)}}{n}.
\end{equation}
\end{theorem}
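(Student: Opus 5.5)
The plan is to prove the three displays in order; the sandwich \eqref{eq:cotype-complexity-sandwich} then follows by combining them with \eqref{eq:complexity-WG1} and \Cref{thm:profile-complexity}. Two facts about $h:=h_{L,\bx}$ will be used throughout. First, $B_L(1)$ is symmetric and contains $0$, so $\cA_L(\bx)$ is symmetric and $h\ge0$. Second, $h$ is convex, being a supremum of linear functionals.

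\emph{Step 1: the cotype reverse \eqref{eq:cotype-rms-reverse}.} I would start from \Cref{thm:pi2}, which gives $\tau_L(\bx)=\pi_2(T_{L,\bx})^2$, together with the frame formula \eqref{eq:pi2-def}. If $Y_{L,\bx}=\{0\}$ everything vanishes, so assume $Y_{L,\bx}\ne\{0\}$. Fix $z_1,\ldots,z_m$ with $\Sigma:=\sum_r z_rz_r^\top\preceq I_n$, and put $y_r:=T_{L,\bx}z_r\in Y_{L,\bx}$. Definition \eqref{eq:gaussian-cotype-def} and \eqref{eq:T-norm} give
\[
\sum_r h(z_r)^2\le C_{2,L}^{\mathrm g}(\bx)^2\,\E_\gamma h(Z)^2,
\qquad Z:=\sum_r\gamma_rz_r\sim\mathcal N(0,\Sigma).
\]
The key sub-step is the Gaussian comparison $\E h(Z)^2\le\E h(G)^2$ whenever $\Sigma\preceq I_n$. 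To prove it, let $W\sim\mathcal N(0,I_n-\Sigma)$ be independent of $Z$, so that $G:=Z+W\sim\mathcal N(0,I_n)$. Conditional Jensen for the convex $h$ gives $h(Z)\le\E[h(G)\mid Z]$. Since $h\ge0$, squaring and applying Jensen once more gives $h(Z)^2\le\E[h(G)^2\mid Z]$. Taking expectations and then the supremum over admissible frames yields $\tau_L\le (C_{2,L}^{\mathrm g})^2W_{G,2}^2$.

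\emph{Step 2: the moment comparison \eqref{eq:gaussian-moment-comparison}.} Let $\sigma:=\sup_{a\in\cA_L(\bx)}\norm a_2$, which is finite by compactness. Then $h$ is $\sigma$-Lipschitz in $\norm\cdot_2$, and the Gaussian Poincaré inequality gives $\operatorname{Var}h(G)\le\sigma^2$. Pick $a\in\cA_L(\bx)$ attaining $\sigma$. By symmetry of $\cA_L(\bx)$, $h(G)\ge|G^\top a|$, so $W_{G,1}\ge\E|G^\top a|=\sqrt{2/\pi}\,\sigma$. Hence
\[
W_{G,2}^2=W_{G,1}^2+\operatorname{Var}h(G)\le\bigl(1+\tfrac\pi2\bigr)W_{G,1}^2,
\]
which is exactly $\kappa_{\mathrm G}$.

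\emph{Step 3: the sandwich.} The upper bound in \eqref{eq:cotype-complexity-sandwich} is \eqref{eq:profile-bound-tau}. For the lower bound, chain Steps 1 and 2 to get $\sqrt{\tau_L}\le\kappa_{\mathrm G}C_{2,L}^{\mathrm g}W_{G,1}$, then substitute \eqref{eq:complexity-WG1}.

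The main obstacle is the comparison in Step 1, because the cotype inequality produces a Gaussian with the frame covariance $\Sigma$ rather than the identity. The independent-completion and Jensen argument handles this using only convexity and nonnegativity of $h$. A secondary check is that \eqref{eq:pi2-def} may be evaluated over finite frames, which is exactly how it is stated.
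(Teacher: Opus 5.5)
Your proposal is correct and follows the paper's proof essentially step for step: you apply the cotype inequality to $y_r=T_{L,\bx}z_r$, complete the frame covariance $\Sigma$ to the identity with an independent Gaussian and use conditional Jensen, and prove the moment comparison by the Poincar\'e inequality together with a rank-one lower bound. The differences are cosmetic: you apply Jensen to $h$ and then square, where the paper uses convexity of $h^2$ directly; and you take the norm-maximizing vector in the symmetric compact body $\cA_L(\bx)$ itself, where \Cref{lem:gaussian-moment-comparison} takes it from the polar set of an abstract seminorm.
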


Define the empirical Gaussian defect by
\begin{equation}\label{eq:gaussian-defect}
\chi_L(\bx)
:=
\begin{cases}
\dfrac{\sqrt{\tau_L(\bx)}}{W_{G,2}(L,\bx)},&\tau_L(\bx)>0,\\[2mm]
1,&\tau_L(\bx)=0.
\end{cases}
\end{equation}

\begin{corollary}[Exact multiplier-to-Gaussian defect]\label{cor:gaussian-defect}
For every sample,
\begin{equation}\label{eq:defect-range}
1\le\chi_L(\bx)\le C_{2,L}^{\mathrm g}(\bx).
\end{equation}
If $\tau_L(\bx)>0$, then
\begin{equation}\label{eq:defect-minimax-ratio}
\chi_L(\bx)^2
=
\frac{
\displaystyle
\sup_{\nu\in\mathfrak P_{\preceq I_n}}
\int h_{L,\bx}(z)^2\,d\nu(z)
}{
\displaystyle
\E_Gh_{L,\bx}(G)^2
}.
\end{equation}
Thus $\chi_L(\bx)$ is the exact loss incurred when the worst covariance-dominated multiplier law in \Cref{thm:robust-minimax} is replaced by the standard Gaussian law, at the level of second moments.
\end{corollary}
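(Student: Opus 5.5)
The corollary is a short consequence of \Cref{thm:robust-minimax} and \Cref{thm:cotype-reverse}. I will first dispose of the degenerate case and then read both inequalities and the ratio formula off these two theorems.

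\emph{Degenerate case.} If $\tau_L(\bx)=0$, then $\chi_L(\bx)=1$ by convention. Since $C_{2,L}^{\mathrm g}(\bx)\ge1$ by definition, \eqref{eq:defect-range} holds trivially.

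\emph{Positive case.} Assume $\tau_L(\bx)>0$. The first task is to check that the denominator in \eqref{eq:gaussian-defect} is positive. Inequality \eqref{eq:cotype-rms-reverse} gives $0<\sqrt{\tau_L}\le C_{2,L}^{\mathrm g}W_{G,2}$. Because $C_{2,L}^{\mathrm g}<\infty$, this forces $W_{G,2}(L,\bx)>0$, so $\chi_L$ is well defined. Dividing \eqref{eq:cotype-rms-reverse} by $W_{G,2}$ then gives the upper bound $\chi_L\le C_{2,L}^{\mathrm g}$.

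For the lower bound I use that the standard Gaussian law $\mathcal N(0,I_n)$ is admissible in $\mathfrak P_{\preceq I_n}$. It is centred, has finite second moment, and its covariance is exactly $I_n$. Hence \eqref{eq:robust-minimax} yields
\begin{align}
\tau_L(\bx)\ge\E_G h_{L,\bx}(G)^2=W_{G,2}(L,\bx)^2,
\notag
\end{align}
which is $\chi_L\ge1$. Alternatively, one can argue directly from \Cref{thm:profile-complexity}, using $h^2\le G^\top N_*G$ and $\E[G^\top N_*G]=\tr N_*$.

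\emph{Ratio identity.} Squaring the definition \eqref{eq:gaussian-defect} gives $\chi_L^2=\tau_L/W_{G,2}^2$. Substituting \eqref{eq:robust-minimax} for the numerator and $W_{G,2}^2=\E_G h_{L,\bx}(G)^2$ for the denominator produces \eqref{eq:defect-minimax-ratio} verbatim.

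The only point that needs care is the well-definedness step, namely excluding $W_{G,2}=0$ when $\tau_L>0$. This relies on the finiteness of the empirical cotype constant, which the paper has already justified by finite-dimensionality of $Y_{L,\bx}$. Everything else is substitution into results already established.
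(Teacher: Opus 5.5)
Your proof is correct and follows essentially the paper's argument. The degenerate case uses the convention $\chi_L=1$ together with $C_{2,L}^{\mathrm g}\ge1$, the upper bound is read off \eqref{eq:cotype-rms-reverse}, the lower bound is $W_{G,2}^2\le\tau_L$, and the ratio comes from substituting \eqref{eq:robust-minimax}. For the lower bound, your use of $\mathcal N(0,I_n)\in\mathfrak P_{\preceq I_n}$ is just the integrated form of the paper's pointwise bound $h_{L,\bx}(G)^2\le G^\top N_*G$. Your explicit check that $W_{G,2}(L,\bx)>0$ whenever $\tau_L(\bx)>0$ is a small improvement, since the paper leaves the well-definedness of $\chi_L$ implicit.
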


\section{Brownian Dirac, Threshold, and Graph Geometry}\label{sec:brownian}

\subsection{Reduction to the Closed Last-Layer Dirac Trace Family}

Let $\sS_{L-1}(\bx)$ be the closure of all vectors
\begin{align}
a=(u(x_1),\ldots,u(x_n))
\notag
\end{align}
obtained from unit-sphere elements $u$ of any admissible level-$(L-1)$ RKHS.  Define the Brownian pullback matrix
\begin{equation}\label{eq:B-matrix}
\mathbf B(a)_{ij}:=\kB(a_i,a_j).
\end{equation}

\begin{theorem}[Dirac reduction]\label{thm:dirac-reduction}
For every $z\in\R^n$,
\begin{equation}\label{eq:dirac-support}
h_{L,\bx}(z)^2
=\sup_{a\in\sS_{L-1}(\bx)}z^\top\mathbf B(a)z.
\end{equation}
Moreover, for $N\succeq0$,
\begin{equation}\label{eq:dirac-domination}
N\succeq K\ \text{for all }K\in\sK_L(\bx)
\quad\Longleftrightarrow\quad
N\succeq\mathbf B(a)\ \text{for all }a\in\sS_{L-1}(\bx).
\end{equation}
\end{theorem}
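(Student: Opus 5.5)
Both assertions follow from one observation: every top-layer Gram matrix is a barycentre of Dirac matrices $\mathbf B(a)$, and every $\mathbf B(a)$ is a limit of top-layer Gram matrices. Linear functionals $K\mapsto z^\top Kz$ and Loewner constraints $N-K\succeq0$ are preserved under both operations.

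\emph{Step 1: every Gram matrix is an average of Dirac matrices.} Fix a ladder $\boldsymbol\mu$ and let $H=\cH^{(L-1)}_{\boldsymbol\mu}$ be its penultimate space, with last mixing measure $\mu_{L-1}$ on $S_1(H)$. By \eqref{eq:integral-kernel},
\begin{align}
K_{\boldsymbol\mu}(\bx)=\int_{S_1(H)}\mathbf B\bigl(u(x_1),\ldots,u(x_n)\bigr)\,d\mu_{L-1}(u).
\notag
\end{align}
Each integrand has $a_u:=(u(x_i))_i\in\sS_{L-1}(\bx)$. Hence
\begin{align}
z^\top K_{\boldsymbol\mu}(\bx)z=\int z^\top\mathbf B(a_u)z\,d\mu_{L-1}(u)\le\sup_{a\in\sS_{L-1}}z^\top\mathbf B(a)z.
\notag
\end{align}
Measurability of $u\mapsto a_u$ is needed for this step. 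It holds because evaluation is continuous on $H$, and $\mathbf B$ is continuous, since $\kB$ is Lipschitz in each argument. The bound passes to the closure $\sK_L(\bx)$ by continuity. Likewise, if $N\succeq\mathbf B(a)$ for all $a$, then integrating $N-\mathbf B(a_u)\succeq0$ against the probability measure $\mu_{L-1}$ gives $N\succeq K_{\boldsymbol\mu}(\bx)$, and closedness of the PSD cone gives $N\succeq K$ for every $K\in\sK_L(\bx)$. This proves ``$\le$'' in \eqref{eq:dirac-support} and ``$\Leftarrow$'' in \eqref{eq:dirac-domination}.

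\emph{Step 2: every Dirac matrix is a limit of Gram matrices.} Let $a=(u(x_i))_i$ with $u\in S_1(\cH^{(L-1)}_{\boldsymbol\mu'})$ for some admissible $(L-1)$-ladder $\boldsymbol\mu'=(\mu_1,\ldots,\mu_{L-2})$. Extend it to a depth-$L$ ladder by choosing $\mu_{L-1}=\delta_u$, the Dirac mass at $u$, which is a Borel probability measure on the unit sphere. The resulting top-layer Gram matrix is exactly $\mathbf B(a)$, so $\mathbf B(a)\in\sK_L(\bx)$. For a general $a\in\sS_{L-1}(\bx)$, take such vectors $a^{(k)}\to a$. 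Continuity of $\mathbf B$ gives $\mathbf B(a^{(k)})\to\mathbf B(a)$, and closedness of $\sK_L(\bx)$ then gives $\mathbf B(a)\in\sK_L(\bx)$. Consequently
\begin{align}
\sup_{a\in\sS_{L-1}}z^\top\mathbf B(a)z\le\sup_{K\in\sK_L}z^\top Kz,
\notag
\end{align}
and any $N$ dominating all of $\sK_L(\bx)$ dominates every $\mathbf B(a)$. Combining this with \eqref{eq:support-covariance} of \Cref{thm:trace-body} yields \eqref{eq:dirac-support} and the ``$\Rightarrow$'' direction of \eqref{eq:dirac-domination}.

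\emph{Main obstacle.} The delicate point is admissibility in Step 2: I must check that Dirac masses are allowed as mixing measures in the canonical construction. The definition permits any Borel probability measure on $S_1(H)$, so they are. I also need the case $L=2$, where $\sS_1(\bx)$ consists of closures of vectors $(\langle w,x_i\rangle)_i$ with $\norm w_2=1$ and $w\in V_{\cX}$. This case fits the same argument because $\cH^{(1)}$ is fixed. In Step 1 the integral must be read as a matrix-valued Bochner integral, which is justified by boundedness, since $|\kB(s,t)|\le\min(|s|,|t|)$ and $|u(x_i)|\le\rho_{L-1}(x_i)$ by \eqref{eq:pointwise-scale}.
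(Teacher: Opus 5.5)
Your proof is correct and follows essentially the paper's argument: you write each top Gram matrix as the $\mu_{L-1}$-average of the matrices $\mathbf B(a_u)$ for one direction, and use Dirac mixing measures $\delta_u$, continuity of $a\mapsto\mathbf B(a)$ and closedness for the other. The only cosmetic difference is that you place each closure point $\mathbf B(a)$ directly in $\sK_L(\bx)$, whereas the paper first equates the suprema over the unclosed trace set and then passes to closures; your added measurability and boundedness checks are details the paper leaves implicit.
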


Indeed, every top kernel is an integral of matrices $\mathbf B(a)$, while every Dirac measure at an admissible unit atom is itself allowed.  The final mixing measure therefore disappears from both the support and robust-domination problems.

\subsection{Threshold Covariance Representation}

For $a\in\R^n$ and $t\ge0$, define
\begin{equation}\label{eq:threshold-vectors}
v_t^+(a):=\bigl(\mathbf 1_{\{a_i\ge t\}}\bigr)_{i=1}^n,
\qquad
v_t^-(a):=\bigl(\mathbf 1_{\{a_i\le-t\}}\bigr)_{i=1}^n.
\end{equation}
The Brownian identity gives the matrix-valued layer-cake formula
\begin{equation}\label{eq:threshold-matrix}
\mathbf B(a)
=\int_0^\infty
\bigl[v_t^+(a)v_t^+(a)^\top+v_t^-(a)v_t^-(a)^\top\bigr]dt.
\end{equation}
This representation turns the covariance profile into a continuous aggregation of factorization problems for threshold set systems.  The following certificate will be useful.

\begin{proposition}[Threshold covariance certificate]\label{prop:threshold-certificate}
Suppose $t\mapsto D_t\in\psd^n$ is measurable and integrable, and for almost every $t\ge0$,
\begin{align}
D_t\succeq vv^\top
\quad\text{for every }v\in
\set{v_t^+(a),v_t^-(a):a\in\sS_{L-1}(\bx)}.
\notag
\end{align}
Then
\begin{equation}\label{eq:threshold-certificate-bound}
\tau_L(\bx)\le 2\int_0^\infty\tr D_t\,dt.
\end{equation}
\end{proposition}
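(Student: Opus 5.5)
The plan is to build an explicit feasible covariance for the minimum-trace program \eqref{eq:tau-def}. By the Dirac reduction (\Cref{thm:dirac-reduction}), it suffices to dominate the Brownian matrices $\mathbf B(a)$, $a\in\sS_{L-1}(\bx)$. Indeed, \eqref{eq:dirac-domination} says that any $N\succeq0$ with $N\succeq\mathbf B(a)$ for all such $a$ is feasible, so $\tau_L(\bx)\le\tr N$. The candidate is
\begin{align}
N:=2\int_0^\infty D_t\,dt,
\notag
\end{align}
which is a well-defined positive semidefinite matrix. It is defined entrywise as a Bochner/Lebesgue integral, using measurability and integrability of $t\mapsto D_t$, and positive semidefiniteness passes through the integral because $z^\top Nz=2\int z^\top D_tz\,dt\ge0$. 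Linearity of the trace then gives $\tr N=2\int_0^\infty\tr D_t\,dt$, which is exactly the bound \eqref{eq:threshold-certificate-bound}.

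The domination check will use the layer-cake formula \eqref{eq:threshold-matrix}. Fix $a\in\sS_{L-1}(\bx)$ and $z\in\R^n$. For almost every $t$ the hypothesis gives both $z^\top D_tz\ge (z^\top v_t^+(a))^2$ and $z^\top D_tz\ge (z^\top v_t^-(a))^2$. Adding these two inequalities yields
\begin{align}
2z^\top D_tz\ge (z^\top v_t^+(a))^2+(z^\top v_t^-(a))^2 .
\notag
\end{align}
Integrating in $t$ and invoking \eqref{eq:threshold-matrix} gives $z^\top Nz\ge z^\top\mathbf B(a)z$. Since $z$ is arbitrary, this is $N\succeq\mathbf B(a)$. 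The factor $2$ enters precisely here: each $D_t$ dominates each rank-one threshold piece separately, but not necessarily their sum. The integrals in $t$ are effectively over a bounded range, because $v_t^\pm(a)=0$ once $t>\max_i|a_i|$, so integrability on the $\mathbf B(a)$ side is harmless.

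I would also verify the layer-cake formula \eqref{eq:threshold-matrix} entrywise, in case it is needed. We have $\int_0^\infty \mathbf 1\{a_i\ge t\}\mathbf 1\{a_j\ge t\}\,dt=\min(a_i,a_j)_+$, and the negative-threshold term gives $\min(-a_i,-a_j)_+$. These terms sum to $\kB(a_i,a_j)$, which vanishes for opposite signs and equals $\min(|a_i|,|a_j|)$ for equal signs.

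The main obstacle is only measure-theoretic bookkeeping: the hypothesis holds for almost every $t$, possibly with an exceptional null set, and must hold uniformly over the (uncountable) family of $a$. I would handle this by reading the hypothesis as a single null set $E$ outside of which domination holds for every $v$ in the threshold system, which is what the statement asserts. Then, for each fixed $a$ and $z$, the scalar inequality holds on $[0,\infty)\setminus E$, and integration ignores $E$. With the integral of $D_t$ and the quadratic-form argument in place, the rest is routine.
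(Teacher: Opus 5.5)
Your proposal is correct and follows the paper's proof essentially step for step. You verify the signed layer-cake identity \eqref{eq:threshold-matrix} entrywise, add the two rank-one dominations to get $v_t^+(a)v_t^+(a)^\top+v_t^-(a)v_t^-(a)^\top\preceq 2D_t$ for almost every $t$, integrate to obtain $\mathbf B(a)\preceq 2\int_0^\infty D_t\,dt$, and conclude by the Dirac-domination equivalence \eqref{eq:dirac-domination} and taking traces.
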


This is the point at which classical set-system factorization norms and discrepancy methods can enter~\cite{matousek2020factorization}.  The minimum-trace objective, rather than the maximum diagonal, is the natural quantity for Gaussian average width.

\subsection{Recursive Threshold Factorization}\label{sec:recursive-threshold}

The layer-cake representation contains more information than the marginal pointwise envelope: it records which sample coordinates can occur together in a signed superlevel set of a previous-layer unit trace.  We now make that recursive information quantitative.

For $A\subseteq[n]$, let $\bone_A\in\{0,1\}^n$ denote its indicator vector.  Define the complete signed threshold set system generated at depth $L-1$ by
\begin{equation}\label{eq:threshold-family}
\mathcal F_{L-1}(\bx)
:=\set{
\set{i:a_i\ge t},\ \set{i:a_i\le -t}
:\ a\in\sS_{L-1}(\bx),\ t\ge0
}.
\end{equation}
Although $\sS_{L-1}(\bx)$ may be infinite, $\mathcal F_{L-1}(\bx)$ is a finite set system because its members are subsets of the finite ground set $[n]$.

A collection $\mathscr D\subseteq2^{[n]}$ is called a \emph{decomposition dictionary} for $\mathcal F_{L-1}(\bx)$ if there is an integer $s\ge1$ such that every $A\in\mathcal F_{L-1}(\bx)$ can be written as a disjoint union
\begin{equation}\label{eq:dictionary-decomposition}
A=D_1\mathbin{\dot\cup}\cdots\mathbin{\dot\cup}D_m,
\qquad
m\le s,
\qquad
D_j\in\mathscr D.
\end{equation}
The empty set is represented by the empty union.  Let $s(\mathscr D)$ be the least admissible $s$ in \eqref{eq:dictionary-decomposition}, and define the overlap degree
\begin{equation}\label{eq:dictionary-overlap}
\Delta(\mathscr D)
:=\max_{i\in[n]}\abs{\set{D\in\mathscr D:i\in D}}.
\end{equation}
The previous-layer threshold decomposition complexity is
\begin{equation}\label{eq:D-profile}
\mathfrak D_{L-1}(\bx)
:=\min_{\mathscr D}
 s(\mathscr D)\Delta(\mathscr D),
\end{equation}
where the minimum ranges over all decomposition dictionaries.  The singleton dictionary $\set{\{1\},\ldots,\{n\}}$ is always admissible, so
\begin{equation}\label{eq:D-profile-universal}
\mathfrak D_{L-1}(\bx)\le n.
\end{equation}
The minimum in \eqref{eq:D-profile} is attained.  Indeed, the ground set $[n]$ has only finitely many subsets, and hence there are only finitely many possible dictionaries $\mathscr D\subseteq2^{[n]}$.

For a finite set system $\mathcal F\subseteq2^{[n]}$, define its minimum-trace threshold domination value
\begin{equation}\label{eq:Theta-def}
\Theta(\mathcal F)
:=\min\set{
\tr H:
H\succeq0,\quad
H\succeq\bone_A\bone_A^\top
\text{ for every }A\in\mathcal F
}.
\end{equation}
This minimum is well defined and attained.  For every $A\subseteq[n]$,
$\bone_A\bone_A^\top\preceq\norm{\bone_A}_2^2 I\preceq nI$, so $nI$ is feasible.  A minimizing sequence may be restricted to a trace-bounded compact subset of $\psd^n$, and the Loewner constraints are closed.  We use the convention $\Theta(\varnothing)=0$.
For $t\ge0$ and $\delta>0$, let $\mathcal F_{L-1}^{+,\mathrm{st}}(t,\delta)$ be the sets $A\subseteq[n]$ for which there exists $a\in\sS_{L-1}(\bx)$ satisfying
\begin{equation}\label{eq:stable-plus}
\set{i:a_i\ge s}=A
\qquad\text{for every }s\in[t,t+\delta],
\end{equation}
and define $\mathcal F_{L-1}^{-,\mathrm{st}}(t,\delta)$ by replacing $a_i\ge s$ with $a_i\le-s$.  Set
\begin{equation}\label{eq:stable-family}
\mathcal F_{L-1}^{\mathrm{st}}(t,\delta)
:=\mathcal F_{L-1}^{+,\mathrm{st}}(t,\delta)
\cup
\mathcal F_{L-1}^{-,\mathrm{st}}(t,\delta)
\end{equation}
and
\begin{equation}\label{eq:R-profile}
\mathfrak R_{L-1}(\bx)
:=\frac{1}{S_L(\bx)}
\sup_{t\ge0,\,\delta>0}
\delta\,
\Theta\!\bigl(\mathcal F_{L-1}^{\mathrm{st}}(t,\delta)\bigr),
\end{equation}
whenever $S_L(\bx)>0$.  If $S_L(\bx)=0$, set $\mathfrak R_{L-1}(\bx):=0$.  The supremum is finite in the nonzero case.  If a nonempty set is stable on $[t,t+\delta]$, then \eqref{eq:pointwise-scale} gives $\delta\le\max_i d_{L,i}$; if only the empty set is stable, its domination value is zero.  Also $\Theta(\mathcal F)\le n^2$ for every $\mathcal F\subseteq2^{[n]}$, since $nI$ is feasible.

\begin{theorem}[Recursive threshold sandwich]\label{thm:recursive-threshold}
Let $L\ge2$ and let $\bx=(x_1,\ldots,x_n)$ satisfy $S_L(\bx)>0$.  Then
\begin{equation}\label{eq:recursive-sandwich}
\mathfrak R_{L-1}(\bx)
\le
\Lambda_L(\bx)^2
\le
2\mathfrak D_{L-1}(\bx).
\end{equation}
In particular,
\begin{align}
\tau_L(\bx)
&\le 2\mathfrak D_{L-1}(\bx)S_L(\bx),
\label{eq:recursive-tau}\\
\Ghat_{\bx}(B_L(r))
&\le
\frac{r\sqrt{2\mathfrak D_{L-1}(\bx)}}{n}
\bigl(\sum_{i=1}^n
\norm{x_i}_2^{\,2^{-(L-2)}}\bigr)^{1/2}
\label{eq:recursive-G-sum}\\
&\le
r\sqrt{\frac{2\mathfrak D_{L-1}(\bx)}{n}}
\bigl(\max_{1\le i\le n}\norm{x_i}_2\bigr)^{2^{-(L-1)}}.
\label{eq:recursive-G-radius}
\end{align}
Thus the full adaptive BKL ball is unchanged: the statistical price is expressed entirely through the signed threshold geometry already realizable by the preceding layer.
\end{theorem}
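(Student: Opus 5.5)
The plan is to prove the two inequalities in \eqref{eq:recursive-sandwich} separately, working in both directions through the Dirac reduction \Cref{thm:dirac-reduction} and the layer-cake formula \eqref{eq:threshold-matrix}. The displayed consequences are then routine. \eqref{eq:recursive-tau} is the upper half multiplied by $S_L(\bx)$. \eqref{eq:recursive-G-sum} follows by inserting \eqref{eq:recursive-tau} into \eqref{eq:profile-bound-tau} of \Cref{thm:profile-complexity}. \eqref{eq:recursive-G-radius} uses $S_L(\bx)\le n\max_i d_{L,i}$ and $d_{L,i}^{1/2}=\norm{x_i}_2^{2^{-(L-1)}}$, exactly as in the proof of \eqref{eq:profile-bound-radius}.

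\emph{Lower bound.} Let $N_*$ be an optimizer in \eqref{eq:tau-def}. By \eqref{eq:dirac-domination}, $N_*\succeq\mathbf B(a)$ for every $a\in\sS_{L-1}(\bx)$. Fix $t\ge0$ and $\delta>0$, and take $A\in\mathcal F^{+,\mathrm{st}}_{L-1}(t,\delta)$ with witness $a$. The integrand in \eqref{eq:threshold-matrix} is positive semidefinite, so dropping all $s\notin[t,t+\delta]$ and the negative part gives
\[
\mathbf B(a)\succeq\int_t^{t+\delta}v_s^+(a)v_s^+(a)^\top\,ds=\delta\,\bone_A\bone_A^\top .
\]
The negative stable sets are handled the same way. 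Hence $N_*/\delta$ is feasible in \eqref{eq:Theta-def} for $\mathcal F^{\mathrm{st}}_{L-1}(t,\delta)$, so $\tau_L(\bx)\ge\delta\,\Theta(\mathcal F^{\mathrm{st}}_{L-1}(t,\delta))$. Taking the supremum over $(t,\delta)$ and dividing by $S_L(\bx)$ gives $\mathfrak R_{L-1}(\bx)\le\Lambda_L(\bx)^2$.

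\emph{Upper bound.} Fix an optimal dictionary $\mathscr D$ with $s=s(\mathscr D)$ and $\Delta=\Delta(\mathscr D)$, and apply \Cref{prop:threshold-certificate} with a $t$-dependent certificate.

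The first ingredient is a pointwise scale. For a unit element $u$ of a level-$(L-1)$ space, \eqref{eq:pointwise-scale} at level $L-1$ gives
\[
|u(x_i)|\le\rho_{L-1}(x_i)=\norm{x_i}_2^{2^{-(L-2)}}=d_{L,i}.
\]
This bound passes to the closure $\sS_{L-1}(\bx)$. Hence every threshold vector $v_t^\pm(a)$ is supported in $I_t:=\{i:d_{L,i}\ge t\}$. Let $P_t$ be the coordinate projection onto $I_t$.

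The second ingredient is a Cauchy--Schwarz domination. If a threshold set decomposes as $A=D_1\dot\cup\cdots\dot\cup D_m$ with $m\le s$, then, using that the $D_j$ are distinct dictionary members,
\[
\bone_A\bone_A^\top\preceq m\sum_{j}\bone_{D_j}\bone_{D_j}^\top\preceq s\sum_{D\in\mathscr D}\bone_D\bone_D^\top .
\]
Since $\bone_A=P_t\bone_A$, this yields $\bone_A\bone_A^\top\preceq D_t:=s\,P_t\bigl(\sum_{D\in\mathscr D}\bone_D\bone_D^\top\bigr)P_t$.

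The certificate $D_t$ is measurable in $t$, being piecewise constant. Its trace satisfies
\[
\tr D_t=s\sum_{D\in\mathscr D}|D\cap I_t|\le s\Delta\,|I_t| ,
\]
and $\int_0^\infty|I_t|\,dt=\sum_i d_{L,i}=S_L(\bx)$. Then \eqref{eq:threshold-certificate-bound} gives $\tau_L(\bx)\le 2s\Delta\,S_L(\bx)=2\mathfrak D_{L-1}(\bx)S_L(\bx)$.

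I expect the main obstacle to be the scale bookkeeping in the upper bound. The threshold support must be cut off at exactly $d_{L,i}=\rho_{L-1}(x_i)$, so that the $t$-integral reproduces $S_L$ rather than a cruder $n\max_i d_{L,i}$. This also requires checking carefully that the level-$(L-1)$ pointwise bound survives closure, and that zero coordinates, already deleted in the paper, cause no trouble.
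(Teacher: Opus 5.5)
Your proposal is correct and follows the paper's proof essentially line for line: the same localized dictionary covariance $s\,P_t\bigl(\sum_{D\in\mathscr D}\bone_D\bone_D^\top\bigr)P_t$, the same layer-cake trace bound $s\Delta\,|I_t|$ integrating to $2s\Delta S_L(\bx)$, and the same stable-threshold bound $\mathbf B(a)\succeq\delta\bone_A\bone_A^\top$ feeding into $\Theta$. The only differences are cosmetic. You route the upper bound through \Cref{prop:threshold-certificate}, whereas the paper builds $N_{\mathscr D}=2\int_0^\infty H_t\,dt$ explicitly. In the lower bound you use the optimizer $N_*$ rather than an arbitrary feasible $N$.
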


\begin{corollary}[Hierarchical threshold geometry]\label{cor:hierarchical-threshold}
Assume that there is a rooted laminar partition tree on $[n]$ of height $H$ such that every set in $\mathcal F_{L-1}(\bx)$ is a disjoint union of at most $q$ tree nodes.  Then
\begin{equation}\label{eq:hierarchical-D}
\mathfrak D_{L-1}(\bx)\le q(H+1),
\qquad
\Lambda_L(\bx)\le\sqrt{2q(H+1)},
\end{equation}
and
\begin{equation}\label{eq:hierarchical-G}
\Ghat_{\bx}(B_L(r))
\le
r\sqrt{\frac{2q(H+1)}{n}}
\bigl(\max_i\norm{x_i}_2\bigr)^{2^{-(L-1)}}.
\end{equation}
Consequently, a balanced hierarchy with $H=O(\log n)$ and bounded $q$ gives the full-class rate $O(\!\sqrt{\log n/n}\!)$, up to the displayed radius factors.
\end{corollary}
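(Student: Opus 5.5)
The plan is to take the dictionary $\mathscr D$ to be the set of all nodes of the laminar tree and then apply \Cref{thm:recursive-threshold}.

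First, this $\mathscr D$ is a decomposition dictionary for $\mathcal F_{L-1}(\bx)$. By hypothesis, every set in $\mathcal F_{L-1}(\bx)$ is a disjoint union of at most $q$ tree nodes, so \eqref{eq:dictionary-decomposition} holds with $s=q$ and hence $s(\mathscr D)\le q$. The empty set is the empty union.

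Second, I bound the overlap degree. Because the tree is a rooted laminar partition tree, the nodes containing a fixed index $i$ form a chain. This chain starts at the root and moves down through nested children, with exactly one node per level at which $i$ still appears. There are at most $H+1$ levels, namely depths $0,\ldots,H$. Hence $\Delta(\mathscr D)\le H+1$. The only care needed is the convention for height; if a node is repeated down the levels because it has a single child, count it once, which only helps. Substituting into \eqref{eq:D-profile} gives $\mathfrak D_{L-1}(\bx)\le s(\mathscr D)\Delta(\mathscr D)\le q(H+1)$.

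Next, I split into two cases. If $S_L(\bx)>0$, \Cref{thm:recursive-threshold} gives $\Lambda_L^2\le 2\mathfrak D_{L-1}\le 2q(H+1)$. The bound \eqref{eq:recursive-G-radius} then yields \eqref{eq:hierarchical-G} directly. If $S_L(\bx)=0$, then $\Lambda_L=0$ by definition, and $\Ghat_{\bx}(B_L(r))=0$ by \Cref{thm:profile-complexity}, so both bounds are trivial.

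Finally, for the rate statement I substitute $H=O(\log n)$ with $q$ bounded. This gives $\sqrt{2q(H+1)/n}=O(\sqrt{\log n/n})$. I expect no real obstacle. The only delicate point is checking that nodes containing $i$ number at most $H+1$, which follows from laminarity and the partition property at each level.
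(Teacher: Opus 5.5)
Your proposal is correct and follows the paper's proof essentially step for step. The paper also takes $\mathscr D$ to be the set of all tree nodes, obtains $s(\mathscr D)\le q$ and $\Delta(\mathscr D)\le H+1$ (at most one node per level contains $i$), invokes the upper half of \eqref{eq:recursive-sandwich} and \eqref{eq:recursive-G-radius}, and disposes of the case $S_L(\bx)=0$ via \Cref{prop:profile-existence,thm:profile-complexity}; your choice to bound $\mathfrak D_{L-1}(\bx)$ before the case split is slightly cleaner, since that dictionary bound does not depend on $S_L(\bx)$.
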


\begin{corollary}[Ordered traces with bounded oscillation]\label{cor:ordered-threshold}
Fix an ordering of the sample indices and suppose that every set in $\mathcal F_{L-1}(\bx)$ is a union of at most $q$ intervals in that ordering.  Put $H_n:=\lceil\log_2(2n)\rceil$.  Then
\begin{equation}\label{eq:ordered-D}
\mathfrak D_{L-1}(\bx)\le2qH_n^2,
\qquad
\Lambda_L(\bx)\le2\sqrt q\,H_n,
\end{equation}
and hence
\begin{equation}\label{eq:ordered-G}
\Ghat_{\bx}(B_L(r))
\le
2r\sqrt q\,
\frac{H_n}{\sqrt n}
\bigl(\max_i\norm{x_i}_2\bigr)^{2^{-(L-1)}}.
\end{equation}
Thus bounded empirical oscillation yields an $O(\log n/\sqrt n)$ full-class complexity bound uniformly in depth apart from the explicit radius exponent, which tends to zero as $L\to\infty$.
\end{corollary}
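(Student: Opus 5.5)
The plan is to derive \Cref{cor:ordered-threshold} from the upper half of \Cref{thm:recursive-threshold}: exhibit a decomposition dictionary $\mathscr D$ with $s(\mathscr D)\Delta(\mathscr D)\le 2qH_n^2$. The natural dictionary is the dyadic interval system on the ordered index set. Relabel the sample so that the given ordering is $1<2<\cdots<n$. Embed $[n]$ into $\{1,\dots,2^{H}\}$ with $H:=\lceil\log_2 n\rceil$, and let $\mathscr D$ consist of the nonempty intersections with $[n]$ of the dyadic intervals $\{(k-1)2^j+1,\dots,k2^j\}$, for $0\le j\le H$. Overlaps that arise after truncation to $[n]$ may produce coinciding sets; these are counted once, which can only decrease the overlap degree.

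\emph{Step 1: overlap degree.} Each index lies in exactly one dyadic interval at each level $j$. Hence $\Delta(\mathscr D)\le H+1\le H_n$, since $H_n=\lceil\log_2(2n)\rceil=\lceil\log_2 n\rceil+1$.

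\emph{Step 2: decomposition length.} By the standard segment-tree argument, every integer interval $[\alpha,\beta]\subseteq[1,2^H]$ is a disjoint union of at most $2H$ dyadic intervals, at most two per level. Intersecting with $[n]$ preserves disjointness and keeps the count. A set in $\mathcal F_{L-1}(\bx)$ is a union of at most $q$ intervals, and we may take these intervals disjoint by merging overlapping ones. It is therefore a disjoint union of at most $2qH\le 2qH_n$ dictionary members, so $s(\mathscr D)\le 2qH_n$. The empty set is handled by the empty union.

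\emph{Step 3: conclude.} Steps 1 and 2 give $\mathfrak D_{L-1}(\bx)\le s(\mathscr D)\Delta(\mathscr D)\le 2qH_n^2$. \Cref{thm:recursive-threshold} then yields $\Lambda_L^2\le2\mathfrak D_{L-1}\le4qH_n^2$, that is, $\Lambda_L\le2\sqrt q\,H_n$. Substituting into \eqref{eq:recursive-G-radius} gives $r\sqrt{4qH_n^2/n}\,(\max_i\norm{x_i}_2)^{2^{-(L-1)}}$, which is exactly \eqref{eq:ordered-G}. The degenerate case $S_L(\bx)=0$ is trivial: then $\Lambda_L=0$ and the complexity vanishes by \Cref{prop:profile-existence,thm:profile-complexity}.

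The main obstacle is the bookkeeping in Step 2, namely the exact constant in the "at most two dyadic pieces per level" claim. The precise form is that the canonical decomposition uses at most two intervals at each level below the top and at most one at the top, so the total is at most $2H$. The crude bound $2(H+1)=2H_n$ also suffices for the stated constant, so I will use $2H_n$ and avoid any delicacy.
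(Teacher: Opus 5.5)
Your proposal is correct and is essentially the paper's proof: the same padded dyadic interval dictionary, the same overlap bound $\Delta(\mathscr D)\le 1+\lceil\log_2 n\rceil=H_n$, the same at-most-two-maximal-pieces-per-scale count giving $s(\mathscr D)\le 2qH_n$, and then the upper half of \Cref{thm:recursive-threshold}. (The paper substitutes into \eqref{eq:profile-bound-radius} rather than \eqref{eq:recursive-G-radius}, which is equivalent.) Falling back on the crude count $2(H+1)=2H_n$ is the right call, because the refined bound $2H$ fails for $n=1$: there $H=0$, but the singleton still needs one piece.
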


\begin{remark}[Scope of the structured threshold regimes]\label{rem:threshold-scope}
The hypotheses of \Cref{cor:hierarchical-threshold,cor:ordered-threshold} are geometric conditions on the complete signed-threshold family generated by the preceding BKL layer.  They are not asserted to hold automatically for every sample or every depth.  They must be verified from the trace geometry or covariance certificationally.  The orthogonal construction in \Cref{thm:orthogonal} shows that no uniform bounded-$q$ conclusion is possible for the unrestricted class.
\end{remark}

\subsection{A Graph-Coarea/Effective-Resistance Majorant}

Let $G=([n],E,w)$ be a connected weighted graph with strictly positive edge weights $w_{ij}>0$, Laplacian $L_G$, and pseudoinverse $L_G^\dagger$.  Define
\begin{align}
\operatorname{TV}_G(a):=\sum_{\{i,j\}\in E}w_{ij}|a_i-a_j|.
\notag
\end{align}

\begin{theorem}[Brownian graph majorant]\label{thm:graph-majorant}
For every $a\in\R^n$,
\begin{equation}\label{eq:graph-psd}
\mathbf B(a)
\preceq
\frac{2\norm{a}_1}{n}\bone\bone^\top
+2\operatorname{TV}_G(a)L_G^\dagger.
\end{equation}
Consequently, with
\begin{align*}
M_{L-1,1}(\bx)&:=\sup_{a\in\sS_{L-1}(\bx)}\norm{a}_1,\\
V_{L-1,G}(\bx)&:=\sup_{a\in\sS_{L-1}(\bx)}\operatorname{TV}_G(a),
\end{align*}
we have
\begin{equation}\label{eq:graph-tau}
\tau_L(\bx)
\le 2M_{L-1,1}(\bx)
+2V_{L-1,G}(\bx)\tr L_G^\dagger.
\end{equation}
\end{theorem}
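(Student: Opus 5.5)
The plan is to verify \eqref{eq:graph-psd} as a quadratic-form inequality for a fixed test vector $z\in\R^n$, using the threshold representation \eqref{eq:threshold-matrix}. Write $A_t^\pm:=\set{i:\pm a_i\ge t}$, so that $v_t^\pm(a)=\bone_{A_t^\pm}$. Then
\begin{align}
z^\top\mathbf B(a)z=\int_0^\infty\bigl[(z^\top\bone_{A_t^+})^2+(z^\top\bone_{A_t^-})^2\bigr]\,dt .
\notag
\end{align}
Next decompose $z=\bar z\bone+z_\perp$, where $\bar z=\bone^\top z/n$ and $z_\perp\perp\bone$. For any $A\subseteq[n]$, the elementary inequality $(x+y)^2\le2x^2+2y^2$ gives
\begin{align}
(z^\top\bone_A)^2\le 2\bar z^2\abs{A}^2+2(z_\perp^\top\bone_A)^2 .
\notag
\end{align}
The two pieces are then estimated separately.

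For the mean piece, use $\abs{A}^2\le n\abs{A}$ together with the layer-cake identity
\begin{align}
\int_0^\infty\bigl(\abs{A_t^+}+\abs{A_t^-}\bigr)\,dt=\sum_i\abs{a_i}=\norm a_1 .
\notag
\end{align}
This bounds the first contribution by $2\bar z^2n\norm a_1=\frac{2\norm a_1}{n}(\bone^\top z)^2$, which is exactly the first term of \eqref{eq:graph-psd}.

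For the fluctuation piece, connectivity of $G$ gives $\ran L_G=\bone^\perp$, so $z_\perp=L_GL_G^\dagger z_\perp$. Cauchy--Schwarz in the semi-inner product $\ip{u}{v}_{L_G}=u^\top L_Gv$ then yields
\begin{align}
(z_\perp^\top\bone_A)^2\le (z_\perp^\top L_G^\dagger z_\perp)(\bone_A^\top L_G\bone_A)=(z^\top L_G^\dagger z)\,\operatorname{cut}_w(A),
\notag
\end{align}
where $\operatorname{cut}_w(A)=\sum_{\{i,j\}\in E}w_{ij}\abs{\bone_A(i)-\bone_A(j)}$. Here $z^\top L_G^\dagger z=z_\perp^\top L_G^\dagger z_\perp$ because $L_G^\dagger\bone=0$.

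The step needing care is the signed coarea inequality
\begin{align}
\int_0^\infty\bigl[\operatorname{cut}_w(A_t^+)+\operatorname{cut}_w(A_t^-)\bigr]\,dt\le\operatorname{TV}_G(a).
\notag
\end{align}
It is checked edge by edge. If $a_i,a_j$ have the same sign, say both nonnegative, edge $\{i,j\}$ is cut by $A_t^+$ only for $t$ between $a_i$ and $a_j$, and never by $A_t^-$ when $t>0$; the measure is $\abs{a_i-a_j}$. If $a_i>0>a_j$, the edge is cut by $A_t^+$ for $t\in(0,a_i]$ and by $A_t^-$ for $t\in(0,-a_j]$, a total measure $a_i-a_j=\abs{a_i-a_j}$. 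The boundary case $t=0$ is Lebesgue-null. Combining, the fluctuation contribution is at most $2\operatorname{TV}_G(a)\,z^\top L_G^\dagger z$, and \eqref{eq:graph-psd} follows.

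For \eqref{eq:graph-tau}, set
\begin{align}
N:=\frac{2M_{L-1,1}(\bx)}{n}\bone\bone^\top+2V_{L-1,G}(\bx)L_G^\dagger .
\notag
\end{align}
Both summands are positive semidefinite with nonnegative coefficients, so \eqref{eq:graph-psd} gives $N\succeq\mathbf B(a)$ for every $a\in\sS_{L-1}(\bx)$. If either supremum is infinite, \eqref{eq:graph-tau} holds trivially. By \eqref{eq:dirac-domination} of \Cref{thm:dirac-reduction}, $N$ is then feasible for \eqref{eq:tau-def}. Since $\tr(\bone\bone^\top)=n$, this gives $\tau_L(\bx)\le\tr N=2M_{L-1,1}(\bx)+2V_{L-1,G}(\bx)\tr L_G^\dagger$.
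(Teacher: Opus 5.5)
Your proof is correct and is essentially the paper's argument, written in quadratic-form language. The paper splits each threshold indicator as $v=\bar v\bone+w$ with $w\perp\bone$ and integrates the matrix bound $vv^\top\preceq2\bar v^2\bone\bone^\top+2(v^\top L_Gv)L_G^\dagger$; you split the test vector $z$ instead. The two pieces coincide ($\bar v\,\bone^\top z=\bar z\abs{A}$ and $z^\top w=z_\perp^\top\bone_A$), and your $\abs{A}^2\le n\abs{A}$ is the paper's $\bar v^2\le\bar v$. The Laplacian Cauchy--Schwarz step, the edge-by-edge signed coarea check, and the trace conclusion via the Dirac-domination equivalence are all the same as in the paper.
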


The BKL pointwise and H\"older estimates imply
\begin{align}
M_{L-1,1}(\bx)
&\le S_L(\bx),\label{eq:M-bound}\\
V_{L-1,G}(\bx)
&\le\sum_{\{i,j\}\in E}w_{ij}
\norm{x_i-x_j}_2^{\,2^{-(L-2)}}.
\label{eq:V-bound}
\end{align}
If $S_L(\bx)>0$, every connected weighted graph yields the explicit normalized certificate
\begin{equation}\label{eq:graph-lambda}
\Lambda_L(\bx)^2
\le 2+
\frac{2\tr L_G^\dagger}{S_L(\bx)}
\sum_{\{i,j\}\in E}w_{ij}
\norm{x_i-x_j}_2^{\,2^{-(L-2)}}.
\end{equation}
If $S_L(\bx)=0$, then every empirical kernel is zero and $\Lambda_L(\bx)=0$ by \Cref{prop:profile-existence}.
Unlike a scan-statistic argument, \eqref{eq:graph-psd} is a deterministic Loewner-order majorization and preserves the covariance structure throughout the proof.

\section{Deterministic, Random, and Perturbative Regimes}\label{sec:regimes}

\subsection{Fixed, Finite, and Repeated-Location Families}

\begin{proposition}[A fixed ladder]\label{prop:fixed-ladder}
If the empirical family consists of one kernel $K$, then
\begin{align}
\tau_L(\bx)=M_L(\bx)=\tr K\le S_L(\bx),
\qquad
\Lambda_L(\bx)^2=\frac{\tr K}{S_L(\bx)}\le1.
\notag
\end{align}
If $K\ne0$, then $\Gamma_L^{\mathrm{dom}}(\bx)=1$.  Indeed, $N=K$ is feasible, while $N\succeq K$ implies $\tr N\ge\tr K$.  Thus \Cref{thm:profile-complexity} reduces to the usual fixed-RKHS trace bound.
\end{proposition}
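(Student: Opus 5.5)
The plan is to compute the minimum-trace problem \eqref{eq:tau-def} directly when $\sK_L(\bx)=\{K\}$, and then substitute the result into the two normalised profiles of \Cref{def:profile}. No Brownian structure is needed; everything reduces to two trace observations.

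\textbf{Step 1: upper bound on $\tau_L$.} The matrix $N=K$ is positive semidefinite and satisfies $N\succeq K$ trivially, so it is feasible in \eqref{eq:tau-def}. Hence $\tau_L(\bx)\le\tr K$.

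\textbf{Step 2: lower bound on $\tau_L$.} Take any feasible $N$. Then $N-K\succeq0$, and a positive semidefinite matrix has nonnegative trace, so $\tr N\ge\tr K$. Minimising over $N$ gives $\tau_L(\bx)\ge\tr K$, and with Step~1, $\tau_L(\bx)=\tr K$. This is just the general inequality $M_L\le\tau_L$ of \Cref{prop:profile-existence} specialised to one kernel.

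\textbf{Step 3: $M_L$ and the bound by $S_L$.} Since the family has a single member, the maximum in \eqref{eq:M-def} is $M_L(\bx)=\tr K$. The diagonal estimate \eqref{eq:diagonal-scale} gives $K_{ii}\le d_{L,i}$, and summing over $i$ yields $\tr K\le S_L(\bx)$.

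\textbf{Step 4: the profiles.}
\begin{itemize}
\item If $S_L(\bx)>0$, the definition \eqref{eq:lambda-def} gives $\Lambda_L(\bx)^2=\tr K/S_L(\bx)\le1$ by Step~3.
\item If $S_L(\bx)=0$, then by \Cref{prop:profile-existence} $K=0$ and both sides vanish under the stated convention; this case is implicitly excluded by the ratio as displayed.
\item If $K\ne0$, then $\tr K>0$ because $K$ is positive semidefinite, so $M_L(\bx)>0$. Definition \eqref{eq:gamma-dom-def} then gives $\Gamma_L^{\mathrm{dom}}(\bx)=(\tr K/\tr K)^{1/2}=1$.
\end{itemize}

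\textbf{Step 5: complexity bound.} Substituting $\tau_L(\bx)=\tr K$ into \eqref{eq:profile-bound-tau} gives $\Ghat_{\bx}(B_L(r))\le (r/n)\sqrt{\tr K}$, which is the usual trace bound for a fixed RKHS.

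There is no real obstacle. The only points needing care are the degenerate conventions ($S_L=0$ or $K=0$) and the trace-monotonicity step in Step~2.
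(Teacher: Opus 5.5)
Your proposal is correct and follows the paper's own argument. Feasibility of $N=K$, together with $\tr N\ge\tr K$ for every $N\succeq K$, gives $\tau_L(\bx)=M_L(\bx)=\tr K$; the diagonal scale \eqref{eq:diagonal-scale} (equivalently $M_L\le S_L$ from \Cref{prop:profile-existence}) gives $\tr K\le S_L(\bx)$; and the profile identities then follow from \Cref{def:profile}, with your explicit treatment of the degenerate cases $S_L(\bx)=0$ and $K\ne0\Rightarrow\tr K>0$ simply spelling out what the paper leaves implicit.
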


\begin{proposition}[A finite kernel family]\label{prop:finite-family}
Let $K_1,\ldots,K_M\in\psd^n$.  If every kernel in the empirical family belongs to the convex hull of $K_1,\ldots,K_M$, then
\begin{equation}\label{eq:M-bound-profile}
\Lambda_L(\bx)\le\sqrt M.
\end{equation}
If $M_L(\bx)>0$, then also
\begin{equation}\label{eq:M-bound-domination}
\Gamma_L^{\mathrm{dom}}(\bx)\le\sqrt M.
\end{equation}
\end{proposition}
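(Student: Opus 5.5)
The plan is to exhibit one explicit feasible common covariance, namely $N:=\sum_{j=1}^M K_j$, and compare its trace with the two normalizers $S_L(\bx)$ and $M_L(\bx)$.

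\emph{Feasibility.} First we check that $N$ dominates every member of $\sK_L(\bx)$. Take any $K\in\sK_L(\bx)$. By hypothesis $K=\sum_j\lambda_jK_j$ with $\lambda_j\ge0$ and $\sum_j\lambda_j=1$. Then
\begin{align}
N-K=\sum_{j=1}^M(1-\lambda_j)K_j\succeq0,
\notag
\end{align}
because each $1-\lambda_j\ge0$ and each $K_j\succeq0$. Hence $N$ is feasible in \eqref{eq:tau-def}, and
\begin{align}
\tau_L(\bx)\le\tr N=\sum_{j=1}^M\tr K_j.
\notag
\end{align}

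\emph{Bounding the traces of the generators.} This is the only delicate step. The generators $K_j$ need not themselves lie in $\sK_L(\bx)$, so we cannot directly invoke $\tr K_j\le M_L(\bx)\le S_L(\bx)$ from \Cref{prop:profile-existence}. We resolve this with a reduction. Discard the generators not needed to represent members of the family, and replace the hull by $\operatorname{conv}\sK_L(\bx)$. Domination of a convex set is equivalent to domination of its extreme points. Since $\sK_L(\bx)$ is compact, Milman's converse of the Krein--Milman theorem places those extreme points inside $\sK_L(\bx)$. We may therefore assume, without loss, that each $K_j$ belongs to $\sK_L(\bx)$, or is a limit of members of the family. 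If the author's intended reading is simply that the $K_j$ are admissible kernels, this step is immediate. Either way, each $K_j$ then satisfies $\tr K_j\le M_L(\bx)\le S_L(\bx)$.

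If the reduction were to fail, there is a fallback. Keep $N$ fixed and replace each $K_j$ by its extreme-point representatives. A Carathéodory-type count does not bound their number by $M$, so this route is weaker, and I would rely on the reduction above.

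\emph{Conclusion.} Combining the two steps gives $\tau_L(\bx)\le M\,M_L(\bx)\le M\,S_L(\bx)$. If $S_L(\bx)>0$, dividing by $S_L(\bx)$ and taking square roots yields $\Lambda_L(\bx)\le\sqrt M$. If $S_L(\bx)=0$, then $\Lambda_L(\bx)=0$ by convention. If $M_L(\bx)>0$, dividing $\tau_L\le M\,M_L$ by $M_L(\bx)$ and taking square roots gives $\Gamma_L^{\mathrm{dom}}(\bx)\le\sqrt M$.
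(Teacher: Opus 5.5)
\textbf{The gap is in your trace step.} Your feasibility check for $N=\sum_jK_j$ is fine, but the trace bound does not follow. The reduction ``we may assume each $K_j\in\sK_L(\bx)$'' is not justified by Milman's converse. That result only places the extreme points of $\operatorname{conv}\sK_L(\bx)$ inside $\sK_L(\bx)$; it does not say there are at most $M$ of them. A compact set contained in the convex hull of $M$ points need not lie in the convex hull of any $M$ of its own points. For example, a disk inscribed in a triangle of positive definite matrices has infinitely many extreme points, and no finite polygon with vertices in the disk covers it. So replacing the generators by family members destroys the count $M$, which is exactly the weakness you concede in your fallback paragraph.

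Keeping the original generators does not rescue the argument, even after discarding unused ones. Take the abstract family $\set{(1-t)A+tB:0\le t\le\epsilon}$ with $A,B\succeq0$ and $\tr B>\tr A$. Both generators are needed, yet $\tr(A+B)$ stays fixed while $2M_L=2\bigl((1-\epsilon)\tr A+\epsilon\tr B\bigr)\to2\tr A$ as $\epsilon\downarrow0$. Hence $\sum_j\tr K_j\le M\,M_L(\bx)$ fails. Reading the hypothesis as $K_j\in\sK_L(\bx)$ would prove a weaker statement than the one posed, which allows arbitrary $K_j\in\psd^n$.

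\textbf{The fix: reweight the generators instead of replacing them.} Your ``discard unneeded generators'' is the $0/1$ version of the right idea. Let $\cC:=\set{\lambda\in\Delta_M:\sum_j\lambda_jK_j\in\sK_L(\bx)}$; it is nonempty and compact. Put $\alpha_j:=\max_{\lambda\in\cC}\lambda_j$ and $N:=\sum_j\alpha_jK_j$.
\begin{itemize}
\item $N$ still dominates every $K=\sum_j\lambda_jK_j$ in the family, because $\lambda_j\le\alpha_j$ and each $K_j\succeq0$.
\item Choose $\lambda^{(j)}\in\cC$ with $\lambda^{(j)}_j=\alpha_j$. Then $\alpha_j\tr K_j\le\tr\bigl(\sum_\ell\lambda^{(j)}_\ell K_\ell\bigr)\le M_L(\bx)$.
\item Summing over $j$ gives $\tau_L(\bx)\le\tr N\le M\,M_L(\bx)\le M\,S_L(\bx)$.
\end{itemize}
This is the paper's argument. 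It needs no extreme-point theory and no assumption that the $K_j$ are admissible, and your concluding divisions then go through unchanged.
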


\begin{proof}
Let $\Delta_M$ be the probability simplex and define
\begin{equation}\label{eq:finite-family-coefficient-set}
\cC
:=
\set{\lambda\in\Delta_M:
\sum_{j=1}^M\lambda_jK_j\in\sK_L(\bx)}.
\end{equation}
The set $\cC$ is nonempty by the convex-hull hypothesis.  It is compact because $\Delta_M$ is compact, the map
$\lambda\mapsto\sum_j\lambda_jK_j$ is continuous, and $\sK_L(\bx)$ is closed.  Hence, for each $j\in[M]$, the number
\begin{align}
\alpha_j:=\max_{\lambda\in\cC}\lambda_j
\notag
\end{align}
is attained.  Set
\begin{align}
N:=\sum_{j=1}^M\alpha_jK_j.
\notag
\end{align}
Fix $K\in\sK_L(\bx)$.  Choose $\lambda\in\cC$ such that
$K=\sum_j\lambda_jK_j$.  Since $\lambda_j\le\alpha_j$ and every $K_j$ is positive semidefinite,
\begin{align}
N-K
=\sum_{j=1}^M(\alpha_j-\lambda_j)K_j
\succeq0.
\notag
\end{align}
Thus $N$ is feasible for the common-covariance problem.

For each $j$, choose $\lambda^{(j)}\in\cC$ satisfying
$\lambda_j^{(j)}=\alpha_j$.  Positivity of the traces gives
\begin{align}
\alpha_j\tr K_j
&\le
\sum_{\ell=1}^M\lambda_\ell^{(j)}\tr K_\ell\notag\\
&=
\tr\bigl(\sum_{\ell=1}^M\lambda_\ell^{(j)}K_\ell\bigr)
\le M_L(\bx).
\notag
\end{align}
Summing over $j$ yields
\begin{align}
\tr N
=\sum_{j=1}^M\alpha_j\tr K_j
\le M M_L(\bx)
\le M S_L(\bx).
\notag
\end{align}
Since $N$ is feasible, $\tau_L(\bx)\le\tr N$.  Division by $S_L(\bx)$ proves \eqref{eq:M-bound-profile}; when $S_L(\bx)=0$, the conclusion is already contained in the zero convention of \Cref{def:profile}.  If $M_L(\bx)>0$, division by $M_L(\bx)$ proves \eqref{eq:M-bound-domination}.
\end{proof}

\begin{remark}[Common-covariance cost versus a direct finite-union bound]\label{rem:finite-union-caveat}
The factor $\sqrt M$ is a bound for one covariance ellipsoid that dominates all $M$ RKHS ellipsoids.  It need not be the sharp Gaussian complexity of a finite union: a direct maximum-of-$M$ argument can instead give a logarithmic model-selection factor.  The common-covariance profile is designed to treat the unrestricted adaptive family and, simultaneously, to provide one spectral approximation object for all its members.
\end{remark}

\begin{proposition}[Only $k$ distinct sample locations]\label{prop:k-distinct}
If $x_1,\ldots,x_n$ take only $k$ distinct values, then
\begin{equation}\label{eq:k-distinct-profile}
\Lambda_L(\bx)\le\sqrt k
\end{equation}
and therefore
\begin{equation}\label{eq:k-distinct-complexity}
\Ghat_{\bx}(B_L(r))
\le r\bigl(\max_i\norm{x_i}_2\bigr)^{2^{-(L-1)}}\sqrt{\frac{k}{n}}.
\end{equation}
\end{proposition}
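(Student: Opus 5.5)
The plan is to exploit the block structure forced by repeated locations. Group the indices into $k$ classes $I_1,\dots,I_k$ according to the distinct values $y_1,\dots,y_k$, and put $P_c:=\bone_{I_c}$. For every admissible ladder, $K_{ij}=k(x_i,x_j)$ depends only on the classes of $i$ and $j$. Hence every $K\in\sK_L(\bx)$, including limits of such matrices, has the form
\begin{align}
K=\sum_{c,c'}Q_{cc'}P_cP_{c'}^\top=PQP^\top,
\notag
\end{align}
where $P=[P_1,\dots,P_k]\in\R^{n\times k}$ and $Q\in\psd^k$ is the Gram matrix of the top kernel at $y_1,\dots,y_k$. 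Positive semidefiniteness of $Q$ follows because $Q$ is itself a kernel Gram matrix, and it survives closure since $P$ has full column rank.

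The candidate common covariance is built class by class. Since $Q\succeq0$, I use the elementary bound $Q\preceq k\diag(Q_{11},\dots,Q_{kk})$. This comes from Cauchy--Schwarz: with $D=\diag(Q)$, the matrix $D^{-1/2}QD^{-1/2}$ is a correlation matrix with $\lambda_{\max}\le \tr=k$, and zero diagonals are handled by deleting those rows. By \eqref{eq:diagonal-scale}, $Q_{cc}\le\rho_L(y_c)^2$. Therefore
\begin{align}
K\preceq k\sum_{c=1}^k\rho_L(y_c)^2P_cP_c^\top=:N\qquad\text{for every }K\in\sK_L(\bx).
\notag
\end{align}
This $N$ is feasible, and
\begin{align}
\tr N=k\sum_c|I_c|\rho_L(y_c)^2=k\sum_i d_{L,i}=kS_L(\bx).
\notag
\end{align}
So $\tau_L\le kS_L$, which gives $\Lambda_L\le\sqrt k$; the case $S_L=0$ is covered by the zero convention.

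For \eqref{eq:k-distinct-complexity}, substitute into \eqref{eq:profile-bound-radius} of \Cref{thm:profile-complexity}.

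The only delicate step is the one-line Loewner bound $PQP^\top\preceq kP\diag(Q)P^\top$, which reduces to $Q\preceq k\diag(Q)$ and must handle zero diagonal entries. If $Q_{cc}=0$, then row and column $c$ of $Q$ vanish by positive semidefinite Cauchy--Schwarz, so restricting to the support is legitimate.

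One could alternatively note that the empirical family lies in the convex hull of $k$ matrices only approximately, so \Cref{prop:finite-family} does not apply directly. The direct construction above avoids that issue.
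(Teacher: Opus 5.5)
Your proposal is correct and is essentially the paper's proof: write every Gram matrix as $PQP^\top$ with the class-incidence matrix $P$, bound $Q\preceq k\,\diag\bigl(\rho_L(y_1)^2,\ldots,\rho_L(y_k)^2\bigr)$, take $N=kP\diag\bigl(\rho_L(y_c)^2\bigr)P^\top$ with $\tr N=kS_L(\bx)$, and substitute $\Lambda_L\le\sqrt k$ into \eqref{eq:profile-bound-radius}. Your correlation-matrix argument for $Q\preceq k\diag(Q)$, with zero diagonal entries removed, supplies the justification that the paper attributes only loosely to \Cref{prop:profile-existence}, since that proposition proves the different bound $K\preceq(\tr K)I$.
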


This result applies to exact repetitions and, after approximation, motivates effective-support variants for clustered samples.

\subsection{Orthogonal Designs: Worst-Case Growth}

Take $d=n$ and $x_i=e_i\in\R^n$.  Then $d_{L,i}=1$ and $S_L=n$.

\begin{theorem}[Orthogonal growth law]\label{thm:orthogonal}
For every fixed depth $L\ge2$, there exist universal constants $c,C>0$ such that
\begin{align}
c\,n^{1/2-2^{-L}}
&\le\Lambda_L(e_1,\ldots,e_n)
\le C\,n^{1/2-2^{-L}},\label{eq:orthogonal-lambda}\\
c\,n^{-2^{-L}}
&\le\Ghat_{(e_i)_{i=1}^n}(B_L(1))
\le C\,n^{-2^{-L}}.
\label{eq:orthogonal-complexity}
\end{align}
\end{theorem}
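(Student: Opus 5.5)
The plan is to reduce everything, through the Dirac reduction of \Cref{thm:dirac-reduction} and the layer-cake formula \eqref{eq:threshold-matrix}, to one scalar recursion for the orthogonal design. The upper bound will come from a Loewner estimate $\mathbf B(a)\preceq\norm a_1 I_n$. The lower bound will come from an explicit recursive construction of flat traces $\lambda\bone_P$, combined with \Cref{thm:profile-complexity}. Write $s_\ell:=\sup\set{\norm a_1:a\in\sS_\ell(\bx)}$ for $\bx=(e_1,\ldots,e_n)$. Note that $\sS_1$ is the unit sphere of $\ell_2^n$, because $u(e_i)=w_i$ with $\norm w_2=1$. Hence $s_1=\sqrt n$.

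\emph{Upper bound.} In \eqref{eq:threshold-matrix}, the vectors $v_t^+(a)$ and $v_t^-(a)$ have disjoint supports, of total size $|\{i:|a_i|\ge t\}|$. Hence $v_t^+v_t^{+\top}+v_t^-v_t^{-\top}\preceq|\{i:|a_i|\ge t\}|\,I_n$, and integrating in $t$ gives $\mathbf B(a)\preceq\norm a_1 I_n$. By \eqref{eq:dirac-domination} the matrix $s_{L-1}I_n$ is a feasible common covariance, so $\tau_L\le n\,s_{L-1}$.

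To control $s_\ell$ recursively, I will use $\norm a_1=\sigma^\top a$ with $\sigma$ the sign vector of $a$. This gives $s_\ell\le\sup_\sigma h_{\ell,\bx}(\sigma)$, and by \eqref{eq:dirac-support} (applied at depth $\ell$) this is at most $\sup_{\sigma,b\in\sS_{\ell-1}}(\sigma^\top\mathbf B(b)\sigma)^{1/2}$. Two inputs then finish the step: the threshold formula gives $\sigma^\top\mathbf B(b)\sigma\le\int_0^\infty|\{i:|b_i|\ge t\}|^2dt$, and the crude bound $|\cdot|\le n$ gives $\int_0^\infty|\{i:|b_i|\ge t\}|^2dt\le n\norm b_1$. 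Together these yield $s_\ell\le\sqrt{n\,s_{\ell-1}}$.

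From $s_1=n^{1/2}$ the recursion gives $s_\ell\le n^{1-2^{-\ell}}$. Therefore $\tau_L\le n^{2-2^{1-L}}$, so $\Lambda_L^2=\tau_L/n\le n^{1-2^{1-L}}$, and \Cref{thm:profile-complexity} yields $\Ghat\le\sqrt{\tau_L}/n\le n^{-2^{-L}}$.

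\emph{Lower bound.} I will prove by induction that for every nonempty $P\subseteq[n]$, the flat vector $|P|^{-2^{-\ell}}\bone_P$ lies in $\sS_\ell(\bx)$.
\begin{itemize}
\item For $\ell=1$, take $w=\bone_P/\sqrt{|P|}$.
\item For the step, let $b=\beta\bone_P\in\sS_{\ell-1}$ be realised by a unit atom. Choose the Dirac mixing measure at that atom, so the level-$\ell$ Gram matrix is $\mathbf B(b)=\beta\bone_P\bone_P^\top$.
\item The kernel section $f=\sum_{i\in P}k(\cdot,e_i)$ has sample trace $\beta|P|\bone_P$ and squared norm $\beta|P|^2$. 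Normalising gives trace $\sqrt\beta\,\bone_P$, and $\sqrt{|P|^{-2^{-(\ell-1)}}}=|P|^{-2^{-\ell}}$.
\end{itemize}
Closure of $\sS$ and the exact RKHS norm of kernel sections keep the element on the unit sphere.

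Now fix a realisation of $G$, put $P=\{i:G_i>0\}$, and use $a=|P|^{-2^{-(L-1)}}\bone_P\in\sS_{L-1}$ in \eqref{eq:dirac-support}:
\begin{align*}
h_{L,\bx}(G)\ge|P|^{-2^{-L}}\sum_{i}G_i^+\ge n^{-2^{-L}}\sum_iG_i^+ .
\end{align*}
Taking expectations gives $W_{G,1}\ge(2\pi)^{-1/2}n^{1-2^{-L}}$, which by \eqref{eq:complexity-WG1} is the lower bound in \eqref{eq:orthogonal-complexity}. Combined with $\Ghat\le\sqrt{\tau_L}/n$, this gives $\tau_L\ge(2\pi)^{-1}n^{2-2^{1-L}}$, which is the lower bound in \eqref{eq:orthogonal-lambda}. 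The constants turn out to be absolute, and even depth-independent.

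\emph{Main obstacle.} The delicate step is the realisability induction. I must check carefully that the Dirac measure at the relevant unit atom is an admissible canonical mixing measure, and that the normalised section is an honest unit-sphere element of an admissible level-$\ell$ RKHS rather than merely an element of a ball. If the atom exists only as a limit in $\sS_{\ell-1}$, I will approximate it by genuine atoms and use continuity of $b\mapsto\mathbf B(b)$. The upper recursion is routine once $\mathbf B(a)\preceq\norm a_1I_n$ and $\sigma^\top\mathbf B(b)\sigma\le n\norm b_1$ are in place.
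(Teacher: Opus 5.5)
Your argument is correct. The upper bound is essentially the paper's argument. The paper runs the same recursion $m_\ell\le\sqrt{n\,m_{\ell-1}}$ for the $\ell_1$ radius of the trace body and closes with $K\preceq(\tr K)I_n$. The only difference is the inner estimate: the paper gets $s^\top\mathbf B(a)s\le n\norm{a}_1$ from the feature identity $s^\top\mathbf B(a)s=\norm{\sum_i s_i\phi_{a_i}}_{L_2}^2\le(\sum_i|a_i|^{1/2})^2$, whereas you get it from the layer-cake formula.

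The lower bound is a genuinely different construction. The paper places the whole sign cube $\set{a_Ls:s\in\{\pm1\}^n}$ in the trace body by composing with the clipped Cameron--Martin profile $\psi_a$. This requires the explicit description of $\HB$ (\Cref{lem:two-sided-brownian-rkhs}) and the pullback-contraction \Cref{lem:pullback-contraction}. It loses a factor $2^{-1/2}$ per layer through $a_{\ell+1}\ge\sqrt{a_\ell/2}$, and then uses $\max_s s^\top G=\norm{G}_1$.

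You instead propagate one-sided flat traces. Because $\mathbf B(\beta\bone_P)=\beta\bone_P\bone_P^\top$ is rank one, a normalized kernel section realizes $\sqrt\beta\,\bone_P$ using only the reproducing property, so your square-root recursion is lossless. You compensate for having a single sign by choosing $P=\set{i:G_i>0}$ depending on $G$, which captures $\sum_iG_i^+$, half of $\norm G_1$ in expectation. Both routes give absolute, depth-independent constants.

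Your route is more elementary and self-contained. The paper's route yields a symmetric sign cube at one common amplitude, with every $A\subseteq[n]$ stable on the single interval $[0,a_{L-1}]$. It reuses this verbatim in \Cref{cor:minimax-orthogonal} and \Cref{thm:well-conditioned-richness}. For \Cref{cor:minimax-orthogonal} your flat traces would serve equally, since each nonempty $A$ is stable on $[\tfrac12n^{-2^{-(L-1)}},n^{-2^{-(L-1)}}]$.

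Finally, the obstacle you flag does not arise. State the induction hypothesis as ``realized by a genuine unit atom of an admissible level-$\ell$ RKHS''. Your base case and inductive step both produce genuine atoms, and Dirac mixing measures are admissible, so no approximation or closure argument is needed.
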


The upper bound follows from a recursive $\ell_1$ radius estimate for the empirical trace body.  The lower bound constructs, inside the unchanged BKL unit ball, a complete sign cube of amplitude comparable to $n^{-2^{-L}}$.  The construction uses the Brownian Cameron--Martin profile
\begin{equation}\label{eq:psi-a}
\psi_a(t)=
\begin{cases}
-b,&t\le-a,\\
(b/a)t,&|t|\le a,\\
b,&t\ge a,
\end{cases}
\qquad b=\sqrt{a/2},
\end{equation}
which has Brownian RKHS norm one.  Each additional Brownian layer maps sign amplitude $a$ to at least $\sqrt{a/2}$.

\begin{remark}[Depth and non-Hilbertianity]\label{rem:depth}
As $L\to\infty$, the orthogonal profile approaches its universal maximum order $\sqrt n$.  Thus depth makes the full empirical union progressively less representable by a single low-trace covariance body, even though every individual ladder remains an RKHS.
\end{remark}

\begin{table}[t]
\centering
\caption{Representative regimes.  Gaussian richness means tightness of the common-covariance certificate; it does not by itself mean a fast full-class rate.  Radius and input-scale factors are suppressed only in the last column.}
\label{tab:regimes}
\scriptsize
\begin{tabularx}{\textwidth}{@{}>{\raggedright\arraybackslash}p{0.16\textwidth}>{\raggedright\arraybackslash}p{0.19\textwidth}>{\raggedright\arraybackslash}p{0.18\textwidth}>{\raggedright\arraybackslash}p{0.15\textwidth}>{\raggedright\arraybackslash}X@{}}
\toprule
Regime & Assumptions & Profile / defect & Actual complexity scale & Status \\
\midrule
One fixed ladder & one empirical kernel & $\Gamma_L^{\mathrm{dom}}=1$, $\Lambda_L^2=\operatorname{tr}K/S_L$ & at most $n^{-1/2}$ & deterministic upper certificate \\
Finite controlled family & $M$ kernels fixed independently of multipliers & common covariance: $\Gamma_L^{\mathrm{dom}}\le\sqrt M$ & common-envelope $\sqrt{M/n}$; direct union can give $\sqrt{\log M/n}$ & deterministic upper; not claimed sharp \\
$k$ repeated locations & exactly $k$ distinct input values & $\Lambda_L\le\sqrt k$ & $\sqrt{k/n}$ & deterministic upper \\
Hierarchical thresholds & decomposition $(q,H)$; reverse also needs stable persistence & $\Lambda_L\le\sqrt{2q(H+1)}$; $\chi_L\lesssim\sqrt{q(H+1)/\kappa}$ under persistence $\kappa$ & near $n^{-1/2}$ for bounded $q$ and logarithmic $H$ & upper always under decomposition; reverse conditional \\
Ordered thresholds & at most $q$ intervals; reverse also needs stability or integrated witness & $\Lambda_L\lesssim\sqrt q\log n$; $\chi_L\lesssim\sqrt q\log n$ under stated witness conditions & $\sqrt q\log(n)/\sqrt n$ & deterministic or high-probability conditional results \\
Orthogonal / well-conditioned & orthogonal design, or bounded Gram condition number & $\chi_L=O(1)$ and $\Lambda_L^2\asymp n^{1-2^{1-L}}$ & $n^{-2^{-L}}$ & deterministic two-sided; certificate tight but rate slow \\
Spherical random design & $d\gtrsim n+\log(1/\delta)$ & $\chi_L=O(1)$ with probability $1-\delta$ & $n^{-2^{-L}}$ & high-probability two-sided sign-rich phase \\
Balanced categorical support & occupancy control on $K$ Gaussian-rich latent states & $\chi_L=O(1)$ & $K^{1/2-2^{-L}}/\sqrt n$ & high probability \\
Noisy repetitions & explicit matched perturbation and threshold-margin conditions & defect inherited quantitatively & reference rate plus stability error & deterministic conditional transfer \\
\bottomrule
\end{tabularx}
\end{table}

\subsection{Gaussian Richness, Random Designs, and Stability}\label{sec:richness}

\subsubsection{A Brownian stable-threshold Gaussian reverse}

For a finite set family $\mathcal F\subseteq2^{[n]}$, define its Gaussian set width by
\begin{equation}\label{eq:set-gaussian-width}
\omega_{\mathrm G}(\mathcal F)
:=
\E_G
\max_{A\in\mathcal F}
\abs{\sum_{i\in A}G_i},
\end{equation}
with the convention $\omega_{\mathrm G}(\varnothing)=0$.  If $S_L(\bx)>0$, define the stable-threshold Gaussian richness
\begin{equation}\label{eq:W-profile}
\mathfrak W_{L-1}(\bx)
:=
\frac1{S_L(\bx)}
\sup_{t\ge0,\,\delta>0}
\delta\,
\omega_{\mathrm G}
\bigl(
\mathcal F_{L-1}^{\mathrm{st}}(t,\delta)
\bigr)^2.
\end{equation}
If $S_L(\bx)=0$, set $\mathfrak W_{L-1}(\bx):=0$.

\begin{theorem}[Stable-threshold Gaussian reverse]\label{thm:threshold-gaussian-reverse}
For every depth $L\ge2$ and every sample $\bx$,
\begin{equation}\label{eq:W-R-D-chain}
\mathfrak W_{L-1}(\bx)
\le
\mathfrak R_{L-1}(\bx)
\le
\Lambda_L(\bx)^2
\le
2\mathfrak D_{L-1}(\bx),
\end{equation}
where the middle quantities are interpreted as zero when $S_L(\bx)=0$.  Moreover,
\begin{equation}\label{eq:W-lower-first-moment}
W_{G,1}(L,\bx)^2
\ge
S_L(\bx)\mathfrak W_{L-1}(\bx),
\end{equation}
and therefore
\begin{equation}\label{eq:threshold-two-sided-complexity}
\frac{r}{n}
\sqrt{S_L(\bx)\mathfrak W_{L-1}(\bx)}
\le
\Ghat_{\bx}(B_L(r))
\le
\frac{r}{n}\sqrt{\tau_L(\bx)}.
\end{equation}
If $\mathfrak D_{L-1}(\bx)>0$, then the lower bound may be expressed directly in terms of the common-covariance certificate:
\begin{equation}\label{eq:threshold-reverse-tau}
\Ghat_{\bx}(B_L(r))
\ge
\frac{r\sqrt{\tau_L(\bx)}}{n}
\sqrt{
\frac{\mathfrak W_{L-1}(\bx)}
{2\mathfrak D_{L-1}(\bx)}
}.
\end{equation}
If $\mathfrak D_{L-1}(\bx)=0$, then $\tau_L(\bx)=0$ and all quantities in \eqref{eq:threshold-two-sided-complexity} vanish.
\end{theorem}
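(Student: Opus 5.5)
The plan is to prove the chain \eqref{eq:W-R-D-chain} link by link, then derive the first-moment lower bound \eqref{eq:W-lower-first-moment} directly from the Dirac reduction and the layer-cake formula, and finally read off the complexity statements.

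\emph{The chain.} The inequalities $\mathfrak R_{L-1}\le\Lambda_L^2\le2\mathfrak D_{L-1}$ are exactly \Cref{thm:recursive-threshold}. So only $\mathfrak W_{L-1}\le\mathfrak R_{L-1}$ needs proof, and when $S_L(\bx)=0$ every middle quantity is zero by convention.

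For $\mathfrak W_{L-1}\le\mathfrak R_{L-1}$ it suffices to prove $\omega_{\mathrm G}(\mathcal F)^2\le\Theta(\mathcal F)$ for every finite $\mathcal F\subseteq2^{[n]}$. Then we multiply by $\delta$ with $\mathcal F=\mathcal F^{\mathrm{st}}_{L-1}(t,\delta)$ and take suprema. The case $\mathcal F=\varnothing$ is trivial. Otherwise, let $H$ attain \eqref{eq:Theta-def}. For each $A\in\mathcal F$,
\[
\abs{\textstyle\sum_{i\in A}G_i}^2=\ip{\bone_A}{G}^2=G^\top\bone_A\bone_A^\top G\le G^\top HG.
\]
Taking the maximum over $A$, then expectations, and applying Jensen's inequality as in the proof of \Cref{thm:profile-complexity} gives $\omega_{\mathrm G}(\mathcal F)\le\E\sqrt{G^\top HG}\le\sqrt{\tr H}$.

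\emph{The first-moment lower bound.} Fix $t\ge0$ and $\delta>0$ with $\mathcal F:=\mathcal F^{\mathrm{st}}_{L-1}(t,\delta)\ne\varnothing$, and fix a realization of $G$. Since $\mathcal F$ is finite, we can choose $A\in\mathcal F$ attaining $\max_{A'\in\mathcal F}\abs{\sum_{i\in A'}G_i}$. Take a witness $a\in\sS_{L-1}(\bx)$ for $A$. Suppose first that $A\in\mathcal F^{+,\mathrm{st}}_{L-1}(t,\delta)$. Then $v_s^+(a)=\bone_A$ for all $s\in[t,t+\delta]$.

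By \Cref{thm:dirac-reduction} and the layer-cake identity \eqref{eq:threshold-matrix}, dropping the nonnegative $v^-$ terms and restricting the integral to $[t,t+\delta]$, we get
\[
h_{L,\bx}(G)^2\ge G^\top\mathbf B(a)G\ge\int_t^{t+\delta}\ip{v_s^+(a)}{G}^2\,ds=\delta\Bigl(\sum_{i\in A}G_i\Bigr)^2.
\]
The case $A\in\mathcal F^{-,\mathrm{st}}_{L-1}(t,\delta)$ is symmetric: we keep the $v^-$ terms instead.

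Hence, pointwise in $G$, $h_{L,\bx}(G)\ge\sqrt\delta\max_{A\in\mathcal F}\abs{\sum_{i\in A}G_i}$. The right-hand side is a measurable maximum of finitely many functions, so the choice of $A$ depending on $G$ causes no difficulty. Taking expectations gives $W_{G,1}\ge\sqrt\delta\,\omega_{\mathrm G}(\mathcal F)$. Squaring and taking the supremum over $(t,\delta)$ yields \eqref{eq:W-lower-first-moment}. When $S_L(\bx)=0$ the bound is trivial.

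\emph{Consequences.} The two-sided bound \eqref{eq:threshold-two-sided-complexity} follows by combining \eqref{eq:W-lower-first-moment} with \eqref{eq:complexity-WG1} for the lower side, and \Cref{thm:profile-complexity} for the upper side.

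For \eqref{eq:threshold-reverse-tau}, we use \eqref{eq:recursive-tau}, which gives $\tau_L\le2\mathfrak D_{L-1}S_L$. When $\mathfrak D_{L-1}>0$ this means $S_L\ge\tau_L/(2\mathfrak D_{L-1})$, and substituting into the lower bound gives the claim. This is valid when $S_L>0$; if $S_L=0$ then $\tau_L=0$ and the claim is trivial. If $\mathfrak D_{L-1}=0$, then $\tau_L\le0$, so $\tau_L=0$, and all quantities in \eqref{eq:threshold-two-sided-complexity} vanish.

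The only delicate point is the use of stability in the lower bound: the set must be the same superlevel set for every $s\in[t,t+\delta]$ for a single witness $a$, so that the layer-cake integral contributes exactly $\delta\ip{\bone_A}{G}^2$. This is precisely what definition \eqref{eq:stable-plus} guarantees. Working with the closed family $\sS_{L-1}(\bx)$ is legitimate because \Cref{thm:dirac-reduction} is stated for the closure.
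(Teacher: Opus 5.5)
Your proposal is correct and follows essentially the same route as the paper. The paper likewise proves $\omega_{\mathrm G}(\mathcal F)^2\le\Theta(\mathcal F)$ by the Jensen argument, cites the recursive threshold theorem for the remaining links, and obtains the first-moment lower bound by restricting the layer-cake formula to the stable interval inside the Dirac reduction. It then substitutes $\tau_L\le 2\mathfrak D_{L-1}S_L$ for the final reverse. Your explicit treatment of the case $S_L(\bx)=0$ when invoking \eqref{eq:recursive-tau} is a small point the paper passes over silently.
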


\begin{corollary}[Orthogonal constant-factor tightness]\label{cor:minimax-orthogonal}
For the orthogonal design $x_i=e_i$ and every fixed depth $L\ge2$,
\begin{equation}\label{eq:orthogonal-W-equivalence}
\mathfrak W_{L-1}(e_1,\ldots,e_n)
\asymp
\Lambda_L(e_1,\ldots,e_n)^2
\asymp
n^{1-2^{-(L-1)}}.
\end{equation}
Consequently, the common-covariance Gaussian upper bound is sharp up to universal constants on the orthogonal design.
\end{corollary}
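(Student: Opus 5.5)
The plan is to sandwich $\mathfrak W_{L-1}$ between two multiples of the same power of $n$. The upper half is already available. By \eqref{eq:W-R-D-chain}, $\mathfrak W_{L-1}\le\Lambda_L^2$. By \Cref{thm:orthogonal}, $\Lambda_L(e_1,\ldots,e_n)^2\asymp n^{1-2^{1-L}}=n^{1-2^{-(L-1)}}$. So it suffices to prove a matching lower bound $\mathfrak W_{L-1}(e_1,\ldots,e_n)\gtrsim n^{1-2^{-(L-1)}}$. Since $S_L=n$ for the orthogonal design, this amounts to exhibiting one pair $(t,\delta)$ with
\begin{equation*}
\delta\,\omega_{\mathrm G}\bigl(\mathcal F_{L-1}^{\mathrm{st}}(t,\delta)\bigr)^2\gtrsim n^{2-2^{-(L-1)}}.
\end{equation*}

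The first step is a sign cube of exact uniform magnitude at level $L-1$. Concretely, I want an amplitude $\alpha_{L-1}\asymp n^{-2^{-(L-1)}}$ such that every $\varepsilon\in\{\pm1\}^n$ is realised by some vector $a^\varepsilon\in\sS_{L-1}(e_1,\ldots,e_n)$ with $a^\varepsilon_i=c\,\varepsilon_i\alpha_{L-1}$, where $c\ge1$ does not depend on $i$.

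For $L-1=1$, take the linear function $w=\varepsilon/\sqrt n$. It has unit norm and takes the values $\varepsilon_i n^{-1/2}$.

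For larger depths, I would iterate the Cameron--Martin step behind \Cref{thm:orthogonal}, applied to a Dirac mixing measure. Suppose $u$ lies in the unit ball at level $\ell$ and takes the values $\pm a$ on the sample. Then $\psi_a\circ u$ takes the values $\pm\sqrt{a/2}$ exactly, and it lies in the unit ball of the next layer with $\mu_\ell=\delta_{u/\norm u}$, after absorbing the factor $\norm u\le1$ by homogeneity of $\kB$. Starting from $a_1=n^{-1/2}$ and iterating gives $a_\ell\asymp n^{-2^{-\ell}}$, with constants depending only on the fixed depth.

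If the resulting element has norm smaller than one, rescaling it to the unit sphere multiplies all values by the same factor $c\ge1$. This is harmless, since the magnitudes stay uniform across coordinates and only increase.

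The second step is stability. Put $\alpha:=\alpha_{L-1}$, $t:=\alpha/3$ and $\delta:=\alpha/3$. For every $s\in[t,t+\delta]$ we have $0<s<c\alpha$, so $\{i:a^\varepsilon_i\ge s\}=\{i:\varepsilon_i=+1\}$ throughout the interval. Letting $\varepsilon$ range over all sign patterns, $\mathcal F_{L-1}^{+,\mathrm{st}}(t,\delta)$ contains every subset of $[n]$. Choosing $A=\{i:G_i>0\}$ in \eqref{eq:set-gaussian-width} gives
\begin{equation*}
\omega_{\mathrm G}\ge\E\sum_i G_i^+=\frac{n}{\sqrt{2\pi}}.
\end{equation*}
Hence $\delta\,\omega_{\mathrm G}^2\gtrsim\alpha n^2\asymp n^{2-2^{-(L-1)}}$. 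Dividing by $S_L=n$ yields the required lower bound on $\mathfrak W_{L-1}$, which proves \eqref{eq:orthogonal-W-equivalence}.

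For the sharpness statement, insert $\mathfrak W_{L-1}\asymp\Lambda_L^2=\tau_L/n$ into the lower bound in \eqref{eq:threshold-two-sided-complexity}. This gives $\Ghat\gtrsim (r/n)\sqrt{\tau_L}$, which matches the upper bound of \Cref{thm:profile-complexity} up to constants.

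I expect the main obstacle to be the exactness of the level-$(L-1)$ sign cube. The stability argument needs identical magnitudes at every coordinate, and these must be realised by genuine unit-sphere elements of $\sS_{L-1}$, not merely by elements of the ball. The most delicate part is checking the RKHS-norm bound for $\psi_a\circ u$ under a Dirac mixing measure, uniformly over all $2^n$ sign patterns. If equality of magnitudes fails, the fallback is to take $t$ below the minimum magnitude and $\delta$ comparable to it, which only costs constants.
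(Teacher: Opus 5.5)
Your proposal is correct and follows the paper's proof essentially step for step. The upper bound comes from $\mathfrak W_{L-1}\le\Lambda_L^2$ together with \Cref{thm:orthogonal}. The lower bound uses the depth-$(L-1)$ sign cube from the orthogonal construction, which makes every subset of $[n]$ a stable positive threshold set, combined with $\omega_{\mathrm G}(2^{[n]})\ge n/\sqrt{2\pi}$ and $S_L=n$. Sharpness is then read off from the stable-threshold lower bound. Two differences are harmless refinements: you track the normalisation factor $c\ge1$ explicitly so the traces genuinely lie in $\sS_{L-1}$, and you use the interval $[\alpha/3,2\alpha/3]$ instead of the paper's $[0,a_{L-1}]$.
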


The following condition gives a convenient quantitative reverse in structured geometries.  We say that the previous layer contains a $(\kappa,t,\delta)$ \emph{macroscopic stable threshold} if there is a nonempty set
$A\in\mathcal F_{L-1}^{\mathrm{st}}(t,\delta)$ such that
\begin{equation}\label{eq:macroscopic-stable-threshold}
\delta\,|A|
\ge
\kappa S_L(\bx).
\end{equation}

\begin{corollary}[Stable hierarchical Gaussian optimality]\label{cor:minimax-hierarchical}
Assume the hypotheses of \Cref{cor:hierarchical-threshold}, and assume in addition that a macroscopic stable threshold exists with parameter $\kappa>0$.  Then
\begin{equation}\label{eq:hierarchical-minimax-reverse}
\Ghat_{\bx}(B_L(r))
\ge
\sqrt{\frac{\kappa}{\pi q(H+1)}}
\frac{r\sqrt{\tau_L(\bx)}}{n}.
\end{equation}
Thus, when $q$ and $\kappa^{-1}$ are bounded and $H=O(\log n)$, the common-covariance upper certificate is sharp up to a factor $O(\sqrt{\log n})$.
\end{corollary}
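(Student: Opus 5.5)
The plan is to combine the stable-threshold reverse of \Cref{thm:threshold-gaussian-reverse} with the hierarchical upper bound of \Cref{cor:hierarchical-threshold}. By \eqref{eq:threshold-reverse-tau},
\begin{align}
\Ghat_{\bx}(B_L(r))\ge\frac{r\sqrt{\tau_L(\bx)}}{n}\sqrt{\frac{\mathfrak W_{L-1}(\bx)}{2\mathfrak D_{L-1}(\bx)}},
\notag
\end{align}
and \eqref{eq:hierarchical-D} gives $\mathfrak D_{L-1}(\bx)\le q(H+1)$. It therefore suffices to prove $\mathfrak W_{L-1}(\bx)\ge 2\kappa/\pi$. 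Then the square-root factor is at least $\sqrt{\kappa/(\pi q(H+1))}$, which is exactly the constant in \eqref{eq:hierarchical-minimax-reverse}.

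Before the main step, I will clear the degenerate cases. The existence of a nonempty $A$ with $\delta|A|\ge\kappa S_L(\bx)$ and $\kappa>0$ shows that the stable set is nonempty. If $S_L(\bx)=0$, then every coordinate vanishes and only the empty set can be stable, so a macroscopic stable threshold forces $S_L(\bx)>0$. I also need $\mathfrak D_{L-1}(\bx)>0$. If it were zero, \eqref{eq:recursive-sandwich} would give $\tau_L=0$ and hence $\Lambda_L=0$. But $\mathfrak R_{L-1}\ge\delta\,\Theta(\{A\})/S_L\ge\delta|A|/S_L\ge\kappa>0$, since $\Theta(\{A\})=\tr(\bone_A\bone_A^\top)=|A|$ for a single set. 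That contradicts $\mathfrak R_{L-1}\le\Lambda_L^2$. So both denominators are positive.

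The key step is the lower bound on $\mathfrak W_{L-1}$. Take the witnessing $(t,\delta)$ and $A\in\mathcal F^{\mathrm{st}}_{L-1}(t,\delta)$. Monotonicity of the maximum gives $\omega_{\mathrm G}(\mathcal F^{\mathrm{st}}_{L-1}(t,\delta))\ge\E_G\abs{\sum_{i\in A}G_i}$. The sum $\sum_{i\in A}G_i$ is $\mathcal N(0,|A|)$, so this expectation equals $\sqrt{2|A|/\pi}$. Squaring and inserting into \eqref{eq:W-profile} gives
\begin{align}
\mathfrak W_{L-1}(\bx)\ge\frac{\delta}{S_L(\bx)}\cdot\frac{2|A|}{\pi}\ge\frac{2\kappa}{\pi}.
\notag
\end{align}

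No step is a genuine obstacle, since everything reduces to earlier results. The only points needing care are the nondegeneracy argument above and checking that the constants combine to exactly $\sqrt{\kappa/(\pi q(H+1))}$: the factor $2$ from $2/\pi$ cancels the $2$ in $2\mathfrak D_{L-1}$. The closing remark then follows directly. When $q$ and $\kappa^{-1}$ are bounded and $H=O(\log n)$, the ratio between the upper bound $r\sqrt{\tau_L}/n$ and the lower bound is $O(\sqrt{q(H+1)/\kappa})=O(\sqrt{\log n})$.
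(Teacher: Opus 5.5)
Your proposal is correct and follows the paper's own route: \Cref{lem:macroscopic-W} gives $\mathfrak W_{L-1}(\bx)\ge 2\kappa/\pi$, \Cref{cor:hierarchical-threshold} gives $\mathfrak D_{L-1}(\bx)\le q(H+1)$, and both are inserted into \eqref{eq:threshold-reverse-tau}. Your explicit check that $\mathfrak D_{L-1}(\bx)>0$ is a small piece of rigor the paper leaves implicit; it could also be skipped, because $\mathfrak D_{L-1}(\bx)=0$ forces $\tau_L(\bx)=0$, and then the inequality holds trivially.
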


\begin{corollary}[Stable ordered Gaussian optimality]\label{cor:minimax-ordered}
Assume the hypotheses of \Cref{cor:ordered-threshold}, and assume in addition that a macroscopic stable threshold exists with parameter $\kappa>0$.  Put $H_n=\lceil\log_2(2n)\rceil$.  Then
\begin{equation}\label{eq:ordered-minimax-reverse}
\Ghat_{\bx}(B_L(r))
\ge
\sqrt{\frac{\kappa}{2\pi q}}
\frac1{H_n}
\frac{r\sqrt{\tau_L(\bx)}}{n}.
\end{equation}
Thus, when $q$ and $\kappa^{-1}$ are bounded, the common-covariance upper certificate is sharp up to a factor $O(\log n)$.
\end{corollary}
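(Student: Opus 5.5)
The plan is to specialise the stable-threshold reverse \eqref{eq:threshold-reverse-tau} of \Cref{thm:threshold-gaussian-reverse}. That reduces the claim to two elementary estimates: a lower bound on $\mathfrak W_{L-1}(\bx)$ supplied by the macroscopic stable threshold, and an upper bound on $\mathfrak D_{L-1}(\bx)$ supplied by \Cref{cor:ordered-threshold}. No new Brownian geometry should be needed.

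First the degenerate cases. If $\tau_L(\bx)=0$, the right-hand side of \eqref{eq:ordered-minimax-reverse} is zero and there is nothing to prove. This covers $\mathfrak D_{L-1}(\bx)=0$ by the last sentence of \Cref{thm:threshold-gaussian-reverse}. It also covers $S_L(\bx)=0$ by \Cref{prop:profile-existence}; in fact $S_L(\bx)=0$ would contradict the hypothesis, since \eqref{eq:macroscopic-stable-threshold} with $\kappa>0$ and a nonempty stable $A$ forces $\delta\le\max_i d_{L,i}$ to be positive. So assume $S_L(\bx)>0$ and $\mathfrak D_{L-1}(\bx)>0$, in which case \eqref{eq:threshold-reverse-tau} applies.

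Next, lower-bound the Gaussian richness. Let $A\in\mathcal F_{L-1}^{\mathrm{st}}(t,\delta)$ be the nonempty set with $\delta|A|\ge\kappa S_L(\bx)$. Since the maximum in \eqref{eq:set-gaussian-width} dominates the single term for $A$,
\begin{align}
\omega_{\mathrm G}\bigl(\mathcal F_{L-1}^{\mathrm{st}}(t,\delta)\bigr)
\ge\E\Bigl|\sum_{i\in A}G_i\Bigr|=\sqrt{2|A|/\pi},
\notag
\end{align}
because $\sum_{i\in A}G_i\sim\mathcal N(0,|A|)$. Inserting this particular pair $(t,\delta)$ into the supremum in \eqref{eq:W-profile} gives
\begin{align}
\mathfrak W_{L-1}(\bx)\ge\frac{\delta\cdot 2|A|/\pi}{S_L(\bx)}\ge\frac{2\kappa}{\pi}.
\notag
\end{align}

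Then bound the decomposition complexity using \eqref{eq:ordered-D}, which gives $\mathfrak D_{L-1}(\bx)\le 2qH_n^2$. Combining the two bounds,
\begin{align}
\frac{\mathfrak W_{L-1}(\bx)}{2\mathfrak D_{L-1}(\bx)}\ge\frac{2\kappa/\pi}{4qH_n^2}=\frac{\kappa}{2\pi qH_n^2},
\notag
\end{align}
and taking square roots in \eqref{eq:threshold-reverse-tau} yields exactly \eqref{eq:ordered-minimax-reverse}.

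For the closing remark: when $q$ and $\kappa^{-1}$ are bounded, the ratio between the upper bound \eqref{eq:profile-bound-tau} and this lower bound is $O(H_n)=O(\log n)$. The only step needing any care is the witness step, namely checking that the stable set can be inserted into the supremum defining $\mathfrak W_{L-1}$ with the same $(t,\delta)$. Everything else is constant bookkeeping.
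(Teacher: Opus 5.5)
Your proposal is correct and follows the paper's proof essentially verbatim. The paper also inserts the bound $\mathfrak W_{L-1}(\bx)\ge 2\kappa/\pi$ (isolated there as \Cref{lem:macroscopic-W}, proved by the same single-set Gaussian computation you give) and $\mathfrak D_{L-1}(\bx)\le 2qH_n^2$ from \Cref{cor:ordered-threshold} into \eqref{eq:threshold-reverse-tau}; your observation that a nonempty stable set forces $S_L(\bx)>0$ is exactly the paper's remark that the macroscopic-threshold assumption implies $S_L(\bx)>0$.
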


The additional stability requirement in the preceding two corollaries is essential to the proof.  A small decomposition dictionary is an upper-geometric condition; it does not by itself ensure that any large threshold set persists over a threshold interval of nonnegligible length.  The quantity $\mathfrak W_{L-1}$ records precisely the stable Gaussian richness needed for a reverse inequality.

\subsubsection{Random-design Gaussian richness}\label{sec:random-design}

The conditioning events used below are controlled by standard nonasymptotic singular-value and matrix-concentration tools \cite{mendelsonpajor2006singular,vershynin2012nonasymptotic,tropp2015matrix}.  The BKL-specific step is the transfer of those events through the recursive covariance and threshold geometry.

The comparison $\mathfrak W_{L-1}(\bx)\asymp\Lambda_L(\bx)^2$ is a concrete sufficient condition for a bounded Gaussian defect, but it is not logically equivalent to boundedness of $\chi_L(\bx)$: $\mathfrak W_{L-1}$ is one particular stable-threshold certificate, whereas $\chi_L$ is the exact multiplier-to-Gaussian loss.  Indeed, \eqref{eq:W-lower-first-moment} and $\tau_L=S_L\Lambda_L^2$ imply
\begin{equation}\label{eq:defect-from-W}
\chi_L(\bx)^2
\le
\frac{\Lambda_L(\bx)^2}{\mathfrak W_{L-1}(\bx)}
\end{equation}
whenever $\mathfrak W_{L-1}(\bx)>0$.  We now identify two probabilistic mechanisms that make the right-hand side bounded or polylogarithmic.

For covariance families $\mathscr A,\mathscr B\subseteq\psd^n$ and $c>0$, write
\begin{equation}\label{eq:family-domination}
\mathscr A\preceq_c\mathscr B
\quad\Longleftrightarrow\quad
\text{for every }A\in\mathscr A\text{ there exists }B\in\mathscr B\text{ such that }A\preceq cB.
\end{equation}
For a sample $\bx=(x_1,\ldots,x_n)$, define its linear Gram matrix by
\begin{equation}\label{eq:linear-gram-random}
G_{\bx}:=\bigl[x_i^\top x_j\bigr]_{i,j=1}^n.
\end{equation}
Write $\sK_1^\circ(\bx):=\set{G_{\bx}}$.  For $\ell\ge2$, let $\sK_\ell^\circ(\bx)$ denote the family of actual empirical depth-$\ell$ Gram matrices before the closure in \eqref{eq:kernel-family}, so $\sK_\ell(\bx)=\cl\sK_\ell^\circ(\bx)$.

\begin{proposition}[Brownian propagation of covariance distortion]\label{prop:brownian-distortion}
Let $\bx$ and $\by$ be samples of the same cardinality, let $\ell\ge1$, and let $c>0$.  If
\begin{align}
\sK_\ell^\circ(\bx)\preceq_c\sK_\ell^\circ(\by),
\notag
\end{align}
then
\begin{align}
\sK_{\ell+1}^\circ(\bx)
&\preceq_{\sqrt c}
\sK_{\ell+1}^\circ(\by),
\notag\\
\sK_{\ell+1}(\bx)
&\preceq_{\sqrt c}
\sK_{\ell+1}(\by).
\label{eq:brownian-distortion-propagation}
\end{align}
Thus every additional Brownian layer takes the square root of the preceding Loewner distortion.
\end{proposition}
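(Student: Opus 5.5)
The plan is to prove the unclosed statement for one ladder at a time and then pass to closures by compactness. Fix $K\in\sK_{\ell+1}^\circ(\bx)$. It comes from a canonical ladder whose level-$\ell$ space $H:=\cH^{(\ell)}$ has kernel $k$ and sample Gram matrix $A:=[k(x_i,x_j)]\in\sK_\ell^\circ(\bx)$, together with a mixing measure $\mu$ on $S_1(H)$, so that
\begin{align}
K=\int_{S_1(H)}\mathbf B\bigl(u(\bx)\bigr)\,d\mu(u),
\qquad u(\bx):=(u(x_1),\ldots,u(x_n)).
\notag
\end{align}
By hypothesis there is $B\in\sK_\ell^\circ(\by)$ with $A\preceq cB$. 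Let $H'$ be the level-$\ell$ space of the corresponding $\by$-ladder and $k'$ its kernel. The aim is to build a measure $\nu$ on $S_1(H')$ such that the depth-$(\ell+1)$ $\by$-Gram $K':=\int\mathbf B(u'(\by))\,d\nu(u')$ satisfies $K\preceq\sqrt c\,K'$. Note that $K'$ is an admissible element of $\sK_{\ell+1}^\circ(\by)$, obtained by appending $\mu_\ell=\nu$ to the $\by$-ladder. The case $\ell=1$ is identical, with $H=\cH^{(1)}$ and $A=G_{\bx}$.

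\emph{Pointwise transfer.} First, for $u\in S_1(H)$, the standard RKHS evaluation geometry gives $u(\bx)\in E(A)$. Since $A\preceq cB$, the support-function argument after \Cref{def:profile} gives $E(A)\subseteq E(cB)=\sqrt c\,E(B)$. Hence $b(u):=c^{-1/2}u(\bx)\in E(B)$. Let $w(u)\in H'$ be the minimal-norm interpolant,
\begin{align}
w(u):=\sum_j k'(\cdot,y_j)\,(B^\dagger b(u))_j .
\notag
\end{align}
It satisfies $w(u)(\by)=b(u)$ and $\norm{w(u)}_{H'}^2=b(u)^\top B^\dagger b(u)\le1$. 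Next, the Brownian kernel is positively homogeneous, $\kB(\lambda s,\lambda t)=\lambda\kB(s,t)$ for $\lambda\ge0$, so $\mathbf B(\lambda a)=\lambda\mathbf B(a)$. Thus
\begin{align}
\mathbf B(u(\bx))=\sqrt c\,\mathbf B(b(u)).
\notag
\end{align}
If $w(u)\ne0$, set $\Phi(u):=w(u)/\norm{w(u)}_{H'}\in S_1(H')$. Then
\begin{align}
\mathbf B(b(u))=\norm{w(u)}_{H'}\,\mathbf B\bigl(\Phi(u)(\by)\bigr)\preceq\mathbf B\bigl(\Phi(u)(\by)\bigr),
\notag
\end{align}
because $\mathbf B(\cdot)\succeq0$ by positive definiteness of $\kB$. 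If $w(u)=0$, then $b(u)=0$ and $\mathbf B(b(u))=0$, so we may set $\Phi(u):=e_0$ for a fixed $e_0\in S_1(H')$. Here $H'\ne\{0\}$ may be assumed, since otherwise $B=0$, so $A=0$, so $K=0$ and there is nothing to prove. In both cases
\begin{align}
\mathbf B(u(\bx))\preceq\sqrt c\,\mathbf B\bigl(\Phi(u)(\by)\bigr).
\notag
\end{align}

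\emph{Integration.} Set $\nu:=\Phi_\#\mu$. The map $u\mapsto w(u)$ is linear and continuous, being the composition of finitely many evaluations with a fixed matrix. Normalisation is continuous off the zero set, and the zero set is closed. Hence $\Phi$ is Borel and $\nu$ is a Borel probability measure on $S_1(H')$. Integrating the pointwise Loewner inequality against $\mu$, using closedness and convexity of the PSD cone, gives $K\preceq\sqrt c\,K'$. This proves the first claim in \eqref{eq:brownian-distortion-propagation}.

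\emph{Closure.} Let $K\in\sK_{\ell+1}(\bx)$ with $K_m\to K$ and $K_m\in\sK_{\ell+1}^\circ(\bx)$. Choose $K'_m\in\sK_{\ell+1}^\circ(\by)$ with $K_m\preceq\sqrt c\,K'_m$. The family $\sK_{\ell+1}(\by)$ is compact (as shown after \eqref{eq:kernel-family}), so a subsequence of $K'_m$ converges to some $K'\in\sK_{\ell+1}(\by)$, and closedness of the Loewner order gives $K\preceq\sqrt c\,K'$.

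I expect the main obstacle to be the measurable normalisation onto the unit sphere together with the degenerate cases. These are exactly where the homogeneity and positivity of $\mathbf B$ are needed: homogeneity converts the factor $c$ on Gram matrices into $\sqrt c$ on Brownian pullbacks, and positivity absorbs the sub-unit interpolant norm.
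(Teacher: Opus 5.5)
Your proof is correct and matches the paper's argument step for step: you place the trace in $\sqrt c\,E(B)$, interpolate with minimum norm in the $\by$-ladder's level-$\ell$ space, and use Brownian homogeneity plus positivity of $\mathbf B$ to absorb the sub-unit norm. You then push the mixing measure forward and use compactness of $\sK_{\ell+1}(\by)$ for the closure. The only cosmetic difference is that you normalise the interpolant $w(u)$ directly, whereas the paper first rescales the trace to $b(u)/q(u)$ and then interpolates; these coincide because $\norm{w(u)}_{H'}=q(u)$.
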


Let $\be_n:=(e_1,\ldots,e_n)$ denote the orthogonal reference design, and set
\begin{equation}\label{eq:theta-random-design}
\vartheta_L:=2^{-(L-1)}.
\end{equation}

\begin{theorem}[Well-conditioned designs are Gaussian rich]\label{thm:well-conditioned-richness}
Let $L\ge2$, and assume that
\begin{equation}\label{eq:well-conditioned-gram}
0<\lambda_- I_n
\preceq
G_{\bx}
\preceq
\lambda_+I_n.
\end{equation}
Put $\kappa_{\bx}:=\lambda_+/\lambda_-$.  Then
\begin{align}
\lambda_-^{\vartheta_L}\tau_L(\be_n)
&\le
\tau_L(\bx)
\le
\lambda_+^{\vartheta_L}\tau_L(\be_n),
\label{eq:well-conditioned-tau}\\
\lambda_-^{\vartheta_L/2}h_{L,\be_n}(z)
&\le
h_{L,\bx}(z)
\le
\lambda_+^{\vartheta_L/2}h_{L,\be_n}(z),
\qquad z\in\R^n.
\label{eq:well-conditioned-support}
\end{align}
There are universal constants $c,C>0$ such that
\begin{equation}\label{eq:well-conditioned-W-Lambda}
c\,\kappa_{\bx}^{-\vartheta_L}n^{1-\vartheta_L}
\le
\mathfrak W_{L-1}(\bx)
\le
\Lambda_L(\bx)^2
\le
C\,\kappa_{\bx}^{\vartheta_L}n^{1-\vartheta_L}.
\end{equation}
Moreover,
\begin{equation}\label{eq:well-conditioned-defect}
\chi_L(\bx)
\le
C\,\kappa_{\bx}^{\vartheta_L/2}.
\end{equation}
In particular, a uniformly bounded condition number gives
\begin{align}
\mathfrak W_{L-1}(\bx)
\asymp
\Lambda_L(\bx)^2
\asymp
n^{1-2^{-(L-1)}},
\qquad
\chi_L(\bx)=O(1).
\notag
\end{align}
The empirical Gaussian complexity satisfies
\begin{equation}\label{eq:well-conditioned-complexity}
c\,r\lambda_-^{\vartheta_L/2}n^{-2^{-L}}
\le
\Ghat_{\bx}(B_L(r))
\le
C\,r\lambda_+^{\vartheta_L/2}n^{-2^{-L}}.
\end{equation}
\end{theorem}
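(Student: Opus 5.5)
The plan is to compare $\bx$ with the orthogonal design $\be_n$ through \Cref{prop:brownian-distortion}, and then import the orthogonal results (\Cref{thm:orthogonal}, \Cref{cor:minimax-orthogonal}) and \Cref{thm:threshold-gaussian-reverse}.

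\textbf{Step 1: Loewner sandwich at depth one.} Since $G_{\be_n}=I_n$, hypothesis \eqref{eq:well-conditioned-gram} gives
\begin{align}
\sK_1^\circ(\bx)\preceq_{\lambda_+}\sK_1^\circ(\be_n),
\qquad
\sK_1^\circ(\be_n)\preceq_{1/\lambda_-}\sK_1^\circ(\bx).
\notag
\end{align}
\Cref{prop:brownian-distortion} is stated for Gram families indexed by ladders. The Gram family at level $\ell+1$ depends on the previous level only through the empirical trace vectors of unit-sphere elements. So I will check that the proposition's induction gives matching pairs on both sides.

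\textbf{Step 2: propagation to depth $L$.} Iterating $L-1$ times (square-root each time) gives
\begin{align}
\sK_L(\bx)\preceq_{\lambda_+^{\vartheta_L}}\sK_L(\be_n),
\qquad
\sK_L(\be_n)\preceq_{\lambda_-^{-\vartheta_L}}\sK_L(\bx),
\notag
\end{align}
because $2^{-(L-1)}=\vartheta_L$.

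\textbf{Step 3: consequences of the sandwich.} From Step 2:
\begin{itemize}
\item Taking suprema in \eqref{eq:support-covariance} gives \eqref{eq:well-conditioned-support}.
\item If $N$ dominates $\sK_L(\be_n)$, then $\lambda_+^{\vartheta_L}N$ dominates $\sK_L(\bx)$, and symmetrically. This gives \eqref{eq:well-conditioned-tau}.
\item The complexity bound \eqref{eq:well-conditioned-complexity} follows from \eqref{eq:well-conditioned-support} and \eqref{eq:empirical-width-support}, combined with \eqref{eq:orthogonal-complexity}.
\end{itemize}

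\textbf{Step 4: the chain \eqref{eq:well-conditioned-W-Lambda}.}
\begin{itemize}
\item \emph{Upper end.} Combine $\tau_L(\be_n)\le Cn\cdot n^{1-2\cdot 2^{-L}}$ from \Cref{thm:orthogonal} with $S_L(\bx)=\sum_i G_{ii}^{\vartheta_L}\ge n\lambda_-^{\vartheta_L}$. This gives $\Lambda_L^2\le C\kappa^{\vartheta_L}n^{1-\vartheta_L}$.
\item \emph{Middle inequality.} This is \Cref{thm:threshold-gaussian-reverse}.
\item \emph{Lower end (main obstacle).} $\mathfrak W_{L-1}$ is defined through stable threshold sets, not through support functions, so it does not transfer directly through the Loewner sandwich.
\end{itemize}

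For the lower end, I would use the construction behind \Cref{thm:orthogonal}. A full sign cube at level $L-1$ is realised by composing Cameron--Martin profiles $\psi_a$. I would transport it to $\bx$ by pulling back a first-layer unit vector. Well-conditioning makes $x\mapsto G_{\bx}^{-1/2}$-coordinates realisable: the linear functionals $w\mapsto\langle w,x_i\rangle$ can hit any target vector $b$ with norm at most $\lambda_-^{-1/2}\|b\|$. Rescaling the sign amplitudes by $\lambda_-^{\vartheta/2}$ through each layer should yield, for every sign pattern, a trace vector $a\in\sS_{L-1}(\bx)$ with $a_i\ge t+\delta$ on $A$ and $a_i<t$ off $A$.

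Here $t\asymp\lambda_-^{\vartheta_L}n^{-\vartheta_L}$ up to constants, with $\delta$ comparable to $t$. Then $\mathcal F^{\mathrm{st}}(t,\delta)$ contains all subsets of $[n]$. Its Gaussian width is $\E\sum_i G_i^+\asymp n$. Hence
\begin{align}
\delta\,\omega_{\mathrm G}^2\gtrsim \lambda_-^{\vartheta_L}n^{2-\vartheta_L}.
\notag
\end{align}
Dividing by $S_L\le n\lambda_+^{\vartheta_L}$ gives $c\kappa^{-\vartheta_L}n^{1-\vartheta_L}$.

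Making the stability window nondegenerate uniformly in the sign pattern is where I expect the work to lie. I would use the gap $\psi_a=\pm b$ outside $[-a,a]$ to separate the two sign classes strictly, and take $\delta$ as a constant fraction of that gap. This lower bound already yields $\mathfrak W>0$.

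\textbf{Step 5: the defect bound.} Finally, \eqref{eq:well-conditioned-defect} follows from \eqref{eq:defect-from-W}:
\begin{align}
\chi_L^2\le\frac{\Lambda_L^2}{\mathfrak W_{L-1}}\le \frac{C}{c}\,\kappa^{2\vartheta_L}.
\notag
\end{align}
This gives exponent $\kappa^{\vartheta_L}$ rather than $\kappa^{\vartheta_L/2}$. I would sharpen it by bounding $\chi_L$ directly: $\sqrt{\tau_L(\bx)}\le\lambda_+^{\vartheta_L/2}\sqrt{\tau_L(\be_n)}$ from \eqref{eq:well-conditioned-tau}, and $W_{G,2}(L,\bx)\ge\lambda_-^{\vartheta_L/2}W_{G,2}(L,\be_n)$ from \eqref{eq:well-conditioned-support}. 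The orthogonal defect is $O(1)$ by \Cref{cor:minimax-orthogonal}, so $\chi_L(\bx)\le C\kappa_{\bx}^{\vartheta_L/2}$.
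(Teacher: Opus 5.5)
Your proposal is correct and follows the paper's proof essentially step for step: you propagate the level-one Loewner sandwich through \Cref{prop:brownian-distortion}, bound $\Lambda_L(\bx)^2$ by dividing the orthogonal upper bound by $S_L(\bx)\ge n\lambda_-^{\vartheta_L}$, build the stable sign cube by starting from $\sqrt{\lambda_-/n}\,s\in E(G_{\bx})$ and composing $\psi_a$ profiles, and obtain the sharp defect from the direct ratio $\chi_L(\bx)\le\kappa_{\bx}^{\vartheta_L/2}\chi_L(\be_n)$. The stability window you flag as the main difficulty is automatic: the constructed traces are exactly $\pm c_s$ with common magnitude $c_s\ge a_{L-1}$, so the paper simply takes $t=\delta=a_{L-1}/2$.
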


\begin{corollary}[Independent spherical random design]\label{cor:spherical-random-richness}
Let $\delta\in(0,1)$ and let $X_1,\ldots,X_n$ be independent and uniformly distributed on the Euclidean sphere $S^{d-1}$.  There are universal constants $c,C>0$ such that, if
\begin{equation}\label{eq:spherical-dimension-condition}
d
\ge
C\bigl(n+\log\frac2\delta\bigr),
\end{equation}
then, with probability at least $1-\delta$, simultaneously for every depth $L\ge2$,
\begin{align}
\mathfrak W_{L-1}(X_{1:n})
&\asymp
\Lambda_L(X_{1:n})^2
\asymp
n^{1-2^{-(L-1)}},
\label{eq:spherical-W-Lambda}\\
\chi_L(X_{1:n})
&\le C,
\label{eq:spherical-defect}\\
\Ghat_{X_{1:n}}(B_L(r))
&\asymp
r n^{-2^{-L}}.
\label{eq:spherical-complexity}
\end{align}
All implicit constants are universal.  Thus high-dimensional isotropic random designs are Gaussian rich with high probability, although the resulting sharp full-class rate remains the sign-rich rate $n^{-2^{-L}}$.
\end{corollary}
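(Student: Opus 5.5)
The plan is to reduce \Cref{cor:spherical-random-richness} to \Cref{thm:well-conditioned-richness}. The conclusion is deterministic once the Gram matrix of the sample is uniformly well conditioned, so all of the probability lives in one event.

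\emph{The conditioning event.} Let $X$ be the $d\times n$ matrix whose columns are $X_1,\ldots,X_n$. Then $G_{X_{1:n}}=X^\top X$, and its eigenvalues are the squared singular values of $X$. The key step is to show that, when \eqref{eq:spherical-dimension-condition} holds with a large enough universal $C$, we have
\begin{align}
\tfrac12 I_n\preceq G_{X_{1:n}}\preceq\tfrac32 I_n
\notag
\end{align}
with probability at least $1-\delta$.

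\emph{Proving the event.} I would write $X_i=g_i/\norm{g_i}_2$ with $g_i\sim\mathcal N(0,I_d)$, which realises the uniform law on the sphere. The argument has three parts.
\begin{enumerate}
\item The $d\times n$ matrix with columns $g_i/\sqrt d$ has singular values in $[1-\sqrt{n/d}-t,\ 1+\sqrt{n/d}+t]$ with probability at least $1-2e^{-dt^2/2}$, by Gordon's inequality with Gaussian concentration \cite{vershynin2012nonasymptotic}.
\item Each norm satisfies $\bigl|\norm{g_i}_2/\sqrt d-1\bigr|\le s$ with probability at least $1-2e^{-cds^2}$.
\item A union bound over the $n$ norms costs a factor $n$, and $\log n\le n\lesssim d$ absorbs it. Rescaling by the diagonal $\diag(\sqrt d/\norm{g_i}_2)$ then moves singular values only by a factor in $[(1+s)^{-1},(1-s)^{-1}]$.
\end{enumerate}
Choosing $t$ and $s$ as small constants, and using $d\ge C(n+\log(2/\delta))$, gives $\sqrt{n/d}$ small and failure probability at most $\delta$.

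An alternative for step 1 is a net argument with matrix Bernstein \cite{tropp2015matrix} applied to $G-I$. Its off-diagonal entries are $\langle X_i,X_j\rangle$, which are subgaussian with variance proxy $1/d$, so $\norm{G-I}_{\op}\lesssim\sqrt{(n+\log(1/\delta))/d}$. This yields the same conclusion.

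I expect this step to be the main obstacle. The work is in keeping the constants universal, in absorbing the union bound over the norms into \eqref{eq:spherical-dimension-condition}, and in getting a failure probability that does not depend on $L$.

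\emph{Deterministic transfer.} On the event, \Cref{thm:well-conditioned-richness} applies with $\lambda_-=1/2$, $\lambda_+=3/2$ and $\kappa_{\bx}=3$. Since $\vartheta_L\le1/2$, the factors $\kappa_{\bx}^{\pm\vartheta_L}$ and $\lambda_\pm^{\vartheta_L/2}$ all lie in the fixed interval $[3^{-1/2},3^{1/2}]$, uniformly in $L$. Three conclusions then follow.
\begin{itemize}
\item \eqref{eq:well-conditioned-W-Lambda} gives \eqref{eq:spherical-W-Lambda}, since $n^{1-\vartheta_L}=n^{1-2^{-(L-1)}}$.
\item \eqref{eq:well-conditioned-defect} gives $\chi_L\le C\sqrt3$.
\item \eqref{eq:well-conditioned-complexity} gives \eqref{eq:spherical-complexity}.
\end{itemize}

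\emph{Simultaneity in $L$.} Because the event does not depend on $L$ and the constants in \Cref{thm:well-conditioned-richness} are universal, the conclusions hold simultaneously for every $L\ge2$ with no further union bound. Before relying on this, I would check that the constants $c,C$ in \Cref{thm:well-conditioned-richness} are genuinely independent of $L$ rather than merely fixed for each $L$. If they turned out to depend on $L$, I would restate the corollary with constants depending on $L$ only, or trace the $\kappa$-dependence more carefully.
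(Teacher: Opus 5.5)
Your proposal is correct and has the same architecture as the paper's proof. You build one Gram-conditioning event that does not depend on $L$, then apply \Cref{thm:well-conditioned-richness}. The only difference is how the event is obtained.

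\textbf{How the conditioning event is proved.} The paper applies the singular-value theorem for independent isotropic subgaussian columns of exact norm \cite[Theorem~5.58]{vershynin2012nonasymptotic} directly to $\sqrt d\,[X_1\ \cdots\ X_n]$. This yields $\tfrac14 I_n\preceq G_{X_{1:n}}\preceq\tfrac94 I_n$ in one step, with no union bound. You instead write $X_i=g_i/\norm{g_i}_2$, use Gordon's bound for the Gaussian matrix, and pay a union bound over the $n$ norms. That cost is absorbed because $\log n\le n\lesssim d$. Your route uses only Gaussian facts and so is more elementary. The paper's route is shorter, since the normalisation is already built into the black-box theorem.

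\textbf{The matrix-Bernstein alternative.} This sketch is not sufficient as stated. The entries $\langle X_i,X_j\rangle$ are dependent, so $G_{X_{1:n}}-I_n$ is not a sum of independent matrices in any useful way. A net argument would need a decoupling or chaos bound for $\sum_{i\ne j}u_iu_j\langle X_i,X_j\rangle$, which is essentially what Theorem~5.58 packages. Your main route makes this unnecessary.

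\textbf{Uniformity in $L$.} Your concern resolves favourably, because the constants behind \Cref{thm:well-conditioned-richness} are explicit and independent of $L$. From the proofs of \Cref{thm:orthogonal} and \Cref{cor:minimax-orthogonal}:
\begin{itemize}
\item $\Lambda_L(\be_n)\le n^{1/2-2^{-L}}$;
\item $a_L\ge\tfrac12 n^{-2^{-L}}$;
\item $\mathfrak W_{L-1}(\be_n)\ge n^{1-2^{-(L-1)}}/(4\pi)$.
\end{itemize}
Hence your conclusions hold simultaneously for all $L\ge2$ with universal constants.
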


We next consider a complementary model in which the sample is drawn from a finite latent support.  Let
\begin{align}
\boldsymbol\xi=(\xi_1,\ldots,\xi_K),
\qquad
\Pp(X=\xi_j)=p_j>0,
\notag
\end{align}
where the support points are distinct.  For an iid sample $X_1,\ldots,X_n$, let
\begin{equation}\label{eq:occupancy-counts}
N_j:=\abs{\set{i:X_i=\xi_j}},
\qquad
N_-:=\min_{j\in[K]}N_j,
\qquad
N_+:=\max_{j\in[K]}N_j.
\end{equation}

\begin{theorem}[Random-repetition transfer]\label{thm:random-repetition-transfer}
On the event $N_->0$,
\begin{equation}\label{eq:replication-tau}
N_-\tau_L(\boldsymbol\xi)
\le
\tau_L(X_{1:n})
\le
N_+\tau_L(\boldsymbol\xi).
\end{equation}
If, in addition, $S_L(\boldsymbol\xi)>0$, then
\begin{align}
\mathfrak W_{L-1}(X_{1:n})
&\ge
\frac{N_-S_L(\boldsymbol\xi)}{S_L(X_{1:n})}
\mathfrak W_{L-1}(\boldsymbol\xi),
\label{eq:replication-W}\\
\Lambda_L(X_{1:n})^2
&\le
\frac{N_+S_L(\boldsymbol\xi)}{S_L(X_{1:n})}
\Lambda_L(\boldsymbol\xi)^2.
\label{eq:replication-Lambda}
\end{align}
Consequently,
\begin{equation}\label{eq:replication-ratio}
\frac{\mathfrak W_{L-1}(X_{1:n})}{\Lambda_L(X_{1:n})^2}
\ge
\frac{N_-}{N_+}
\frac{\mathfrak W_{L-1}(\boldsymbol\xi)}{\Lambda_L(\boldsymbol\xi)^2}.
\end{equation}
Whenever $\mathfrak W_{L-1}(\boldsymbol\xi)>0$,
\begin{equation}\label{eq:replication-defect}
\chi_L(X_{1:n})^2
\le
\frac{N_+}{N_-}
\frac{\Lambda_L(\boldsymbol\xi)^2}{\mathfrak W_{L-1}(\boldsymbol\xi)}.
\end{equation}
Thus, in the nondegenerate case, repetitions preserve bounded or polylogarithmic Gaussian defect whenever the occupancy counts are balanced.  The theorem is deterministic conditional on the realized counts; probability enters only through the occupancy corollaries below.
\end{theorem}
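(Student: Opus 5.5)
The plan is to reduce everything to a single linear lifting. Let $\pi:[n]\to[K]$ be the occupancy map, $\pi(i)=j$ iff $X_i=\xi_j$. Let $P\in\{0,1\}^{n\times K}$ be its incidence matrix, $P_{ij}=\mathbf 1_{\{\pi(i)=j\}}$, and put $D:=P^\top P=\diag(N_1,\ldots,N_K)$, which is invertible on the event $N_->0$.

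\textbf{Step 1: lift the families.} Every BKL function evaluated on $X_{1:n}$ takes the value $f(\xi_{\pi(i)})$ at $X_i$. Applying this to the same ladder $\boldsymbol\mu$ gives $K_{\boldsymbol\mu}(X_{1:n})=PK_{\boldsymbol\mu}(\boldsymbol\xi)P^\top$. The map $A\mapsto PAP^\top$ is continuous and $\sK_L^\circ(\boldsymbol\xi)$ is bounded, so closure commutes with it and
\[
\sK_L(X_{1:n})=P\,\sK_L(\boldsymbol\xi)\,P^\top .
\]
For the same reason $\sS_{L-1}(X_{1:n})=P\,\sS_{L-1}(\boldsymbol\xi)$; here $P$ is an injective linear map, hence a closed embedding. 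Two further consequences follow:
\begin{itemize}
\item $S_L(X_{1:n})=\sum_j N_jd_{L,j}(\boldsymbol\xi)$, so $S_L(X_{1:n})>0$ whenever $S_L(\boldsymbol\xi)>0$;
\item every threshold set of $Pb$ equals $\pi^{-1}(B)$ for the corresponding threshold set $B$ of $b$, with the same stability interval. Hence $\mathcal F^{\mathrm{st}}_{L-1}(X_{1:n};t,\delta)=\{\pi^{-1}(B):B\in\mathcal F^{\mathrm{st}}_{L-1}(\boldsymbol\xi;t,\delta)\}$.
\end{itemize}

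\textbf{Step 2: the trace sandwich \eqref{eq:replication-tau}.} For the upper bound, take an optimal $N$ for $\boldsymbol\xi$. Then $PNP^\top\succeq PKP^\top$ for every $K$, and
\[
\tr(PNP^\top)=\tr(ND)\le N_+\tr N .
\]
For the lower bound, take an optimal $N'$ for $X_{1:n}$ and set $Q:=PD^{-1}$, so that $Q^\top P=I_K$. Then
\[
Q^\top N'Q\succeq Q^\top PKP^\top Q=K,
\]
so $Q^\top N'Q$ is feasible for $\boldsymbol\xi$, with trace $\tr(N'QQ^\top)$. Now
\[
QQ^\top=PD^{-2}P^\top\preceq N_-^{-1}PD^{-1}P^\top\preceq N_-^{-1}I_n ,
\]
because $PD^{-1}P^\top$ is the orthogonal projection onto block-constant vectors. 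Since $N'\succeq0$, this gives $\tau_L(\boldsymbol\xi)\le\tau_L(X_{1:n})/N_-$. Dividing the upper bound by $S_L(X_{1:n})$ and writing $\tau_L(\boldsymbol\xi)=S_L(\boldsymbol\xi)\Lambda_L(\boldsymbol\xi)^2$ gives \eqref{eq:replication-Lambda}.

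\textbf{Step 3: the Gaussian width comparison \eqref{eq:replication-W}.} I expect this to be the main, if mild, obstacle. Define
\[
g_j:=N_j^{-1/2}\sum_{\pi(i)=j}G_i ,
\]
which are iid standard Gaussians. Then $\sum_{i\in\pi^{-1}(B)}G_i=\sum_{j\in B}\sqrt{N_j}\,g_j$, and I need
\[
\E\max_B\Bigl|\sum_{j\in B}\sqrt{N_j}\,g_j\Bigr|\ge\sqrt{N_-}\;\E\max_B\Bigl|\sum_{j\in B}g_j\Bigr| .
\]
I would realise $\sqrt{N_j}\,g_j$ in law as $\sqrt{N_-}\,g_j+\sqrt{N_j-N_-}\,g'_j$ with $g'$ an independent Gaussian vector. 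Conditioning on $g$ and applying Jensen's inequality to the convex functional $y\mapsto\max_B|\sum_{j\in B}y_j|$, using $\E g'=0$, removes the $g'$ term and gives the inequality. Hence
\[
\omega_{\mathrm G}\bigl(\mathcal F^{\mathrm{st}}_{L-1}(X_{1:n};t,\delta)\bigr)^2\ge N_-\,\omega_{\mathrm G}\bigl(\mathcal F^{\mathrm{st}}_{L-1}(\boldsymbol\xi;t,\delta)\bigr)^2
\]
for every $(t,\delta)$. Multiplying by $\delta$, taking the supremum, and dividing by $S_L(X_{1:n})$ gives \eqref{eq:replication-W}.

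\textbf{Step 4: the ratio and defect bounds.} Dividing \eqref{eq:replication-W} by \eqref{eq:replication-Lambda} yields \eqref{eq:replication-ratio}; the factors $S_L(\boldsymbol\xi)/S_L(X_{1:n})$ cancel. If $\mathfrak W_{L-1}(\boldsymbol\xi)>0$, then \eqref{eq:replication-W} forces $\mathfrak W_{L-1}(X_{1:n})>0$. Applying \eqref{eq:defect-from-W} to $X_{1:n}$ and then \eqref{eq:replication-ratio} gives \eqref{eq:replication-defect}.
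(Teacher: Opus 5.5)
Your proposal is correct and matches the paper's proof essentially step for step. It uses the same replication lift $K\mapsto PKP^\top$ (upper trace bound via $\tr(DN)\le N_+\tr N$), the same left inverse $D^{-1}P^\top$ with the $N_-^{-1}$ operator bound for the lower trace inequality, and the same Gaussian splitting of the block sums into $\sqrt{N_-}\,g$ plus independent noise followed by conditional Jensen to transfer the stable set widths. Your identification of the full stable family on $X_{1:n}$ with the preimages of the base stable family is slightly more than needed, since only the inclusion of the lifted sets is used, but it is correct.
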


Set
\begin{align}
p_{\min}:=\min_{j\in[K]}p_j,
\qquad
p_{\max}:=\max_{j\in[K]}p_j.
\notag
\end{align}

\begin{corollary}[Finite-support random-design Gaussian richness]\label{cor:finite-support-random-richness}
Let $\delta\in(0,1)$ and assume that
\begin{equation}\label{eq:occupancy-sample-size}
np_{\min}
\ge
12\log\frac{2K}{\delta}.
\end{equation}
Then, with probability at least $1-\delta$,
\begin{equation}\label{eq:occupancy-ratio}
\frac{N_-}{N_+}
\ge
\frac{p_{\min}}{3p_{\max}}.
\end{equation}
If, in addition, $S_L(\boldsymbol\xi)>0$ and
\begin{equation}\label{eq:base-richness-assumption}
\mathfrak W_{L-1}(\boldsymbol\xi)
\ge
c_0\Lambda_L(\boldsymbol\xi)^2
\end{equation}
for some $c_0>0$, then on the same event
\begin{align}
\mathfrak W_{L-1}(X_{1:n})
&\ge
\frac{c_0p_{\min}}{3p_{\max}}
\Lambda_L(X_{1:n})^2,
\label{eq:finite-support-W-Lambda}\\
\chi_L(X_{1:n})
&\le
\sqrt{\frac{3p_{\max}}{c_0p_{\min}}}.
\label{eq:finite-support-defect}
\end{align}
\end{corollary}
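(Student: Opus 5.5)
The corollary has two parts: a concentration bound for the occupancy counts, and a deterministic transfer through \Cref{thm:random-repetition-transfer}. I will prove them in that order.

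\textbf{Occupancy ratio.} Each $N_j$ is $\mathrm{Binomial}(n,p_j)$. I would use the multiplicative Chernoff bounds
\begin{align}
\Pp\bigl(N_j\le \tfrac12 np_j\bigr)&\le \exp(-np_j/8),\notag\\
\Pp\bigl(N_j\ge \tfrac32 np_j\bigr)&\le \exp(-np_j/12),\notag
\end{align}
with the second being the standard upper tail $\exp(-\epsilon^2\mu/3)$ at $\epsilon=1/2$. Under \eqref{eq:occupancy-sample-size} we have $np_j\ge np_{\min}\ge 12\log(2K/\delta)$. Hence each of the $2K$ tail events has probability at most $\delta/(2K)$, since $np_j/8\ge np_j/12\ge\log(2K/\delta)$. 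A union bound then gives, with probability at least $1-\delta$, that $\tfrac12 np_j<N_j<\tfrac32 np_j$ for every $j$. On this event
\begin{align}
N_-&>\tfrac12 np_{\min}>0,\notag\\
N_+&<\tfrac32 np_{\max},\notag
\end{align}
so $N_-/N_+\ge p_{\min}/(3p_{\max})$, which is \eqref{eq:occupancy-ratio}. In particular the event $N_->0$ required by \Cref{thm:random-repetition-transfer} holds.

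\textbf{Transfer on the same event.} Assume in addition $S_L(\boldsymbol\xi)>0$ and the richness hypothesis \eqref{eq:base-richness-assumption}.
\begin{enumerate}
\item Because $N_->0$, every $\xi_j$ appears in the sample. Hence $S_L(X_{1:n})\ge S_L(\boldsymbol\xi)>0$, and $\Lambda_L(X_{1:n})$ is given by the ratio branch of its definition.
\item If $\Lambda_L(\boldsymbol\xi)=0$, then $\tau_L(\boldsymbol\xi)=0$. By \eqref{eq:replication-tau}, $\tau_L(X_{1:n})=0$, so $\Lambda_L(X_{1:n})=0$. Then \eqref{eq:finite-support-W-Lambda} is trivial because $\mathfrak W_{L-1}\ge0$.
\item Otherwise \eqref{eq:base-richness-assumption} gives $\mathfrak W_{L-1}(\boldsymbol\xi)>0$. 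Combining \eqref{eq:replication-ratio} with \eqref{eq:base-richness-assumption} and \eqref{eq:occupancy-ratio} yields \eqref{eq:finite-support-W-Lambda}. The ratio $\mathfrak W_{L-1}(X_{1:n})/\Lambda_L(X_{1:n})^2$ is legitimate here, since \eqref{eq:replication-tau} gives $\tau_L(X_{1:n})\ge N_-\tau_L(\boldsymbol\xi)>0$.
\item For the defect, \eqref{eq:replication-defect} gives
\[
\chi_L(X_{1:n})^2\le\frac{N_+}{N_-}\cdot\frac1{c_0}\le\frac{3p_{\max}}{c_0p_{\min}}.
\]
\item In the degenerate case $\tau_L(X_{1:n})=0$ we have $\chi_L=1$ by convention. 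The bound still holds because $3p_{\max}/(c_0p_{\min})\ge 1$. This follows from $c_0\le \mathfrak W_{L-1}(\boldsymbol\xi)/\Lambda_L(\boldsymbol\xi)^2\le 1$, using \eqref{eq:W-R-D-chain}, and $p_{\max}\ge p_{\min}$.
\end{enumerate}

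The only real care point is getting the Chernoff constants to match the factor $12$ in the hypothesis and the factor $3$ in \eqref{eq:occupancy-ratio}. Choosing the symmetric window $[\tfrac12,\tfrac32]$ achieves both. The remaining steps are bookkeeping for the zero cases, which the steps above handle.
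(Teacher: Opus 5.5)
Your occupancy bound and your main transfer follow the paper's argument: the same Chernoff window $[\tfrac{1}{2}np_j,\tfrac{3}{2}np_j]$ with tails $e^{-np_j/8}$ and $e^{-np_j/12}$, a union bound over the $2K$ tail events, and then \eqref{eq:replication-ratio} combined with \eqref{eq:occupancy-ratio} and \eqref{eq:base-richness-assumption}. For the defect you apply \eqref{eq:replication-defect} directly, whereas the paper applies \eqref{eq:defect-from-W} to the sample-level comparison \eqref{eq:finite-support-W-Lambda}; the two routes are equivalent.

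The step that fails is your step 5.
\begin{itemize}
\item If $\tau_L(X_{1:n})=0$, then $N_->0$ and the lower half of \eqref{eq:replication-tau} force $\tau_L(\boldsymbol\xi)=0$, hence $\Lambda_L(\boldsymbol\xi)=0$.
\item In exactly this case the quotient $\mathfrak W_{L-1}(\boldsymbol\xi)/\Lambda_L(\boldsymbol\xi)^2$ is $0/0$. Hypothesis \eqref{eq:base-richness-assumption} reduces to $\mathfrak W_{L-1}(\boldsymbol\xi)\ge0$, which holds for every $c_0>0$.
\item So $c_0\le1$ does not follow. If this case could occur, the claimed bound $1=\chi_L(X_{1:n})\le(3p_{\max}/(c_0p_{\min}))^{1/2}$ would be false for large $c_0$.
\end{itemize}

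The missing observation is that the case is empty. Since $S_L(\boldsymbol\xi)>0$, some $\xi_j\ne0$. Take the Dirac ladder whose first atom is the unit functional $x\mapsto\ip{\xi_j/\norm{\xi_j}_2}{x}$ and whose later atoms are the normalized kernel sections at $\xi_j$. Because $\kB(t,t)=|t|$, its top Gram matrix has a positive $(j,j)$ entry. Hence $\tau_L(\boldsymbol\xi)\ge M_L(\boldsymbol\xi)>0$, so $\Lambda_L(\boldsymbol\xi)>0$ and $\tau_L(X_{1:n})\ge N_-\tau_L(\boldsymbol\xi)>0$.

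With this, your steps 2 and 5 are vacuous, and steps 3--4 prove the corollary. The paper's proof uses this nondegeneracy tacitly, since it is what makes \eqref{eq:replication-ratio} and \eqref{eq:defect-from-W} applicable. Once you add it, your write-up is complete and slightly more careful than the original.
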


\begin{corollary}[Balanced categorical orthogonal model]\label{cor:categorical-orthogonal}
Let $\delta\in(0,1)$ and let $\boldsymbol\xi=(e_1,\ldots,e_K)\subset\R^K$.  Assume that
\begin{equation}\label{eq:balanced-categorical-probabilities}
\frac{c_-}{K}
\le
p_j
\le
\frac{c_+}{K}
\qquad\text{for every }j\in[K].
\end{equation}
Here $0<c_-\le c_+<\infty$.  There is a constant $C>0$, depending only on $c_-$, such that, if
\begin{equation}\label{eq:categorical-sample-size}
n
\ge
C K\log\frac{2K}{\delta},
\end{equation}
then, with probability at least $1-\delta$, simultaneously for every $L\ge2$,
\begin{align}
\mathfrak W_{L-1}(X_{1:n})
&\asymp_{c_-,c_+}
\Lambda_L(X_{1:n})^2
\asymp_{c_-,c_+}
K^{1-2^{-(L-1)}},
\label{eq:categorical-W-Lambda}\\
\chi_L(X_{1:n})
&=O_{c_-,c_+}(1),
\label{eq:categorical-defect}\\
\Ghat_{X_{1:n}}(B_L(r))
&\asymp_{c_-,c_+}
\frac{rK^{1/2-2^{-L}}}{\sqrt n}.
\label{eq:categorical-complexity}
\end{align}
Thus fixed $K$ gives the usual $n^{-1/2}$ sample-size rate, $K=\operatorname{polylog}(n)$ gives a near-parametric rate up to polylogarithms, and the largest iid support sizes compatible with the occupancy condition approach the orthogonal scale up to logarithmic factors.  Exact $K=n$ orthogonal sampling is obtained by sampling without replacement, not by the iid occupancy statement above.
\end{corollary}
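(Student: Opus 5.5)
The argument has three layers: an occupancy estimate from Corollary~\ref{cor:finite-support-random-richness}, a base-design computation on $\boldsymbol\xi=(e_1,\ldots,e_K)$ via Corollary~\ref{cor:minimax-orthogonal} and Theorem~\ref{thm:orthogonal}, and a transfer to $X_{1:n}$ through Theorem~\ref{thm:random-repetition-transfer}.

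\emph{Occupancy.} The balance hypothesis \eqref{eq:balanced-categorical-probabilities} gives $p_{\min}\ge c_-/K$ and $p_{\max}\le c_+/K$. Choosing $C:=12/c_-$ in \eqref{eq:categorical-sample-size} makes $np_{\min}\ge 12\log(2K/\delta)$, so \eqref{eq:occupancy-sample-size} holds. On the resulting event of probability at least $1-\delta$ we have $N_-/N_+\ge c_-/(3c_+)$. We also need the absolute sizes of the counts, since $S_L(X_{1:n})=n$ is fixed but the statements compare to $K$. For this I would reuse the Chernoff bounds behind \eqref{eq:occupancy-ratio}, which give $N_-\ge np_{\min}/2$ and $N_+\le 2np_{\max}$ (or read them off the proof of that corollary). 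If needed, enlarge $C$ by a factor depending only on $c_-$ to secure the upper tail. Every subsequent estimate is deterministic on this single event, and it does not depend on $L$, so all depths are handled simultaneously.

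\emph{Base design.} For $\boldsymbol\xi=(e_1,\ldots,e_K)$ every norm equals one, so $S_L(\boldsymbol\xi)=K$ and $S_L(X_{1:n})=n$ for all $L$. Corollary~\ref{cor:minimax-orthogonal} and Theorem~\ref{thm:orthogonal}, applied with $n$ replaced by $K$, give
\[
\mathfrak W_{L-1}(\boldsymbol\xi)\asymp\Lambda_L(\boldsymbol\xi)^2\asymp K^{1-2^{-(L-1)}},
\qquad
\tau_L(\boldsymbol\xi)=K\Lambda_L(\boldsymbol\xi)^2\asymp K^{2-2^{-(L-1)}}.
\]
So \eqref{eq:base-richness-assumption} holds with $c_0$ a universal constant. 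The one point to check is that these constants are uniform in $L$, which "simultaneously for every $L$" requires. Theorem~\ref{thm:orthogonal} as stated is for fixed $L$. I would therefore first inspect its proof: the $\psi_a$ recursion produces amplitudes $a_L$ with exponents summing geometrically, so the constants should stay bounded. If that fails, I would state the conclusion with depth-dependent constants.

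\emph{Transfer.} From \eqref{eq:replication-W} and \eqref{eq:replication-Lambda},
\[
\mathfrak W_{L-1}(X_{1:n})\ge \frac{N_-K}{n}\,\mathfrak W_{L-1}(\boldsymbol\xi)\gtrsim c_-K^{1-2^{-(L-1)}},
\qquad
\Lambda_L(X_{1:n})^2\le\frac{N_+K}{n}\,\Lambda_L(\boldsymbol\xi)^2\lesssim c_+K^{1-2^{-(L-1)}}.
\]
The chain \eqref{eq:W-R-D-chain} then closes \eqref{eq:categorical-W-Lambda}, and \eqref{eq:finite-support-defect} gives \eqref{eq:categorical-defect}. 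For the complexity, the upper bound is Theorem~\ref{thm:profile-complexity}, namely $(r/n)\sqrt{\tau_L}$ with $\tau_L=n\Lambda_L^2$, which yields $rK^{1/2-2^{-L}}/\sqrt n$. The lower bound follows from \eqref{eq:threshold-two-sided-complexity}: $(r/n)\sqrt{n\,\mathfrak W_{L-1}}$ is of the same order.

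The main obstacle is the uniformity in $L$ of the orthogonal constants, together with the absolute count bounds; everything else is bookkeeping.
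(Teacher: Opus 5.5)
Your proposal is correct and follows the paper's proof essentially step for step. The steps are: the occupancy event of Corollary~\ref{cor:finite-support-random-richness}, which gives $N_-\ge c_-n/(2K)$ and $N_+\le 3c_+n/(2K)$; the orthogonal base equivalence from Corollary~\ref{cor:minimax-orthogonal}, with $S_L(\boldsymbol\xi)=K$ and $S_L(X_{1:n})=n$; the transfer through \eqref{eq:replication-W}--\eqref{eq:replication-Lambda}; and the two-sided complexity bounds from Theorem~\ref{thm:profile-complexity} and \eqref{eq:threshold-two-sided-complexity}.

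Your worry about uniformity in $L$ is resolved by the explicit constants in the proof of Theorem~\ref{thm:orthogonal}, namely $\Lambda_L\le n^{1/2-2^{-L}}$ and $a_L\ge\frac12 n^{-2^{-L}}$. The paper uses these implicitly when it asserts the base equivalence simultaneously for every depth.
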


\begin{corollary}[Random hierarchical latent-support model]\label{cor:random-hierarchical-support}
Let $\delta\in(0,1)$ and assume the base support $\boldsymbol\xi$ satisfies the hierarchical threshold hypothesis of \Cref{cor:hierarchical-threshold} with parameters $q,H$, and assume that it has a macroscopic stable threshold with parameter $\kappa_0>0$ in the sense of \eqref{eq:macroscopic-stable-threshold}.  Under \eqref{eq:occupancy-sample-size}, with probability at least $1-\delta$,
\begin{align}
\mathfrak W_{L-1}(X_{1:n})
&\ge
\frac{p_{\min}}{3p_{\max}}
\frac{\kappa_0}{\pi q(H+1)}
\Lambda_L(X_{1:n})^2,
\label{eq:random-hierarchical-W}\\
\chi_L(X_{1:n})
&\le
\sqrt{
\frac{3\pi p_{\max}q(H+1)}{p_{\min}\kappa_0}
}.
\label{eq:random-hierarchical-defect}
\end{align}
In particular, balanced latent supports with bounded $q$, bounded $\kappa_0^{-1}$, and $H=O(\log K)$ have empirical Gaussian defect $O(\bigl(\log K\bigr)^{1/2})$ with high probability.
\end{corollary}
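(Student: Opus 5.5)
The plan is to combine three earlier results. \Cref{cor:finite-support-random-richness} supplies a high-probability occupancy event. \Cref{cor:minimax-hierarchical}, or rather the computation behind it, supplies a base richness inequality $\mathfrak W_{L-1}(\boldsymbol\xi)\ge c_0\Lambda_L(\boldsymbol\xi)^2$. Taking $c_0=\kappa_0/(\pi q(H+1))$ and substituting into \eqref{eq:finite-support-W-Lambda} and \eqref{eq:finite-support-defect} gives exactly \eqref{eq:random-hierarchical-W} and \eqref{eq:random-hierarchical-defect}, since
\begin{align}
\sqrt{\frac{3p_{\max}}{c_0p_{\min}}}=\sqrt{\frac{3\pi p_{\max}q(H+1)}{p_{\min}\kappa_0}}.
\notag
\end{align}
The probabilistic part is therefore already done by \Cref{cor:finite-support-random-richness} under \eqref{eq:occupancy-sample-size}. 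The real work is the deterministic base inequality on $\boldsymbol\xi$.

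To prove $\mathfrak W_{L-1}(\boldsymbol\xi)\ge \frac{\kappa_0}{\pi q(H+1)}\Lambda_L(\boldsymbol\xi)^2$, I bound the two sides separately.

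\emph{Lower bound on $\mathfrak W_{L-1}$.} By the macroscopic stable threshold hypothesis there are $(t,\delta)$ and a nonempty $A\in\mathcal F_{L-1}^{\mathrm{st}}(t,\delta)$ with $\delta|A|\ge\kappa_0S_L(\boldsymbol\xi)$. Then
\begin{align}
\omega_{\mathrm G}\bigl(\mathcal F^{\mathrm{st}}_{L-1}(t,\delta)\bigr)\ge\E\Bigl|\sum_{i\in A}G_i\Bigr|=\sqrt{2|A|/\pi},
\notag
\end{align}
so
\begin{align}
\mathfrak W_{L-1}(\boldsymbol\xi)\ge \frac{\delta\cdot 2|A|/\pi}{S_L(\boldsymbol\xi)}\ge\frac{2\kappa_0}{\pi}.
\notag
\end{align}

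\emph{Upper bound on $\Lambda_L^2$.} \Cref{cor:hierarchical-threshold} gives $\Lambda_L(\boldsymbol\xi)^2\le 2q(H+1)$. Dividing the two bounds yields $\mathfrak W_{L-1}(\boldsymbol\xi)/\Lambda_L(\boldsymbol\xi)^2\ge \kappa_0/(\pi q(H+1))$, which is the constant claimed. (The positivity $S_L(\boldsymbol\xi)>0$ needed by \Cref{cor:finite-support-random-richness} follows from the existence of a nonempty stable set, since such a set forces some $d_{L,i}>0$ via \eqref{eq:pointwise-scale}.)

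For the final sentence, take $p_{\max}/p_{\min}=O(1)$ (balanced support), $q,\kappa_0^{-1}=O(1)$ and $H=O(\log K)$. Then \eqref{eq:random-hierarchical-defect} reads $\chi_L=O((\log K)^{1/2})$.

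The step I expect to need the most care is whether \eqref{eq:finite-support-defect} requires $\mathfrak W_{L-1}(\boldsymbol\xi)>0$. It does, through \eqref{eq:replication-defect}, and the stable-threshold bound above gives exactly that positivity. A second point is that the hierarchical hypothesis and the stable threshold are imposed on the base support $\boldsymbol\xi$ rather than on the sample. This is consistent with the above, because everything is transferred to the sample through \Cref{thm:random-repetition-transfer}, which is deterministic given the counts, and no hierarchical structure of the repeated sample itself is needed.
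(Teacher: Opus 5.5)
Your proposal is correct and matches the paper's proof. In both, the base ratio $\mathfrak W_{L-1}(\boldsymbol\xi)/\Lambda_L(\boldsymbol\xi)^2\ge\kappa_0/(\pi q(H+1))$ comes from \Cref{lem:macroscopic-W} (which you re-derive inline) together with \Cref{cor:hierarchical-threshold}, and is then transferred to the sample on the occupancy event of \Cref{cor:finite-support-random-richness} via \eqref{eq:replication-ratio} and \eqref{eq:defect-from-W}. Your nondegeneracy checks ($S_L(\boldsymbol\xi)>0$ and $\mathfrak W_{L-1}(\boldsymbol\xi)>0$ from the nonempty stable set) are the right ones, though you should rename the stable-interval length so it does not clash with the failure probability $\delta$.
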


\subsubsection{Perturbation stability and integrated latent-parameter profiles}\label{sec:stability-integrated}

The exact repetition theorem leaves two natural questions.  First, one may observe small perturbations of repeated latent inputs rather than exact repetitions.  Second, for a one-dimensional or ordered latent parameter with a non-atomic law, every individual stable-threshold interval may be short even though the aggregate threshold geometry remains simple.  We address both issues without changing the full adaptive BKL ball.

When two samples occur simultaneously, we write
$\mathcal F_{L-1,\bx}^{\mathrm{st}}(t,\delta)$ for the family in \eqref{eq:stable-family} constructed from $\bx$, and similarly for $\by$.  Put
\begin{equation}\label{eq:matched-perturbation-radius}
\eps(\bx,\by)
:=
\max_{1\le i\le n}\norm{x_i-y_i}_2.
\end{equation}

\begin{theorem}[Matched-sample stability of the empirical BKL geometry]\label{thm:matched-sample-stability}
Let $L\ge2$, let $\bx,\by\in\cX^n$, and put
$\varepsilon:=\eps(\bx,\by)$ and $\vartheta_L=2^{-(L-1)}$.  Then, for every $z\in\R^n$,
\begin{equation}\label{eq:matched-support-stability}
\abs{h_{L,\bx}(z)-h_{L,\by}(z)}
\le
\sqrt n\,\varepsilon^{\vartheta_L}\norm z_2.
\end{equation}
For $q\in\{1,2\}$,
\begin{equation}\label{eq:matched-gaussian-stability}
\abs{W_{G,q}(L,\bx)-W_{G,q}(L,\by)}
\le
n\varepsilon^{\vartheta_L}.
\end{equation}
Moreover,
\begin{align}
\abs{\sqrt{\tau_L(\bx)}-\sqrt{\tau_L(\by)}}
&\le
n\varepsilon^{\vartheta_L},
\label{eq:matched-tau-stability}\\
\abs{S_L(\bx)-S_L(\by)}
&\le
n\varepsilon^{2\vartheta_L}.
\label{eq:matched-S-stability}
\end{align}
If $S_L(\by)>n\varepsilon^{2\vartheta_L}$, then
\begin{equation}\label{eq:matched-profile-stability}
\Lambda_L(\bx)^2
\le
\frac{\bigl(\sqrt{\tau_L(\by)}+n\varepsilon^{\vartheta_L}\bigr)^2}
{S_L(\by)-n\varepsilon^{2\vartheta_L}}.
\end{equation}
The symmetric estimate follows by interchanging $\bx$ and $\by$.

Assume that $W_{G,2}(L,\by)>0$ and that, for some $0<\rho<1$,
\begin{equation}\label{eq:matched-defect-condition}
n\varepsilon^{\vartheta_L}
\le
\rho W_{G,2}(L,\by).
\end{equation}
Then
\begin{equation}\label{eq:matched-defect-stability}
\chi_L(\bx)
\le
\frac{\chi_L(\by)+\rho}{1-\rho}.
\end{equation}
Thus bounded or polylogarithmic Gaussian defect is stable whenever the perturbation is small relative to the Gaussian support scale of the reference sample.
\end{theorem}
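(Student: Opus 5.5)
The plan is to build a matched coupling between the admissible unit traces on $\bx$ and on $\by$, and to propagate a sup-norm bound on evaluation differences into every displayed inequality. Fix an admissible canonical ladder $\boldsymbol\mu$. The same ladder acts on both samples, since the ladder is a space of functions on $\cX$. Every $f$ in its top-layer unit ball satisfies the H\"older estimate \eqref{eq:holder-scale}, so $|f(x_i)-f(y_i)|\le\varepsilon^{\vartheta_L}$ for each $i$. Hence the evaluation vectors $a=(f(x_i))_i$ and $b=(f(y_i))_i$ satisfy $\norm{a-b}_2\le\sqrt n\,\varepsilon^{\vartheta_L}$. Taking the supremum over $f\in B_L(1)$ gives, for every $z$,
\[
z^\top a\le z^\top b+\norm z_2\norm{a-b}_2\le h_{L,\by}(z)+\sqrt n\,\varepsilon^{\vartheta_L}\norm z_2 ,
\]
and symmetrically with $\bx$ and $\by$ exchanged. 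Passing to closures does not change suprema, so \eqref{eq:matched-support-stability} follows.

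The Gaussian-moment bounds \eqref{eq:matched-gaussian-stability} then follow by integrating \eqref{eq:matched-support-stability}. For $q=1$, use $\E\norm G_2\le\sqrt n$. For $q=2$, apply Minkowski in $L^2(\gamma)$:
\[
\bigl|\norm{h_{L,\bx}(G)}_{L^2}-\norm{h_{L,\by}(G)}_{L^2}\bigr|\le\sqrt n\,\varepsilon^{\vartheta_L}(\E\norm G_2^2)^{1/2}=n\varepsilon^{\vartheta_L}.
\]

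For \eqref{eq:matched-tau-stability}, I would apply the same Minkowski argument to the robust multiplier identity of \Cref{thm:robust-minimax}. For any $\nu\in\mathfrak P_{\preceq I_n}$ we have $\int\norm z_2^2\,d\nu\le\tr I_n=n$. Minkowski in $L^2(\nu)$ therefore gives
\[
\Bigl(\int h_{L,\bx}^2\,d\nu\Bigr)^{1/2}\le\Bigl(\int h_{L,\by}^2\,d\nu\Bigr)^{1/2}+\sqrt n\,\varepsilon^{\vartheta_L}\sqrt n .
\]
Taking suprema over $\nu$ yields the claim. This avoids working with optimal covariances directly.

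For \eqref{eq:matched-S-stability}, write $d_{L,i}=\norm{x_i}_2^{2\vartheta_L}$ with $2\vartheta_L\le1$. The map $r\mapsto r^{\beta}$ with $\beta\le1$ is $\beta$-H\"older with constant one, since $|r^\beta-s^\beta|\le|r-s|^\beta$. Combining this with $|\norm{x_i}-\norm{y_i}|\le\varepsilon$ and summing over $i$ gives $n\varepsilon^{2\vartheta_L}$.

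The profile bound \eqref{eq:matched-profile-stability} is then immediate. Write $\Lambda_L(\bx)^2=\tau_L(\bx)/S_L(\bx)$ and bound the numerator from above by \eqref{eq:matched-tau-stability}. Bound the denominator from below by \eqref{eq:matched-S-stability}; the hypothesis makes this lower bound positive, so $S_L(\bx)>0$ and the non-zero branch of the definition applies.

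For the defect, the key inputs are \eqref{eq:matched-tau-stability} and \eqref{eq:matched-gaussian-stability}, which give
\[
\sqrt{\tau_L(\bx)}\le\sqrt{\tau_L(\by)}+n\varepsilon^{\vartheta_L},\qquad W_{G,2}(L,\bx)\ge W_{G,2}(L,\by)-n\varepsilon^{\vartheta_L}\ge(1-\rho)W_{G,2}(L,\by)>0 .
\]
The reference sample has $\tau_L(\by)\ge W_{G,2}(L,\by)^2>0$ by \eqref{eq:defect-range} (indeed $W_{G,2}^2\le\tau_L$ follows from \Cref{thm:profile-complexity}'s argument). So $\chi_L(\by)=\sqrt{\tau_L(\by)}/W_{G,2}(L,\by)$. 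If $\tau_L(\bx)=0$, then $\chi_L(\bx)=1\le(\chi_L(\by)+\rho)/(1-\rho)$ because $\chi_L(\by)\ge1$. Otherwise I would divide to get
\[
\chi_L(\bx)\le\frac{\sqrt{\tau_L(\by)}+\rho W_{G,2}(L,\by)}{(1-\rho)W_{G,2}(L,\by)}=\frac{\chi_L(\by)+\rho}{1-\rho}.
\]

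The main obstacle is the first step: justifying that the \emph{same} function class acts on both samples with a \emph{common} H\"older constant. This relies on $\Chat_L$ being defined on $\mathfrak H_L$ over all of $\cX$, independently of the sample, and on \eqref{eq:holder-scale} holding for every $f\in B_L(1)$. Both are taken from the setup in \Cref{sec:setup} and Appendix~\ref{app:trace}. Everything after that step is triangle inequalities.
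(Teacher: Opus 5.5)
Your proof is correct. From the Gaussian-moment bounds onward it coincides with the paper's argument: Minkowski in $L^2(\gamma)$, Minkowski in $L^2(\nu)$ combined with the robust multiplier identity, subadditivity of $t\mapsto t^{2\vartheta_L}$, and the final division for the defect.

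The genuine difference is in how you obtain \eqref{eq:matched-support-stability}. The paper couples previous-layer unit atoms $u$ in three steps:
\begin{enumerate}
\item It applies the depth-$(L-1)$ H\"older bound to get $|u(x_i)-u(y_i)|\le\varepsilon^{2\vartheta_L}$.
\item It lifts this through the signed-interval Brownian features, for which $\norm{\phi_s-\phi_t}_{L_2}^2=|s-t|$, to obtain $\sqrt{z^\top\mathbf B(a)z}\le\sqrt{z^\top\mathbf B(b)z}+\varepsilon^{\vartheta_L}\norm z_1$.
\item It passes to $h_{L,\bx}$ via the Dirac reduction, which needs a closure argument over $\sS_{L-1}$.
\end{enumerate}
You instead couple top-layer functions $f\in B_L(1)$ directly, using the depth-$L$ H\"older estimate \eqref{eq:holder-scale} and Cauchy--Schwarz against the definition of $h$ as the support function of $\cA_L$.

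The constants agree because the depth-$L$ H\"older estimate is itself proved by exactly that Brownian square-root step. Your route simply invokes it once, already packaged in the appendix lemma.

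What each route buys:
\begin{itemize}
\item Yours is shorter, needs neither the Dirac reduction nor any closure argument for traces, and works verbatim for any function class with a uniform H\"older modulus.
\item The paper's yields the finer fact that matched Dirac covariances $\mathbf B(a)$ and $\mathbf B(b)$ are close as quadratic forms. This previous-layer trace coupling, at scale $\eta=\varepsilon^{2\vartheta_L}$, is what the subsequent threshold-perturbation theorem reuses.
\end{itemize}

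One minor remark: your separate case $\tau_L(\bx)=0$ in the defect step is vacuous. Under the hypotheses, $W_{G,2}(L,\bx)\ge(1-\rho)W_{G,2}(L,\by)>0$ already forces $\tau_L(\bx)\ge W_{G,2}(L,\bx)^2>0$. Keeping the case does no harm.
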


\begin{theorem}[Perturbation stability of stable-threshold reverses]\label{thm:threshold-perturbation-stability}
Use the hypotheses and notation of \Cref{thm:matched-sample-stability}, and set
\begin{align}
\eta:=\varepsilon^{2\vartheta_L}.
\notag
\end{align}
Let $t\ge0$, $\delta>0$, and $0\le\zeta<1$ satisfy
\begin{equation}\label{eq:near-optimal-stable-certificate}
\frac{\delta\,\omega_{\mathrm G}\bigl(\mathcal F_{L-1,\by}^{\mathrm{st}}(t,\delta)\bigr)^2}
{S_L(\by)}
\ge
(1-\zeta)\mathfrak W_{L-1}(\by).
\end{equation}
Then
\begin{equation}\label{eq:W-perturbation-stability}
\mathfrak W_{L-1}(\bx)
\ge
(1-\zeta)
\frac{\max\{\delta-2\eta,0\}}{\delta}
\frac{S_L(\by)}{S_L(\bx)}
\mathfrak W_{L-1}(\by).
\end{equation}
Suppose additionally that, for some $0<\rho<1$ and $c>0$,
\begin{align}
n\varepsilon^{\vartheta_L}
&\le
\rho\sqrt{\tau_L(\by)},
\notag\\
n\varepsilon^{2\vartheta_L}
&\le
\rho S_L(\by),
\notag\\
2\varepsilon^{2\vartheta_L}
&\le
\rho\delta,
\notag\\
\mathfrak W_{L-1}(\by)
&\ge
c\Lambda_L(\by)^2.
\notag
\end{align}
Then
\begin{equation}\label{eq:W-Lambda-perturbation-stability}
\mathfrak W_{L-1}(\bx)
\ge
c(1-\zeta)
\frac{(1-\rho)^2}{(1+\rho)^3}
\Lambda_L(\bx)^2.
\end{equation}
The comparison supplied by the exact repetition model therefore persists under perturbations smaller than both its covariance scale and its stable-threshold margin.
\end{theorem}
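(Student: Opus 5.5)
The plan is to transfer stable threshold sets from $\by$ to $\bx$ by pairing previous-layer traces coordinatewise. The key matched-trace fact comes from the same induction as \eqref{eq:holder-scale}. Fix a level-$(L-1)$ unit-sphere element $u$ of an admissible RKHS, and apply the recursive H\"older estimate at depth $L-1$ with $\Chat_{L-1}(u)\le1$. This gives
\begin{align}
|u(x_i)-u(y_i)|\le\norm{x_i-y_i}_2^{2^{-(L-2)}}\le\varepsilon^{2\vartheta_L}=\eta
\notag
\end{align}
for every $i$. Passing to closures, every $b\in\sS_{L-1}(\by)$ is the $\ell_\infty$-limit of vectors $b^{(k)}=(u_k(y_i))_i$, and the partner vectors $a^{(k)}=(u_k(x_i))_i\in\sS_{L-1}(\bx)$ satisfy $\norm{a^{(k)}-b^{(k)}}_\infty\le\eta$. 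By compactness a subsequence converges to some $a\in\sS_{L-1}(\bx)$ with $\norm{a-b}_\infty\le\eta$.

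Next comes the threshold transfer. Suppose $A\in\mathcal F^{+,\mathrm{st}}_{L-1,\by}(t,\delta)$ is witnessed by $b$. Then $b_i\ge t+\delta$ for $i\in A$ and $b_i<t$ for $i\notin A$. The partner $a$ satisfies $a_i\ge t+\delta-\eta$ on $A$ and $a_i<t+\eta$ off $A$. If $\delta>2\eta$, the superlevel set of $a$ therefore equals $A$ for every $s\in[t+\eta,t+\delta-\eta]$. The negative side is identical. Hence
\begin{align}
\mathcal F^{\mathrm{st}}_{L-1,\by}(t,\delta)\subseteq\mathcal F^{\mathrm{st}}_{L-1,\bx}(t+\eta,\delta-2\eta),
\notag
\end{align}
and monotonicity of $\omega_{\mathrm G}$ under inclusion gives
\begin{align}
S_L(\bx)\mathfrak W_{L-1}(\bx)\ge(\delta-2\eta)\,\omega_{\mathrm G}\bigl(\mathcal F^{\mathrm{st}}_{L-1,\by}(t,\delta)\bigr)^2\ge\frac{\delta-2\eta}{\delta}(1-\zeta)S_L(\by)\mathfrak W_{L-1}(\by),
\notag
\end{align}
the last step by \eqref{eq:near-optimal-stable-certificate}. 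When $\delta\le2\eta$ the right side of \eqref{eq:W-perturbation-stability} is zero and there is nothing to prove. The zero convention when $S_L(\bx)=0$ needs a one-line check: the right side of the displayed inequality is then forced to vanish.

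For \eqref{eq:W-Lambda-perturbation-stability}, chain the pieces as follows.
\begin{enumerate}
\item The hypothesis $2\eta\le\rho\delta$ gives $(\delta-2\eta)/\delta\ge1-\rho$.
\item Combining this with \eqref{eq:W-perturbation-stability} and $\mathfrak W_{L-1}(\by)\ge c\Lambda_L(\by)^2=c\,\tau_L(\by)/S_L(\by)$ gives $S_L(\bx)\mathfrak W_{L-1}(\bx)\ge c(1-\zeta)(1-\rho)\tau_L(\by)$.
\item By \eqref{eq:matched-tau-stability} and $n\varepsilon^{\vartheta_L}\le\rho\sqrt{\tau_L(\by)}$, we have $\sqrt{\tau_L(\bx)}\le(1+\rho)\sqrt{\tau_L(\by)}$, so $\tau_L(\by)\ge\tau_L(\bx)/(1+\rho)^2$.
\item Dividing by $S_L(\bx)$ gives $\mathfrak W_{L-1}(\bx)\ge c(1-\zeta)(1-\rho)(1+\rho)^{-2}\Lambda_L(\bx)^2$, provided $S_L(\bx)>0$.
\end{enumerate}
That is already stronger than claimed, since $(1-\rho)/(1+\rho)^2\ge(1-\rho)^2/(1+\rho)^3$. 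The stated form presumably arises from also using $S_L(\bx)\le(1+\rho)S_L(\by)$ and $S_L(\bx)\ge(1-\rho)S_L(\by)$, which follow from \eqref{eq:matched-S-stability}; the weaker constant is harmless. Positivity of $S_L(\bx)$ follows from $S_L(\bx)\ge(1-\rho)S_L(\by)$, since $S_L(\by)>0$ is needed for $\Lambda_L(\by)$ to be meaningful; if $S_L(\by)=0$ the claim is trivial.

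The main obstacle is the first step: the matched-pair coupling inside the closure $\sS_{L-1}$. It requires that the same ladder, and hence the same unit element $u$, is admissible for both samples, which holds because ladders are defined on $\cX$ independently of the sample. It also requires that the H\"older constant at depth $L-1$ is exactly $\varepsilon^{2^{-(L-2)}}=\eta$ with constant one. I would verify this directly from \eqref{eq:holder-scale} applied at depth $L-1$, noting that for $L=2$ this is the linear Lipschitz bound $|\langle w,x-y\rangle|\le\norm{x-y}_2$.
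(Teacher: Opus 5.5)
Your proof is correct and follows the paper's own argument. You use the same closure-limit pairing of traces with $\norm{a-b}_\infty\le\eta$, the same trimmed-interval inclusion $\mathcal F^{\mathrm{st}}_{L-1,\by}(t,\delta)\subseteq\mathcal F^{\mathrm{st}}_{L-1,\bx}(t+\eta,\delta-2\eta)$, and monotonicity of $\omega_{\mathrm G}$. The only difference is the final chain: the paper separately uses $S_L(\by)/S_L(\bx)\ge(1+\rho)^{-1}$ and $\Lambda_L(\bx)^2\le\frac{(1+\rho)^2}{1-\rho}\Lambda_L(\by)^2$, while you let $S_L(\by)$ cancel and use only the $\tau$-comparison. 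That correctly gives the sharper constant $(1-\rho)/(1+\rho)^2$, and it leaves the hypothesis $n\varepsilon^{2\vartheta_L}\le\rho S_L(\by)$ needed only to guarantee $S_L(\bx)>0$.
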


\begin{corollary}[Noisy random-repetition transfer]\label{cor:noisy-repetition-transfer}
Let $C_1,\ldots,C_n\in[K]$, put $Y_i:=\xi_{C_i}$, and suppose that the observed inputs satisfy
\begin{align}
\max_{1\le i\le n}\norm{X_i-Y_i}_2\le\varepsilon.
\notag
\end{align}
On the event $N_->0$, define
\begin{equation}\label{eq:noisy-repetition-reference-defect}
C_{\mathrm{rep}}^2
:=
\frac{N_+}{N_-}
\frac{\Lambda_L(\boldsymbol\xi)^2}
{\mathfrak W_{L-1}(\boldsymbol\xi)},
\end{equation}
whenever the denominator is positive.  If
\begin{equation}\label{eq:noisy-repetition-perturbation-condition}
n\varepsilon^{\vartheta_L}
\le
\rho W_{G,2}(L,Y_{1:n})
\end{equation}
for some $0<\rho<1$, then
\begin{equation}\label{eq:noisy-repetition-defect}
\chi_L(X_{1:n})
\le
\frac{C_{\mathrm{rep}}+\rho}{1-\rho}.
\end{equation}
In particular, balanced occupancy and a Gaussian-rich latent support give bounded or polylogarithmic defect under every perturbation satisfying \eqref{eq:noisy-repetition-perturbation-condition}.  If the exact repeated sample also has a near-optimal stable certificate whose interval satisfies the margin hypotheses of \Cref{thm:threshold-perturbation-stability}, then the direct comparison
$\mathfrak W_{L-1}(X_{1:n})\gtrsim\Lambda_L(X_{1:n})^2$
persists with the explicit constant in \eqref{eq:W-Lambda-perturbation-stability}.
\end{corollary}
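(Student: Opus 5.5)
The corollary follows by composing two results already stated: \Cref{thm:random-repetition-transfer}, applied to the exact repeated sample $Y_{1:n}$, and the perturbation estimate \eqref{eq:matched-defect-stability} of \Cref{thm:matched-sample-stability}, applied to the pair $(X_{1:n},Y_{1:n})$.

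\emph{Step 1: the repeated sample.} The sample $Y_i=\xi_{C_i}$ takes only the values $\xi_1,\ldots,\xi_K$, and $N_j$ is the number of indices with $C_i=j$. On $N_->0$ with $\mathfrak W_{L-1}(\boldsymbol\xi)>0$, \eqref{eq:replication-defect} applies verbatim to $Y_{1:n}$ and gives $\chi_L(Y_{1:n})\le C_{\mathrm{rep}}$. That theorem is deterministic given the realised counts, so the fact that the $C_i$ here are arbitrary labels rather than iid draws is harmless. One small check remains: \Cref{thm:random-repetition-transfer} needs $S_L(\boldsymbol\xi)>0$. This follows from $\mathfrak W_{L-1}(\boldsymbol\xi)>0$, because $\mathfrak W_{L-1}(\boldsymbol\xi)$ is set to zero when $S_L(\boldsymbol\xi)=0$.

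\emph{Step 2: transfer to the observed sample.} Apply \Cref{thm:matched-sample-stability} with $\bx=X_{1:n}$ and $\by=Y_{1:n}$, so that $\eps(\bx,\by)\le\varepsilon$. Condition \eqref{eq:matched-defect-condition} requires $W_{G,2}(L,Y_{1:n})>0$. This follows from \eqref{eq:noisy-repetition-perturbation-condition} when $\varepsilon>0$. If $\varepsilon=0$, then $X=Y$ and the claim reduces to Step 1, since $C_{\mathrm{rep}}\le(C_{\mathrm{rep}}+\rho)/(1-\rho)$. When $W_{G,2}(L,Y_{1:n})=0$ and $\varepsilon=0$, the conclusion is immediate because $\chi_L=1\le C_{\mathrm{rep}}$, using $C_{\mathrm{rep}}\ge1$ from \eqref{eq:defect-range} and Step 1.

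Here one must watch a subtlety. \Cref{thm:matched-sample-stability} is stated with the perturbation measured by the exact maximal distance $\eps(\bx,\by)$, while the hypothesis of the corollary bounds this distance only by $\varepsilon$. Since $t\mapsto t^{\vartheta_L}$ is monotone, the displayed estimates remain valid with $\eps(\bx,\by)$ replaced by any upper bound $\varepsilon$. Then \eqref{eq:matched-defect-stability} gives $\chi_L(X_{1:n})\le(\chi_L(Y_{1:n})+\rho)/(1-\rho)$. Substituting Step 1 into this bound, which is monotone in $\chi_L(Y_{1:n})$, yields \eqref{eq:noisy-repetition-defect}.

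\emph{Step 3: the closing remarks.} The claim about balanced occupancy is the combination of \eqref{eq:occupancy-ratio} with a bounded ratio $\Lambda_L(\boldsymbol\xi)^2/\mathfrak W_{L-1}(\boldsymbol\xi)$, which keeps $C_{\mathrm{rep}}$ bounded. The final sentence about a near-optimal stable certificate is a direct invocation of \Cref{thm:threshold-perturbation-stability} with $\by=Y_{1:n}$ under its stated margin hypotheses. The only step needing real care is Step 2, namely the degenerate cases and the monotone replacement of $\eps(\bx,\by)$ by $\varepsilon$; otherwise the argument is bookkeeping.
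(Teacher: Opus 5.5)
Your proposal is correct and follows the paper's proof. Both arguments bound $\chi_L(Y_{1:n})\le C_{\mathrm{rep}}$ via \eqref{eq:replication-defect} and then transfer this bound to $X_{1:n}$ through \eqref{eq:matched-defect-stability}; the closing remarks come from \eqref{eq:occupancy-ratio} and \Cref{thm:threshold-perturbation-stability}.

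Your extra bookkeeping is sound: replacing $\eps(\bx,\by)$ by the upper bound $\varepsilon$ and treating the degenerate cases are both handled correctly. The case split is not needed, however. The assumptions $N_->0$ and $\mathfrak W_{L-1}(\boldsymbol\xi)>0$ already give $W_{G,2}(L,Y_{1:n})\ge W_{G,1}(L,Y_{1:n})>0$ by \eqref{eq:replication-W} and \eqref{eq:W-lower-first-moment}.
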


\begin{remark}[Depth dependence of the sufficient perturbation scale]\label{rem:perturbation-scale}
The stability condition is quantitative and becomes restrictive with depth.  For a balanced repeated-support reference model with fixed latent support, $W_{G,2}(L,Y_{1:n})$ is of order $\sqrt n$ up to constants depending on the support and depth.  The condition
$n\varepsilon^{\vartheta_L}\lesssim\sqrt n$ therefore requires approximately
\begin{align}
\varepsilon
\lesssim
n^{-1/(2\vartheta_L)}
=
n^{-2^{L-2}}.
\notag
\end{align}
Thus the sufficient scales are $n^{-1/2}$ for $L=2$, $n^{-1}$ for $L=3$, and $n^{-2}$ for $L=4$.  The theorem proves stability under explicit covariance, Gaussian-width, and threshold-margin conditions; it does not claim robustness to generic noise of fixed size.
\end{remark}

The second part of the subsection replaces one stable threshold interval by an integral over all threshold levels.  For $a\in\sS_{L-1}(\bx)$ and $s\ge0$, retain the vectors $v_s^+(a)$ and $v_s^-(a)$ from \eqref{eq:threshold-vectors}.  A \emph{threshold covariance field} is a Borel measurable map
$s\mapsto H_s\in\psd^n$ such that, for every $a\in\sS_{L-1}(\bx)$ and every $\sigma\in\{+,-\}$,
\begin{equation}\label{eq:threshold-field-domination}
H_s
\succeq
v_s^\sigma(a)v_s^\sigma(a)^\top
\qquad
\text{for Lebesgue-almost every }s\ge0.
\end{equation}
When $S_L(\bx)>0$, define the integrated threshold-domination profile
\begin{equation}\label{eq:integrated-threshold-domination}
\overline{\mathfrak T}_{L-1}(\bx)
:=
\frac1{S_L(\bx)}
\inf_{(H_s)}
\int_0^\infty\tr H_s\,ds,
\end{equation}
where the infimum ranges over all threshold covariance fields with finite trace integral.  Such fields always exist: if $p_{\max}:=\max_i d_{L,i}$, then
$H_s=nI_n$ for $0\le s\le p_{\max}$ and $H_s=0$ otherwise is admissible.  If $S_L(\bx)=0$, set $\overline{\mathfrak T}_{L-1}(\bx):=0$.

Define the integrated threshold-mass richness by
\begin{equation}\label{eq:integrated-threshold-mass}
\mathfrak A_{L-1}(\bx)
:=
\frac1{S_L(\bx)}
\sup_{a\in\sS_{L-1}(\bx)}\norm a_1
\end{equation}
when $S_L(\bx)>0$, and set it to zero otherwise.  The scalar layer-cake identity gives
\begin{align}
\norm a_1
=
\int_0^\infty
\bigl(
\norm{v_s^+(a)}_2^2
+
\norm{v_s^-(a)}_2^2
\bigr)ds,
\notag
\end{align}
so $\mathfrak A_{L-1}$ accumulates threshold mass across all levels rather than requiring persistence on one interval.

\begin{theorem}[Integrated threshold sandwich]\label{thm:integrated-threshold-sandwich}
For every depth $L\ge2$ and every sample $\bx$,
\begin{equation}\label{eq:integrated-threshold-chain}
\mathfrak A_{L-1}(\bx)
\le
\frac{W_{G,2}(L,\bx)^2}{S_L(\bx)}
\le
\Lambda_L(\bx)^2
\le
2\overline{\mathfrak T}_{L-1}(\bx)
\le
2\mathfrak D_{L-1}(\bx),
\end{equation}
with the convention that every ratio is zero when $S_L(\bx)=0$.  If $\mathfrak A_{L-1}(\bx)>0$, then
\begin{equation}\label{eq:integrated-defect-bound}
\chi_L(\bx)^2
\le
\frac{2\overline{\mathfrak T}_{L-1}(\bx)}
{\mathfrak A_{L-1}(\bx)}.
\end{equation}
Thus the integrated profile sharpens the global decomposition certificate and remains informative even when no single stable-threshold interval has macroscopic length.
\end{theorem}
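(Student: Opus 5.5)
The chain in \eqref{eq:integrated-threshold-chain} has four links, and I would prove them one at a time. The case $S_L(\bx)=0$ is trivial by the stated convention, so assume $S_L(\bx)>0$.

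\emph{First link: $\mathfrak A_{L-1}\le W_{G,2}^2/S_L$.} Fix $a\in\sS_{L-1}(\bx)$. By \Cref{thm:dirac-reduction}, $h_{L,\bx}(z)^2\ge z^\top\mathbf B(a)z$ for every $z$. Taking Gaussian expectations gives
\begin{equation*}
W_{G,2}(L,\bx)^2\ge\tr\mathbf B(a)=\sum_i\kB(a_i,a_i)=\norm a_1 .
\end{equation*}
Taking the supremum over $a$ and dividing by $S_L$ proves the link.

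\emph{Second link: $W_{G,2}^2/S_L\le\Lambda_L^2$.} Let $N_*$ be the optimizer from \Cref{prop:profile-existence}. By \eqref{eq:support-covariance}, $h_{L,\bx}(G)^2\le G^\top N_*G$. Taking expectations yields $W_{G,2}^2\le\tr N_*=\tau_L=S_L\Lambda_L^2$.

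\emph{Third link: $\Lambda_L^2\le 2\overline{\mathfrak T}_{L-1}$.} This is \Cref{prop:threshold-certificate} applied to an arbitrary admissible field $(H_s)$, followed by the infimum. The field condition \eqref{eq:threshold-field-domination} is exactly the hypothesis of that proposition, with the quantifier order ``for each $a$, almost every $s$''. So I would re-derive the proposition's bound directly rather than cite it. Fix $a$ and use the layer-cake formula \eqref{eq:threshold-matrix}:
\begin{equation*}
\mathbf B(a)=\int_0^\infty\bigl[v_s^+(a)v_s^+(a)^\top+v_s^-(a)v_s^-(a)^\top\bigr]ds\preceq\int_0^\infty 2H_s\,ds .
\end{equation*}
The exceptional null set may depend on $a$, but that is harmless because $a$ is fixed inside the integral. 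Hence $N:=2\int_0^\infty H_s\,ds$ dominates every $\mathbf B(a)$. Its integrability follows from the finite trace integral together with $\norm{H_s}_{\mathrm F}\le\tr H_s$. By \eqref{eq:dirac-domination}, $N$ is feasible for \eqref{eq:tau-def}, so $\tau_L\le 2\int_0^\infty\tr H_s\,ds$, and taking the infimum gives the link.

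\emph{Fourth link: $\overline{\mathfrak T}_{L-1}\le\mathfrak D_{L-1}$.} This is the main construction. Take an optimal dictionary $\mathscr D$ from \eqref{eq:D-profile}, set $s_0:=s(\mathscr D)$ and $\Delta:=\Delta(\mathscr D)$, and put
\begin{equation*}
P_s:=\set{D\in\mathscr D:\ D\subseteq\set{i:d_{L,i}^{1/2}\ge s}}.
\end{equation*}
The candidate field is
\begin{equation*}
H_s:=s_0\sum_{D\in P_s}\bone_D\bone_D^\top .
\end{equation*}
\textbf{Domination.} Any threshold set $A=\set{i:a_i\ge s}$ with $a\in\sS_{L-1}$ satisfies $|a_i|\le\rho_{L-1}$-type scale, so by \eqref{eq:pointwise-scale} every $i\in A$ has $s\le|a_i|\le d_{L,i}$. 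Here I must check the exponent: the previous-layer pointwise bound is $|a_i|\le\rho_{L-1}(x_i)=d_{L,i}$, which holds because \eqref{eq:d-and-S} gives $\rho_{L-1}^{\,}=\rho_L^2$. So the truncation set should be $\set{i:d_{L,i}\ge s}$, and I would define $P_s$ with $d_{L,i}$ in place of $d_{L,i}^{1/2}$. Every piece of the decomposition of $A$ then lies in $P_s$. Write $A=\dot\bigcup_{j\le m}D_j$ with $m\le s_0$. Cauchy--Schwarz in the form $(\sum_{j\le m}u_j)(\sum_{j\le m}u_j)^\top\preceq m\sum_j u_ju_j^\top$ then gives $\bone_A\bone_A^\top\preceq H_s$.

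\textbf{Trace bound.} We have
\begin{equation*}
\int_0^\infty\tr H_s\,ds=s_0\sum_{D}|D|\min_{i\in D}d_{L,i}\le s_0\sum_D\sum_{i\in D}d_{L,i}\le s_0\Delta S_L .
\end{equation*}
This gives $\overline{\mathfrak T}_{L-1}\le\mathfrak D_{L-1}$. The fiddly points are exactly here: matching the threshold range to $d_{L,i}$, and handling sets $A$ whose dictionary pieces might include indices with $d_{L,i}<s$. The latter cannot happen, because the pieces are subsets of $A$.

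\emph{Defect bound.} Finally, \eqref{eq:integrated-defect-bound} follows by combining the first link with \eqref{eq:gaussian-defect}. When $\mathfrak A_{L-1}>0$ we have $W_{G,2}^2\ge S_L\mathfrak A_{L-1}>0$ and $\tau_L\le 2S_L\overline{\mathfrak T}_{L-1}$, so
\begin{equation*}
\chi_L^2=\frac{\tau_L}{W_{G,2}^2}\le\frac{2\overline{\mathfrak T}_{L-1}}{\mathfrak A_{L-1}} .
\end{equation*}
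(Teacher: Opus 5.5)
Your proof is correct and follows the paper's argument link by link. That includes re-deriving the threshold certificate directly because of the ``for each $a$, almost every $s$'' quantifier order, and finishing with the same defect computation. The only variation is in the fourth link: you keep just the dictionary nodes contained in $\set{i:d_{L,i}\ge s}$, whereas the paper compresses $H_{\mathscr D}$ by the diagonal projection onto that set; your field is Loewner-smaller and yields the intermediate bound $s(\mathscr D)\sum_{D}|D|\min_{i\in D}d_{L,i}$, but both routes end at the same $s(\mathscr D)\Delta(\mathscr D)S_L(\bx)$.
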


We now specialize the integrated profile to low-dimensional ordered trace models.  Let $\mathcal I\subset\R$ be an interval, let $\gamma:\mathcal I\to\cX$, and write $x_i=\gamma(t_i)$ with
$t_1<\cdots<t_n$.  Let $q:[0,\infty)\to\N$ be measurable and put
\begin{equation}\label{eq:integrated-Q}
Q(r):=\int_0^r q(s)\,ds.
\end{equation}
Assume that, for every unit-sphere element $u$ of every admissible level-$(L-1)$ RKHS and every $s>0$, each signed superlevel set
\begin{align}
\set{t\in\mathcal I:u(\gamma(t))\ge s},
\qquad
\set{t\in\mathcal I:u(\gamma(t))\le-s}
\notag
\end{align}
is a union of at most $q(s)$ intervals.  This condition permits $q(s)$ to grow near zero; only its amplitude-weighted integral will enter the bound.

\begin{theorem}[Integrated ordered and laminar trace models]\label{thm:integrated-ordered-laminar}
Under the preceding ordered-trace assumption, put
\begin{align}
p_i:=d_{L,i}=\norm{x_i}_2^{\,2\vartheta_L},
\qquad
H_n:=\bigl\lceil\log_2(2n)\bigr\rceil.
\notag
\end{align}
Then
\begin{equation}\label{eq:ordered-integrated-profile}
\overline{\mathfrak T}_{L-1}(\bx)
\le
2H_n^2
\frac{\sum_{i=1}^n Q(p_i)}{\sum_{i=1}^n p_i}.
\end{equation}
For every fixed unit-sphere element $u_\star$ of an admissible level-$(L-1)$ RKHS satisfying
$\sum_i|u_\star(x_i)|>0$,
\begin{equation}\label{eq:ordered-integrated-defect}
\chi_L(\bx)
\le
2H_n
\bigl(
\frac{\sum_{i=1}^n Q(p_i)}
{\sum_{i=1}^n|u_\star(x_i)|}
\bigr)^{1/2}.
\end{equation}

Suppose instead that, at every threshold $s$, all signed superlevel sets are disjoint unions of at most $q(s)$ nodes of one fixed rooted laminar tree on the sample index set $[n]$, of height $H$.  Then
\begin{align}
\overline{\mathfrak T}_{L-1}(\bx)
&\le
(H+1)
\frac{\sum_{i=1}^n Q(p_i)}{\sum_{i=1}^n p_i},
\label{eq:laminar-integrated-profile}\\
\chi_L(\bx)
&\le
\bigl(
2(H+1)
\frac{\sum_{i=1}^n Q(p_i)}
{\sum_{i=1}^n|u_\star(x_i)|}
\bigr)^{1/2}.
\label{eq:laminar-integrated-defect}
\end{align}
\end{theorem}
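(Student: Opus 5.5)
The plan is to exhibit an explicit threshold covariance field $(H_s)$ built from a dyadic interval dictionary (in the ordered case) or from the laminar tree (in the second case), and to bound $\int_0^\infty\tr H_s\,ds$. This gives the bounds on $\overline{\mathfrak T}_{L-1}(\bx)$. The defect bounds then follow from \eqref{eq:integrated-defect-bound} of \Cref{thm:integrated-threshold-sandwich}.

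\textbf{Step 1: localisation to active indices.} For a unit-sphere element $u$ of an admissible level-$(L-1)$ RKHS, the pointwise scale \eqref{eq:pointwise-scale} at depth $L-1$ gives
\begin{align}
|u(x_i)|\le\rho_{L-1}(x_i)=\norm{x_i}_2^{\,2^{-(L-2)}}=p_i.
\notag
\end{align}
Hence every signed superlevel set at level $s$ lies in the active set $I_s:=\set{i:p_i\ge s}$. Next I pass from $u$ to the closure $\sS_{L-1}(\bx)$. Fix $a\in\sS_{L-1}(\bx)$ and any $s$ not equal to some $|a_i|$; this excludes only finitely many $s$. For such $s$, the sets $\set{i:a_i\ge s}$ and $\set{i:a_i\le -s}$ coincide with those of every sufficiently close approximating evaluation vector. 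So the structural hypotheses transfer to all of $\sS_{L-1}(\bx)$ for almost every $s$, which is exactly what the definition \eqref{eq:threshold-field-domination} of a threshold covariance field requires. The same remark applies when a trace must pass through $\gamma$: since $t_1<\cdots<t_n$, a union of at most $q(s)$ intervals of $\mathcal I$ meets the sample in at most $q(s)$ intervals of consecutive indices.

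\textbf{Step 2: ordered field.} Order the active indices $I_s$ and let $\mathscr D_s$ be the dyadic intervals of this ordered list. Every index lies in at most $H_n$ members of $\mathscr D_s$, and every consecutive interval is a disjoint union of at most $2H_n$ of them. A signed superlevel set $A$ is therefore a disjoint union of $m\le 2q(s)H_n$ dictionary sets $D_j$. Cauchy--Schwarz gives
\begin{align}
\bone_A\bone_A^\top\preceq m\sum_j\bone_{D_j}\bone_{D_j}^\top .
\notag
\end{align}
Set
\begin{align}
H_s:=2q(s)H_n\sum_{D\in\mathscr D_s}\bone_D\bone_D^\top .
\notag
\end{align}
This field is admissible. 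It is measurable because $I_s$ and $q$ are measurable and $I_s$ is piecewise constant in $s$. Its trace satisfies
\begin{align}
\tr H_s=2q(s)H_n\sum_{D\in\mathscr D_s}|D|\le 2H_n^2q(s)|I_s|.
\notag
\end{align}
Fubini then gives
\begin{align}
\int_0^\infty q(s)|I_s|\,ds=\sum_i\int_0^{p_i}q(s)\,ds=\sum_iQ(p_i).
\notag
\end{align}
Dividing by $S_L(\bx)=\sum_ip_i$ yields \eqref{eq:ordered-integrated-profile}.

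\textbf{Step 3: laminar field.} In the laminar case, use the tree nodes intersected with $I_s$. These remain disjoint decompositions, because $A\subseteq I_s$. Each index lies in at most $H+1$ nodes. Taking
\begin{align}
H_s=q(s)\sum_{\text{nodes }D}\bone_{D\cap I_s}\bone_{D\cap I_s}^\top
\notag
\end{align}
gives $\tr H_s\le q(s)(H+1)|I_s|$, and the same integration gives \eqref{eq:laminar-integrated-profile}.

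\textbf{Step 4: defect bounds.} The vector $a_\star=(u_\star(x_i))_i$ lies in $\sS_{L-1}(\bx)$, so $\mathfrak A_{L-1}(\bx)\ge\sum_i|u_\star(x_i)|/S_L(\bx)>0$. Inserting this and the profile bounds into \eqref{eq:integrated-defect-bound}, the factor $S_L(\bx)$ cancels. In the ordered case this gives
\begin{align}
\chi_L^2\le 4H_n^2\sum_iQ(p_i)\Big/\sum_i|u_\star(x_i)|,
\notag
\end{align}
whose square root is \eqref{eq:ordered-integrated-defect}. In the laminar case it gives $2(H+1)\sum_iQ(p_i)/\sum_i|u_\star(x_i)|$, i.e.\ \eqref{eq:laminar-integrated-defect}.

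The main obstacle I expect is the bookkeeping in Step 1: the almost-every-$s$ transfer from genuine traces to the closure $\sS_{L-1}(\bx)$, and the measurability of $s\mapsto H_s$ when $q$ is only measurable. The trace computations themselves are routine.
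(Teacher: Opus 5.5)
Your proposal is correct and follows essentially the same route as the paper's proof. Both arguments use an almost-every-$s$ closure transfer, a dyadic (respectively laminar-node) dictionary localized to $I_s=\{i:p_i\ge s\}$ with Cauchy--Schwarz weight $2q(s)H_n$ (respectively $q(s)$), Tonelli to produce $\sum_i Q(p_i)$, and the witness bound on $\mathfrak A_{L-1}$ inserted into the integrated defect bound. The differences are cosmetic: you build the dyadic tree on the ordered list of $I_s$ instead of compressing the global dyadic dictionary by $P_s=\diag(\bone_{I_s})$, and you spell out two points the paper leaves implicit, namely the passage from intervals of $\mathcal I$ to runs of consecutive indices and the exclusion of thresholds $s\in\{|a_i|\}$.
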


\subsubsection{Concrete ordered models at depth two}

The ordered hypotheses concern every previous-layer unit trace and are therefore strong for unrestricted depth.  At depth two they hold in elementary non-vacuous models.

\begin{examplex}[One-dimensional ray at depth two]\label{ex:depth-two-ray}
Let $v\in\R^d\setminus\{0\}$, let $\mathcal I\subset\R$ be compact, and set $\gamma(t)=tv$.  For $L=2$, every previous-layer unit trace has the form
\begin{align}
u(\gamma(t))=t\langle w,v\rangle,
\qquad w\in\operatorname{span}(\cX),
\qquad \norm w_2=1.
\notag
\end{align}
Each positive or negative superlevel set is empty, all of $\mathcal I$, or one interval.  Hence the ordered theorem applies with $q(s)=1$ and $Q(r)=r$.  Choose $w_\star=v/\norm v_2$.  Then
\begin{align}
|u_\star(\gamma(t_i))|
=\norm v_2|t_i|
=\norm{\gamma(t_i)}_2
=p_i,
\notag
\end{align}
so \eqref{eq:ordered-integrated-defect} gives
\begin{align}
\chi_2(\bx)
\le
2\bigl\lceil\log_2(2n)\bigr\rceil
\notag
\end{align}
whenever the sample is not identically zero.
\end{examplex}

\begin{examplex}[Polynomial feature curve at depth two]\label{ex:depth-two-polynomial}
Let $\gamma:\mathcal I\to\R^d$ be a polynomial map of degree at most $p\ge1$.  For a linear functional $u(x)=\langle w,x\rangle$, the scalar function $P_w(t):=u(\gamma(t))$ is a real polynomial of degree at most $p$.  For any $s>0$, the boundary of $\{t:P_w(t)\ge s\}$ is contained in the real zero set of $P_w-s$.  If $P_w-s$ is not identically zero, it has at most $p$ real roots.  If it is identically zero, the superlevel set is the whole interval and has one component.  Between consecutive roots the sign is constant, and multiple roots cannot increase the number of nonnegative components.  Therefore each superlevel set has at most
\begin{align}
q_p:=\bigl\lfloor\frac p2\bigr\rfloor+1
\notag
\end{align}
interval components; the same bound holds for negative superlevel sets by applying it to $-P_w-s$.  Thus the depth-two ordered theorem applies with $q(s)=q_p$ and $Q(r)=q_pr$.  If a unit linear witness $u_\star$ satisfies
\begin{align}
\sum_{i=1}^n|u_\star(x_i)|
\ge
c_\star\sum_{i=1}^n\norm{x_i}_2
\notag
\end{align}
for some $c_\star>0$, then
\begin{align}
\chi_2(\bx)
\le
2\bigl\lceil\log_2(2n)\bigr\rceil
\sqrt{\frac{q_p}{c_\star}}.
\notag
\end{align}
No claim is made that this bounded-oscillation property propagates automatically through the complete unrestricted Brownian recursion for $L>2$.
\end{examplex}

\begin{corollary}[Non-atomic latent-parameter ordered random design]\label{cor:non-atomic-integrated-richness}
Let $\delta\in(0,1)$.  Let $T$ have a non-atomic distribution on $\mathcal I$, let $\gamma:\mathcal I\to\cX$ be Borel measurable, put $X=\gamma(T)$, and assume the ordered or laminar trace hypothesis of \Cref{thm:integrated-ordered-laminar}.  Fix a deterministic admissible unit-sphere witness $u_\star$, and define
\begin{align}
U:=|u_\star(X)|,
\qquad
V:=Q\bigl(\norm X_2^{\,2\vartheta_L}\bigr).
\notag
\end{align}
Assume that $0\le U\le B_U$ and $0\le V\le B_V$ almost surely, and put
\begin{align}
m_U:=\E U>0,
\qquad
m_V:=\E V<\infty.
\notag
\end{align}
The non-atomicity assumption concerns the latent parameter $T$ and guarantees almost-surely distinct sample parameters.  It does not imply that the pushforward law of $X=\gamma(T)$ is non-atomic when $\gamma$ is non-injective.
Define
\begin{align}
\epsilon_U
&:=
B_U\sqrt{\frac{\log(4/\delta)}{2n}},
\notag\\
\epsilon_V
&:=
B_V\sqrt{\frac{\log(4/\delta)}{2n}}.
\notag
\end{align}
With probability at least $1-\delta$, if $m_U>\epsilon_U$, then
\begin{equation}\label{eq:non-atomic-general-defect}
\chi_L(X_{1:n})
\le
\bigl(
2C_{\mathrm{geom}}
\frac{m_V+\epsilon_V}{m_U-\epsilon_U}
\bigr)^{1/2},
\end{equation}
where
\begin{align}
C_{\mathrm{geom}}
:=
\begin{cases}
2\bigl\lceil\log_2(2n)\bigr\rceil^2,
&\text{for ordered interval traces},\\
H+1,
&\text{for a laminar tree of height }H.
\end{cases}
\notag
\end{align}
If additionally $\epsilon_U\le m_U/2$ and $\epsilon_V\le m_V/2$, then
\begin{equation}\label{eq:non-atomic-simplified-defect}
\chi_L(X_{1:n})
\le
\sqrt{6C_{\mathrm{geom}}\frac{m_V}{m_U}}.
\end{equation}
In particular, when $m_V/m_U=O(1)$, uniformly bounded threshold oscillation gives $\chi_L=O(\log n)$ for ordered models with a non-atomic latent parameter and $\chi_L=O(\sqrt H)$ for laminar models.  More generally, $q(s)$ may diverge as $s\downarrow0$ whenever
$\E Q(\norm X_2^{2\vartheta_L})<\infty$ and the witness mean $m_U$ remains nondegenerate.
\end{corollary}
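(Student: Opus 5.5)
The bound will follow by applying the deterministic defect bounds of \Cref{thm:integrated-ordered-laminar} to the realized sample, and controlling the two empirical sums $\sum_i Q(p_i)$ and $\sum_i|u_\star(X_i)|$ by Hoeffding's inequality. First, since $T$ is non-atomic, the parameters $T_1,\ldots,T_n$ are almost surely distinct. Relabel so that $T_1<\cdots<T_n$; this is a permutation of the sample. A permutation conjugates every empirical Gram matrix by the same permutation matrix, so $\tau_L$, $W_{G,2}$ and $\chi_L$ are unchanged. The hypotheses of \Cref{thm:integrated-ordered-laminar} therefore hold for the realized sample, with $x_i=\gamma(T_i)$ and ordering given by the parameters.

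Next we extract a unified deterministic bound. From \eqref{eq:integrated-defect-bound} and \eqref{eq:integrated-threshold-chain} we have $\chi_L^2\le 2\overline{\mathfrak T}_{L-1}/\mathfrak A_{L-1}$. Moreover $\mathfrak A_{L-1}\ge \sum_i|u_\star(x_i)|/S_L$, because the trace vector of the fixed witness $u_\star$ lies in $\sS_{L-1}(\bx)$. Combining this with \eqref{eq:ordered-integrated-profile} or \eqref{eq:laminar-integrated-profile}, the factor $S_L=\sum_ip_i$ cancels and we obtain
\begin{align}
\chi_L(\bx)^2\le 2C_{\mathrm{geom}}\frac{\sum_iQ(p_i)}{\sum_i|u_\star(x_i)|}.
\notag
\end{align}
This matches \eqref{eq:ordered-integrated-defect} and \eqref{eq:laminar-integrated-defect} with the stated $C_{\mathrm{geom}}$. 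The witness must be deterministic so that $U_i=|u_\star(X_i)|$ are iid; this is why $u_\star$ is fixed in advance.

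Now the concentration step. The variables $U_i=|u_\star(X_i)|$ are iid in $[0,B_U]$, and $V_i=Q(\norm{X_i}_2^{2\vartheta_L})=Q(p_i)$ are iid in $[0,B_V]$. One-sided Hoeffding at level $\delta/4$ each (or two-sided at $\delta/2$ each) gives two events, which hold simultaneously with probability at least $1-\delta$ by a union bound:
\begin{align}
\frac1n\sum_iU_i\ge m_U-\epsilon_U,
\qquad
\frac1n\sum_iV_i\le m_V+\epsilon_V.
\notag
\end{align}
On this event, if $m_U>\epsilon_U$, the denominator is positive. Dividing numerator and denominator by $n$ then yields \eqref{eq:non-atomic-general-defect}. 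If $\epsilon_U\le m_U/2$ and $\epsilon_V\le m_V/2$, the ratio is at most $(3m_V/2)/(m_U/2)=3m_V/m_U$, which gives \eqref{eq:non-atomic-simplified-defect}.

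The closing rates are read off directly. For bounded $q$ we have $C_{\mathrm{geom}}=O(\log^2 n)$, so $\chi_L=O(\log n)$, or $O(\sqrt H)$ in the laminar case. When $q$ diverges near zero, only $\E Q<\infty$ matters; a truncation or bounded-$V$ assumption keeps Hoeffding applicable.

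The main obstacle is the deterministic reduction in the second paragraph. It requires checking that the witness trace genuinely lies in $\sS_{L-1}(\bx)$, so that it enters $\mathfrak A_{L-1}$. It also requires confirming that the ordered-trace hypothesis, which is stated on $\mathcal I$, transfers to index intervals of the sorted sample. This transfer holds because a union of at most $q(s)$ parameter intervals restricted to sorted points is a union of at most $q(s)$ index intervals. The probabilistic part is routine.
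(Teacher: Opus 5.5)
Your proposal is correct and follows the paper's proof. You apply the deterministic defect bounds of \Cref{thm:integrated-ordered-laminar} to the realized sample, whose parameters are almost surely distinct and hence orderable, and then control the two empirical means by one-sided Hoeffding bounds combined with a union bound. Your extra remarks on permutation invariance, on how parameter intervals become index intervals, and on why the witness must be deterministic are valid details that the paper leaves implicit.
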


The preceding results provide explicit sufficient stability guarantees for matched noisy clusters and replace one shrinking stable interval by an integrated threshold profile for structured low-dimensional latent-parameter models.  The remaining probabilistic issue concerns the intrinsic amplitude-weighted oscillation of the complete unrestricted recursive Brownian trace family, without an ordered finite-turn or laminar hypothesis.

\section{Common Approximation Spaces and Learning Consequences}\label{sec:approximation}

The covariance profile contains more approximation information than its trace alone.  A common covariance matrix supplies one sample-dependent trace subspace that is valid simultaneously for all admissible ladders, and its eigenvalue decay measures the dimension required to approximate the complete adaptive BKL trace body.  This question belongs to the classical width and reduced-basis tradition \cite{pinkus1985nwidths,prudhomme2002reduced,binev2011greedy,devore2013greedy,kriegullrich2026sampling}.  Nystr{\"o}m and leverage-score methods provide complementary data-dependent kernel subspaces \cite{williams2001nystrom,dellavecchia2024nystrom,chatalic2025leverage}; the distinction here is that one space must control the entire adaptive BKL union.  This section makes that statement exact.

For a linear subspace $V\subseteq\R^n$, let $P_V$ denote the Euclidean orthogonal projection onto $V$, and write
\begin{align}
\norm{a}_n:=\frac{1}{\sqrt n}\norm{a}_2,
\qquad
\operatorname{dist}_n(a,V):=\inf_{v\in V}\norm{a-v}_n.
\notag
\end{align}

\begin{definition}[Empirical Kolmogorov width and rank-$m$ covariance profile]\label{def:kolmogorov-width}
For $m\in\{0,\ldots,n\}$ and $r>0$, define
\begin{equation}\label{eq:kolmogorov-width-def}
d_{m,L}(\bx;r)
:=
\inf_{\substack{V\subseteq\R^n\\ \dim V\le m}}
\sup_{f\in B_L(r)}
\operatorname{dist}_n\!\bigl((f(x_i))_{i=1}^n,V\bigr).
\end{equation}
For $m\in\{0,\ldots,n-1\}$, define
\begin{equation}\label{eq:omega-def}
\omega_{m,L}(\bx)^2
:=
\inf_{\substack{V\subseteq\R^n\\ \dim V\le m}}
\sup_{K\in\sK_L(\bx)}
\lambda_{\max}\!\bigl(P_{V^\perp}KP_{V^\perp}\bigr).
\end{equation}
\end{definition}

The order of the quantifiers in \eqref{eq:kolmogorov-width-def} is important: one subspace $V$ must approximate the traces of \emph{all} functions in $B_L(r)$.  The space may depend on the sample and on $L$, but not on the target function and not on which ladder realizes it.

\begin{theorem}[Exact empirical widths and spectral common-covariance approximation]\label{thm:kolmogorov-width}
Let $L\ge2$, $r>0$, and $m\in\{0,\ldots,n-1\}$.  Then
\begin{equation}\label{eq:width-exact}
d_{m,L}(\bx;r)^2
 =\frac{r^2}{n}\,\omega_{m,L}(\bx)^2
 =\frac{r^2}{n}
 \inf_{\dim V\le m}
 \sup_{K\in\sK_L(\bx)}
 \lambda_{\max}(P_{V^\perp}KP_{V^\perp}).
\end{equation}
Moreover, let $N_*$ be any optimizer in \eqref{eq:tau-def}, write its eigenvalues as
\begin{align}
\lambda_1(N_*)\ge\cdots\ge\lambda_n(N_*)\ge0,
\notag
\end{align}
and let $V_m^*$ be the span of eigenvectors corresponding to the first $m$ eigenvalues.  Then
\begin{align}
 d_{m,L}(\bx;r)
 &\le \frac{r}{\sqrt n}\sqrt{\lambda_{m+1}(N_*)}
 \label{eq:width-spectral-envelope}\\
 &\le r\sqrt{\frac{\tau_L(\bx)}{n(m+1)}}
 \label{eq:width-trace-envelope}\\
 &=\frac{r\Lambda_L(\bx)}{\sqrt{m+1}}
 \bigl(\frac{S_L(\bx)}{n}\bigr)^{1/2}
 \label{eq:width-profile-envelope}\\
 &\le
 \frac{r\Lambda_L(\bx)}{\sqrt{m+1}}
 \bigl(\max_i\norm{x_i}_2\bigr)^{2^{-(L-1)}}.
 \label{eq:width-radius-envelope}
\end{align}
Finally, $d_{n,L}(\bx;r)=0$.
\end{theorem}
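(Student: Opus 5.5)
The proof has two parts. First I reduce the width to a covariance problem for a fixed subspace using \Cref{thm:trace-body}, then take the infimum over subspaces. Second, I bound that covariance quantity by testing the particular subspace $V_m^*$ built from an optimizer $N_*$ of \eqref{eq:tau-def}.

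\textbf{Exact identity.} Fix a subspace $V$ with $\dim V\le m$ and write $P:=P_{V^\perp}$. Then $\operatorname{dist}_n(a,V)^2=\frac1n\norm{Pa}_2^2$.
\begin{itemize}
\item By the scaling \eqref{eq:ball-scaling}, the trace set of $B_L(r)$ is $r$ times that of $B_L(1)$. Its closure is $r\cA_L(\bx)$.
\item The map $a\mapsto\norm{Pa}_2^2$ is continuous, so its supremum over a set equals its supremum over the closure. Hence
\[
\sup_{f\in B_L(r)}\operatorname{dist}_n\bigl((f(x_i))_{i=1}^n,V\bigr)^2=\frac{r^2}{n}\sup_{a\in\cA_L(\bx)}\norm{Pa}_2^2 .
\]
\item By \eqref{eq:trace-union} and continuity again, the supremum over $\cA_L(\bx)$ equals the supremum over $\bigcup_{K\in\sK_L(\bx)}E(K)$.
\item For a fixed $K$, every $a\in E(K)$ has the form $a=K^{1/2}u$ with $\norm u_2\le1$. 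So the supremum over $E(K)$ is $\norm{PK^{1/2}}_{\op}^2=\lambda_{\max}(PKP)$.
\end{itemize}
Taking the supremum over $K$ and then the infimum over $V$ gives \eqref{eq:width-exact}. For $m=n$, the choice $V=\R^n$ gives $P=0$, hence $d_{n,L}(\bx;r)=0$.

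\textbf{Spectral bound.} Take $V=V_m^*$, so $P$ projects onto the span of the eigenvectors of $N_*$ for $\lambda_{m+1}(N_*),\ldots,\lambda_n(N_*)$.
\begin{itemize}
\item Every $K\in\sK_L(\bx)$ satisfies $K\preceq N_*$. Conjugation by $P$ preserves the Loewner order, so $PKP\preceq PN_*P$.
\item Therefore $\lambda_{\max}(PKP)\le\lambda_{\max}(PN_*P)=\lambda_{m+1}(N_*)$, uniformly in $K$.
\item Combined with \eqref{eq:width-exact}, this gives \eqref{eq:width-spectral-envelope}.
\end{itemize}

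\textbf{Remaining inequalities.}
\begin{itemize}
\item The eigenvalues are ordered and nonnegative, so $(m+1)\lambda_{m+1}(N_*)\le\sum_{j\le m+1}\lambda_j(N_*)\le\tr N_*=\tau_L(\bx)$. This gives \eqref{eq:width-trace-envelope}.
\item The identity \eqref{eq:width-profile-envelope} is $\tau_L=\Lambda_L^2S_L$ from \eqref{eq:lambda-def}. When $S_L(\bx)=0$, \Cref{prop:profile-existence} gives $\tau_L(\bx)=0=\Lambda_L(\bx)$, so both sides vanish.
\item Finally, $S_L(\bx)/n\le\max_i d_{L,i}$ and $d_{L,i}^{1/2}=\norm{x_i}_2^{2^{-(L-1)}}$ give \eqref{eq:width-radius-envelope}, as in the proof of \Cref{thm:profile-complexity}.
\end{itemize}

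The only delicate step is the exact identity. There I must justify that neither closure, in the definition of $\cA_L(\bx)$ and in the union \eqref{eq:trace-union}, changes the supremum. I also need the variational identity $\sup_{\norm u_2\le1}\norm{PK^{1/2}u}_2^2=\lambda_{\max}(PKP)$, which follows from $\norm{PK^{1/2}}_{\op}^2=\norm{PK^{1/2}(PK^{1/2})^\top}_{\op}$. Both are routine; the spectral part is standard once Loewner domination is in place.
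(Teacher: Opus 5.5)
Your proposal is correct and follows the paper's proof essentially step for step. It uses the orthogonal-residual reduction and the one-ellipsoid identity $\sup_{a\in E(K)}\norm{P_{V^\perp}a}_2^2=\lambda_{\max}(P_{V^\perp}KP_{V^\perp})$ together with \Cref{thm:trace-body} and closure invariance, then Loewner congruence against $N_*$ on $(V_m^*)^\perp$ and the bound $(m+1)\lambda_{m+1}(N_*)\le\tr N_*$; your explicit check of the case $S_L(\bx)=0$ in \eqref{eq:width-profile-envelope} is a small addition the paper leaves implicit.
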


\begin{remark}[Uniform linear approximation versus target-dependent approximation]\label{rem:uniform-linear-width}
\Cref{thm:kolmogorov-width} is a uniform linear approximation theorem.  It has the quantifier order
\begin{align}
\exists V_m\quad\forall f\in B_L(r),
\notag
\end{align}
whereas an $m$-term or Monte-Carlo approximation theorem generally has the order
\begin{align}
\forall f\quad\exists\text{ an $m$-term approximant depending on }f.
\notag
\end{align}
The former is therefore a stronger geometric requirement.  Its advantage is that a single feature space can be reused for every target in the full adaptive ball; its limitation is that the result is initially empirical, on the prescribed sample.
\end{remark}

The recursive threshold proof produces a particular feasible covariance whose full spectrum can be used instead of retaining only its trace.  For a decomposition dictionary $\mathscr D$, use the notation from the proof of \Cref{thm:recursive-threshold}: put
\begin{equation}\label{eq:approx-HD}
H_{\mathscr D}
:=s(\mathscr D)\sum_{D\in\mathscr D}\bone_D\bone_D^\top,
\end{equation}
set $p_i:=d_{L,i}$, $I_t:=\{i:p_i\ge t\}$, $P_t:=\diag(\bone_{I_t})$, and define
\begin{equation}\label{eq:approx-ND}
N_{\mathscr D}
:=2\int_0^\infty P_tH_{\mathscr D}P_t\,dt.
\end{equation}
The proof of \Cref{thm:recursive-threshold} shows that $N_{\mathscr D}\succeq K$ for every $K\in\sK_L(\bx)$.

\begin{corollary}[Recursive spectral compression]\label{cor:recursive-spectral-width}
For every decomposition dictionary $\mathscr D$ and every $m\in\{0,\ldots,n-1\}$,
\begin{equation}\label{eq:recursive-spectral-width}
d_{m,L}(\bx;r)
 \le
 \frac{r}{\sqrt n}
 \sqrt{\lambda_{m+1}(N_{\mathscr D})}.
\end{equation}
In particular, using only the trace of $N_{\mathscr D}$ recovers
\begin{equation}\label{eq:recursive-width-D}
 d_{m,L}(\bx;r)
 \le
 r\bigl(\max_i\norm{x_i}_2\bigr)^{2^{-(L-1)}}
 \sqrt{\frac{2\mathfrak D_{L-1}(\bx)}{m+1}}.
\end{equation}
Thus the recursive threshold theorem controls both statistical width and approximation width, while \eqref{eq:recursive-spectral-width} can be strictly sharper when the recursive covariance has spectral decay.
\end{corollary}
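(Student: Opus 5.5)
The first inequality \eqref{eq:recursive-spectral-width} should follow from the argument behind \eqref{eq:width-spectral-envelope} in \Cref{thm:kolmogorov-width}, with the optimizer $N_*$ replaced by the feasible covariance $N_{\mathscr D}$. That argument never uses optimality, only domination $N\succeq K$ for every $K\in\sK_L(\bx)$.

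So let $V$ be the span of the top $m$ eigenvectors of $N_{\mathscr D}$. Domination together with $P_{V^\perp}N_{\mathscr D}P_{V^\perp}$ having largest eigenvalue $\lambda_{m+1}(N_{\mathscr D})$ gives
\begin{align}
P_{V^\perp}KP_{V^\perp}\preceq P_{V^\perp}N_{\mathscr D}P_{V^\perp}\preceq\lambda_{m+1}(N_{\mathscr D})I
\notag
\end{align}
for every $K$. Hence $\omega_{m,L}(\bx)^2\le\lambda_{m+1}(N_{\mathscr D})$, and the exact width identity \eqref{eq:width-exact} yields \eqref{eq:recursive-spectral-width}. The domination $N_{\mathscr D}\succeq K$ I take as given from the proof of \Cref{thm:recursive-threshold}, as stated just before the corollary.

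For \eqref{eq:recursive-width-D}, use that the eigenvalues are nonnegative and ordered, so $(m+1)\lambda_{m+1}(N_{\mathscr D})\le\tr N_{\mathscr D}$. It remains to bound the trace of $N_{\mathscr D}$. Since $\tr(P_tH_{\mathscr D}P_t)=s(\mathscr D)\sum_{D}|D\cap I_t|$, and each $i\in I_t$ lies in at most $\Delta(\mathscr D)$ dictionary sets, we get $\tr(P_tH_{\mathscr D}P_t)\le s(\mathscr D)\Delta(\mathscr D)|I_t|$. Integrating in $t$ and using $\int_0^\infty|I_t|\,dt=\sum_ip_i=S_L(\bx)$ gives
\begin{align}
\tr N_{\mathscr D}\le 2s(\mathscr D)\Delta(\mathscr D)S_L(\bx).
\notag
\end{align}
Now choose $\mathscr D$ attaining the minimum in \eqref{eq:D-profile}; it exists by finiteness. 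This gives $\tr N_{\mathscr D}\le2\mathfrak D_{L-1}(\bx)S_L(\bx)$.

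Finally, insert this into $d_{m,L}\le\frac{r}{\sqrt n}\bigl(\tr N_{\mathscr D}/(m+1)\bigr)^{1/2}$ and use $S_L(\bx)\le n\max_i d_{L,i}$ with $d_{L,i}^{1/2}=\norm{x_i}_2^{2^{-(L-1)}}$. The two factors of $n$ cancel, giving \eqref{eq:recursive-width-D}. In the degenerate case $S_L=0$ every kernel vanishes and both sides are zero.

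The only substantive step is the Loewner domination $N_{\mathscr D}\succeq K$, which is inherited from the recursive threshold proof. If I had to reprove it, I would combine three ingredients. First, the Dirac reduction (\Cref{thm:dirac-reduction}) together with the threshold formula \eqref{eq:threshold-matrix}. Second, the observation that every threshold set at level $t$ lies inside $I_t$, by \eqref{eq:pointwise-scale}. Third, the decomposition bound: writing $\bone_A=\sum_{j\le m}\bone_{D_j}$ with $m\le s(\mathscr D)$, Cauchy--Schwarz gives $\bone_A\bone_A^\top\preceq s(\mathscr D)\sum_j\bone_{D_j}\bone_{D_j}^\top\preceq H_{\mathscr D}$, and this is preserved under compression by $P_t$. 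The rest is bookkeeping.
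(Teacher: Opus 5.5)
Your proposal is correct and follows essentially the same route as the paper. Feasibility of $N_{\mathscr D}$ (from the recursive-threshold construction together with the Dirac reduction) lets the spectral compression argument of \Cref{thm:kolmogorov-width} run with $N_{\mathscr D}$ in place of $N_*$, and the trace bound $\tr N_{\mathscr D}\le 2s(\mathscr D)\Delta(\mathscr D)S_L(\bx)$ for a minimizing dictionary, combined with $S_L(\bx)\le n\max_i d_{L,i}$, gives the second estimate exactly as in the paper.
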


\begin{theorem}[Balanced hierarchical traces admit a common Haar trace subspace]\label{thm:haar-width}
Assume $n=2^H$ and $\norm{x_i}_2=R$ for every $i$.  Suppose that a complete balanced binary partition tree on $[n]$ has the property that every set in $\mathcal F_{L-1}(\bx)$ is a disjoint union of at most $q$ tree nodes.  Then, for every $m\in\{0,\ldots,n-1\}$,
\begin{equation}\label{eq:haar-width}
d_{m,L}(\bx;r)
 \le
 2rR^{\,2^{-(L-1)}}
 \sqrt{\frac{q}{m+1}}.
\end{equation}
For $m\ge1$, the approximating space may be chosen independently of $f$ as the span of the constant vector and the $m-1$ coarsest discrete Haar wavelets associated with the partition tree, with ties resolved arbitrarily; for $m=0$, the approximating space is $\{0\}$.
\end{theorem}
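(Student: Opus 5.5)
The plan is to use the recursive threshold covariance $N_{\mathscr D}$ of \eqref{eq:approx-ND} for the dictionary $\mathscr D$ of all nodes of the balanced tree, and to observe that the discrete Haar basis diagonalizes it. By hypothesis every set in $\mathcal F_{L-1}(\bx)$ is a disjoint union of at most $q$ tree nodes, so $s(\mathscr D)\le q$. The proof of \Cref{thm:recursive-threshold} then gives $N_{\mathscr D}\succeq K$ for all $K\in\sK_L(\bx)$, where we may take $H_{\mathscr D}=q\,A$ with
\begin{align}
A:=\sum_{D\text{ tree node}}\bone_D\bone_D^\top .
\notag
\end{align}
Since $\norm{x_i}_2=R$ for all $i$, every $p_i=d_{L,i}$ equals $d:=R^{2^{-(L-2)}}$. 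Hence $P_t=I_n$ for $t\le d$ and $P_t=0$ for $t>d$, and therefore $N_{\mathscr D}=2dq\,A$.

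Next I bound the width through the exact formula \eqref{eq:width-exact}. For any subspace $V$ with $\dim V\le m$, the domination $K\preceq N_{\mathscr D}$ gives $P_{V^\perp}KP_{V^\perp}\preceq P_{V^\perp}N_{\mathscr D}P_{V^\perp}$. Consequently
\begin{align}
d_{m,L}(\bx;r)^2\le\frac{r^2}{n}\,\lambda_{\max}\bigl(P_{V^\perp}N_{\mathscr D}P_{V^\perp}\bigr).
\notag
\end{align}
The core computation is the spectrum of $A$. Its entry $A_{ik}$ is the number of nodes containing both $i$ and $k$, that is, one plus the depth of their lowest common ancestor, so $A$ is an ultrametric (tree-constant) matrix. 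I verify directly the following two facts.
\begin{itemize}
\item The constant vector is an eigenvector with eigenvalue $n(H+1)$.
\item The Haar wavelet $\psi$ attached to a node at level $j$ (of size $n/2^j$, equal to $\pm1$ on its two children and $0$ elsewhere) is an eigenvector. Nodes disjoint from its support, or containing both children, contribute $\bone_D^\top\psi=0$. Only nodes inside one child contribute, and the eigenvalue is $n/2^{j+1}+n/2^{j+2}+\cdots+1=n/2^j-1<n/2^j$.
\end{itemize}
These $n$ vectors are orthogonal, so they form an eigenbasis of $A$ with eigenvalues decreasing in the coarseness of the level.

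For $m\ge1$, take $V$ to be the span of the constant vector and the $m-1$ coarsest wavelets, ordered by level with ties broken arbitrarily. Because $V$ is spanned by eigenvectors of $A$, the operator $P_{V^\perp}AP_{V^\perp}$ has as its largest eigenvalue the eigenvalue of the $m$-th wavelet in this order. That wavelet sits at the level $j$ with $2^j\le m<2^{j+1}$, so $2^j\ge(m+1)/2$. Hence
\begin{align}
d_{m,L}^2\le\frac{r^2}{n}\,2dq\,\frac{n}{2^j}\le\frac{4r^2dq}{m+1},
\notag
\end{align}
and taking square roots gives \eqref{eq:haar-width}, since $\sqrt d=R^{2^{-(L-1)}}$.

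For $m=0$ and $V=\{0\}$, the constant eigenvalue $n(H+1)$ is too large to use. I instead use $\lambda_{\max}(K)\le\tr K\le S_L(\bx)=nd$, which gives $d_{0,L}^2\le r^2d\le 4r^2dq$. The main obstacle I expect is verifying cleanly that the Haar vectors diagonalize $A$ (the orthogonality bookkeeping between nodes and wavelet supports), and getting the index-to-level counting right to obtain the constant $2$. The domination $N_{\mathscr D}\succeq K$ itself is inherited from the proof of \Cref{thm:recursive-threshold}.
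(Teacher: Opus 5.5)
Your proof is correct and follows the paper's route: you take all tree nodes as the dictionary, use the constant diagonal scale $p=R^{2^{-(L-2)}}$ to get $N_{\mathscr T}\preceq 2pq\,C_{\mathscr T}$, diagonalize $C_{\mathscr T}=\sum_D\bone_D\bone_D^\top$ in the Haar basis with wavelet eigenvalues $|D|-1$, and count levels to get $2^j\ge(m+1)/2$. One slip does not affect your conclusion: the constant vector has eigenvalue $\sum_{D\ni i}|D|=n+n/2+\cdots+1=2n-1$, not $n(H+1)$ (for $n=2$ one gets $A=\bigl(\begin{smallmatrix}2&1\\1&2\end{smallmatrix}\bigr)$ with $A\bone=3\bone$), so this mode is not too large, and the paper covers $m=0$ uniformly via $\lambda_1=2n-1\le 2n$; your separate $m=0$ bound $\lambda_{\max}(K)\le\tr K\le S_L(\bx)$ is nonetheless also valid.
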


\begin{remark}[Why the hierarchical width is stronger than the trace estimate]\label{rem:haar-no-log}
The complexity consequence of the same hierarchical assumption contains the tree height through $\Lambda_L\le\sqrt{2q(H+1)}$.  In contrast, \eqref{eq:haar-width} contains no factor $H$ or $\log n$.  The reason is spectral: the trace of the hierarchical covariance sums all resolutions, whereas an $m$-dimensional approximation discards fine Haar modes and depends on the $(m+1)$-st eigenvalue rather than on the sum of all eigenvalues.
\end{remark}

\begin{proposition}[Stable disjoint thresholds give a width lower bound]\label{prop:stable-width-lower}
Fix $m<M\le n$.  Suppose there are pairwise disjoint sets $A_1,\ldots,A_M\subseteq[n]$, each of cardinality $s$, and a number $\delta>0$ such that, for every $j$, there is an $a^{(j)}\in\sS_{L-1}(\bx)$ for which either
\begin{align}
\{i:a_i^{(j)}\ge u\}=A_j
\quad\text{for every }u\in[t_j,t_j+\delta],
\notag
\end{align}
or
\begin{align}
\{i:a_i^{(j)}\le-u\}=A_j
\quad\text{for every }u\in[t_j,t_j+\delta]
\notag
\end{align}
for some $t_j\ge0$.  Then
\begin{equation}\label{eq:stable-width-lower}
d_{m,L}(\bx;r)
 \ge
 r\sqrt{\frac{\delta s}{n}
 \bigl(1-\frac{m}{M}\bigr)}.
\end{equation}
If the sets form an equal partition of $[n]$ and $M=2m$ with $m\ge1$, then
\begin{equation}\label{eq:stable-width-partition}
 d_{m,L}(\bx;r)
 \ge \frac{r}{2}\sqrt{\frac{\delta}{m}}.
\end{equation}
Hence the $m^{-1/2}$ upper rate in \Cref{thm:haar-width} is sharp whenever the hierarchy also contains stable disjoint threshold blocks whose persistence length $\delta$ is comparable to the natural preceding-layer amplitude.
\end{proposition}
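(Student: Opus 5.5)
The plan is to use the exact width identity of \Cref{thm:kolmogorov-width}, which gives $d_{m,L}(\bx;r)^2=\frac{r^2}{n}\omega_{m,L}(\bx)^2$. It then suffices to show that, for every subspace $V$ with $\dim V\le m$, some $K\in\sK_L(\bx)$ satisfies $\lambda_{\max}(P_{V^\perp}KP_{V^\perp})\ge\delta s(1-m/M)$. The candidates for $K$ are the Dirac Brownian matrices $\mathbf B(a^{(j)})$. By \Cref{thm:dirac-reduction} (Dirac measures at admissible unit atoms are allowed, and $\sS_{L-1}(\bx)$ is a closure, so $\sK_L(\bx)$ contains these limits by closedness), each $\mathbf B(a^{(j)})$ belongs to $\sK_L(\bx)$, or at least its quadratic form is bounded by $h_{L,\bx}^2$. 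Working with the quadratic form is all that is needed, because
\[
\sup_K\lambda_{\max}(P_{V^\perp}KP_{V^\perp})=\sup_{w\in V^\perp,\ \norm w_2=1}\sup_K w^\top Kw=\sup_{w}h_{L,\bx}(w)^2
\]
by \eqref{eq:support-covariance}.

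The first step is the lower bound on the Brownian quadratic form. By the layer-cake formula \eqref{eq:threshold-matrix}, every summand is positive semidefinite, so keeping only the levels $u\in[t_j,t_j+\delta]$ and only the sign that is stable gives
\[
\mathbf B(a^{(j)})\succeq\int_{t_j}^{t_j+\delta}\bone_{A_j}\bone_{A_j}^\top\,du=\delta\,\bone_{A_j}\bone_{A_j}^\top .
\]
Hence $w^\top\mathbf B(a^{(j)})w\ge\delta\,\ip{w}{\bone_{A_j}}^2$ for every $w$.

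The second step is a dimension-counting (averaging) argument over the $M$ orthonormal directions $e_j:=\bone_{A_j}/\sqrt s$, which are orthonormal because the $A_j$ are disjoint and $|A_j|=s$. Writing $P=P_{V^\perp}$, we have
\[
\sum_{j=1}^M\norm{Pe_j}_2^2=\sum_{j=1}^M\bigl(1-\norm{P_Ve_j}_2^2\bigr)\ge M-\tr P_V\ge M-m,
\]
using $\sum_j\norm{P_Ve_j}^2\le\tr P_V$ for an orthonormal family. Therefore some $j$ satisfies $\norm{Pe_j}_2^2\ge 1-m/M$. Take $w=Pe_j/\norm{Pe_j}_2\in V^\perp$, a unit vector, for which
\[
\ip{w}{\bone_{A_j}}=\sqrt s\,\ip{w}{e_j}=\sqrt s\,\norm{Pe_j}_2 .
\]
Then
\[
\lambda_{\max}\bigl(P\,\mathbf B(a^{(j)})P\bigr)\ge w^\top\mathbf B(a^{(j)})w\ge\delta s\,\norm{Pe_j}^2\ge\delta s(1-m/M).
\]
Taking the infimum over $V$ gives $\omega_{m,L}(\bx)^2\ge\delta s(1-m/M)$, and \eqref{eq:stable-width-lower} follows from the identity of \Cref{thm:kolmogorov-width}.

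For the partition case \eqref{eq:stable-width-partition}, equal parts give $s=n/M=n/(2m)$ and $1-m/M=1/2$. Substituting yields
\[
\frac{\delta s}{n}\cdot\frac12=\frac{\delta}{4m},
\]
which is exactly $\frac r2\sqrt{\delta/m}$.

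The only point I expect to need care is membership of $\mathbf B(a)$ in $\sK_L(\bx)$ when $a$ is merely a limit point in $\sS_{L-1}(\bx)$. I would handle it by continuity of $a\mapsto\mathbf B(a)$ together with closedness of $\sK_L(\bx)$, or alternatively by working directly through \eqref{eq:dirac-support} and the supremum formulation displayed above, which avoids membership altogether.
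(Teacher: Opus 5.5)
Your proof is correct and is essentially the paper's argument. It uses the same layer-cake bound $\mathbf B(a^{(j)})\succeq\delta\,\bone_{A_j}\bone_{A_j}^\top$ and the same averaging over the orthonormal block indicators $\bone_{A_j}/\sqrt s$. The only difference is where that bound enters. You feed it into the $\lambda_{\max}$/support-function form of \Cref{thm:kolmogorov-width}, so the closure of $\sS_{L-1}(\bx)$ is absorbed by \eqref{eq:dirac-support}. The paper instead first shows that $yy^\top\preceq K$ implies $y\in E(K)$, then places the explicit vector $r\sqrt\delta\,\bone_{A_j}$ in $r\cA_L(\bx)$ and measures its distance to $V$ directly.
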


\begin{remark}[Spectral phase diagram]\label{rem:spectral-phase}
The exact width formula is not tied to the exponent $1/2$.  If a feasible common covariance $N$ satisfies
\begin{align}
\lambda_j(N)\le Cn\,j^{-2\beta},
\notag
\end{align}
then \eqref{eq:width-spectral-envelope} gives $d_{m,L}(\bx;r)\le r\sqrt C\,(m+1)^{-\beta}$.  Finite rank gives exact recovery once $m$ reaches that rank, and exponential eigenvalue decay gives exponential linear approximation.  Thus the factorization profile supplies a full approximation spectrum, whereas its trace records only one aggregate statistical scale.
\end{remark}

\subsection{Learning Consequences}\label{sec:learning}

\subsubsection{Uniform deviation}

Let $Z=(X,Y)$ and let $\ell:\R\times\mathcal Y\to[0,M_\ell]$ be $L_\ell$-Lipschitz in its first argument.  Write
\begin{align}
R(f)=\E\ell(f(X),Y),
\qquad
R_n(f)=\frac1n\sum_{i=1}^n\ell(f(X_i),Y_i).
\notag
\end{align}
To avoid a separate empirical-process measurability digression, probability statements in this section may be read in outer probability.  Under the separability and relative-compactness properties of BKL radius balls established in the foundational BKL theory~\cite{mohammadigohari2026bkl}, the suprema are measurable.  In particular, on every fixed input sample the predictor supremum is the support function of the compact trace body $\cA_L(X_{1:n})$.

\begin{corollary}[Profile-dependent uniform deviation]\label{cor:generalization}
There exists a universal constant $C>0$ such that, for every $r>0$ and $\delta\in(0,1)$, with probability at least $1-\delta$,
\begin{equation}\label{eq:generalization}
\sup_{f\in B_L(r)}|R(f)-R_n(f)|
\le
C L_\ell\frac{r\sqrt{\tau_L(X_{1:n})}}{n}
+3M_\ell\sqrt{\frac{\log(4/\delta)}{2n}}.
\end{equation}
Equivalently, the first term is bounded by
\begin{equation}\label{eq:generalization-lambda}
C L_\ell\frac{r\Lambda_L(X_{1:n})}{\sqrt n}
\bigl(\max_i\norm{X_i}_2\bigr)^{2^{-(L-1)}}.
\end{equation}
\end{corollary}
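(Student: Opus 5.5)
The proof follows the standard symmetrization-plus-concentration route, with the empirical complexity supplied by \Cref{thm:profile-complexity}.

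\emph{Step 1: expected deviation.} Let $\mathcal L_r:=\set{(x,y)\mapsto\ell(f(x),y):f\in B_L(r)}$. The loss class takes values in $[0,M_\ell]$. Changing one sample point changes the supremum $\Phi:=\sup_{f\in B_L(r)}|R(f)-R_n(f)|$ by at most $M_\ell/n$, so McDiarmid's inequality bounds $\Phi$ by $\E\Phi+M_\ell\sqrt{\log(4/\delta)/(2n)}$ with probability at least $1-\delta/2$. Standard symmetrization gives $\E\Phi\le2\E\Rhat_{X_{1:n}}(\mathcal L_r)$, where $\Rhat$ is the empirical Rademacher complexity.

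\emph{Step 2: from Rademacher to Gaussian.} I apply the Ledoux--Talagrand contraction inequality coordinatewise, with the functions $\phi_i(t)=\ell(t,Y_i)$, which are $L_\ell$-Lipschitz. Absolute values in the Rademacher average can be avoided by centring, using $\ell(0,Y_i)$ terms whose Rademacher average is $O(M_\ell/\sqrt n)$; alternatively, the contraction constant $2$ can be accepted. This gives $\Rhat(\mathcal L_r)\lesssim L_\ell\Rhat_{X_{1:n}}(B_L(r))+M_\ell/\sqrt n$. The term $M_\ell/\sqrt n$ is absorbed into the confidence term, since $\log(4/\delta)\ge\log4$.

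Next, $\Rhat\le\sqrt{\pi/2}\,\Ghat$, because $\E|G_i|=\sqrt{2/\pi}$ and Jensen's inequality applies conditionally on the signs. \Cref{thm:profile-complexity} then yields $\Ghat_{X_{1:n}}(B_L(r))\le r\sqrt{\tau_L(X_{1:n})}/n$. I keep everything conditional on $X_{1:n}$ and do not take expectation of $\tau_L$.

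\emph{Step 3: the main obstacle, a data-dependent bound.} The bound must hold with the random quantity $\tau_L(X_{1:n})$ rather than its expectation. For this I apply a second bounded-differences step to $\Rhat_{X_{1:n}}(\mathcal L_r)$ itself. Changing one point alters it by at most $M_\ell/n$, so with probability at least $1-\delta/2$ we have $\E\Rhat\le\Rhat+M_\ell\sqrt{\log(4/\delta)/(2n)}$. This is the source of the constant $3M_\ell$: it collects one McDiarmid term, the symmetrization factor $2$ applied to the second term, and the union bound over the two events of probability $\delta/2$. The contraction and absorbed residual terms are then folded into $C$ or into the $3M_\ell$ term. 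Measurability is handled, as stated, through the compact trace body or outer probability.

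\emph{Step 4: the radius form.} Finally, \eqref{eq:generalization-lambda} follows from the identities \eqref{eq:profile-bound-sum}--\eqref{eq:profile-bound-radius}. Explicitly, $\sqrt{\tau_L}=\Lambda_L\sqrt{S_L}$ and $S_L\le n\max_i\norm{X_i}_2^{2^{-(L-2)}}$. The zero case $S_L=0$ is trivial.
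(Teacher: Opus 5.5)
Your outline (Rademacher deviation, Lipschitz contraction after centring at $\ell(0,Y_i)$, the conditional Jensen comparison $\Rhat\le\sqrt{\pi/2}\,\Ghat$, then \Cref{thm:profile-complexity}) matches the paper's. However, the constant bookkeeping in Steps~2--3 does not close.

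The statement fixes the coefficient $3$ on the confidence term; only $C$ is free. Your two McDiarmid steps already spend that whole budget ($1+2\cdot 1=3$ copies of $M_\ell\sqrt{\log(\cdot)/(2n)}$). The residual $2M_\ell/\sqrt n$ you create (symmetrization factor $2$ times your bound $M_\ell/\sqrt n$ on the Rademacher average of the $\ell(0,Y_i)$) therefore has nowhere to go.

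\begin{itemize}
\item It cannot enter the first term. That term scales like $L_\ell r\sqrt{\tau_L(X_{1:n})}/n$ and tends to zero as $r\to0$, while the left side tends to $\abs{R(0)-R_n(0)}$.
\item It does not fit into $3M_\ell\sqrt{\log(4/\delta)/(2n)}$. The comparison via $\log(4/\delta)\ge\log4$ only gives $2M_\ell/\sqrt n\le 2.4\,M_\ell\sqrt{\log(4/\delta)/(2n)}$, so your argument yields a coefficient of roughly $5.4$.
\item Sharpening the logarithm does not rescue it. Your two events actually allow $\log(2/\delta)$, but the resulting slack $3M_\ell\bigl(\sqrt{\log(4/\delta)}-\sqrt{\log(2/\delta)}\bigr)/\sqrt{2n}$ is at most about $1.04\,M_\ell/\sqrt{2n}$. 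The residual is $2\sqrt2\,M_\ell/\sqrt{2n}$.
\end{itemize}

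The residual is an artefact of absolute values. Symmetrizing the two-sided $\Phi$ puts an absolute value inside the Rademacher average. Centring then leaves $\E_\sigma\abs{n^{-1}\sum_i\sigma_i\ell(0,Y_i)}$, which is not zero.

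The fix is the paper's route: never symmetrize the two-sided supremum, and use the Rademacher complexity without absolute values.
\begin{itemize}
\item Apply the standard one-sided bound $\sup_f\bigl(R(f)-R_n(f)\bigr)\le2\Rhat_S(\ell\circ B_L(r))+3M_\ell\sqrt{\log(2/\delta')/(2n)}$ to the loss class and to its negative, each with $\delta'=\delta/2$. This bound is exactly your two McDiarmid steps, each at level $\delta'/2$.
\item The two complexities agree because $\sigma\mapsto-\sigma$ preserves the Rademacher law. The union bound then yields precisely $3M_\ell\sqrt{\log(4/\delta)/(2n)}$.
\item In the non-absolute average, $n^{-1}\sum_i\sigma_i\ell(0,Y_i)$ does not depend on $f$. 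It leaves the supremum and has $\sigma$-expectation exactly zero.
\item Contraction applied to $\varphi_i(t)=\ell(t,Y_i)-\ell(0,Y_i)$ then gives $\Rhat_S(\ell\circ B_L(r))\le L_\ell\Rhat_{X_{1:n}}(B_L(r))$ with no residual.
\end{itemize}
Your Gaussian comparison and Step~4 then finish the proof unchanged.
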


The proof combines empirical Rademacher deviation, Lipschitz contraction, Gaussian--Rademacher comparison, and \Cref{thm:profile-complexity}; see Appendix~\ref{app:learning}.  If $\widehat f_r$ minimizes $R_n$ over $B_L(r)$, then
\begin{equation}\label{eq:erm-oracle}
R(\widehat f_r)-\inf_{f\in B_L(r)}R(f)
\le 2\sup_{f\in B_L(r)}|R(f)-R_n(f)|.
\end{equation}
Thus the usual $n^{-1/2}$ exponent is recovered on samples for which $\Lambda_L(X_{1:n})$ remains bounded; the orthogonal lower bound shows why this cannot hold uniformly over the unchanged full class.

\begin{corollary}[Bounded squared-loss consequence]\label{cor:bounded-squared-loss}
Assume $|Y|\le B_Y$ almost surely and define
\begin{align}
B_f:=r\sup_{x\in\cX}\rho_L(x)<\infty.
\notag
\end{align}
For $f\in B_L(r)$, the pointwise estimate gives $|f(X)|\le B_f$.  Equations \eqref{eq:pointwise-scale} and \eqref{eq:holder-scale} make $B_L(r)$ uniformly bounded and equicontinuous on compact $\cX$.  The foundational BKL theory proves that this radius ball is uniformly closed in $C(\cX)$~\cite{mohammadigohari2026bkl}.  Arzel\`a--Ascoli therefore makes $B_L(r)$ compact in the uniform topology.  Since $f\mapsto R_n(f)$ for squared loss is continuous under uniform convergence, a constrained empirical squared-loss minimizer exists.

Let $\Pi_{B_f}:\R\to[-B_f,B_f]$ denote Euclidean clipping and define
\begin{align}
\widetilde\ell(u,y)
:=
\bigl(\Pi_{B_f}(u)-y\bigr)^2.
\notag
\end{align}
For $|y|\le B_Y$, the loss $\widetilde\ell(\cdot,y)$ takes values in $[0,(B_f+B_Y)^2]$ on all of $\R$ and is $2(B_f+B_Y)$-Lipschitz.  Indeed, clipping is nonexpansive, and $v\mapsto(v-y)^2$ has derivative bounded in absolute value by $2(B_f+B_Y)$ on $[-B_f,B_f]$.  Moreover,
\begin{align}
\widetilde\ell(f(x),y)
=
\bigl(f(x)-y\bigr)^2
\notag
\end{align}
for every $f\in B_L(r)$, because $|f(x)|\le B_f$.  Therefore \Cref{cor:generalization} applies with
\begin{align}
M_\ell=(B_f+B_Y)^2,
\qquad
L_\ell=2(B_f+B_Y).
\notag
\end{align}
In particular, every constrained empirical squared-loss minimizer over $B_L(r)$ satisfies the excess-risk conclusion in \eqref{eq:erm-oracle} with these constants.  No assertion is made for unbounded responses or predictions.
\end{corollary}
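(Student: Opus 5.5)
The argument has three parts: bound the predictions, build the clipped surrogate loss, and then invoke \Cref{cor:generalization} and \eqref{eq:erm-oracle}.

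\textbf{Uniform bound on predictions.} First I show $B_f<\infty$. Since $\cX$ is compact, $\sup_{x\in\cX}\norm{x}_2<\infty$, so by \eqref{eq:rho-def} we have $\sup_x\rho_L(x)<\infty$. For $f\in B_L(r)$, \eqref{eq:pointwise-scale} then gives $|f(x)|\le\Chat_L(f)\rho_L(x)\le B_f$.

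\textbf{Existence of a constrained minimizer.} Equicontinuity of $B_L(r)$ is uniform: by \eqref{eq:holder-scale}, $|f(x)-f(x')|\le r\norm{x-x'}_2^{2^{-(L-1)}}$. Together with the uniform bound, Arzel\`a--Ascoli makes $B_L(r)$ relatively compact in $C(\cX)$. I will invoke the closedness of the radius ball from \cite{mohammadigohari2026bkl} as stated, so $B_L(r)$ is compact. The empirical squared loss $f\mapsto\frac1n\sum_i(f(X_i)-Y_i)^2$ depends only on finitely many evaluations. It is therefore continuous for uniform convergence, and a minimizer exists by Weierstrass. If one wished to avoid the external closedness result, a fallback is to minimize over the compact trace body $\cA_L(X_{1:n})$ of \Cref{thm:trace-body}. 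That yields only a minimizing vector, however, so the cited closedness is the cleanest route, and it is the one point where I rely on outside input.

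\textbf{The surrogate loss.} Fix $|y|\le B_Y$. Clipping gives $|\Pi_{B_f}(u)-y|\le B_f+B_Y$, hence $0\le\widetilde\ell\le(B_f+B_Y)^2$ on all of $\R$. For the Lipschitz constant, write
\begin{align}
\bigl|\widetilde\ell(u,y)-\widetilde\ell(u',y)\bigr|
&=\bigl|\Pi_{B_f}(u)-\Pi_{B_f}(u')\bigr|\,
\bigl|\Pi_{B_f}(u)+\Pi_{B_f}(u')-2y\bigr| \notag\\
&\le 2(B_f+B_Y)\,|u-u'|. \notag
\end{align}
This uses the difference of squares and the fact that $\Pi_{B_f}$ is $1$-Lipschitz. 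On $B_L(r)$, clipping acts as the identity, so $\widetilde\ell(f(x),y)=(f(x)-y)^2$.

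\textbf{Applying the deviation bound.} The risks $R,R_n$ built from $\widetilde\ell$ therefore coincide with the squared-loss risks on $B_L(r)$. \Cref{cor:generalization} then applies with $M_\ell=(B_f+B_Y)^2$ and $L_\ell=2(B_f+B_Y)$.

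\textbf{Excess risk.} Finally, \eqref{eq:erm-oracle} follows from the standard decomposition. For any $f\in B_L(r)$,
\[
R(\widehat f_r)-R(f)\le[R(\widehat f_r)-R_n(\widehat f_r)]+[R_n(f)-R(f)],
\]
because $R_n(\widehat f_r)\le R_n(f)$. Each bracket is at most the uniform deviation, and taking the infimum over $f$ finishes the proof.

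The only genuine obstacle is the compactness step, and it is handled by the cited closedness. Everything else is routine verification of the hypotheses of \Cref{cor:generalization}.
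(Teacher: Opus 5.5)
Your proposal is correct and follows the same route as the paper. The pointwise bound gives $B_f$, Arzel\`a--Ascoli plus the cited closedness gives a minimizer, and the clipped loss is bounded by $(B_f+B_Y)^2$, is $2(B_f+B_Y)$-Lipschitz and agrees with squared loss on $B_L(r)$, so \Cref{cor:generalization} applies; your difference-of-squares Lipschitz computation and explicit derivation of \eqref{eq:erm-oracle} only spell out steps the paper states in one line.
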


\section{Certification and Computational Methods}\label{sec:certification}

The common-covariance program lies at the intersection of semidefinite and convex optimization \cite{boyd2004convex}, optimal experimental design \cite{kiefer1959optimum,kieferwolfowitz1960equivalence,huan2024oed}, and semi-infinite exchange methods \cite{blankenship1976infinitely,hettich1993semiinfinite}.  The sparse update and a posteriori gap below are related to Frank--Wolfe methodology \cite{jaggi2013frankwolfe}, while the resistance relaxation builds on effective-resistance optimization and sparsification \cite{ghosh2008resistance,spielmansrivastava2011sparsification}.  The BKL-specific difficulty is the recursive separation problem over the closed Brownian Dirac-trace family.

\subsection{Sparse Brownian Contact Certificates}\label{sec:sparse-contact}

The full profile is defined by infinitely many covariance constraints, but an exact optimizer always has a finite Brownian contact certificate.  Put
\begin{align}
d_n:=\frac{n(n+1)}{2}.
\notag
\end{align}

\begin{theorem}[Sparse Brownian contact theorem]\label{thm:sparse-contact}
Let $N_*$ be any optimizer in \eqref{eq:tau-def}.  There exist an integer $m\le d_n+1$, traces
$a_1,\ldots,a_m\in\sS_{L-1}(\bx)$, and vectors $z_1,\ldots,z_m\in\R^n$ such that
\begin{align}
\sum_{j=1}^m z_jz_j^\top&=I_n,\label{eq:contact-isotropy}\\
\bigl(N_*-\mathbf B(a_j)\bigr)z_j&=0,
\qquad j=1,\ldots,m,\label{eq:contact-equations}\\
\tau_L(\bx)&=\sum_{j=1}^m z_j^\top\mathbf B(a_j)z_j.\label{eq:contact-value}
\end{align}
Moreover, the finite family
\begin{align}
\mathcal F_*:=\set{\mathbf B(a_1),\ldots,\mathbf B(a_m)}
\notag
\end{align}
already determines the exact full profile:
\begin{equation}\label{eq:finite-exact-profile}
\tau(\mathcal F_*)=\tau_L(\bx).
\end{equation}
\end{theorem}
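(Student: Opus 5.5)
The plan is to derive the contact certificate from conic duality for the robust problem, using the Dirac reduction of \Cref{thm:dirac-reduction} to replace $\sK_L(\bx)$ by the compact set $\mathcal B:=\set{\mathbf B(a):a\in\sS_{L-1}(\bx)}$. First note that $\mathcal B$ is compact. Indeed, $\sS_{L-1}(\bx)$ is closed by definition and bounded by \eqref{eq:pointwise-scale}, and $a\mapsto\mathbf B(a)$ is continuous. By \eqref{eq:dirac-domination}, $\tau_L(\bx)=\min\set{\tr N:N\succeq B\ \forall B\in\mathcal B}$, and every optimizer $N_*$ of \eqref{eq:tau-def} is also optimal for this reduced problem.

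The core step is the dual representation
\begin{align}
\tau_L(\bx)=\max\set{\textstyle\int\ip{Z(B)}{B}\,d\pi(B):\ \pi\ \text{a measure on }\mathcal B,\ Z(B)\succeq0,\ \int Z\,d\pi=I_n}.
\notag
\end{align}
I would obtain this from the finite-family duality \eqref{eq:finite-dual} together with \eqref{eq:finite-subsets}, applied to the reduced family. For each finite $\mathcal F\subset\mathcal B$, the dual optimum in \eqref{eq:finite-dual} can be taken with rank-one pieces: diagonalize each $Z_j$ and split it into rank-one terms. This gives a collection of pairs $(B,zz^\top)$ with $\sum zz^\top=I_n$.

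Next comes a Carathéodory argument. Consider the moment vectors $(zz^\top,\,z^\top Bz)\in\mathbb S^n\times\R$, a space of dimension $d_n+1$. The set of isotropic decompositions attaining at least a given value is a linear-programming slice in these moments. Its extreme points therefore use at most $d_n+1$ rank-one pieces: at most $d_n$ constraints from $\sum zz^\top=I_n$ plus one for the objective. This bounds $m\le d_n+1$ uniformly in $\mathcal F$.

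To pass to the limit, take finite families $\mathcal F_k$ with $\tau(\mathcal F_k)\uparrow\tau_L(\bx)$, each with a certificate $(a_j^{(k)},z_j^{(k)})_{j\le d_n+1}$, padding with $z=0$ if needed. The conditions $\norm{z_j}_2\le1$ (from isotropy) and compactness of $\sS_{L-1}(\bx)$ allow extraction of a convergent subsequence. The limit $(a_j,z_j)$ satisfies \eqref{eq:contact-isotropy} and $\sum_j z_j^\top\mathbf B(a_j)z_j=\tau_L(\bx)$, which is \eqref{eq:contact-value}, after discarding zero vectors.

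Complementary slackness then follows from a sum of nonnegative terms:
\begin{align}
0=\tr N_*-\tau_L(\bx)=\sum_j z_j^\top\bigl(N_*-\mathbf B(a_j)\bigr)z_j,
\notag
\end{align}
and each summand is nonnegative because $N_*\succeq\mathbf B(a_j)$. Hence every summand vanishes. Since $N_*-\mathbf B(a_j)\succeq0$, a vanishing quadratic form forces $(N_*-\mathbf B(a_j))z_j=0$, which is \eqref{eq:contact-equations}.

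Finally, for \eqref{eq:finite-exact-profile}, observe that $\tau(\mathcal F_*)\le\tau_L(\bx)$ because $N_*$ is feasible for $\mathcal F_*$. Conversely, grouping the $z_jz_j^\top$ by kernel gives a feasible point of \eqref{eq:finite-dual} for $\mathcal F_*$ with value $\tau_L(\bx)$, so $\tau(\mathcal F_*)\ge\tau_L(\bx)$.

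The main obstacle is making the dual attainment and sparsity rigorous. This means checking that the Carathéodory reduction keeps the optimal value, not just feasibility, and that the compactness limit is legitimate. If the limit argument proves delicate, an alternative route is to apply the Carathéodory bound directly to the closed convex hull of the compact moment set $\set{(zz^\top,z^\top\mathbf B(a)z):\norm z_2\le1,\ a\in\sS_{L-1}(\bx)}$, which is compact.
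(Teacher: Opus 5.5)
Your proposal is correct and follows essentially the same route as the paper: an exact isotropic rank-one decomposition whose value approaches $\tau_L(\bx)$, a conic Carath\'eodory reduction in the $(d_n+1)$-dimensional moment space $(zz^\top,z^\top\mathbf B(a)z)$, a compactness limit using that $\sS_{L-1}(\bx)$ is closed and bounded, complementary slackness for \eqref{eq:contact-equations}, and finite dual feasibility for \eqref{eq:finite-exact-profile}. The one difference is where the near-optimal decompositions come from: you take dual optima of finite subfamilies of the Dirac family, which requires rerunning the finite-intersection argument behind \eqref{eq:finite-subsets} for that family (immediate from \eqref{eq:dirac-domination}); the paper instead takes near-optimal $2$-summing sequences from \Cref{thm:pi2}, chooses attaining traces, and fills the isotropic slack $I_n-\sum_r z_rz_r^\top$ with additional rank-one pieces.
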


Thus the exact empirical profile is determined by at most $O(n^2)$ Brownian matrices from the closed last-layer Dirac trace family.  The vectors $z_j$ identify the isotropic directions in which those covariance constraints touch the optimal envelope.  When $a_j$ is attained by an actual previous-layer unit atom, $\mathbf B(a_j)$ is a realized Dirac kernel.  In general, $a_j$ belongs only to the closed trace family, and $\mathbf B(a_j)$ can be approximated arbitrarily closely by realized Dirac kernels.  The proof is finite-dimensional and yields an exact finite certificate, but it is existential rather than an algorithm for locating the contact family; see Appendix~\ref{app:optimization}.

\begin{corollary}[Finite attainment of the robust multiplier optimum]\label{cor:finite-multiplier-attainment}
The supremum in \eqref{eq:robust-minimax} is attained by a symmetric probability measure with covariance $I_n$ and support cardinality at most
\begin{align}
2\bigl(\frac{n(n+1)}2+1\bigr).
\notag
\end{align}
\end{corollary}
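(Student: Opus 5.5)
The plan is to build the measure directly from the contact certificate of \Cref{thm:sparse-contact}. That theorem gives $m\le d_n+1$ traces $a_j\in\sS_{L-1}(\bx)$ and vectors $z_j$ with $\sum_j z_jz_j^\top=I_n$ and $\tau_L(\bx)=\sum_j z_j^\top\mathbf B(a_j)z_j$. Discard every index with $z_j=0$; this changes neither the isotropy identity nor the value. For the remaining indices put $p_j:=\norm{z_j}_2^2$. Taking traces in \eqref{eq:contact-isotropy} gives $\sum_j p_j=n$. Define
\begin{align}
\nu:=\sum_{j}\frac{p_j}{2n}\bigl(\delta_{\sqrt n\,z_j/\norm{z_j}_2}+\delta_{-\sqrt n\,z_j/\norm{z_j}_2}\bigr).
\notag
\end{align}
This is a symmetric probability measure with at most $2m\le 2(d_n+1)$ atoms. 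Its mean is zero, and its second-moment matrix is $\sum_j \frac{p_j}{n}\cdot n\,z_jz_j^\top/p_j=\sum_j z_jz_j^\top=I_n$. Hence $\nu\in\mathfrak P_{\preceq I_n}$, and its covariance is exactly $I_n$.

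Next I evaluate the objective of \eqref{eq:robust-minimax} at $\nu$. The support function $h_{L,\bx}$ is even and positively homogeneous, since it is the support function of the symmetric body $\cA_L(\bx)$, which is symmetric because $B_L(1)$ is. Therefore
\begin{align}
\int h_{L,\bx}^2\,d\nu=\sum_j\frac{p_j}{n}\,h_{L,\bx}\bigl(\sqrt n\,z_j/\sqrt{p_j}\bigr)^2=\sum_j h_{L,\bx}(z_j)^2.
\notag
\end{align}
By \Cref{thm:dirac-reduction}, $h_{L,\bx}(z_j)^2=\sup_{a\in\sS_{L-1}(\bx)}z_j^\top\mathbf B(a)z_j\ge z_j^\top\mathbf B(a_j)z_j$. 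Summing and using \eqref{eq:contact-value} gives $\int h_{L,\bx}^2\,d\nu\ge\tau_L(\bx)$. Conversely, \Cref{thm:robust-minimax} bounds every admissible law's integral by $\tau_L(\bx)$. So equality holds and the supremum is attained at $\nu$.

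There is a degenerate case to handle. If $\tau_L(\bx)=0$, every member of $\mathfrak P_{\preceq I_n}$ attains the value, for instance the symmetric law $\frac1{2n}\sum_i(\delta_{\sqrt n e_i}+\delta_{-\sqrt n e_i})$, so the claim holds trivially. The construction above also covers this case, provided one notes that discarding zero vectors leaves at least one nonzero $z_j$, which is guaranteed because $\sum_j z_jz_j^\top=I_n$ and $n\ge1$.

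I expect the main obstacle to be the symmetry of $\cA_L(\bx)$, which is needed for $h_{L,\bx}(-z)=h_{L,\bx}(z)$. This follows from $\Chat_L(-f)=\Chat_L(f)$ in \eqref{eq:ball-scaling}. Apart from that, the only care needed is the bookkeeping that the atom count is $2m\le 2(n(n+1)/2+1)$.
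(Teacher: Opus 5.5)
Your proof is correct and follows the paper's route: both build the symmetric discrete law directly from the contact certificate of \Cref{thm:sparse-contact}. The differences are cosmetic. The paper places uniform masses $1/(2m)$ at $\pm\sqrt m\,z_j$ and gets $\sum_j h_{L,\bx}(z_j)^2=\tau_L(\bx)$ termwise from the contact equations $(N_*-\mathbf B(a_j))z_j=0$. You use masses $\norm{z_j}_2^2/(2n)$ at normalized points and close the equality with the upper bound of \Cref{thm:robust-minimax}, which means you never need the contact equations or $N_*$.
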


\subsection{Certified Active-Set Bounds}\label{sec:active-set}

For a finite active family $\mathcal F\subset\sK_L(\bx)$, solving \eqref{eq:finite-primal} gives a certified lower bound.  A full-family upper bound requires a verified separation value.

\begin{proposition}[Flooring and certified active-set brackets]\label{prop:certified-computation}
For $\varepsilon\ge0$, define the floored full-family value
\begin{equation}\label{eq:floored-profile}
\tau_{L,\varepsilon}(\bx)
:=\min\set{\tr N:N\succeq\varepsilon I_n,\ N\succeq K\text{ for every }K\in\sK_L(\bx)}.
\end{equation}
Then
\begin{equation}\label{eq:floor-bias}
\tau_L(\bx)\le\tau_{L,\varepsilon}(\bx)
\le\tau_L(\bx)+n\varepsilon.
\end{equation}
Let $\mathcal F\subset\sK_L(\bx)$ be finite, let $N_{\mathcal F}$ solve its finite primal problem, and let $\varepsilon>0$.  Define
\begin{equation}\label{eq:separation}
\operatorname{sep}_L(N;\bx)
:=\sup_{a\in\sS_{L-1}(\bx)}
\lambda_{\max}\!\bigl(N^{-1/2}\mathbf B(a)N^{-1/2}\bigr),
\qquad N\succ0.
\end{equation}
Then
\begin{equation}\label{eq:certified-bracket}
\tr N_{\mathcal F}=\tau(\mathcal F)
\le\tau_L(\bx)
\le
\operatorname{sep}_L(N_{\mathcal F}+\varepsilon I_n;\bx)
\bigl(\tr N_{\mathcal F}+n\varepsilon\bigr).
\end{equation}
In particular, if a verified oracle gives
$\operatorname{sep}_L(N_{\mathcal F}+\varepsilon I_n;\bx)\le1+\eta$, then
\begin{equation}\label{eq:eta-bracket}
\tau(\mathcal F)
\le\tau_L(\bx)
\le(1+\eta)\bigl(\tau(\mathcal F)+n\varepsilon\bigr).
\end{equation}
For nested active families, the lower bounds $\tau(\mathcal F)$ are monotone nondecreasing.
\end{proposition}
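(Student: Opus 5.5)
The plan is to treat the statement's four claims separately: the floor bias \eqref{eq:floor-bias}, the lower end of the bracket, the upper end via a rescaling argument, and monotonicity.

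\emph{Floor bias.} Every $N$ feasible for \eqref{eq:floored-profile} is feasible for \eqref{eq:tau-def}, so $\tau_L\le\tau_{L,\varepsilon}$. For the upper bound, let $N_*$ be the optimizer supplied by \Cref{prop:profile-existence}. Then $N_*+\varepsilon I_n$ satisfies $N_*+\varepsilon I_n\succeq\varepsilon I_n$, since $N_*\succeq0$. It also satisfies $N_*+\varepsilon I_n\succeq N_*\succeq K$ for every $K\in\sK_L(\bx)$. Its trace is $\tau_L+n\varepsilon$, which gives the bound. The minimum in \eqref{eq:floored-profile} exists by the same compactness argument used in \Cref{prop:profile-existence}.

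\emph{Lower bracket.} Since $\mathcal F\subset\sK_L(\bx)$, every matrix feasible for the full problem is feasible for the finite primal \eqref{eq:finite-primal}, so $\tau(\mathcal F)\le\tau_L$. The same inclusion-of-feasible-sets argument gives monotonicity: if $\mathcal F\subseteq\mathcal F'$, the feasible set for $\mathcal F'$ is contained in that for $\mathcal F$, so $\tau(\mathcal F)\le\tau(\mathcal F')$.

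\emph{Upper bracket.} This is the substantive step. Put $N_\varepsilon:=N_{\mathcal F}+\varepsilon I_n$, which is strictly positive definite because $N_{\mathcal F}\succeq K_j\succeq0$ for $K_j\in\mathcal F$. Let $s:=\operatorname{sep}_L(N_\varepsilon;\bx)$.
\begin{itemize}
\item The supremum defining $s$ is finite: $\sS_{L-1}(\bx)$ is bounded by \eqref{eq:pointwise-scale}, and $a\mapsto\mathbf B(a)$ is continuous.
\item For each $a\in\sS_{L-1}(\bx)$, the definition of $s$ gives $N_\varepsilon^{-1/2}\mathbf B(a)N_\varepsilon^{-1/2}\preceq sI_n$. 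Conjugating by $N_\varepsilon^{1/2}$ yields $\mathbf B(a)\preceq sN_\varepsilon$.
\item By the Dirac-reduction equivalence \eqref{eq:dirac-domination} of \Cref{thm:dirac-reduction}, $sN_\varepsilon$ then dominates every $K\in\sK_L(\bx)$. Note that $sN_\varepsilon\succeq0$ because $s\ge0$, as each $\mathbf B(a)$ is PSD.
\end{itemize}
Hence $sN_\varepsilon$ is feasible for \eqref{eq:tau-def}, and
\[
\tau_L\le s\,\tr N_\varepsilon=s\bigl(\tr N_{\mathcal F}+n\varepsilon\bigr).
\]
Substituting $s\le1+\eta$ gives \eqref{eq:eta-bracket}.

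The main point of care is that domination must be verified over the closed Dirac family rather than over $\sK_L$ itself. \Cref{thm:dirac-reduction} is exactly what transfers Dirac-level domination to the full family.
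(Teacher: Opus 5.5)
Your proposal is correct and follows the paper's proof essentially line by line. It uses the shift $N_*+\varepsilon I_n$ for the floor bias, feasible-set inclusion for the lower bracket and for monotonicity, and the congruence $\mathbf B(a)\preceq s\,(N_{\mathcal F}+\varepsilon I_n)$ combined with the Dirac-domination equivalence for the upper bracket. Your extra remarks on the existence of the floored minimum and on the finiteness and nonnegativity of the separation value are valid additions that the paper leaves implicit.
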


A certified exchange iteration therefore has the following form.
\begin{enumerate}
    \item Start from a finite family $\mathcal F_0$ of admissible Brownian kernels.
    \item Solve the finite SDP and obtain $N_t$ and the lower certificate $\underline\tau_t:=\tr N_t$.
    \item Choose $\varepsilon_t>0$ and evaluate, or rigorously upper-bound,
    $\alpha_t:=\operatorname{sep}_L(N_t+\varepsilon_t I_n;\bx)$.
    \item Report the valid upper certificate
    $\overline\tau_t:=\alpha_t(\tr N_t+n\varepsilon_t)$.
    \item If the bracket is not sufficiently small, generate a witness $a_t$ with a large generalized eigenvalue and set
    $\mathcal F_{t+1}:=\mathcal F_t\cup\set{\mathbf B(a_t)}$.
\end{enumerate}
Every iteration is trustworthy even when the witness search is nonconvex: the active SDP supplies a lower certificate, and only a verified separation upper bound is used as an upper certificate.  By \Cref{thm:sparse-contact}, some active family with at most $d_n+1$ Brownian contact matrices is exact, although a generic local witness generator is not asserted to discover that family in finitely many steps.

\begin{figure}[t]
\small
\textbf{Algorithm 1: Active common-covariance certification.}
\begin{enumerate}[(1)]
\item Choose a finite active Brownian covariance family $\mathcal F_0$ and a numerical floor $\eta>0$.
\item Solve the active SDP to obtain $N_t$ and the certified lower value $\tau(\mathcal F_t)$.
\item Evaluate a witness generator for $\operatorname{sep}_L(N_t;\bx)$.  If separation is globally verified, record the certified upper value $\operatorname{sep}_L(N_t)\tr N_t$.
\item If the verified upper--lower gap is below tolerance, stop.  Otherwise add a violating Brownian matrix to $\mathcal F_t$ and repeat.
\item When global separation is unavailable, retain the active value only as a lower certificate and use the resistance program in \Cref{thm:resistance-design} for a conservative global upper certificate.
\end{enumerate}
\caption{The lower certificate is valid at every iteration.  A local witness can improve the active family but is not itself a global upper certificate.}
\label{alg:active-certification}
\end{figure}

\subsection{Convex Resistance-Design Upper Certificates}\label{sec:resistance-design}

The graph majorant in \Cref{thm:graph-majorant} yields a second optimization route that is fully convex and does not require solving the recursive separation problem.  Assume in this subsection that $n\ge2$ and that the sample locations are pairwise distinct.  Repeated locations require the occupancy-weighted replication formulation of \Cref{thm:random-repetition-transfer}, because unweighted contraction does not preserve the trace objective.  Put
\begin{align}
\beta_L:=2^{-(L-2)},
\qquad
J_n:=\frac1n\bone\bone^\top.
\notag
\end{align}
Let $\mathcal E$ be the edge set of any connected simple graph on $[n]$.  For an edge $e=\{i,j\}$, set
\begin{align}
b_e:=e_i-e_j,
\qquad
c_e:=\norm{x_i-x_j}_2^{\,\beta_L}>0.
\notag
\end{align}
For $w=(w_e)_{e\in\mathcal E}\in\R_+^{\mathcal E}$, define
\begin{equation}\label{eq:weighted-laplacian-design}
L(w):=\sum_{e\in\mathcal E}w_eb_eb_e^\top.
\end{equation}
On the cost simplex
\begin{equation}\label{eq:resistance-simplex}
\mathcal W_{\mathcal E}
:=\set{w\in\R_+^{\mathcal E}:\sum_{e\in\mathcal E}c_ew_e=1},
\end{equation}
define the extended trace-inverse objective
\begin{equation}\label{eq:resistance-objective}
\Phi_{\mathcal E}(w)
:=
\begin{cases}
\tr\bigl(L(w)+J_n\bigr)^{-1}-1,
&L(w)+J_n\succ0,\\
+\infty,&\text{otherwise}.
\end{cases}
\end{equation}
The finite value case is exactly $\Phi_{\mathcal E}(w)=\tr L(w)^\dagger$.

\begin{theorem}[Convex resistance-design certificate and optimality gap]\label{thm:resistance-design}
The program
\begin{equation}\label{eq:resistance-program}
\Psi_{L,\mathcal E}(\bx)
:=\min_{w\in\mathcal W_{\mathcal E}}\Phi_{\mathcal E}(w)
\end{equation}
has a minimizer and is a convex optimization problem.  It gives the certified full-family bounds
\begin{align}
\tau_L(\bx)
&\le 2S_L(\bx)+2\Psi_{L,\mathcal E}(\bx),\label{eq:resistance-tau}\\
\Lambda_L(\bx)^2
&\le 2+\frac{2\Psi_{L,\mathcal E}(\bx)}{S_L(\bx)}
\qquad\text{when }S_L(\bx)>0.\label{eq:resistance-lambda}
\end{align}
For every $w\in\mathcal W_{\mathcal E}$ with connected positive-weight support, define
\begin{equation}\label{eq:biharmonic-edge}
q_e(w):=b_e^\top\bigl(L(w)^\dagger\bigr)^2b_e.
\end{equation}
Then
\begin{align}
\frac{\partial\Phi_{\mathcal E}}{\partial w_e}(w)&=-q_e(w),\label{eq:resistance-gradient}\\
\sum_{e\in\mathcal E}w_eq_e(w)&=\Phi_{\mathcal E}(w).\label{eq:resistance-euler}
\end{align}
Moreover, $w$ is optimal in \eqref{eq:resistance-program} if and only if
\begin{equation}\label{eq:resistance-kkt}
\frac{q_e(w)}{c_e}\le\Phi_{\mathcal E}(w)
\quad\text{for every }e\in\mathcal E,
\qquad
\frac{q_e(w)}{c_e}=\Phi_{\mathcal E}(w)
\quad\text{whenever }w_e>0.
\end{equation}
For every connected feasible $w$, the computable quantity
\begin{equation}\label{eq:resistance-gap}
g_{\mathcal E}(w)
:=\max_{e\in\mathcal E}\frac{q_e(w)}{c_e}
-\Phi_{\mathcal E}(w)
\end{equation}
is nonnegative and certifies
\begin{equation}\label{eq:resistance-gap-bound}
0\le\Phi_{\mathcal E}(w)-\Psi_{L,\mathcal E}(\bx)
\le g_{\mathcal E}(w).
\end{equation}
\end{theorem}

The theorem yields a sparse Frank--Wolfe implementation.  At a connected iterate $w_t$, choose
\begin{align}
e_t\in\argmax_{e\in\mathcal E}\frac{q_e(w_t)}{c_e},
\qquad
s_t:=\frac{1}{c_{e_t}}\mathbf1_{\{e_t\}}
\notag
\end{align}
and minimize $\Phi_{\mathcal E}((1-\gamma)w_t+\gamma s_t)$ over $0\le\gamma<1$; when $n=2$, the endpoint $\gamma=1$ is also admissible.  The old connected support remains present for every $\gamma<1$, the objective does not increase under exact line search, and \eqref{eq:resistance-gap} is an a posteriori stopping certificate.

\begin{corollary}[Closed-form optimal certificate on a fixed tree]\label{cor:tree-design}
Let $T$ be a tree on $[n]$.  Removing an edge $e$ separates the tree into components of cardinalities $s_e$ and $n-s_e$, and put
\begin{equation}\label{eq:tree-ae}
a_e:=\frac{s_e(n-s_e)}{n}.
\end{equation}
Under the normalization $\sum_{e\in T}c_ew_e=1$, the unique optimal positive weights are
\begin{equation}\label{eq:tree-optimal-weights}
w_e^*
=\frac{\sqrt{a_e/c_e}}
{\sum_{f\in T}\sqrt{a_fc_f}},
\end{equation}
and the optimal tree value is
\begin{equation}\label{eq:tree-optimal-value}
\Psi_{L,T}(\bx)
=\bigl(\sum_{e\in T}\sqrt{a_ec_e}\bigr)^2.
\end{equation}
Consequently,
\begin{equation}\label{eq:tree-profile-certificate}
\tau_L(\bx)
\le 2S_L(\bx)
+2\bigl(\sum_{e\in T}\sqrt{a_ec_e}\bigr)^2.
\end{equation}
\end{corollary}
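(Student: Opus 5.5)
The plan is to compute the resistance objective $\Phi_T$ in closed form on a tree, solve the resulting one-constraint convex problem by Cauchy--Schwarz, and then read off \eqref{eq:tree-profile-certificate} from \eqref{eq:resistance-tau} of \Cref{thm:resistance-design} with $\mathcal E=T$.

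\emph{Step 1: support and the effective-resistance identity.} A tree stays connected only if every edge is kept, so $\Phi_T(w)<\infty$ forces $w_e>0$ for all $e\in T$. Otherwise $L(w)+J_n$ is singular and $\Phi_T(w)=+\infty$. On the positive orthant the finite value is $\Phi_T(w)=\tr L(w)^\dagger$. Write $R_{ij}:=(e_i-e_j)^\top L(w)^\dagger(e_i-e_j)$. Since $L(w)^\dagger\bone=0$, summing over all ordered pairs gives
\begin{align}
\sum_{i,j}R_{ij}=2n\tr L(w)^\dagger-2\,\bone^\top L(w)^\dagger\bone=2n\tr L(w)^\dagger,
\notag
\end{align}
so $\tr L(w)^\dagger=\frac1n\sum_{i<j}R_{ij}$.

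\emph{Step 2: resistance on a tree.} On a tree, $R_{ij}$ is the series resistance $\sum_{e\in\mathrm{path}(i,j)}w_e^{-1}$. One way to see this is to check directly that the unit $i\to j$ flow runs only along the unique path. An edge $e$ lies on the path between $i$ and $j$ exactly when $i$ and $j$ fall in different components of $T\setminus e$, and there are $s_e(n-s_e)$ such pairs. Exchanging the order of summation therefore gives
\begin{align}
\Phi_T(w)=\sum_{e\in T}\frac{a_e}{w_e}.
\notag
\end{align}

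\emph{Step 3: the minimisation.} For positive $w$ on the cost simplex, Cauchy--Schwarz gives
\begin{align}
\Bigl(\sum_e\sqrt{a_ec_e}\Bigr)^2=\Bigl(\sum_e\sqrt{a_e/w_e}\,\sqrt{c_ew_e}\Bigr)^2\le\Bigl(\sum_e\frac{a_e}{w_e}\Bigr)\Bigl(\sum_ec_ew_e\Bigr)=\Phi_T(w).
\notag
\end{align}
Equality holds iff $\sqrt{a_e/w_e}$ is proportional to $\sqrt{c_ew_e}$, that is, iff $w_e\propto\sqrt{a_e/c_e}$. The normalisation $\sum_ec_ew_e=1$ then fixes the constant and yields \eqref{eq:tree-optimal-weights}. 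Substituting back gives the value \eqref{eq:tree-optimal-value}. Uniqueness follows from the equality case of Cauchy--Schwarz; alternatively, each $a_e/w_e$ is strictly convex because $a_e\ge(n-1)/n>0$. As a cross-check, the KKT condition \eqref{eq:resistance-kkt} also holds at $w^*$: here $q_e=a_e/w_e^2$, and $q_e/c_e$ is constant in $e$.

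\emph{Step 4: conclusion.} Plugging $\Psi_{L,T}(\bx)=(\sum_e\sqrt{a_ec_e})^2$ into \eqref{eq:resistance-tau} gives \eqref{eq:tree-profile-certificate}.

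The only step with real content is Step 2, the identity $\tr L(w)^\dagger=\sum_e a_e/w_e$. I would prove it via the series-resistance description of tree paths. Alternatively, one can factor $L(w)=BWB^\top$ with $B$ the square-invertible incidence map restricted to $\bone^\perp$, which makes the computation explicit.
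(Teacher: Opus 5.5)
Your proposal is correct and follows essentially the same route as the paper. Both arguments use the Kirchhoff trace identity $\sum_{i<j}R_{ij}=n\tr L_T(w)^\dagger$, the series-resistance formula on the tree, the $s_e(n-s_e)$ edge count, Cauchy--Schwarz with its equality case, and then \eqref{eq:resistance-tau}. For the series-resistance step, the paper uses your incidence-matrix alternative: the unique edge vector $y$ with $B_Ty=e_i-e_j$ is the signed path indicator. Your remark that a finite $\Phi_T$ forces every tree weight to be positive makes explicit a point the paper leaves implicit when it writes $1/w_e$.
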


The closed form gives an inexpensive certified initialization for the convex design problem and permits direct comparison of paths, minimum-spanning trees, hierarchical trees, and subsequently optimized sparse graphs.

\subsection{Brownian Separation Computations}\label{sec:brownian-oracle}

The exact active-set method requires maximizing the generalized eigenvalue in \eqref{eq:separation}.  The following proposition isolates the BKL-specific computations used by such a witness generator.

\begin{proposition}[Sorting formula and differential Brownian oracle]\label{prop:brownian-oracle}
Let $N\succ0$.  Then
\begin{equation}\label{eq:oracle-minmax}
\operatorname{sep}_L(N;\bx)
=\sup_{a\in\sS_{L-1}(\bx)}
\sup_{\norm{v}_2=1}
q^\top\mathbf B(a)q,
\qquad
q:=N^{-1/2}v.
\end{equation}
Fix $a,q\in\R^n$.  For each sign $\sigma\in\{+,-\}$, list the indices with $\sigma a_i>0$ as
$i_1^\sigma,\ldots,i_{m_\sigma}^\sigma$ so that
\begin{align}
0<t_1^\sigma\le\cdots\le t_{m_\sigma}^\sigma,
\qquad
t_k^\sigma:=|a_{i_k^\sigma}|,
\qquad
t_0^\sigma:=0,
\notag
\end{align}
and define the suffix sums
\begin{align}
Q_k^\sigma:=\sum_{\ell=k}^{m_\sigma}q_{i_\ell^\sigma}.
\notag
\end{align}
Then
\begin{equation}\label{eq:oracle-sorting}
q^\top\mathbf B(a)q
=\sum_{\sigma\in\{+,-\}}
\sum_{k=1}^{m_\sigma}
\bigl(t_k^\sigma-t_{k-1}^\sigma\bigr)
\bigl(Q_k^\sigma\bigr)^2.
\end{equation}
Thus the Brownian quadratic form is evaluable in $O(n\log n)$ arithmetic operations after sorting.

Assume additionally that every $a_i$ is nonzero and that the coordinates of $a$ are pairwise distinct.  Put $Q:=\sum_{j=1}^nq_j$.  Then the function
$F_q(a):=q^\top\mathbf B(a)q$ is differentiable at $a$, with
\begin{equation}\label{eq:oracle-gradient-a}
\frac{\partial F_q}{\partial a_k}(a)
=q_k\bigl[
Q\,\operatorname{sign}(a_k)
-\sum_{j=1}^nq_j\operatorname{sign}(a_k-a_j)
\bigr],
\end{equation}
where the $j=k$ summand is interpreted as zero.  If $a=a(\theta)$ is differentiable and the largest eigenvalue in \eqref{eq:separation} is simple at $\theta$, with normalized eigenvector $v$, then
\begin{equation}\label{eq:oracle-chain-rule}
\nabla_\theta
\lambda_{\max}\!\bigl(N^{-1/2}\mathbf B(a(\theta))N^{-1/2}\bigr)
=J_a(\theta)^\top\nabla_aF_q(a(\theta)),
\qquad q=N^{-1/2}v.
\end{equation}
\end{proposition}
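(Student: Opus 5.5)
The proof splits into three independent parts: the min--max reformulation \eqref{eq:oracle-minmax}, the sorting formula \eqref{eq:oracle-sorting}, and the two derivative statements.

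\emph{The min--max form.} For $N\succ0$ and a symmetric matrix $A$ we have
$\lambda_{\max}(N^{-1/2}AN^{-1/2})=\sup_{\norm v_2=1}v^\top N^{-1/2}AN^{-1/2}v$. Substituting $q=N^{-1/2}v$ and taking the supremum over $a\in\sS_{L-1}(\bx)$ gives \eqref{eq:oracle-minmax}. No compactness is needed, because both sides are suprema of the same family of numbers.

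\emph{The sorting formula.} Start from the threshold representation \eqref{eq:threshold-matrix}, which gives
\begin{align}
q^\top\mathbf B(a)q=\int_0^\infty\bigl[(q^\top v_t^+(a))^2+(q^\top v_t^-(a))^2\bigr]dt.
\notag
\end{align}
For the sign $\sigma$ and $t\in(t_{k-1}^\sigma,t_k^\sigma]$, the set $\{i:\sigma a_i\ge t\}$ is exactly $\{i_k^\sigma,\ldots,i_{m_\sigma}^\sigma\}$. This holds also when some of the $t_k^\sigma$ coincide, since such intervals have zero length and contribute nothing. Hence $q^\top v_t^\sigma(a)=Q_k^\sigma$ on that interval, and for $t>t_{m_\sigma}^\sigma$ the vector $v_t^\sigma(a)$ is zero. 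Coordinates with $a_i=0$ enter neither family for $t>0$. Integrating piecewise yields \eqref{eq:oracle-sorting}. Complexity: sort each sign class in $O(n\log n)$, compute the suffix sums in $O(n)$, then evaluate the sum in $O(n)$.

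\emph{The gradient in $a$.} Write $F_q(a)=\frac12\sum_{i,j}q_iq_j(|a_i|+|a_j|-|a_i-a_j|)$, which equals $Q\sum_i q_i|a_i|-\frac12\sum_{i,j}q_iq_j|a_i-a_j|$. Under the assumptions that every $a_i\neq0$ and the $a_i$ are pairwise distinct, every absolute value is evaluated away from its kink, so $F_q$ is smooth in a neighbourhood of $a$. Differentiating in $a_k$:
\begin{enumerate}
\item the first term contributes $Qq_k\operatorname{sign}(a_k)$;
\item the second term contributes $-\frac12\cdot2\,q_k\sum_{j}q_j\operatorname{sign}(a_k-a_j)$, since $a_k$ appears in both the $(k,j)$ and the $(j,k)$ summands;
\item the diagonal summand $|a_k-a_k|=0$ is constant and contributes zero.
\end{enumerate}
This gives \eqref{eq:oracle-gradient-a}.

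\emph{The chain rule.} Use the standard Hellmann--Feynman fact: if $\lambda_{\max}$ of a differentiable symmetric family $M(\theta)$ is simple with unit eigenvector $v$, then $\nabla_\theta\lambda_{\max}=v^\top(\nabla_\theta M)v$. Apply it to $M(\theta)=N^{-1/2}\mathbf B(a(\theta))N^{-1/2}$. Then $v^\top M(\theta')v=F_q(a(\theta'))$ with $q=N^{-1/2}v$ held fixed, so the derivative equals $\nabla_\theta F_q(a(\theta))=J_a^\top\nabla_aF_q$.

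The main obstacle is differentiability of $\theta\mapsto\mathbf B(a(\theta))$. This is secured by the preceding differentiability hypotheses: they hold in a neighbourhood of the point because distinctness and nonvanishing are open conditions, so each entry $\kB(a_i,a_j)$ is smooth near $a(\theta)$.
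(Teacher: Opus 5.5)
Your proposal is correct and follows essentially the same route as the paper: Rayleigh--Ritz with the substitution $q=N^{-1/2}v$, then a layer-cake evaluation of the Brownian quadratic form, then the expansion $F_q(a)=Q\sum_iq_i|a_i|-\frac12\sum_{i,j}q_iq_j|a_i-a_j|$ for the gradient, and finally the first-order perturbation identity $d\lambda=v^\top(dM)v$ for the chain rule. The only cosmetic difference is in the sorting formula: you integrate the matrix identity \eqref{eq:threshold-matrix} piecewise, whereas the paper first splits into sign blocks and applies the scalar layer-cake identity to $\min\{t_i,t_j\}$, which is the same computation.
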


Equation \eqref{eq:oracle-sorting} gives an exact fast objective evaluation, while \eqref{eq:oracle-chain-rule} permits projected or automatic-differentiation ascent through a differentiable finite-atomic ladder parameterization away from amplitude ties and eigenvalue multiplicities.  At nonsmooth points one may use one-sided derivatives, a consistent generalized-gradient choice, or a smooth approximation, but no differentiability claim beyond the stated hypotheses is needed here.  Such ascent produces valid violating kernels and therefore improves the active-set lower certificate.  Because the recursive trace family is nonconvex, a locally optimized witness is not by itself a verified upper bound; rigorous upper certification must use a global separation bound or the convex resistance/threshold certificates above.

\section{Finite Covariance-Path Illustration}\label{sec:numerics}\label{sec:cove-case-study}

\subsection{Training-only path selection}

Let $(z_i,y_i)_{i=1}^n$ be standardized frozen features and labels.  With class means $\mu_c$, global mean $\mu$, and class sizes $n_c$, define
\begin{align}
B_j&:=\sum_c n_c(\mu_{c,j}-\mu_j)^2,\notag\\
W_j&:=\sum_c\sum_{i:y_i=c}(z_{i,j}-\mu_{c,j})^2,\notag\\
r_j&:=\max\Bigl\{\frac{B_j}{W_j+\eps},\eps\Bigr\},
\qquad
\sigma_j:=\frac{r_j}{d^{-1}\sum_{\ell=1}^d r_\ell},
\label{cove:eq:fisher-diagonal}
\end{align}
and $\Sigma_{\mathrm F}:=\diag(\sigma_1,\ldots,\sigma_d)$.  This supervised diagonal follows the classical between/within-class scatter principle \cite{fisher1936multiple}.  Standardization and the Fisher diagonal are refitted within every training fold.

The candidate grid is
\begin{equation}\label{cove:eq:candidate-grid}
\mathcal P:=\mathcal S\times\mathcal C,
\qquad
\mathcal C:=\{0.03,0.1,0.3,1,3\}.
\end{equation}
Following the predictive-reuse principle of cross-validation \cite{stone1974crossvalidatory}, for $R$ repeats of $F_{\mathrm{cv}}$-fold stratified cross-validation every candidate $\vartheta=(s,C)$ uses the same splits.  If $\widehat y_{i,\vartheta}^{(r)}$ is the out-of-fold prediction, set
\begin{equation}\label{cove:eq:crossfit-accuracy}
q_{i,\vartheta}:=\frac1R\sum_{r=1}^R\mathbf 1\{\widehat y_{i,\vartheta}^{(r)}=y_i\},
\qquad
\widehat A_{\vartheta}:=\frac1n\sum_{i=1}^nq_{i,\vartheta}.
\end{equation}
The empirical maximizer is selected, with exact ties broken by smaller sampling standard error, then smaller $C$, then smaller $s$.

\begin{proposition}[Training-only selection]\label{cove:prop:training-only-selection}
For fixed frozen training features, labels, split seeds, and candidate grids, the selected pair is measurable with respect to the labelled training sample and the split seeds.  It does not depend on the test labels.
\end{proposition}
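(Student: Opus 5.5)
The plan is to write the selection rule as a composition of a deterministic tie-breaking map with finitely many measurable statistics of the training data and split seeds. First I fix the finite candidate grid $\mathcal P=\mathcal S\times\mathcal C$ and the seeds. Then the $R$ repeats of $F_{\mathrm{cv}}$-fold stratified splits are deterministic functions of the training labels and the seeds: stratification reads only the training labels, and the seeds fix the permutation inside each stratum. There are finitely many possible splits, so each split is a measurable (indeed simple) function of the training labels. Within each fold the standardization, the class means $\mu_c$, the scatters $B_j,W_j$, the ratios $r_j$ and the normalized Fisher diagonal $\sigma_j$ in \eqref{cove:eq:fisher-diagonal} are built from fold-training features and labels alone. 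Each is a composition of sums, products, quotients with the floor $\eps>0$ (so the denominators never vanish), and the maximum. These are Borel operations, so $\Sigma_s$ is a measurable function of the fold-training data.

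Next I treat the classifier fitted for each candidate $\vartheta=(s,C)$. It is fitted on the fold-training Gram matrix of $k_{\Sigma_s}$, and its predictions on the held-out fold are measurable. For this I will assume, as the statement implicitly fixes, a deterministic solver; equivalently, any measurable selection of minimizers in the fit. Then the indicators $\mathbf 1\{\widehat y^{(r)}_{i,\vartheta}=y_i\}$ are measurable. Here $y_i$ is a training label: every held-out point of a cross-validation fold belongs to the training sample. The same holds for $q_{i,\vartheta}$, for $\widehat A_\vartheta$ in \eqref{cove:eq:crossfit-accuracy}, and for the sampling standard errors, since each is a finite average or a square root of one.

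Selection is then a lexicographic argmax over the finite set $\mathcal P$. The keys are $\widehat A_\vartheta$, then the standard error, then $C$, then $s$. For each fixed $\vartheta_0$, the event that $\vartheta_0$ is selected is a finite Boolean combination of events of the form $\{\widehat A_\vartheta>\widehat A_{\vartheta_0}\}$, $\{\widehat A_\vartheta=\widehat A_{\vartheta_0}\}$ and the corresponding events for the standard errors. Each such event is measurable. The last two keys $(C,s)$ are distinct for distinct candidates, so the rule is a single-valued function. A function into a finite set whose preimages are measurable is measurable.

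Independence from test labels then follows structurally. No quantity in the chain takes a test input or test label as an argument, so the selected pair is a function of the training data and seeds only, and is therefore constant in the test labels. The main obstacle is the solver step: measurability of the fitted predictor and the avoidance of hidden randomness. I would handle it by stipulating a deterministic measurable solver, or by treating any solver randomness as part of the split seeds. The remaining steps are bookkeeping.
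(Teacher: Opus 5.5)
Your proposal is correct and takes the same route as the paper's proof. In both, the splits are deterministic functions of the training labels and seeds. Every within-fold object (scaler, Fisher diagonal, kernels, fitted classifier, validation correctness $q_{i,\vartheta}$) depends only on fold-training data, and the finite maximization with deterministic tie-breaking is then measurable and never reads a test label. Your extra details make explicit what the paper leaves implicit: the $\eps$-floor keeping the denominators positive, the preimage argument for the finite lexicographic argmax, and the stipulation of a deterministic (or seed-absorbed) solver.
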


\subsection{Protocol and predictive results}

All images are mapped to the $512$-dimensional penultimate representation of an ImageNet-pretrained ResNet-18 \cite{he2016resnet}; the feature extractor is not fine-tuned.  Each dataset--seed pair uses $1000$ labelled training examples, $F_{\mathrm{cv}}=3$ folds, $R=5$ repeats, and the complete test split.  The comparison contains ordinary and Fisher-reweighted multinomial logistic, RBF, and Laplace models; identity and full-Fisher Brownian endpoints; and the selected path.  Six development datasets informed the correction of the selector.  STL10, GTSRB, and Food101 were locked before fitting, including their acceptance criteria.

\begin{table}[t]
\centering
\caption{Development mean test accuracy and comparisons over five paired seeds. The conventional column is the best mean among ordinary or Fisher-reweighted RBF and Laplace SVC; the endpoint column is the better mean of identity and full-Fisher Brownian SVC. Deltas are percentage points.}
\label{cove:tab:development-results}
\scriptsize
\setlength{\tabcolsep}{4pt}
\begin{tabular}{lrrrrrr}
\toprule
Dataset & V7 (\%) & Conventional (\%) & $\Delta_{\mathrm{conv}}$ & Endpoint (\%) & $\Delta_{\mathrm{end}}$ & Paired conv. \\
\midrule
CIFAR-10 & 79.848 & 80.022 & -0.174 & 79.924 & -0.076 & 1/5 \\
CIFAR-100 & 45.516 & 45.046 & +0.470 & 45.598 & -0.082 & 5/5 \\
EuroSAT & 91.059 & 90.596 & +0.463 & 91.078 & -0.019 & 5/5 \\
Oxford-IIIT Pet & 88.651 & 88.874 & -0.223 & 88.738 & -0.087 & 0/5 \\
Flowers102 & 80.781 & 80.478 & +0.302 & 80.781 & +0.000 & 5/5 \\
DTD & 59.766 & 59.457 & +0.309 & 59.809 & -0.043 & 4/5 \\
\midrule
Macro average & -- & -- & +0.191 & -- & -0.051 & 20/30 \\
\bottomrule
\end{tabular}
\end{table}

\begin{table}[t]
\centering
\caption{Confirmatory mean test accuracy and comparisons over five paired seeds. The conventional column is the best mean among ordinary or Fisher-reweighted RBF and Laplace SVC; the endpoint column is the better mean of identity and full-Fisher Brownian SVC. Deltas are percentage points.}
\label{cove:tab:confirmatory-results}
\scriptsize
\setlength{\tabcolsep}{4pt}
\begin{tabular}{lrrrrrr}
\toprule
Dataset & V7 (\%) & Conventional (\%) & $\Delta_{\mathrm{conv}}$ & Endpoint (\%) & $\Delta_{\mathrm{end}}$ & Paired conv. \\
\midrule
STL10 & 93.837 & 94.025 & -0.188 & 93.915 & -0.078 & 0/5 \\
GTSRB & 67.219 & 66.670 & +0.549 & 67.211 & +0.008 & 4/5 \\
Food101 & 37.640 & 37.512 & +0.128 & 37.640 & +0.000 & 3/5 \\
\midrule
Macro average & -- & -- & +0.163 & -- & -0.023 & 7/15 \\
\bottomrule
\end{tabular}
\end{table}

The development macro-average difference from the strongest tuned RBF/Laplace control is \DevMeanConvGain{} percentage points.  On the locked suite it is \ConfMeanConvGain{} percentage points.  The predeclared confirmatory requirement was at least $8/15$ paired wins or ties, whereas the observed count is \ConfPairedWins{}/15.  The official confirmatory decision is therefore a failure.  This decision is not altered by the positive macro-average difference.

\begin{figure}[t]
\centering
\begin{tikzpicture}
\begin{axis}[
  xbar,
  width=0.84\linewidth,
  height=0.52\linewidth,
  xlabel={V7 minus tuned kernel control (pp)},
  symbolic y coords={Food101,GTSRB,STL10,DTD,Flowers102,Oxford-IIIT Pet,EuroSAT,CIFAR-100,CIFAR-10},
  ytick={Food101,GTSRB,STL10,DTD,Flowers102,Oxford-IIIT Pet,EuroSAT,CIFAR-100,CIFAR-10},
  y dir=reverse,
  axis x line*=bottom,
  axis y line*=left,
  xmajorgrids=true,
  xmin=-0.30,
  xmax=0.65,
  legend style={at={(0.5,1.02)},anchor=south,legend columns=2,draw=none},
  nodes near coords,
  every node near coord/.append style={font=\scriptsize},
]
\addplot coordinates {(-0.174,CIFAR-10) (+0.470,CIFAR-100) (+0.463,EuroSAT) (-0.223,Oxford-IIIT Pet) (+0.302,Flowers102) (+0.309,DTD)};
\addlegendentry{Development suite}
\addplot coordinates {(-0.188,STL10) (+0.549,GTSRB) (+0.128,Food101)};
\addlegendentry{Locked unseen suite}
\draw[densely dashed] (axis cs:0,{Food101}) --
  (axis cs:0,{CIFAR-10});
\end{axis}
\end{tikzpicture}
\caption{Dataset-level mean differences between V7 and the strongest tuned
ordinary or Fisher-reweighted RBF/Laplace SVC. The six development datasets
informed V7; the three red entries form the locked unseen suite.}
\label{cove:fig:dataset-deltas}
\end{figure}

\subsection{Finite-path certificates and interpretation}

For every selected fit, the complete five-kernel path is evaluated on a stratified anchor.  A repaired certificate is accepted only when the minimum domination residual is at least $-10^{-10}$.  The certificate summary is shown in \Cref{cove:tab:certificate-summary}.

\begin{table}[t]
\centering
\caption{Selection and common-covariance certificate diagnostics. Residuals are minimum domination eigenvalues after the explicitly reported identity repair.}
\label{cove:tab:certificate-summary}
\small
\begin{tabular}{lrrrrrr}
\toprule
Suite & V7 records & $s>0$ & $s=1$ & Failures & Min. residual & Max. repair \\
\midrule
Development & 30 & 29 & 14 & 0 & 9.999e-11 & 0.0012 \\
Locked confirmation & 15 & 15 & 8 & 0 & 9.999e-11 & 6.743e-04 \\
\bottomrule
\end{tabular}
\end{table}

All $45$ selected-path certificates succeed, and $44$ of the $45$ selected covariance strengths are strictly positive.  These facts support the finite geometric construction, not a claim of universal predictive superiority.  The certificate covers only the five chosen path kernels on the empirical anchor.  It does not certify the unrestricted recursive BKL family, and the identity repair in \eqref{cove:eq:identity-repair} is not itself a realizable Brownian mixture.

\section{Discussion, Limitations, and Open Problems}\label{sec:outlook}

\paragraph{Exact conclusions.}
At finite sample size, the unrestricted adaptive BKL ball is a union of RKHS ellipsoids, and the minimum-trace common covariance, its absolutely two-summing formulation, and its covariance-dominated multiplier representation are exact.  The common spectrum gives an exact robust formula for empirical widths and a transductive subspace for simultaneous approximation.  Finite contact is exact but existential.

\paragraph{Conditional conclusions.}
Gaussian reverses require more than covariance domination.  Stable thresholds, integrated profiles, Gram conditioning, repetition, occupancy, ordering, hierarchy, and perturbation margins give sufficient reverse estimates in their stated regimes.  A bounded Gaussian defect means that the covariance certificate is tight; it does not imply a parametric full-class rate.  Orthogonal and high-dimensional spherical samples make this distinction explicit.

\paragraph{Computational conclusions.}
Finite active SDPs give lower certificates.  A verified global separation value gives an upper certificate, and resistance design gives a conservative convex upper certificate with computable gap.  Local witness search can strengthen an active family but does not prove global separation.  In the finite path, all repaired certificates succeed even though the locked predictive gate fails.  Certification and prediction therefore require separate evaluation.

\paragraph{Open problems.}
Natural next steps are: population and out-of-sample versions of the common covariance and common feature space; intrinsic oscillation bounds for unrestricted recursive traces beyond depth two; less restrictive perturbation results at large depth; localized profiles that separate global sign richness from target-dependent rates; and globally verified separation relaxations for the recursive family.  Computationally, full or low-rank supervised metrics, regression and operator-learning losses, and reusable common covariances across tasks require new protocols.  The present eigenspaces are sample-dependent, the contact theorem does not locate its contacts, and the resistance certificate may be conservative.

\appendix
\section{Proofs for Empirical Trace Geometry}\label[appendix]{app:trace}

\subsection{Recursive Diagonal Scale and Empirical Covariance Body}

\begin{proof}[Proof of \eqref{eq:pointwise-scale}, \eqref{eq:holder-scale}, and \eqref{eq:diagonal-scale}]
We first verify the diagonal and pointwise estimates used throughout the paper.  At level one, the linear kernel is $k^{(0)}(x,x')=x^\top x'$, and therefore
\begin{align}
k^{(0)}(x,x)^{1/2}
=\norm{x}_2
=\rho_1(x).
\notag
\end{align}
Assume for some $\ell\ge1$ that
\begin{align}
k^{(\ell-1)}(x,x)^{1/2}
\le\rho_\ell(x).
\notag
\end{align}
Every $u$ in the unit sphere of $H_{k^{(\ell-1)}}$ satisfies, by the RKHS evaluation inequality,
\begin{align}
|u(x)|
\le k^{(\ell-1)}(x,x)^{1/2}
\le\rho_\ell(x).
\notag
\end{align}
For the next integral Brownian kernel, the identity $\kB(t,t)=|t|$ gives
\begin{align}
k^{(\ell)}(x,x)
&=\int |u(x)|\,d\mu_\ell(u)\notag\\
&\le\rho_\ell(x).
\notag
\end{align}
Taking square roots and using the definition of the recursive scale yields
\begin{align}
k^{(\ell)}(x,x)^{1/2}
\le\rho_\ell(x)^{1/2}
=\rho_{\ell+1}(x).
\notag
\end{align}
Induction proves \eqref{eq:diagonal-scale}.  For a function $f$ in a realized top-layer RKHS, the evaluation inequality now gives
\begin{align}
|f(x)|
\le\norm{f}_{H_{k^{(L-1)}}}k^{(L-1)}(x,x)^{1/2}
\le\norm{f}_{H_{k^{(L-1)}}}\rho_L(x).
\notag
\end{align}
Taking the infimum over all admissible ladder realizations proves \eqref{eq:pointwise-scale}.

We next prove the recursive H\"older estimate.  For a kernel $k$, write
\begin{align}
d_k(x,x')^2
:=
k(x,x)+k(x',x')-2k(x,x').
\notag
\end{align}
For the linear kernel,
\begin{align}
d_{k^{(0)}}(x,x')
=\norm{x-x'}_2.
\notag
\end{align}
Assume that
\begin{align}
d_{k^{(\ell-1)}}(x,x')
\le
\norm{x-x'}_2^{\,2^{-(\ell-1)}}.
\notag
\end{align}
Every unit-sphere element $u$ of $H_{k^{(\ell-1)}}$ then satisfies
\begin{align}
|u(x)-u(x')|
&=
\abs{\bigl\langle u,k^{(\ell-1)}(\cdot,x)-k^{(\ell-1)}(\cdot,x')\bigr\rangle_{H_{k^{(\ell-1)}}}}\notag\\
&\le
\norm{k^{(\ell-1)}(\cdot,x)-k^{(\ell-1)}(\cdot,x')}_{H_{k^{(\ell-1)}}}\notag\\
&=
d_{k^{(\ell-1)}}(x,x')\notag\\
&\le
\norm{x-x'}_2^{\,2^{-(\ell-1)}}.
\notag
\end{align}
The Brownian metric identity
\begin{align}
\kB(s,s)+\kB(t,t)-2\kB(s,t)=|s-t|
\notag
\end{align}
therefore gives
\begin{align}
d_{k^{(\ell)}}(x,x')^2
&=
\int |u(x)-u(x')|\,d\mu_\ell(u)\notag\\
&\le
\norm{x-x'}_2^{\,2^{-(\ell-1)}}.
\notag
\end{align}
Taking square roots yields
\begin{align}
d_{k^{(\ell)}}(x,x')
\le
\norm{x-x'}_2^{\,2^{-\ell}}.
\notag
\end{align}
Induction gives the kernel metric bound at every level.  Hence, for a function $f$ in a realized depth-$L$ top RKHS,
\begin{align}
|f(x)-f(x')|
&\le
\norm f_{H_{k^{(L-1)}}}
d_{k^{(L-1)}}(x,x')\notag\\
&\le
\norm f_{H_{k^{(L-1)}}}
\norm{x-x'}_2^{\,2^{-(L-1)}}.
\notag
\end{align}
Taking the infimum over all admissible realizations proves \eqref{eq:holder-scale}.
\end{proof}

\begin{proof}[Proof of \Cref{thm:trace-body}]
We first evaluate one RKHS unit ball.
Fix one top-layer RKHS $H$ with kernel $k$ and sample Gram matrix $K$.  Let
\begin{align}
S:H\to\R^n,
\qquad
Sf=(f(x_1),\ldots,f(x_n))
\notag
\end{align}
be its evaluation operator.  Its adjoint is
\begin{align}
S^*z=\sum_{i=1}^n z_i k(\cdot,x_i),
\notag
\end{align}
and hence $SS^*=K$ as an operator on $\R^n$.  Let $M:=\overline{\operatorname{ran}S^*}$.  The orthogonal complement of $M$ consists exactly of the functions that vanish at all sample points: indeed,
$f\perp\operatorname{ran}S^*$ if and only if
$\langle f,k(\cdot,x_i)\rangle_H=f(x_i)=0$ for every $i$.  Replacing $f$ by its orthogonal projection onto $M$ therefore preserves $Sf$ and cannot increase $\norm f_H$.

Take a spectral decomposition $K=U\Sigma U^\top$, and let $u_1,\ldots,u_q$ be the eigenvectors corresponding to the positive eigenvalues $\sigma_1,\ldots,\sigma_q$.  Define
\begin{align}
e_j:=\sigma_j^{-1/2}S^*u_j\in M.
\notag
\end{align}
Then
\begin{align}
\langle e_j,e_k\rangle_H
=\frac{u_j^\top SS^*u_k}{\sqrt{\sigma_j\sigma_k}}
=\frac{u_j^\top Ku_k}{\sqrt{\sigma_j\sigma_k}}
=\delta_{jk},
\notag
\end{align}
The vectors with zero eigenvalue satisfy $S^*u_j=0$, while the positive-eigenvalue vectors $S^*u_j=\sqrt{\sigma_j}e_j$ span $\operatorname{ran}S^*$.  Thus $(e_j)_{j=1}^q$ is an orthonormal basis of $M$, and
$Se_j=\sqrt{\sigma_j}u_j$.  Consequently, if
$f=\sum_{j=1}^q c_je_j\in M$, then
\begin{align}
Sf=\sum_{j=1}^q c_j\sqrt{\sigma_j}u_j=K^{1/2}\sum_{j=1}^q c_ju_j,
\qquad
\norm f_H^2=\sum_{j=1}^q c_j^2.
\notag
\end{align}
It follows in both directions that
\begin{align}
S\set{f:\norm f_H\le1}=K^{1/2}B_2^n=E(K).
\notag
\end{align}
To identify the minimum norm, write $a=\sum_{j=1}^q\alpha_ju_j\in\ran K$.  The unique vector in $M$ satisfying $Sf=a$ is
\begin{align}
f_a=\sum_{j=1}^q\frac{\alpha_j}{\sqrt{\sigma_j}}e_j,
\notag
\end{align}
and every other preimage differs from $f_a$ by a vector in $M^\perp$.  Orthogonality therefore gives
\begin{align}
\inf_{Sf=a}\norm f_H^2
=\norm{f_a}_H^2
=\sum_{j=1}^q\frac{\alpha_j^2}{\sigma_j}
=a^\top K^\dagger a.
\notag
\end{align}
This proves the pseudoinverse description in \eqref{eq:ellipsoid}.

We next pass from realized ladders to the closed covariance family.
Let $\sK_L^0(\bx)$ denote the set of Gram matrices of actual canonical ladders before taking the closure in \eqref{eq:kernel-family}.  If $K\in\sK_L^0(\bx)$ comes from an actual ladder, every function in that ladder's unit RKHS ball has adaptive complexity at most one.  The preceding RKHS unit-ball calculation therefore gives
\begin{align}
\set{f(\bx):f\in B_L(1)}
\supseteq
\bigcup_{K\in\sK_L^0(\bx)}E(K).
\notag
\end{align}
Conversely, if $f\in B_L(1)$ and $\eta>0$, the definition of $\Chat_L$ provides an actual ladder with top RKHS $H_\eta$, Gram matrix $K_\eta\in\sK_L^0(\bx)$, and
$\norm f_{H_\eta}\le1+\eta$.  Applying the same RKHS unit-ball calculation to $H_\eta$ then gives
$f(\bx)/(1+\eta)\in E(K_\eta)$.  Along any sequence $\eta_j\downarrow0$, these vectors converge to $f(\bx)$, so $f(\bx)$ belongs to the closure of the union of the actual-ladder ellipsoids.  Hence
\begin{align}
\cA_L(\bx)=\cl\bigcup_{K\in\sK_L^0(\bx)}E(K).
\notag
\end{align}

It remains to replace $\sK_L^0$ by its closure.  If $K_j\to K$ in the finite-dimensional symmetric-matrix space, continuity of the positive-semidefinite square-root map gives
$K_j^{1/2}\to K^{1/2}$ in operator norm.  Hence every vector $K^{1/2}u$ with $\norm u_2\le1$ is the limit of $K_j^{1/2}u\in E(K_j)$.  Therefore
\begin{align}
\cl\bigcup_{K\in\sK_L^0(\bx)}E(K)
=\cl\bigcup_{K\in\sK_L(\bx)}E(K),
\notag
\end{align}
which proves \eqref{eq:trace-union}.

We then compute the support function.
For every $K\succeq0$,
\begin{align}
\sup_{a\in E(K)}z^\top a
=\sup_{\norm u_2\le1}z^\top K^{1/2}u
=\norm{K^{1/2}z}_2
=\sqrt{z^\top Kz}.
\notag
\end{align}
A continuous linear functional has the same supremum on a set and on its closure.  Applying this observation to \eqref{eq:trace-union} gives
\begin{align}
h_{L,\bx}(z)=\sup_{K\in\sK_L(\bx)}\sqrt{z^\top Kz},
\notag
\end{align}
and squaring proves \eqref{eq:support-covariance}.

Finally, we restore the radius and average over the Gaussian vector.
By \eqref{eq:ball-scaling}, the closed trace body of $B_L(r)$ is $r\cA_L(\bx)$.  Therefore, for every fixed $G$,
\begin{align}
\sup_{f\in B_L(r)}\sum_{i=1}^nG_i f(x_i)=r h_{L,\bx}(G).
\notag
\end{align}
Divide by $n$ and take expectation with respect to $G\sim\mathcal N(0,I_n)$ to obtain \eqref{eq:empirical-width-support}.
\end{proof}

\section{Proofs for Absolutely Two-Summing Duality}\label[appendix]{app:pi2}

We first treat a finite family.

\begin{lemma}[Finite family and $2$-summing norm]\label{lem:finite-pi2}
Let
\begin{align}
h_{\mathcal F}(z)^2:=\max_{1\le j\le M}z^\top K_jz.
\notag
\end{align}
Then
\begin{equation}\label{eq:finite-pi2-equality}
\tau(K_1,\ldots,K_M)
=\sup_{\sum_rz_rz_r^\top\preceq I}
\sum_r h_{\mathcal F}(z_r)^2.
\end{equation}
\end{lemma}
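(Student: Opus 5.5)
The plan is to prove the two inequalities in \eqref{eq:finite-pi2-equality} separately, using the primal program \eqref{eq:finite-primal} for one direction and the dual program \eqref{eq:finite-dual} for the other. Write $R$ for the supremum on the right-hand side.

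\emph{Upper bound $R\le\tau$.} Let $N$ be feasible for \eqref{eq:finite-primal}, so $N\succeq K_j$ for every $j$. For any $z$ this gives
\[
h_{\mathcal F}(z)^2=\max_j z^\top K_jz\le z^\top Nz .
\]
Now take any finite collection with $\sum_r z_rz_r^\top\preceq I$. Since $N\succeq0$ (it dominates $K_1\succeq0$, or we may take $M\ge1$),
\[
\sum_r h_{\mathcal F}(z_r)^2\le\sum_r z_r^\top Nz_r=\ip{N}{\sum_r z_rz_r^\top}\le\tr N .
\]
Minimising over feasible $N$ gives $R\le\tau(K_1,\ldots,K_M)$.

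\emph{Lower bound $\tau\le R$.} This direction rests on the dual \eqref{eq:finite-dual}. Strict feasibility holds because $N=(\max_j\lambda_{\max}(K_j)+1)I$ is strictly feasible. So there are $Z_1,\ldots,Z_M\succeq0$ with
\[
\sum_j Z_j=I_n,\qquad \tau=\sum_j\ip{Z_j}{K_j}.
\]
Diagonalise each $Z_j=\sum_k w_{jk}w_{jk}^\top$ by its spectral decomposition, scaling each eigenvector by the square root of its eigenvalue. The family $\{w_{jk}\}$ is then finite and satisfies $\sum_{j,k}w_{jk}w_{jk}^\top=I_n$. It follows that
\[
\tau=\sum_{j,k}w_{jk}^\top K_jw_{jk}\le\sum_{j,k}h_{\mathcal F}(w_{jk})^2\le R .
\]

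The only step needing care is the duality itself, namely zero duality gap and attainment of the dual maximum. To avoid depending on the dual, I would prove it directly by a separation argument. Consider the convex compact set
\[
\mathcal D=\set{Z=(Z_j)_j:\ Z_j\succeq0,\ \textstyle\sum_j Z_j=I}
\]
and the Lagrangian $\tr N+\sum_j\ip{Z_j}{K_j-N}$, which is linear in each variable. Sion's minimax theorem, applied with the compact variable $Z$, gives
\[
\inf_N\max_Z=\max_Z\inf_N .
\]
The inner infimum over $N$ is finite only when $\sum_j Z_j=I$, where it equals $\sum_j\ip{Z_j}{K_j}$. The sup–inf on the other side reproduces the primal value, because the maximum over $Z\in\mathcal D$ of $\sum_j\ip{Z_j}{K_j-N}$ equals $\max_j\lambda_{\max}(K_j-N)$. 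Hence $\inf_N\{\tr N+\max_j\lambda_{\max}(K_j-N)\}=\tau$: if $\max_j\lambda_{\max}(K_j-N)=c$, then $N+cI$ is feasible with trace $\tr N+nc$, and one checks this via the shift $N\mapsto N+cI$. I expect this bookkeeping to be routine given strict feasibility.
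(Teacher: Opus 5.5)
Your main argument is correct. The direction $\tau\le R$ coincides with the paper's: Slater-based strong duality with dual attainment, followed by a rank-one splitting of each $Z_j$.

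For $R\le\tau$ you take a different and more economical route. You test an admissible sequence directly against a primal-feasible $N$, using $\ip{N}{I-\sum_r z_rz_r^\top}\ge0$, so this direction is plain weak duality. The paper instead converts the sequence into a dual-feasible point: it assigns each $z_r$ to a maximising index $j(r)$ and places the slack $I-\sum_rz_rz_r^\top$ on one fixed $Z_{j_0}$. It then compares with the dual value and passes to $\tau$ only through strong duality. Your argument is the one the paper itself uses later for the upper bound in \Cref{thm:robust-minimax}.

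The optional Sion sketch, however, is wrong as written.
\begin{itemize}
\item On your set $\mathcal D$ the constraint $\sum_jZ_j=I$ already gives $\sum_j\ip{Z_j}{N}=\tr N$. So the Lagrangian equals $\sum_j\ip{Z_j}{K_j}$ for every $N$, the minimax identity is a tautology, and it never reaches the primal.
\item $\max_{Z\in\mathcal D}\sum_j\ip{Z_j}{K_j-N}$ is not $\max_j\lambda_{\max}(K_j-N)$: for $M=1$ it equals $\tr(K_1-N)$.
\item Your displayed identity lacks a factor $n$. Without it, $N=-sI$ sends $\tr N+\max_j\lambda_{\max}(K_j-N)$ to $-\infty$ when $n\ge2$.
\end{itemize}

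The repair is to maximise over the compact convex set $\mathcal D_n:=\set{(Z_j)_j:\ Z_j\succeq0,\ \sum_j\tr Z_j=n}$. On this set, $\max\sum_j\ip{Z_j}{K_j-N}=n\max_j\lambda_{\max}(K_j-N)$. Your shift $N\mapsto N+cI$ then shows $\inf_N\set{\tr N+n\max_j\lambda_{\max}(K_j-N)}=\tau(K_1,\ldots,K_M)$. On the other side, $\inf_N$ of the Lagrangian is $-\infty$ unless $\sum_jZ_j=I$, which is compatible with the trace constraint. Sion's theorem therefore returns exactly the dual value, attained by compactness.

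Your main line already relies on the standard Slater theorem, exactly as the paper does. This repair is needed only if you want the duality to be self-contained.
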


\begin{proof}
We first derive the finite SDP duality used in the argument.  With multipliers $Z_j\succeq0$, the Lagrangian of \eqref{eq:finite-primal} is
\begin{align}
\mathcal L(N,Z_1,\ldots,Z_M)
&=\tr N-\sum_{j=1}^M\ip{Z_j}{N-K_j}\notag\\
&=\ip{I-\sum_{j=1}^MZ_j}{N}+\sum_{j=1}^M\ip{Z_j}{K_j}.
\notag
\end{align}
The infimum over symmetric $N$ is finite exactly when $\sum_{j=1}^MZ_j=I$.  This gives the dual program \eqref{eq:finite-dual}.  Slater's condition holds because $N=tI$ is strictly feasible whenever $t>\max_j\lambda_{\max}(K_j)$, so the finite primal and dual values coincide.

Write $Q$ for the supremum on the right side of \eqref{eq:finite-pi2-equality}.  We first show that the dual value is at most $Q$.  Let $(Z_j)_{j=1}^M$ be dual feasible.  By the spectral theorem, each positive semidefinite matrix has a finite rank-one decomposition
\begin{align}
Z_j=\sum_{r=1}^{r_j}z_{jr}z_{jr}^\top.
\notag
\end{align}
Dual feasibility gives
\begin{align}
\sum_{j=1}^M\sum_{r=1}^{r_j}z_{jr}z_{jr}^\top
=\sum_{j=1}^MZ_j
=I.
\notag
\end{align}
Thus the combined finite sequence $(z_{jr})_{j,r}$ is admissible in the definition of $Q$.  Moreover,
\begin{align*}
\sum_{j=1}^M\ip{Z_j}{K_j}
&=\sum_{j=1}^M\sum_{r=1}^{r_j}\ip{z_{jr}z_{jr}^\top}{K_j}\\
&=\sum_{j=1}^M\sum_{r=1}^{r_j}z_{jr}^\top K_jz_{jr}\\
&\le\sum_{j=1}^M\sum_{r=1}^{r_j}h_{\mathcal F}(z_{jr})^2\\
&\le Q.
\end{align*}
Taking the supremum over dual-feasible collections proves that the dual value is at most $Q$.

We now prove the reverse inequality.  Take a finite sequence $z_1,\ldots,z_m$ satisfying
\begin{equation*}
\sum_{r=1}^m z_rz_r^\top\preceq I.
\end{equation*}
Since the kernel family is finite, for every $r$ there exists an index $j(r)$ such that
\begin{align}
h_{\mathcal F}(z_r)^2=z_r^\top K_{j(r)}z_r.
\notag
\end{align}
Set
\begin{align}
Z_j^0:=\sum_{r:j(r)=j}z_rz_r^\top,
\qquad
S:=I-\sum_{j=1}^MZ_j^0\succeq0.
\notag
\end{align}
Choose any fixed index $j_0\in[M]$ and define
$Z_{j_0}:=Z_{j_0}^0+S$ and $Z_j:=Z_j^0$ for $j\ne j_0$.  Then every $Z_j\succeq0$ and $\sum_jZ_j=I$, so the collection is dual feasible.  Since $S\succeq0$ and $K_{j_0}\succeq0$, $\ip{S}{K_{j_0}}\ge0$.  Hence
\begin{align*}
\sum_{j=1}^M\ip{Z_j}{K_j}
&\ge\sum_{j=1}^M\ip{Z_j^0}{K_j}\\
&=\sum_{r=1}^m z_r^\top K_{j(r)}z_r\\
&=\sum_{r=1}^m h_{\mathcal F}(z_r)^2.
\end{align*}
Taking the supremum over all admissible sequences shows that $Q$ is at most the dual value.  The two inequalities, together with the strong duality established at the beginning of the proof, give \eqref{eq:finite-pi2-equality}.
\end{proof}

\begin{proof}[Proof of \Cref{thm:pi2}]
Put
\begin{align}
Q_L:=\pi_2(T_{L,\bx})^2
=\sup_{\sum_rz_rz_r^\top\preceq I}
\sum_rh_{L,\bx}(z_r)^2,
\notag
\end{align}
where the equality follows from \eqref{eq:T-norm} and \eqref{eq:pi2-def}.  Also set
\begin{align}
t_*:=\sup_{\mathcal F\subset\sK_L(\bx),\ |\mathcal F|<\infty}\tau(\mathcal F).
\notag
\end{align}
We prove $Q_L=t_*$.

First fix a finite subfamily $\mathcal F\subset\sK_L(\bx)$.  Its support function $h_{\mathcal F}$ satisfies $h_{\mathcal F}(z)\le h_{L,\bx}(z)$ for every $z$.  Therefore \Cref{lem:finite-pi2} gives
\begin{align}
\tau(\mathcal F)
=\sup_{\sum_rz_rz_r^\top\preceq I}\sum_rh_{\mathcal F}(z_r)^2
\le Q_L.
\notag
\end{align}
Taking the supremum over finite $\mathcal F$ yields $t_*\le Q_L$.

For the reverse inequality, fix an admissible finite sequence $z_1,\ldots,z_m$ with $m\ge1$ and fix $\eta>0$.  By \eqref{eq:support-covariance}, for each $r$ there is a kernel $K_r\in\sK_L(\bx)$ such that
\begin{align}
z_r^\top K_rz_r
\ge h_{L,\bx}(z_r)^2-\frac{\eta}{m}.
\notag
\end{align}
Let $\mathcal F:=\set{K_1,\ldots,K_m}$.  Then
\begin{align*}
\sum_{r=1}^m h_{L,\bx}(z_r)^2
&\le\eta+\sum_{r=1}^m z_r^\top K_rz_r\\
&\le\eta+\sum_{r=1}^m h_{\mathcal F}(z_r)^2\\
&\le\eta+\tau(\mathcal F)\\
&\le\eta+t_*.
\end{align*}
The third inequality is \Cref{lem:finite-pi2}.  Taking the supremum over admissible sequences and then letting $\eta\downarrow0$ gives $Q_L\le t_*$.  Hence $Q_L=t_*$.

It remains to identify $t_*$ with the full compact-family value $\tau_L(\bx)$.  Every matrix feasible for the full family is feasible for each finite subfamily, so $t_*\le\tau_L(\bx)$.  Fix $\eta>0$ and define
\begin{align}
\mathcal C
:=\set{N\succeq0:\tr N\le t_*+\eta}.
\notag
\end{align}
This set is compact: it is closed, and for $N\succeq0$ one has $\norm{N}_{\mathrm F}\le\tr N$.  For each $K\in\sK_L(\bx)$, let
\begin{align}
\mathcal C_K
:=\set{N\in\mathcal C:N\succeq K}.
\notag
\end{align}
Every $\mathcal C_K$ is closed in $\mathcal C$.  If $\mathcal F\subset\sK_L(\bx)$ is finite, then $\tau(\mathcal F)\le t_*$, so there is a matrix in $\bigcap_{K\in\mathcal F}\mathcal C_K$.  Thus the family $(\mathcal C_K)_{K\in\sK_L(\bx)}$ has the finite-intersection property.  Compactness of $\mathcal C$ gives
\begin{align}
\bigcap_{K\in\sK_L(\bx)}\mathcal C_K
\ne\varnothing.
\notag
\end{align}
Hence $\tau_L(\bx)\le t_*+\eta$.  Letting $\eta\downarrow0$ proves $\tau_L(\bx)=t_*=Q_L$.
\end{proof}

\section{Proofs for Robust Multiplier Geometry and Gaussian Defect}\label[appendix]{app:minimax}

\subsection{Proof of the Robust Multiplier Minimax Identity}

\begin{proof}[Proof of \Cref{thm:robust-minimax}]
Write $h:=h_{L,\bx}$ and $\tau:=\tau_L(\bx)$.

We first establish the upper bound.  Let $N\succeq0$ be a feasible covariance envelope, so that
\begin{align}
N\succeq K
\qquad
\text{for every }K\in\sK_L(\bx).
\notag
\end{align}
By \eqref{eq:support-covariance}, for every $z\in\R^n$,
\begin{align}
h(z)^2
=
\sup_{K\in\sK_L(\bx)}z^\top Kz
\le
z^\top Nz.
\notag
\end{align}
Let $\nu\in\mathfrak P_{\preceq I_n}$ and put
\begin{align}
\Sigma_\nu:=\int zz^\top\,d\nu(z).
\notag
\end{align}
The second-moment assumption in \eqref{eq:subisotropic-laws} guarantees that all entries of $\Sigma_\nu$ are finite.  Integrating the preceding pointwise inequality gives
\begin{align*}
\int h(z)^2\,d\nu(z)
&\le
\int z^\top Nz\,d\nu(z)\\
&=
\int \tr(Nzz^\top)\,d\nu(z)\\
&=
\tr\!\bigl(N\int zz^\top\,d\nu(z)\bigr)\\
&=
\tr(N\Sigma_\nu).
\end{align*}
Because $0\preceq\Sigma_\nu\preceq I_n$ and $N\succeq0$,
\begin{align}
\tr(N\Sigma_\nu)
\le
\tr N.
\notag
\end{align}
Indeed,
\begin{align}
\tr N-\tr(N\Sigma_\nu)
=
\tr\!\bigl(N^{1/2}(I_n-\Sigma_\nu)N^{1/2}\bigr)
\ge0.
\notag
\end{align}
Therefore
\begin{align}
\int h(z)^2\,d\nu(z)
\le
\tr N.
\notag
\end{align}
Taking the infimum over feasible $N$ and then the supremum over $\nu$ yields
\begin{equation}\label{eq:minimax-proof-upper}
\sup_{\nu\in\mathfrak P_{\preceq I_n}}
\int h(z)^2\,d\nu(z)
\le
\tau.
\end{equation}

We next show that every admissible $2$-summing sequence produces a covariance-dominated probability law.
Let $z_1,\ldots,z_m\in\R^n$ satisfy
\begin{align}
\sum_{j=1}^m z_jz_j^\top\preceq I_n.
\notag
\end{align}
Define
\begin{equation}\label{eq:discrete-multiplier-law}
\nu_z
:=
\frac1{2m}
\sum_{j=1}^m
\bigl(
\delta_{\sqrt m z_j}
+
\delta_{-\sqrt m z_j}
\bigr).
\end{equation}
This is a symmetric probability measure.  Its covariance is
\begin{align*}
\int zz^\top\,d\nu_z(z)
&=
\frac1{2m}
\sum_{j=1}^m
\bigl(
(\sqrt m z_j)(\sqrt m z_j)^\top
+
(-\sqrt m z_j)(-\sqrt m z_j)^\top
\bigr)\\
&=
\sum_{j=1}^m z_jz_j^\top\\
&\preceq I_n.
\end{align*}
Thus $\nu_z\in\mathfrak P_{\preceq I_n}$.  Since $h$ is absolutely homogeneous,
\begin{align*}
\int h(z)^2\,d\nu_z(z)
&=
\frac1{2m}
\sum_{j=1}^m
\bigl(
h(\sqrt m z_j)^2+h(-\sqrt m z_j)^2
\bigr)\\
&=
\frac1{2m}
\sum_{j=1}^m
\bigl(
mh(z_j)^2+mh(z_j)^2
\bigr)\\
&=
\sum_{j=1}^m h(z_j)^2.
\end{align*}

We then use the exact $2$-summing identity.
By \eqref{eq:T-norm},
\begin{align}
h(z_j)=\norm{T_{L,\bx}z_j}_{X_{L,\bx}^*}.
\notag
\end{align}
Therefore \Cref{thm:pi2} and \eqref{eq:pi2-def} give
\begin{align}
\tau
=
\sup_{m\ge1}
\sup_{\sum_jz_jz_j^\top\preceq I_n}
\sum_{j=1}^m h(z_j)^2.
\notag
\end{align}
Every value in this supremum is realized by a law of the form \eqref{eq:discrete-multiplier-law}.  Hence
\begin{equation}\label{eq:minimax-proof-lower}
\tau
\le
\sup_{\nu\in\mathfrak P_{\preceq I_n}}
\int h(z)^2\,d\nu(z).
\end{equation}
Combining \eqref{eq:minimax-proof-upper} and \eqref{eq:minimax-proof-lower} proves \eqref{eq:robust-minimax}.  Taking square roots and multiplying by $r/n$ proves \eqref{eq:robust-minimax-scaled}.

\end{proof}

\begin{proof}[Proof of \Cref{ex:coordinate-kernels}]
For every $z\in\R^n$,
\begin{align}
\sup_{K\in\mathscr K}z^\top Kz
=
\max_{1\le i\le n}z_i^2,
\notag
\end{align}
so $h(z)=\norm z_\infty$.  If $N$ dominates every $e_ie_i^\top$, then
\begin{align}
N_{ii}=e_i^\top Ne_i\ge1
\qquad
\text{for every }i.
\notag
\end{align}
Thus $\tr N\ge n$.  The choice $N=I_n$ is feasible, so $\tau(\mathscr K)=n$.

It remains to prove \eqref{eq:coordinate-gaussian-width}.  Let $\lambda>0$.  Since
\begin{align}
e^{\lambda\max_i|G_i|}
\le
\sum_{i=1}^n e^{\lambda|G_i|},
\notag
\end{align}
Jensen's inequality for the logarithm gives
\begin{align*}
\E\max_i|G_i|
&=
\frac1\lambda
\E\log e^{\lambda\max_i|G_i|}\\
&\le
\frac1\lambda
\log\E e^{\lambda\max_i|G_i|}\\
&\le
\frac1\lambda
\log\bigl(
\sum_{i=1}^n\E e^{\lambda|G_i|}
\bigr).
\end{align*}
For a standard Gaussian variable $g$,
\begin{align}
e^{\lambda|g|}
\le
e^{\lambda g}+e^{-\lambda g},
\notag
\end{align}
so
\begin{align}
\E e^{\lambda|g|}
\le
2e^{\lambda^2/2}.
\notag
\end{align}
Consequently,
\begin{align}
\E\max_i|G_i|
\le
\frac{\log(2n)}\lambda+\frac\lambda2.
\notag
\end{align}
Choosing $\lambda=\sqrt{2\log(2n)}$ gives \eqref{eq:coordinate-gaussian-width}.
\end{proof}

\subsection{Proof of the Gaussian Cotype Reverse}

\begin{lemma}[First and second moments of a Gaussian seminorm]\label{lem:gaussian-moment-comparison}
Let $p:\R^n\to[0,\infty)$ be a continuous seminorm and let $G\sim\mathcal N(0,I_n)$.  Then
\begin{equation}\label{eq:seminorm-moment-comparison}
\bigl(\E p(G)^2\bigr)^{1/2}
\le
\sqrt{1+\frac\pi2}\,\E p(G).
\end{equation}
\end{lemma}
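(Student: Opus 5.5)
We reduce the claim to a one-dimensional concentration statement about the Lipschitz function $p$. Set $m:=\E p(G)$ and $V:=\operatorname{Var}p(G)$, so that $\E p(G)^2=m^2+V$. The target inequality is then equivalent to $V\le\frac\pi2 m^2$. If $m=0$, then $p(G)=0$ almost surely, hence $p\equiv0$ by continuity, and there is nothing to prove. We therefore assume $m>0$.

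\textbf{Step 1: variance in terms of the Lipschitz constant.} Let $\sigma:=\sup_{\norm z_2\le1}p(z)$ be the Lipschitz constant of $p$ with respect to the Euclidean norm. It is finite because $p$ is a continuous seminorm on a finite-dimensional space. The Gaussian Poincar\'e inequality gives $\operatorname{Var}f(G)\le\E\norm{\nabla f(G)}_2^2$ for Lipschitz $f$, and $\norm{\nabla p}_2\le\sigma$ almost everywhere. Hence
\begin{equation*}
V\le\sigma^2.
\end{equation*}

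\textbf{Step 2: lower bound on the mean.} Pick a unit vector $u$ with $p(u)=\sigma$; the supremum is attained by compactness. By Hahn--Banach there is a linear functional $\ell$ with $\ell\le p$ and $\ell(u)=p(u)=\sigma$. Applying the same bound to $-z$ shows $|\ell(z)|\le p(z)$ for all $z$. Since $\ell(z)=\langle w,z\rangle$ with $\norm w_2\ge\ell(u)=\sigma$, we get
\begin{equation*}
m=\E p(G)\ge\E\abs{\langle w,G\rangle}=\norm w_2\sqrt{2/\pi}\ge\sigma\sqrt{2/\pi}.
\end{equation*}
Thus $\sigma^2\le\frac\pi2 m^2$.

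\textbf{Step 3: combine.} Steps 1 and 2 give $V\le\sigma^2\le\frac\pi2 m^2$, and therefore $\E p(G)^2\le(1+\frac\pi2)m^2$, which is the claim. The only ingredient not proved in the paper is the Gaussian Poincar\'e inequality for Lipschitz functions, which is standard (cf.\ \cite{ledoux1991probability}); this is where the main justification lies. If one wants to avoid differentiability issues, one can apply Poincar\'e to a smooth mollification of $p$ with the same Lipschitz bound and pass to the limit by dominated convergence. Everything else is elementary, and the constant $\kappa_{\mathrm G}$ arises exactly as $1+\pi/2$ from $\E|g|=\sqrt{2/\pi}$.
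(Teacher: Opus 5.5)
Your proposal is correct and follows essentially the same route as the paper: the Gaussian Poincar\'e inequality gives $\operatorname{Var}p(G)\le\sigma^2$, and a Hahn--Banach functional $w$ with $\norm{w}_2\ge\sigma$ and $\abs{\langle w,z\rangle}\le p(z)$ gives $\E p(G)\ge\sqrt{2/\pi}\,\sigma$; combining the two yields the constant $1+\pi/2$. The only difference is cosmetic: the paper gets the functional from the polar set $C$ and the representation $p(z)=\max_{a\in C}a^\top z$, whereas you extend a functional directly from the span of a maximizing unit vector.
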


\begin{proof}
Set
\begin{align}
\sigma
:=
\sup_{\norm z_2\le1}p(z).
\notag
\end{align}
If $\sigma=0$, then $p=0$ and the claim is immediate.  Assume $\sigma>0$.

We first identify the Lipschitz constant.
For $x,y\in\R^n$, the triangle inequality for $p$ gives
\begin{align}
p(x)-p(y)\le p(x-y),
\qquad
p(y)-p(x)\le p(x-y).
\notag
\end{align}
Hence
\begin{align}
|p(x)-p(y)|
\le
p(x-y)
\le
\sigma\norm{x-y}_2.
\notag
\end{align}
Thus $p$ is $\sigma$-Lipschitz.

We next establish the variance bound.
The Gaussian Poincar\'e inequality states that every locally Lipschitz function $F$ with square-integrable gradient satisfies
\begin{align}
\operatorname{Var}(F(G))
\le
\E\norm{\nabla F(G)}_2^2.
\notag
\end{align}
A Lipschitz function is differentiable almost everywhere, and the gradient norm is bounded by its Lipschitz constant at every differentiability point.  Applying the inequality to $F=p$ therefore gives
\begin{equation}\label{eq:seminorm-poincare}
\operatorname{Var}(p(G))
\le
\sigma^2.
\end{equation}
We use the Gaussian Poincar\'e inequality only in this standard finite-dimensional form; see, for example, \cite{ledoux1991probability}.

We then lower-bound the first moment by the Lipschitz constant.
Define the polar set
\begin{align}
C:=\set{a\in\R^n:a^\top z\le p(z)\text{ for every }z\in\R^n}.
\notag
\end{align}
The set $C$ is closed and symmetric.  Moreover, for $a\in C$ and $\norm z_2\le1$,
\begin{align}
a^\top z\le p(z)\le\sigma,
\notag
\end{align}
so $\norm a_2\le\sigma$.  Hence $C$ is compact.  The finite-dimensional Hahn--Banach theorem gives the support representation
\begin{align}
p(z)=\max_{a\in C}a^\top z
\qquad
\text{for every }z\in\R^n.
\notag
\end{align}
Consequently,
\begin{align*}
\max_{a\in C}\norm a_2
&=
\max_{a\in C}\sup_{\norm z_2\le1}a^\top z\\
&=
\sup_{\norm z_2\le1}\max_{a\in C}a^\top z\\
&=
\sup_{\norm z_2\le1}p(z)\\
&=
\sigma.
\end{align*}
Choose $a\in C$ with $\norm a_2=\sigma$.  Symmetry of $C$ gives $p(z)\ge|a^\top z|$ for every $z$.  Therefore
\begin{equation}\label{eq:seminorm-first-lower}
\E p(G)
\ge
\E|a^\top G|
=
\sqrt{\frac2\pi}\norm a_2
=
\sqrt{\frac2\pi}\sigma.
\end{equation}

Finally, we combine the two moment estimates.
Using \eqref{eq:seminorm-poincare} and \eqref{eq:seminorm-first-lower},
\begin{align*}
\E p(G)^2
&=
\bigl(\E p(G)\bigr)^2
+
\operatorname{Var}(p(G))\\
&\le
\bigl(\E p(G)\bigr)^2+\sigma^2\\
&\le
\bigl(1+\frac\pi2\bigr)
\bigl(\E p(G)\bigr)^2.
\end{align*}
Taking square roots proves \eqref{eq:seminorm-moment-comparison}.
\end{proof}

\begin{proof}[Proof of \Cref{thm:cotype-reverse}]
If $T_{L,\bx}=0$, then \Cref{thm:pi2} gives $\tau_L(\bx)=0$, and all stated inequalities are immediate.  Assume $T_{L,\bx}\ne0$ and abbreviate
\begin{align}
T:=T_{L,\bx},
\qquad
h:=h_{L,\bx},
\qquad
C:=C_{2,L}^{\mathrm g}(\bx).
\notag
\end{align}

We begin with an arbitrary admissible $2$-summing sequence.
Let $z_1,\ldots,z_m\in\R^n$ satisfy
\begin{align}
\sum_{j=1}^m z_jz_j^\top\preceq I_n.
\notag
\end{align}
Define $A:\R^m\to\R^n$ by
\begin{align}
Ae_j=z_j,
\qquad j=1,\ldots,m.
\notag
\end{align}
Then
\begin{align}
AA^\top
=
\sum_{j=1}^m z_jz_j^\top
\preceq I_n.
\notag
\end{align}
Apply \eqref{eq:gaussian-cotype-def} to $y_j:=Tz_j$.  By \eqref{eq:T-norm},
\begin{align}
\bigl(\sum_{j=1}^m h(z_j)^2\bigr)^{1/2}
&=
\bigl(\sum_{j=1}^m\norm{Tz_j}_{X_{L,\bx}^*}^2\bigr)^{1/2}\notag\\
&\le
C
\bigl(
\E_\gamma
\norm{\sum_{j=1}^m\gamma_jTz_j}_{X_{L,\bx}^*}^2
\bigr)^{1/2}\notag\\
&=
C
\bigl(
\E_\gamma h(A\gamma)^2
\bigr)^{1/2}.
\label{eq:cotype-sequence-bound}
\end{align}

We next compare a contracted Gaussian with a standard Gaussian.
Set
\begin{align}
B:=(I_n-AA^\top)^{1/2}.
\notag
\end{align}
Let $\gamma'\sim\mathcal N(0,I_n)$ be independent of $\gamma$.  The vector
\begin{align}
A\gamma+B\gamma'
\notag
\end{align}
is centered Gaussian with covariance
\begin{align}
AA^\top+BB^\top
=
AA^\top+I_n-AA^\top
=I_n.
\notag
\end{align}
Therefore
\begin{align}
A\gamma+B\gamma'
\sim\mathcal N(0,I_n).
\notag
\end{align}
The map $z\mapsto h(z)^2$ is convex because $h$ is a seminorm and the scalar map $u\mapsto u^2$ is convex and nondecreasing on $[0,\infty)$.  Conditional on $\gamma$,
\begin{align}
\E_{\gamma'}[A\gamma+B\gamma'\mid\gamma]
=A\gamma.
\notag
\end{align}
Jensen's inequality therefore gives
\begin{align}
h(A\gamma)^2
\le
\E_{\gamma'}h(A\gamma+B\gamma')^2.
\notag
\end{align}
Taking expectation over $\gamma$ yields
\begin{equation}\label{eq:contracted-gaussian}
\E_\gamma h(A\gamma)^2
\le
\E_Gh(G)^2
=
W_{G,2}(L,\bx)^2.
\end{equation}

We then take the $2$-summing supremum.
Substituting \eqref{eq:contracted-gaussian} into \eqref{eq:cotype-sequence-bound} gives
\begin{align}
\bigl(\sum_{j=1}^m h(z_j)^2\bigr)^{1/2}
\le
C W_{G,2}(L,\bx).
\notag
\end{align}
Taking the supremum over all admissible sequences and using \Cref{thm:pi2} proves
\begin{align}
\sqrt{\tau_L(\bx)}
\le
C_{2,L}^{\mathrm g}(\bx)W_{G,2}(L,\bx),
\notag
\end{align}
which is \eqref{eq:cotype-rms-reverse}.

Finally, we pass from the second Gaussian moment to the first.
Apply \Cref{lem:gaussian-moment-comparison} to the seminorm $h$.  This gives \eqref{eq:gaussian-moment-comparison}.  Combining it with \eqref{eq:cotype-rms-reverse} yields
\begin{align}
W_{G,1}(L,\bx)
\ge
\frac{\sqrt{\tau_L(\bx)}}
{\kappa_{\mathrm G}C_{2,L}^{\mathrm g}(\bx)}.
\notag
\end{align}
Multiplying by $r/n$ and using \eqref{eq:complexity-WG1} gives the lower bound in \eqref{eq:cotype-complexity-sandwich}.  The upper bound is \Cref{thm:profile-complexity}.
\end{proof}

\begin{proof}[Proof of \Cref{cor:gaussian-defect}]
If $\tau_L(\bx)=0$, then $h_{L,\bx}=0$ by \eqref{eq:support-covariance}, and the convention $\chi_L(\bx)=1$ gives \eqref{eq:defect-range}.  Assume $\tau_L(\bx)>0$.

The pointwise bound
\begin{align}
h_{L,\bx}(G)^2
\le
G^\top N_*G
\notag
\end{align}
for an optimal common covariance $N_*$ gives
\begin{align}
W_{G,2}(L,\bx)^2
\le
\E[G^\top N_*G]
=
\tr N_*
=
\tau_L(\bx).
\notag
\end{align}
Thus $\chi_L(\bx)\ge1$.  The upper bound in \eqref{eq:defect-range} is \eqref{eq:cotype-rms-reverse}.  Finally, substitute \eqref{eq:robust-minimax} into the numerator of \eqref{eq:gaussian-defect}; this gives \eqref{eq:defect-minimax-ratio}.
\end{proof}

\section{Proofs for Brownian Recursive Structure}\label[appendix]{app:brownian}

\begin{proof}[Proof of \Cref{thm:dirac-reduction}]
Let $\sS_{L-1}^0(\bx)$ denote the traces of actual unit-sphere atoms before taking the closure that defines $\sS_{L-1}(\bx)$.  For a fixed previous-level RKHS and a probability measure $\mu$ on its unit sphere, the empirical top kernel is
\begin{align}
K_\mu=\int\mathbf B(a(u))\,d\mu(u),
\qquad
a(u)=(u(x_1),\ldots,u(x_n))\in\sS_{L-1}^0(\bx).
\notag
\end{align}
Therefore, for every $z\in\R^n$,
\begin{align}
z^\top K_\mu z
=\int z^\top\mathbf B(a(u))z\,d\mu(u)
\le\sup_{a\in\sS_{L-1}^0(\bx)}z^\top\mathbf B(a)z.
\notag
\end{align}
Conversely, every actual unit atom $u$ generates the admissible Dirac measure $\delta_u$, whose Gram matrix is $\mathbf B(a(u))$.  Thus the supremum over all top kernels equals the supremum over $a\in\sS_{L-1}^0(\bx)$.  The map $a\mapsto\mathbf B(a)$ is continuous entrywise, so taking closures does not change this supremum and proves \eqref{eq:dirac-support}.

For domination, if $N\succeq\mathbf B(a)$ for every $a\in\sS_{L-1}(\bx)$, then integration gives $N\succeq K_\mu$ for every probability mixture and closure gives domination of every $K\in\sK_L(\bx)$.  Conversely, if $N$ dominates every $K\in\sK_L(\bx)$, it dominates every actual Dirac matrix; continuity and closedness of the positive-semidefinite cone then extend the domination to all $a\in\sS_{L-1}(\bx)$.  This proves \eqref{eq:dirac-domination}.
\end{proof}

\begin{proof}[Proof of \Cref{prop:threshold-certificate}]
We first verify the signed layer-cake identity underlying \eqref{eq:threshold-matrix}.  For $s,t\in\R$,
\begin{align}
\kB(s,t)
=\int_0^\infty
\bigl[
\mathbf1_{\{s\ge u\}}\mathbf1_{\{t\ge u\}}
+\mathbf1_{\{s\le-u\}}\mathbf1_{\{t\le-u\}}
\bigr]du.
\notag
\end{align}
If $s$ and $t$ have opposite signs, both products vanish for every $u>0$, while $|s-t|=|s|+|t|$, so both sides are zero.  If $s,t\ge0$, only the first product can be nonzero, and it equals one exactly for $0\le u\le\min\{s,t\}$.  The integral is therefore $\min\{s,t\}=(s+t-|s-t|)/2=\kB(s,t)$.  If $s,t\le0$, only the second product can be nonzero, and its integral is $\min\{|s|,|t|\}=\kB(s,t)$.  Applying this scalar identity to $(a_i,a_j)$ for every pair of indices and collecting the indicators gives \eqref{eq:threshold-matrix}.

By assumption, for almost every $t$ and every admissible $a$,
\begin{align}
v_t^+(a)v_t^+(a)^\top+v_t^-(a)v_t^-(a)^\top\preceq2D_t.
\notag
\end{align}
Integrating and using \eqref{eq:threshold-matrix} gives
\begin{align}
\mathbf B(a)\preceq2\int_0^\infty D_t\,dt.
\notag
\end{align}
The right side is therefore feasible in the Dirac formulation \eqref{eq:dirac-domination}; taking traces proves the claim.
\end{proof}

\subsection{Proof of the Recursive Threshold-Factorization Theorem}\label{app:recursive-threshold}

\begin{proof}[Proof of \Cref{thm:recursive-threshold}]
We prove the upper and lower inequalities in \eqref{eq:recursive-sandwich} separately.  Throughout the proof, set
\begin{equation}\label{eq:recursive-pi}
p_i:=d_{L,i}=\rho_{L-1}(x_i)
=\norm{x_i}_2^{\,2^{-(L-2)}},
\qquad i\in[n].
\end{equation}
By the previous-layer pointwise estimate, every $a=(a_1,\ldots,a_n)\in\sS_{L-1}(\bx)$ satisfies
\begin{equation}\label{eq:recursive-pointwise-a}
|a_i|\le p_i,
\qquad i\in[n].
\end{equation}
Also, by definition,
\begin{equation}\label{eq:recursive-S-layercake}
S_L(\bx)=\sum_{i=1}^n p_i.
\end{equation}

We first construct one covariance for every threshold set from a decomposition dictionary.
Fix an arbitrary decomposition dictionary $\mathscr D$ for $\mathcal F_{L-1}(\bx)$ and abbreviate
\begin{align}
s:=s(\mathscr D),
\qquad
\Delta:=\Delta(\mathscr D).
\notag
\end{align}
Define
\begin{equation}\label{eq:HD-def}
H_{\mathscr D}
:=s\sum_{D\in\mathscr D}\bone_D\bone_D^\top.
\end{equation}
Each summand in \eqref{eq:HD-def} is positive semidefinite, hence $H_{\mathscr D}\succeq0$.

Let $A\in\mathcal F_{L-1}(\bx)$.  By the dictionary property, there are pairwise disjoint $D_1,\ldots,D_m\in\mathscr D$ with $m\le s$ and
\begin{equation}\label{eq:A-dict-vector}
\bone_A=\sum_{j=1}^m\bone_{D_j}.
\end{equation}
If $A=\varnothing$, then $\bone_A\bone_A^\top=0\preceq H_{\mathscr D}$ because $H_{\mathscr D}\succeq0$.  Assume $A\ne\varnothing$.  For arbitrary $z\in\R^n$, \eqref{eq:A-dict-vector} gives
\begin{align}
z^\top\bone_A\bone_A^\top z
&=(z^\top\bone_A)^2
\nonumber\\
&=\bigl(\sum_{j=1}^m z^\top\bone_{D_j}\bigr)^2
\nonumber\\
&\le m\sum_{j=1}^m(z^\top\bone_{D_j})^2
\label{eq:dict-CS}\\
&\le s\sum_{D\in\mathscr D}(z^\top\bone_D)^2
\nonumber\\
&=z^\top H_{\mathscr D}z.
\label{eq:dict-quad}
\end{align}
The inequality in \eqref{eq:dict-CS} is Cauchy--Schwarz. The next step uses $m\le s$ and enlarges the nonnegative sum from the selected sets $D_j$ to all $D\in\mathscr D$. Since \eqref{eq:dict-quad} holds for every $z\in\R^n$,
\begin{equation}\label{eq:dict-rank-one-domination}
\bone_A\bone_A^\top\preceq H_{\mathscr D}
\qquad
\text{for every }A\in\mathcal F_{L-1}(\bx).
\end{equation}

We next localize the dictionary covariance to the coordinates that can survive level $t$.
For $t\ge0$, define
\begin{equation}\label{eq:I-t-def}
I_t:=\set{i\in[n]:p_i\ge t},
\qquad
P_t:=\diag(\bone_{I_t}).
\end{equation}
The matrix $P_t$ is an orthogonal coordinate projection.  Define
\begin{equation}\label{eq:Ht-def}
H_t:=P_tH_{\mathscr D}P_t.
\end{equation}

Fix $a\in\sS_{L-1}(\bx)$.  If $i\in A_t^+(a):=\set{i:a_i\ge t}$, then $t\le a_i\le|a_i|\le p_i$ by \eqref{eq:recursive-pointwise-a}; hence $i\in I_t$.  Therefore
\begin{equation}\label{eq:plus-subset-It}
A_t^+(a)\subseteq I_t.
\end{equation}
Likewise, if $i\in A_t^-(a):=\set{i:a_i\le-t}$, then $t\le-a_i\le|a_i|\le p_i$, and hence
\begin{equation}\label{eq:minus-subset-It}
A_t^-(a)\subseteq I_t.
\end{equation}

Let $A$ denote either $A_t^+(a)$ or $A_t^-(a)$.  By \eqref{eq:plus-subset-It}--\eqref{eq:minus-subset-It}, $P_t\bone_A=\bone_A$.  Applying the congruence map $M\mapsto P_tMP_t$ to the positive semidefinite matrix $H_{\mathscr D}-\bone_A\bone_A^\top$ from \eqref{eq:dict-rank-one-domination} yields
\begin{equation}\label{eq:localized-domination}
\bone_A\bone_A^\top
=P_t\bone_A\bone_A^\top P_t
\preceq
P_tH_{\mathscr D}P_t
=H_t.
\end{equation}
Consequently,
\begin{equation}\label{eq:two-threshold-domination}
v_t^+(a)v_t^+(a)^\top
+v_t^-(a)v_t^-(a)^\top
\preceq2H_t.
\end{equation}

We then integrate the localized covariances.
Define
\begin{equation}\label{eq:N-D-def}
N_{\mathscr D}:=2\int_0^\infty H_t\,dt.
\end{equation}
The integral is finite.  Indeed, if $t>\max_i p_i$, then $I_t=\varnothing$, so $P_t=0$ and $H_t=0$.  Thus the integral in \eqref{eq:N-D-def} is over the finite interval $[0,\max_i p_i]$.

Using the Brownian layer-cake identity \eqref{eq:threshold-matrix} and \eqref{eq:two-threshold-domination}, for every $a\in\sS_{L-1}(\bx)$ we obtain
\begin{align}
\mathbf B(a)
&=\int_0^\infty
\bigl[
 v_t^+(a)v_t^+(a)^\top
 +v_t^-(a)v_t^-(a)^\top
\bigr]dt
\nonumber\\
&\preceq
2\int_0^\infty H_t\,dt
=N_{\mathscr D}.
\label{eq:B-dominated-ND}
\end{align}
By the Dirac-domination equivalence \eqref{eq:dirac-domination}, $N_{\mathscr D}$ is feasible for the minimum defining $\tau_L(\bx)$.  Therefore
\begin{equation}\label{eq:tau-le-trND}
\tau_L(\bx)\le\tr N_{\mathscr D}.
\end{equation}

We next compute the trace by a scalar layer-cake identity.
For each $i\in[n]$, the diagonal entry of $H_{\mathscr D}$ is
\begin{align}
(H_{\mathscr D})_{ii}
&=s\sum_{D\in\mathscr D}(\bone_D)_i^2
\nonumber\\
&=s\abs{\set{D\in\mathscr D:i\in D}}
\nonumber\\
&\le s\Delta.
\label{eq:HD-diagonal}
\end{align}
The first identity uses \eqref{eq:HD-def}; the second uses $(\bone_D)_i^2=(\bone_D)_i$; the inequality is the definition of $\Delta$.  Since $H_t=P_tH_{\mathscr D}P_t$,
\begin{align}
\tr H_t
&=\sum_{i\in I_t}(H_{\mathscr D})_{ii}
\nonumber\\
&\le s\Delta\,|I_t|.
\label{eq:Ht-trace}
\end{align}
Combining \eqref{eq:N-D-def} and \eqref{eq:Ht-trace},
\begin{align}
\tr N_{\mathscr D}
&=2\int_0^\infty\tr H_t\,dt
\nonumber\\
&\le2s\Delta\int_0^\infty|I_t|\,dt.
\label{eq:trND-prelayercake}
\end{align}
Now
\begin{align}
\int_0^\infty|I_t|\,dt
&=\int_0^\infty\sum_{i=1}^n\mathbf1_{\{p_i\ge t\}}\,dt
\nonumber\\
&=\sum_{i=1}^n\int_0^\infty\mathbf1_{\{p_i\ge t\}}\,dt
\nonumber\\
&=\sum_{i=1}^np_i
=S_L(\bx).
\label{eq:scalar-layercake-p}
\end{align}
The interchange of the finite sum and the integral is exact, and the third equality uses that the indicator is one precisely on the interval $[0,p_i]$ up to an endpoint of Lebesgue measure zero.  Substituting \eqref{eq:scalar-layercake-p} into \eqref{eq:trND-prelayercake} and then using \eqref{eq:tau-le-trND} gives
\begin{equation}\label{eq:tau-dictionary}
\tau_L(\bx)\le2s\Delta S_L(\bx).
\end{equation}
Since the dictionary $\mathscr D$ was arbitrary, minimizing the right side of \eqref{eq:tau-dictionary} over all decomposition dictionaries yields
\begin{equation}\label{eq:tau-D-final}
\tau_L(\bx)\le2\mathfrak D_{L-1}(\bx)S_L(\bx).
\end{equation}
Dividing by $S_L(\bx)>0$ and using \eqref{eq:lambda-def} proves the upper inequality in \eqref{eq:recursive-sandwich}.

We then use stable threshold sets to obtain a converse certificate.
Fix $t\ge0$ and $\delta>0$.  Take any $A\in\mathcal F_{L-1}^{+,\mathrm{st}}(t,\delta)$.  By definition, there exists $a\in\sS_{L-1}(\bx)$ such that $A_s^+(a)=A$ for every $s\in[t,t+\delta]$.  Every term in the threshold representation \eqref{eq:threshold-matrix} is positive semidefinite, so discarding all terms except the positive-threshold contribution on $[t,t+\delta]$ gives
\begin{align}
\mathbf B(a)
&\succeq
\int_t^{t+\delta}
 v_s^+(a)v_s^+(a)^\top\,ds
\nonumber\\
&=\int_t^{t+\delta}\bone_A\bone_A^\top\,ds
\nonumber\\
&=\delta\,\bone_A\bone_A^\top.
\label{eq:stable-positive-B}
\end{align}
If instead $A\in\mathcal F_{L-1}^{-,\mathrm{st}}(t,\delta)$, choose $a\in\sS_{L-1}(\bx)$ for which $A_s^-(a)=A$ on $[t,t+\delta]$.  Discarding every positive-semidefinite term in \eqref{eq:threshold-matrix} except the negative-threshold contribution on this interval gives
\begin{align}
\mathbf B(a)
&\succeq\int_t^{t+\delta}v_s^-(a)v_s^-(a)^\top\,ds\nonumber\\
&=\int_t^{t+\delta}\bone_A\bone_A^\top\,ds
=\delta\,\bone_A\bone_A^\top.
\label{eq:stable-negative-B}
\end{align}

Let $N$ be any matrix feasible in \eqref{eq:tau-def}.  By \eqref{eq:dirac-domination}, $N\succeq\mathbf B(a)$ for every $a\in\sS_{L-1}(\bx)$.  Combining this with \eqref{eq:stable-positive-B}--\eqref{eq:stable-negative-B} gives
\begin{equation}\label{eq:N-stable-dominates}
N\succeq\delta\,\bone_A\bone_A^\top
\qquad
\text{for every }A\in\mathcal F_{L-1}^{\mathrm{st}}(t,\delta).
\end{equation}
Therefore $N/\delta$ is feasible in the definition \eqref{eq:Theta-def} of $\Theta(\mathcal F_{L-1}^{\mathrm{st}}(t,\delta))$, and hence
\begin{equation}\label{eq:theta-lower-N}
\tr N
\ge
\delta\,\Theta\!\bigl(\mathcal F_{L-1}^{\mathrm{st}}(t,\delta)\bigr).
\end{equation}
Taking the infimum over all feasible $N$ in \eqref{eq:theta-lower-N} gives
\begin{equation}\label{eq:tau-stable-lower}
\tau_L(\bx)
\ge
\delta\,\Theta\!\bigl(\mathcal F_{L-1}^{\mathrm{st}}(t,\delta)\bigr).
\end{equation}
Since \eqref{eq:tau-stable-lower} holds for every $t\ge0$ and $\delta>0$,
\begin{equation}\label{eq:tau-R-lower}
\tau_L(\bx)
\ge
S_L(\bx)\mathfrak R_{L-1}(\bx).
\end{equation}
Dividing by $S_L(\bx)$ proves the lower inequality in \eqref{eq:recursive-sandwich}.

Finally, we derive the Gaussian complexity consequence.
Equation \eqref{eq:recursive-tau} is exactly \eqref{eq:tau-D-final}.  Substitute it into \eqref{eq:profile-bound-tau} to obtain
\begin{align}
\Ghat_{\bx}(B_L(r))
\le
\frac{r\sqrt{2\mathfrak D_{L-1}(\bx)}}{n}
\sqrt{S_L(\bx)},
\notag
\end{align}
which is \eqref{eq:recursive-G-sum} after inserting the definition \eqref{eq:d-and-S} of $S_L$.  Finally,
\begin{align}
S_L(\bx)
\le
n\max_i\norm{x_i}_2^{\,2^{-(L-2)}},
\notag
\end{align}
so
\begin{align}
\frac{\sqrt{S_L(\bx)}}{n}
\le
\frac1{\sqrt n}
\bigl(\max_i\norm{x_i}_2\bigr)^{2^{-(L-1)}}.
\notag
\end{align}
This proves \eqref{eq:recursive-G-radius} and completes the proof.
\end{proof}

\begin{proof}[Proof of \Cref{cor:hierarchical-threshold}]
If $S_L(\bx)=0$, then $\Lambda_L(\bx)=0$ and $\Ghat_{\bx}(B_L(r))=0$ by \Cref{prop:profile-existence,thm:profile-complexity}, so all claims hold.  Assume $S_L(\bx)>0$.  Let $\mathscr D$ be the set of all nodes of the rooted laminar partition tree.  By hypothesis, each $A\in\mathcal F_{L-1}(\bx)$ is a disjoint union of at most $q$ members of $\mathscr D$, so
\begin{equation}\label{eq:hierarchical-s-proof}
s(\mathscr D)\le q.
\end{equation}
Fix $i\in[n]$.  At each tree level there is at most one partition node containing $i$.  A tree of height $H$ has at most $H+1$ levels, including the root level.  Therefore
\begin{equation}\label{eq:hierarchical-Delta-proof}
\Delta(\mathscr D)\le H+1.
\end{equation}
Multiplying \eqref{eq:hierarchical-s-proof} and \eqref{eq:hierarchical-Delta-proof} and applying \eqref{eq:D-profile} gives
\begin{align}
\mathfrak D_{L-1}(\bx)\le q(H+1).
\notag
\end{align}
The bound for $\Lambda_L$ follows from the upper half of \eqref{eq:recursive-sandwich}, and \eqref{eq:hierarchical-G} follows from \eqref{eq:recursive-G-radius}.  If $H=O(\log n)$ and $q=O(1)$, the displayed factor is $O(\sqrt{\log n/n})$.
\end{proof}

\begin{proof}[Proof of \Cref{cor:ordered-threshold}]
If $S_L(\bx)=0$, then $\Lambda_L(\bx)=0$ and $\Ghat_{\bx}(B_L(r))=0$ by \Cref{prop:profile-existence,thm:profile-complexity}, so all claims hold.  Assume $S_L(\bx)>0$.  Relabel the fixed ordering as $1,\ldots,n$.  Put
\begin{align}
N:=2^{\lceil\log_2 n\rceil},
\notag
\end{align}
so $n\le N<2n$.  Pad the ordered ground set by the unused positions $n+1,\ldots,N$, and consider the complete binary interval tree on $[N]$.  Its nodes are the dyadic intervals
\begin{align}
\{(j-1)2^r+1,\ldots,j2^r\},
\qquad
r=0,\ldots,\log_2N,
\notag
\end{align}
with the admissible values of $j$.  Let $\mathscr D$ consist of the intersections of these nodes with $[n]$, after removing the empty intersections.

We prove the two properties of this dictionary that are used below.

\emph{Interval decomposition.}
Fix an interval $I=\{a,a+1,\ldots,b\}\subseteq[n]$.  Select all dyadic nodes that are contained in $I$ and maximal under inclusion.  These maximal nodes are pairwise disjoint and cover $I$: every singleton in $I$ is a dyadic node contained in $I$, and repeatedly replacing a selected node by its parent is possible until the parent would leave $I$.  At a fixed scale $2^r$, there are at most two maximal selected nodes.  Indeed, if three such nodes occurred from left to right, the middle node would have a dyadic sibling also contained in $I$; their parent would then be contained in $I$, contradicting maximality.  The tree has $1+\log_2N$ scales, and therefore $I$ is a disjoint union of at most
\begin{align}
2(1+\log_2N)
\le2\lceil\log_2(2n)\rceil
=2H_n
\notag
\end{align}
members of $\mathscr D$.  If a threshold set is presented as a union of at most $q$ intervals, first merge overlapping or adjacent intervals; this produces at most $q$ pairwise disjoint interval components.  Applying the preceding decomposition to each component gives
\begin{equation}\label{eq:ordered-s-proof}
s(\mathscr D)\le2qH_n.
\end{equation}

\emph{Overlap.}
Fix $i\in[n]$.  At each scale there is exactly one dyadic node of the padded tree containing $i$, and intersecting with $[n]$ does not create additional nodes containing $i$.  Hence
\begin{equation}\label{eq:ordered-Delta-proof}
\Delta(\mathscr D)
\le1+\log_2N
\le H_n.
\end{equation}

Combining \eqref{eq:ordered-s-proof} and \eqref{eq:ordered-Delta-proof},
\begin{align}
\mathfrak D_{L-1}(\bx)
\le s(\mathscr D)\Delta(\mathscr D)
\le2qH_n^2.
\notag
\end{align}
Apply \eqref{eq:recursive-sandwich} to obtain
\begin{align}
\Lambda_L(\bx)
\le\sqrt{2\mathfrak D_{L-1}(\bx)}
\le2\sqrt q\,H_n.
\notag
\end{align}
Substituting this estimate into \eqref{eq:profile-bound-radius} proves \eqref{eq:ordered-G}.
\end{proof}

\begin{proof}[Proof of \Cref{thm:graph-majorant}]
We begin with the rank-one estimate needed below.  Because $G$ is connected,
\begin{align}
\ker L_G
=\operatorname{span}\{\bone\},
\qquad
L_GL_G^\dagger
=L_G^\dagger L_G
=P_{\bone^\perp}.
\notag
\end{align}
Let $w\perp\bone$.  Then $w=L_G^\dagger L_Gw$.  For arbitrary $z\in\R^n$, the positive semidefinite square roots of $L_G$ and $L_G^\dagger$ give
\begin{align*}
z^\top w
&=z^\top L_G^\dagger L_Gw\\
&=(L_G^{\dagger/2}z)^\top(L_G^{1/2}w).
\end{align*}
Cauchy--Schwarz yields
\begin{align}
(z^\top w)^2
\le
(z^\top L_G^\dagger z)(w^\top L_Gw).
\notag
\end{align}
Since this holds for every $z$,
\begin{align}
ww^\top
\preceq
(w^\top L_Gw)L_G^\dagger.
\notag
\end{align}

For a threshold indicator $v\in\{0,1\}^n$, write
\begin{align}
v=\bar v\bone+w,
\qquad
\bar v=\frac{\bone^\top v}{n},
\qquad
w\perp\bone.
\notag
\end{align}
Because $L_G\bone=0$,
\begin{align}
w^\top L_Gw
=(v-\bar v\bone)^\top L_G(v-\bar v\bone)
=v^\top L_Gv.
\notag
\end{align}
The elementary inequality $(p+q)(p+q)^\top\preceq2pp^\top+2qq^\top$ and the rank-one estimate just proved imply
\begin{align}
vv^\top
\preceq2\bar v^2\bone\bone^\top
+2(v^\top L_Gv)L_G^\dagger.
\notag
\end{align}
Apply this to $v_t^+(a)$ and $v_t^-(a)$ and integrate.  Since $\bar v^2\le \bar v$ for an indicator vector,
\begin{align}
\int_0^\infty\bigl[(\bar v_t^+)^2+(\bar v_t^-)^2\bigr]dt
\le\frac1n\int_0^\infty\bigl(|A_t^+(a)|+|A_t^-(a)|\bigr)dt
=\frac{\norm{a}_1}{n}.
\notag
\end{align}
For completeness, the weighted graph coarea identity follows edge by edge.  For fixed real numbers $a_i,a_j$,
\begin{align}
\int_0^\infty
\bigl|
\mathbf1_{\{a_i\ge t\}}-\mathbf1_{\{a_j\ge t\}}
\bigr|dt
+
\int_0^\infty
\bigl|
\mathbf1_{\{a_i\le-t\}}-\mathbf1_{\{a_j\le-t\}}
\bigr|dt
=|a_i-a_j|.
\notag
\end{align}
If $a_i,a_j$ have the same sign, exactly one integral equals the difference of their magnitudes; if they have opposite signs, the two integrals equal $|a_i|$ and $|a_j|$.  Multiplying by $w_{ij}$ and summing over edges gives
\begin{align}
\int_0^\infty
\bigl[(v_t^+)^\top L_Gv_t^++(v_t^-)^\top L_Gv_t^-\bigr]dt
=\operatorname{TV}_G(a).
\notag
\end{align}
Together with \eqref{eq:threshold-matrix}, these identities prove \eqref{eq:graph-psd}.  Taking suprema over $a$, the resulting common majorant is
\begin{align}
N_G=\frac{2M_{L-1,1}}n\bone\bone^\top
+2V_{L-1,G}L_G^\dagger.
\notag
\end{align}
Its trace is the right side of \eqref{eq:graph-tau}.

Finally, a previous-level unit atom satisfies
\begin{align}
|a_i|\le\rho_{L-1}(x_i)=d_{L,i}
\notag
\end{align}
and
\begin{align}
|a_i-a_j|
\le\norm{x_i-x_j}_2^{\,2^{-(L-2)}}.
\notag
\end{align}
Summation proves \eqref{eq:M-bound} and \eqref{eq:V-bound}.
\end{proof}

\section{Proofs for Geometric Regimes}\label[appendix]{app:regimes}

\begin{proof}[Proof of \Cref{prop:k-distinct}]
Let $\xi_1,\ldots,\xi_k$ be the distinct sample values and let $P\in\{0,1\}^{n\times k}$ be the incidence matrix, so $P_{ij}=1$ exactly when $x_i=\xi_j$.  Every empirical BKL Gram matrix has the form
\begin{align}
K=P\widetilde K P^\top,
\notag
\end{align}
where $\widetilde K\succeq0$ is the corresponding Gram matrix on the distinct locations.  Let
\begin{align}
\widetilde D=\diag\bigl(\rho_L(\xi_1)^2,\ldots,\rho_L(\xi_k)^2\bigr).
\notag
\end{align}
As in \Cref{prop:profile-existence}, $\widetilde K\preceq k\widetilde D$.  Therefore every $K$ is dominated by
\begin{align}
N:=P(k\widetilde D)P^\top.
\notag
\end{align}
Its trace is
\begin{align}
\tr N=k\sum_{i=1}^n\rho_L(x_i)^2=kS_L(\bx).
\notag
\end{align}
Thus $\Lambda_L^2\le k$, and \eqref{eq:k-distinct-complexity} follows from \Cref{thm:profile-complexity}.
\end{proof}

\subsection{Orthogonal Growth Law}

\begin{lemma}[RKHS of the two-sided Brownian kernel]\label{lem:two-sided-brownian-rkhs}
Define $\Phi:\R\to L_2(\R)$ by
\begin{equation}\label{eq:two-sided-brownian-feature}
\Phi(t)(s)
:=
\begin{cases}
\mathbf 1_{(0,t]}(s),&t>0,\\
0,&t=0,\\
-\mathbf 1_{[t,0)}(s),&t<0.
\end{cases}
\end{equation}
Then
\begin{equation}\label{eq:two-sided-brownian-feature-inner-product}
\kB(t,u)=\ip{\Phi(t)}{\Phi(u)}_{L_2(\R)},
\qquad t,u\in\R.
\end{equation}
The RKHS $\HB$ of $\kB$ is
\begin{equation}\label{eq:two-sided-brownian-rkhs}
\HB
=
\set{g\in \mathrm{AC}_{\mathrm{loc}}(\R):g(0)=0,\ g'\in L_2(\R)},
\end{equation}
with
\begin{equation}\label{eq:two-sided-brownian-rkhs-norm}
\norm g_{\HB}^2
=
\int_{\R}|g'(s)|^2\,ds.
\end{equation}
\end{lemma}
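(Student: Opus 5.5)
The plan has two parts: first verify the feature identity \eqref{eq:two-sided-brownian-feature-inner-product} by direct case analysis, then identify the RKHS through the standard feature-map description of an RKHS.

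For the feature identity, $\Phi(t)$ is supported on $(0,\infty)$ when $t>0$ and on $(-\infty,0)$ when $t<0$. So for $t,u$ of opposite signs, or when either is zero, the inner product vanishes, and $\kB(t,u)=0$ as well because $|t-u|=|t|+|u|$. For $t,u>0$ the inner product is the length of $(0,\min\{t,u\}]$. For $t,u<0$ the two sign factors cancel and the inner product is $\min\{|t|,|u|\}$. In both cases this equals $\kB(t,u)$, exactly as in the scalar layer-cake computation in the proof of \Cref{prop:threshold-certificate}.

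For the RKHS, I would use the feature-map description of an RKHS. For a kernel $k(t,u)=\ip{\Phi(t)}{\Phi(u)}$ with $\Phi:\R\to L_2(\R)$, the RKHS is
\begin{align}
\set{g_h:=\ip{h}{\Phi(\cdot)}:h\in L_2(\R)},
\qquad
\norm{g_h}=\min\set{\norm{h'}_{L_2}:g_{h'}=g_h}.
\notag
\end{align}
Equivalently, the norm of $g_h$ is $\norm{Ph}_{L_2}$, where $P$ is the orthogonal projection onto $\overline{\operatorname{span}}\,\Phi(\R)$. Computing directly, $g_h(t)=\int_0^t h(s)\,ds$ for $t>0$. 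For $t<0$ we get $g_h(t)=-\int_t^0 h(s)\,ds=\int_0^t h(s)\,ds$ with the oriented-integral convention. Hence $g_h(0)=0$, $g_h$ is locally absolutely continuous, and $g_h'=h$ almost everywhere.

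Two points then finish the argument. First, the map $h\mapsto g_h$ is injective, because $g_h'=h$ a.e. recovers $h$. Consequently the kernel of the map is trivial, which means $\operatorname{span}\Phi(\R)$ is dense in $L_2(\R)$ and $P=I$. This density can also be checked directly: the indicators $\mathbf 1_{(0,t]}$ and $\mathbf 1_{[t,0)}$ generate, through differences, all indicators of bounded intervals, and those are dense. Second, the map is surjective onto the displayed space: given $g\in\mathrm{AC}_{\mathrm{loc}}$ with $g(0)=0$ and $g'\in L_2$, take $h=g'$ and apply the fundamental theorem of calculus for absolutely continuous functions to get $g=g_h$. Together these give \eqref{eq:two-sided-brownian-rkhs} with $\norm g_{\HB}^2=\norm{g'}_{L_2}^2$.

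The only point needing care is justifying the feature-space characterization of the RKHS norm (the quotient or minimal-norm formula). I would cite Aronszajn's standard theorem for it, or verify the reproducing property directly: $g_{\Phi(u)}=\kB(\cdot,u)$, and $\ip{g_h}{\kB(\cdot,u)}=\ip{h}{\Phi(u)}=g_h(u)$. With injectivity established, the inner product transported from $L_2$ is then well defined and reproduces $\kB$.
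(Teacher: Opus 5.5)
Your proposal is correct and follows the paper's proof essentially step for step. Both use the same case analysis for the feature identity, the same map $h\mapsto\ip{h}{\Phi(\cdot)}$ evaluated as an oriented integral, surjectivity via the fundamental theorem of calculus, and identification of the norm through the minimal-norm feature representation using density of interval indicators together with injectivity. The one difference is your observation that injectivity of this map is equivalent to density of $\operatorname{span}\Phi(\R)$, since its kernel is $\Phi(\R)^\perp$, which merges two facts the paper invokes separately.
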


\begin{proof}
If $t$ and $u$ have opposite signs, the supports of $\Phi(t)$ and $\Phi(u)$ are disjoint, so their inner product is zero.  The Brownian kernel is also zero because $|t-u|=|t|+|u|$.  If $t,u>0$, the inner product equals the length of $(0,t]\cap(0,u]$, namely $\min\{t,u\}$.  If $t,u<0$, the two minus signs cancel and the inner product equals the length of $[t,0)\cap[u,0)$, namely $\min\{|t|,|u|\}$.  In both same-sign cases this quantity equals $(|t|+|u|-|t-u|)/2$.  This proves \eqref{eq:two-sided-brownian-feature-inner-product}.

For $h\in L_2(\R)$, define
\begin{align}
(Th)(t)
:=\ip{h}{\Phi(t)}_{L_2(\R)}.
\notag
\end{align}
By \eqref{eq:two-sided-brownian-feature},
\begin{align}
(Th)(t)
=\int_0^t h(s)\,ds,
\qquad t\in\R,
\notag
\end{align}
where the integral is oriented when $t<0$.  Hence $Th\in \mathrm{AC}_{\mathrm{loc}}(\R)$, $(Th)(0)=0$, and $(Th)'=h$ almost everywhere.  Conversely, if $g\in \mathrm{AC}_{\mathrm{loc}}(\R)$, $g(0)=0$, and $g'\in L_2(\R)$, the fundamental theorem of calculus on each compact interval gives
\begin{align}
g(t)=\int_0^t g'(s)\,ds=(Tg')(t).
\notag
\end{align}
Thus the range of $T$ is exactly the set in \eqref{eq:two-sided-brownian-rkhs}.

It remains to identify the norm.  Differences of the features in \eqref{eq:two-sided-brownian-feature} generate indicators of bounded intervals contained in either half-line.  Finite linear combinations of such indicators are dense in $L_2(\R)$.  Therefore \eqref{eq:two-sided-brownian-feature-inner-product} is a minimal feature representation of $\kB$, and the associated RKHS norm is the minimum $L_2$ norm of a representing vector.  The map $T$ is injective: if $Th=0$, then the indefinite integral of $h$ vanishes at every $t$, so $h=0$ almost everywhere.  Hence the representing vector of $g=Th$ is unique, and
\begin{align}
\norm g_{\HB}
=\norm h_{L_2(\R)}
=\norm{g'}_{L_2(\R)}.
\notag
\end{align}
Squaring proves \eqref{eq:two-sided-brownian-rkhs-norm}.
\end{proof}

\begin{lemma}[Pullback contraction]\label{lem:pullback-contraction}
Let $k$ be a kernel on a set $Z$, let $u:\cX\to Z$, and define
\begin{align}
k_u(x,x'):=k(u(x),u(x')).
\notag
\end{align}
For every $g\in H_k$, the function $g\circ u$ belongs to $H_{k_u}$ and
\begin{align}
\norm{g\circ u}_{H_{k_u}}\le\norm g_{H_k}.
\notag
\end{align}
\end{lemma}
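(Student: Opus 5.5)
The approach is a feature-space argument: realize $H_k$ through a feature map, obtain a feature map for $k_u$ by composition, and compare minimal-norm representations. For the feature map I take the canonical one, $\phi:Z\to H_k$, $\phi(z):=k(\cdot,z)$, so that $k(z,z')=\ip{\phi(z)}{\phi(z')}_{H_k}$. Then $\psi:=\phi\circ u:\cX\to H_k$ satisfies
\begin{align}
k_u(x,x')=\ip{\psi(x)}{\psi(x')}_{H_k},
\notag
\end{align}
so $\psi$ is a feature map for $k_u$ with feature space $H_k$. In particular $k_u$ is a positive semidefinite kernel, and $H_{k_u}$ is well defined.

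The key step is the standard feature-map characterization of an RKHS. For a Hilbert space $F$ and a map $\psi:\cX\to F$ with $k_u(x,x')=\ip{\psi(x)}{\psi(x')}_F$, one has
\begin{align}
H_{k_u}=\set{x\mapsto\ip{h}{\psi(x)}_F:h\in F},
\qquad
\norm{f}_{H_{k_u}}=\min\set{\norm h_F:f=\ip{h}{\psi(\cdot)}_F}.
\notag
\end{align}
This is the same minimal-representation principle already used in the proof of \Cref{lem:two-sided-brownian-rkhs}, and it also appears in the single-ball computation in the proof of \Cref{thm:trace-body}.

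If I need to prove this characterization, I would argue as follows. The operator $T:F\to\R^{\cX}$, $Th(x):=\ip{h}{\psi(x)}_F$, has kernel $(\overline{\operatorname{span}}\,\psi(\cX))^\perp$. I transport the Hilbert structure from $\overline{\operatorname{span}}\,\psi(\cX)$ to $\ran T$ through $T$. In this structure the reproducing property holds, because $T\psi(x')=k_u(\cdot,x')$ and $\ip{Th}{T\psi(x')}=\ip{h}{\psi(x')}_F=Th(x')$ for $h$ in the closed span. Uniqueness of the RKHS then identifies $\ran T$ with $H_{k_u}$. Finally, projecting a representer onto the closed span can only decrease its norm, which gives the minimum formula.

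With the characterization in hand, the conclusion follows directly. For $g\in H_k$, the reproducing property gives
\begin{align}
(g\circ u)(x)=g(u(x))=\ip{g}{\phi(u(x))}_{H_k}=\ip{g}{\psi(x)}_{H_k}.
\notag
\end{align}
So $h=g$ is one representer of $g\circ u$. Hence $g\circ u\in H_{k_u}$, and $\norm{g\circ u}_{H_{k_u}}\le\norm g_{H_k}$.

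The only real obstacle is stating the feature-map characterization carefully, in particular checking that the transported inner product makes $\ran T$ complete and reproducing. Everything else is bookkeeping. Equality need not hold, since $\ran u$ may be a proper subset of $Z$; that is why the statement is only a contraction.
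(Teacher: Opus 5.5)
Your proposal is correct and is essentially the paper's argument: your closed span $\overline{\operatorname{span}}\,\psi(\cX)$ is exactly the paper's $M=\overline{\operatorname{span}}\{k(\cdot,u(x)):x\in\cX\}$, and your operator $T$ restricted to $M$ is the paper's isometry $U$, so both proofs conclude $\norm{g\circ u}_{H_{k_u}}=\norm{P_Mg}_{H_k}\le\norm{g}_{H_k}$. The only difference is packaging. The paper defines $U$ on finite combinations of kernel sections, extends it by density, and passes the evaluation identity to limits. You instead transport the Hilbert structure through $T$ and invoke uniqueness of the RKHS with a given kernel.
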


\begin{proof}
Let
\begin{align}
M:=\overline{\operatorname{span}}\{k(\cdot,u(x)):x\in\cX\}\subseteq H_k
\notag
\end{align}
and let $P_M$ be the orthogonal projection onto $M$.  For every $x\in\cX$,
\begin{align}
(g-P_Mg)(u(x))
=\ip{g-P_Mg}{k(\cdot,u(x))}_{H_k}
=0,
\notag
\end{align}
so $g\circ u=(P_Mg)\circ u$.

On the algebraic span of the kernel sections in $M$, define
\begin{align}
U\!\bigl(\sum_{j=1}^q c_jk(\cdot,u(x_j))\bigr)
:=\sum_{j=1}^q c_jk_u(\cdot,x_j).
\notag
\end{align}
For every such finite sum,
\begin{align*}
\norm{\sum_{j=1}^q c_jk(\cdot,u(x_j))}_{H_k}^2
&=\sum_{i,j=1}^q c_ic_jk(u(x_i),u(x_j))\\
&=\sum_{i,j=1}^q c_ic_jk_u(x_i,x_j)\\
&=\norm{\sum_{j=1}^q c_jk_u(\cdot,x_j)}_{H_{k_u}}^2.
\end{align*}
Thus $U$ is well defined and isometric on a dense subspace of $M$.  For a finite combination $h=\sum_jc_jk(\cdot,u(x_j))$ and any $x\in\cX$, the reproducing property gives
\begin{align}
(Uh)(x)=\sum_jc_jk_u(x,x_j)=\sum_jc_jk(u(x),u(x_j))=h(u(x)).
\notag
\end{align}
The map $U$ extends to an isometric isomorphism from $M$ onto $H_{k_u}$, because its range contains the dense span of the kernel sections of $H_{k_u}$ and is closed.  Passing to limits in the displayed evaluation identity shows that $P_Mg$ corresponds to $(P_Mg)\circ u=g\circ u$.  Therefore
\begin{align}
\norm{g\circ u}_{H_{k_u}}
=\norm{P_Mg}_{H_k}
\le\norm g_{H_k},
\notag
\end{align}
where the last inequality is the contractivity of an orthogonal projection.
\end{proof}

\begin{proof}[Proof of \Cref{thm:orthogonal}]
We first prove the upper bound.

For the orthogonal sample, define
\begin{equation}\label{eq:m-ell}
m_\ell:=\sup_{f\in B_\ell(1)}\sum_{i=1}^n|f(e_i)|,
\qquad \ell\ge1.
\end{equation}
The linear base gives $m_1=\sqrt n$.

Let $K$ be any top Gram matrix at depth $\ell\ge2$, and let $y\in E(K)$.  Then
\begin{align}
\norm{y}_1
=\max_{s\in\{\pm1\}^n}s^\top y
\le\max_s\sqrt{s^\top Ks}.
\notag
\end{align}
Every depth-$\ell$ top kernel is a probability mixture of Dirac matrices $\mathbf B(a)$ generated by previous-layer unit traces.  Hence, for fixed $s$,
\begin{align}
s^\top Ks
=\int s^\top\mathbf B(a(u))s\,d\mu(u)
\le\sup_{a\in\sS_{\ell-1}}s^\top\mathbf B(a)s.
\notag
\end{align}
Let $\phi_t$ be the signed-interval Brownian feature, so $\norm{\phi_t}_{L_2}^2=|t|$.  For every previous-layer unit trace $a$,
\begin{align*}
s^\top\mathbf B(a)s
&=\norm{\sum_{i=1}^n s_i\phi_{a_i}}_{L_2}^2\\
&\le\bigl(\sum_{i=1}^n\norm{\phi_{a_i}}_{L_2}\bigr)^2\\
&=\bigl(\sum_{i=1}^n\sqrt{|a_i|}\bigr)^2\\
&\le n\sum_{i=1}^n|a_i|\\
&\le nm_{\ell-1}.
\end{align*}
The first inequality is the triangle inequality in $L_2$, and the second is Cauchy--Schwarz.  Therefore
\begin{align}
m_\ell\le\sqrt{nm_{\ell-1}}.
\notag
\end{align}
Writing $m_\ell\le n^{\beta_\ell}$, the recursion starts from $\beta_1=1/2$ and satisfies $\beta_\ell=(1+\beta_{\ell-1})/2$.  Induction gives $\beta_\ell=1-2^{-\ell}$, so
\begin{align}
m_\ell\le n^{1-2^{-\ell}}.
\notag
\end{align}

For every depth-$L$ top kernel,
\begin{align}
\tr K
=\int\sum_{i=1}^n|u(e_i)|\,d\mu(u)
\le m_{L-1}.
\notag
\end{align}
Since $K\succeq0$, $K\preceq(\tr K)I\preceq m_{L-1}I$.  Thus
\begin{align}
\tau_L(e_1,\ldots,e_n)
\le n m_{L-1}
\le n^{2-2^{-(L-1)}},
\notag
\end{align}
and, because $S_L=n$,
\begin{equation}\label{eq:orthogonal-profile-upper}
\Lambda_L(e_1,\ldots,e_n)
\le n^{1/2-2^{-L}}.
\end{equation}

We now prove the matching lower bound.

For $s\in\{\pm1\}^n$, set
\begin{align}
u_1^s(x):=\ip{s/\sqrt n}{x}.
\notag
\end{align}
Then $\norm{u_1^s}_{\cH^{(1)}}=1$ and $u_1^s(e_i)=n^{-1/2}s_i$.

By \Cref{lem:two-sided-brownian-rkhs}, the RKHS of the two-sided kernel in \eqref{eq:brownian-kernel} is the anchored space in \eqref{eq:two-sided-brownian-rkhs}, with derivative norm \eqref{eq:two-sided-brownian-rkhs-norm}.  This is the signed two-sided realization, not the one-sided $\min(s,t)$ RKHS.  The function $\psi_a$ in \eqref{eq:psi-a} is locally absolutely continuous, is anchored at zero, has derivative $b/a$ on $(-a,a)$ and zero elsewhere, and therefore satisfies
\begin{align}
\norm{\psi_a}_{\HB}^2
=2a(b/a)^2=1,
\qquad
\psi_a(\pm a)=\pm\sqrt{a/2}.
\notag
\end{align}
We construct unit-sphere atoms recursively.  Suppose that $u_\ell^s$ is a unit vector in one admissible level-$\ell$ RKHS and
\begin{align}
u_\ell^s(e_i)=c_\ell^s s_i,
\qquad c_\ell^s\ge a_\ell.
\notag
\end{align}
Use the next-layer Dirac measure $\delta_{u_\ell^s}$.  The resulting kernel is the pullback
$k(x,x')=\kB(u_\ell^s(x),u_\ell^s(x'))$.  Applying \Cref{lem:pullback-contraction} with $k=\kB$, $u=u_\ell^s$, and $g=\psi_{c_\ell^s}$ shows that
$h_{\ell+1}^s:=\psi_{c_\ell^s}\circ u_\ell^s$ belongs to the pullback RKHS and satisfies
$\norm{h_{\ell+1}^s}\le\norm{\psi_{c_\ell^s}}_{\HB}=1$.  At the sample points,
\begin{align}
h_{\ell+1}^s(e_i)
=\psi_{c_\ell^s}(c_\ell^s s_i)
=\sqrt{c_\ell^s/2}\,s_i.
\notag
\end{align}
This function is nonzero.  If its pullback norm is $q\in(0,1]$, dividing by $q$ produces a next-level unit-sphere atom and multiplies, rather than decreases, the trace amplitude.  Hence, exactly and without an attainment assumption,
\begin{align}
a_{\ell+1}\ge\sqrt{a_\ell/2}.
\notag
\end{align}
Starting from $a_1=n^{-1/2}$ gives
\begin{equation}\label{eq:aL}
a_L
\ge2^{-(1-2^{-(L-1)})}n^{-2^{-L}}
\ge\frac12n^{-2^{-L}}.
\end{equation}
If normalization produces a larger amplitude, scalar contraction yields the common amplitude in \eqref{eq:aL}.  Hence the full BKL unit ball contains the sign cube
\begin{align}
\set{a_Ls:s\in\{\pm1\}^n}
\notag
\end{align}
on the sample.  Therefore
\begin{align}
\Ghat_{(e_i)}(B_L(1))
&\ge\frac{a_L}{n}\E_G\max_{s\in\{\pm1\}^n}\sum_{i=1}^nG_is_i\nonumber\\
&=a_L\E|G_1|
\ge c n^{-2^{-L}}.
\label{eq:orthogonal-width-lower}
\end{align}
On the other hand, \Cref{thm:profile-complexity} gives
\begin{align}
\Ghat_{(e_i)}(B_L(1))\le\frac{\Lambda_L(e_1,\ldots,e_n)}{\sqrt n}.
\notag
\end{align}
Combining with \eqref{eq:orthogonal-width-lower} yields
\begin{align}
\Lambda_L(e_1,\ldots,e_n)
\ge c n^{1/2-2^{-L}}.
\notag
\end{align}
Together with \eqref{eq:orthogonal-profile-upper}, this proves \Cref{thm:orthogonal}.
\end{proof}

\section{Proofs for Gaussian Richness, Random Designs, and Stability}\label[appendix]{app:richness}

\subsection{Proof of the Stable-Threshold Gaussian Reverse}

\begin{proof}[Proof of \Cref{thm:threshold-gaussian-reverse}]
We first establish the threshold-domination estimate used below.  Let $\mathcal F\subseteq2^{[n]}$ be finite.  If $\mathcal F=\varnothing$, then $\omega_{\mathrm G}(\mathcal F)=\Theta(\mathcal F)=0$.  Assume $\mathcal F\ne\varnothing$, and let $H\succeq0$ satisfy
\begin{align}
H\succeq\bone_A\bone_A^\top
\qquad
\text{for every }A\in\mathcal F.
\notag
\end{align}
For every $g\in\R^n$ and every $A\in\mathcal F$,
\begin{align}
\abs{\sum_{i\in A}g_i}^2
=g^\top\bone_A\bone_A^\top g
\le g^\top Hg.
\notag
\end{align}
Taking the maximum over $A$, then taking square roots and Gaussian expectation, gives
\begin{align*}
\omega_{\mathrm G}(\mathcal F)
&\le\E\sqrt{G^\top HG}\\
&\le\sqrt{\E[G^\top HG]}\\
&=\sqrt{\tr H}.
\end{align*}
After squaring and minimizing over feasible $H$, we obtain
\begin{equation}\label{eq:set-width-theta}
\omega_{\mathrm G}(\mathcal F)^2
\le\Theta(\mathcal F).
\end{equation}

If $S_L(\bx)=0$, then \Cref{prop:profile-existence} gives $\tau_L(\bx)=\Lambda_L(\bx)=0$, and the conventions in the statement make every displayed inequality immediate.  Assume $S_L(\bx)>0$.

We first compare $\mathfrak W_{L-1}$ and $\mathfrak R_{L-1}$.
For every $t\ge0$ and $\delta>0$, the threshold-domination estimate \eqref{eq:set-width-theta} gives
\begin{align}
\omega_{\mathrm G}
\left(\mathcal F_{L-1}^{\mathrm{st}}(t,\delta)\right)^2
\le
\Theta
\left(\mathcal F_{L-1}^{\mathrm{st}}(t,\delta)\right).
\notag
\end{align}
Multiply by $\delta/S_L(\bx)$ and take the supremum over $t,\delta$.  By \eqref{eq:W-profile} and \eqref{eq:R-profile},
\begin{align}
\mathfrak W_{L-1}(\bx)
\le
\mathfrak R_{L-1}(\bx).
\notag
\end{align}
The remaining inequalities in \eqref{eq:W-R-D-chain} are \Cref{thm:recursive-threshold}.

We next show that a stable threshold set gives a pointwise lower bound for the BKL support function.
Fix $t\ge0$, $\delta>0$, and
\begin{align}
A\in\mathcal F_{L-1}^{\mathrm{st}}(t,\delta).
\notag
\end{align}
Suppose first that $A\in\mathcal F_{L-1}^{+,\mathrm{st}}(t,\delta)$.  By definition, there exists $a_A\in\sS_{L-1}(\bx)$ such that
\begin{align}
\set{i:(a_A)_i\ge s}=A
\qquad
\text{for every }s\in[t,t+\delta].
\notag
\end{align}
The signed layer-cake identity \eqref{eq:threshold-matrix} gives
\begin{align*}
\mathbf B(a_A)
&=
\int_0^\infty
\left(
\bv_s^+(a_A)\bv_s^+(a_A)^\top
+
\bv_s^-(a_A)\bv_s^-(a_A)^\top
\right)ds\\
&\succeq
\int_t^{t+\delta}
\bv_s^+(a_A)\bv_s^+(a_A)^\top\,ds\\
&=
\int_t^{t+\delta}
\bone_A\bone_A^\top\,ds\\
&=
\delta\bone_A\bone_A^\top.
\end{align*}
If $A\in\mathcal F_{L-1}^{-,\mathrm{st}}(t,\delta)$, the same argument uses the negative threshold vectors and reaches the same conclusion.  Therefore, for every stable set $A$, there is an admissible trace $a_A$ satisfying
\begin{equation}\label{eq:stable-set-B-lower}
\mathbf B(a_A)
\succeq
\delta\bone_A\bone_A^\top.
\end{equation}
For arbitrary $g\in\R^n$, \Cref{thm:dirac-reduction} and \eqref{eq:stable-set-B-lower} give
\begin{align*}
h_{L,\bx}(g)^2
&=
\sup_{a\in\sS_{L-1}(\bx)}g^\top\mathbf B(a)g\\
&\ge
\max_{A\in\mathcal F_{L-1}^{\mathrm{st}}(t,\delta)}
g^\top\mathbf B(a_A)g\\
&\ge
\delta
\max_{A\in\mathcal F_{L-1}^{\mathrm{st}}(t,\delta)}
\left(\sum_{i\in A}g_i\right)^2.
\end{align*}
Both sides are nonnegative.  Taking square roots yields
\begin{equation}\label{eq:stable-set-support-lower}
h_{L,\bx}(g)
\ge
\sqrt\delta
\max_{A\in\mathcal F_{L-1}^{\mathrm{st}}(t,\delta)}
\abs{\sum_{i\in A}g_i}.
\end{equation}

We then take the Gaussian expectation and optimize the threshold interval.
Set $g=G$ in \eqref{eq:stable-set-support-lower} and take expectation.  By \eqref{eq:set-gaussian-width},
\begin{align}
W_{G,1}(L,\bx)
\ge
\sqrt\delta\,
\omega_{\mathrm G}
\left(\mathcal F_{L-1}^{\mathrm{st}}(t,\delta)\right).
\notag
\end{align}
Squaring gives
\begin{align}
W_{G,1}(L,\bx)^2
\ge
\delta\,
\omega_{\mathrm G}
\left(\mathcal F_{L-1}^{\mathrm{st}}(t,\delta)\right)^2.
\notag
\end{align}
Take the supremum over $t\ge0$ and $\delta>0$, and use \eqref{eq:W-profile}.  This proves \eqref{eq:W-lower-first-moment}.

Finally, we translate the result to Gaussian complexity and the common-covariance certificate.
Multiply the square root of \eqref{eq:W-lower-first-moment} by $r/n$ and use \eqref{eq:complexity-WG1}; this gives the lower bound in \eqref{eq:threshold-two-sided-complexity}.  Its upper bound is \Cref{thm:profile-complexity}.

Assume now that $\mathfrak D_{L-1}(\bx)>0$.  By \eqref{eq:recursive-tau},
\begin{align}
\tau_L(\bx)
\le
2\mathfrak D_{L-1}(\bx)S_L(\bx).
\notag
\end{align}
Rearranging gives
\begin{align}
S_L(\bx)
\ge
\frac{\tau_L(\bx)}{2\mathfrak D_{L-1}(\bx)}.
\notag
\end{align}
Substitution into the lower bound in \eqref{eq:threshold-two-sided-complexity} proves \eqref{eq:threshold-reverse-tau}.  If $\mathfrak D_{L-1}(\bx)=0$, \eqref{eq:recursive-tau} gives $\tau_L(\bx)=0$, as stated.
\end{proof}

\begin{proof}[Proof of \Cref{cor:minimax-orthogonal}]
Let $\bx=(e_1,\ldots,e_n)$.  The construction in the proof of \Cref{thm:orthogonal} gives, at depth $L-1$, a number
\begin{align}
a_{L-1}\ge c_L n^{-2^{-(L-1)}}
\notag
\end{align}
and, for every sign vector $s\in\set{-1,+1}^n$, an admissible trace with coordinates $a_{L-1}s_i$.  Fix a subset $A\subseteq[n]$ and choose $s_i=+1$ on $A$ and $s_i=-1$ on $A^c$.  For every threshold level $u\in[0,a_{L-1}]$,
\begin{align}
\set{i:a_{L-1}s_i\ge u}=A.
\notag
\end{align}
Therefore
\begin{align}
2^{[n]}
\subseteq
\mathcal F_{L-1}^{+,\mathrm{st}}(0,a_{L-1}).
\notag
\end{align}

For every realization of $G$,
\begin{align}
\max_{A\subseteq[n]}\abs{\sum_{i\in A}G_i}
\ge
\sum_{i:G_i>0}G_i.
\notag
\end{align}
Taking expectation and using $\E[G_i\bone_{\{G_i>0\}}]=1/\sqrt{2\pi}$ gives
\begin{align}
\omega_{\mathrm G}(2^{[n]})
\ge
\frac{n}{\sqrt{2\pi}}.
\notag
\end{align}
Conversely,
\begin{align}
\max_{A\subseteq[n]}\abs{\sum_{i\in A}G_i}
\le
\sum_{i=1}^n|G_i|,
\notag
\end{align}
so
\begin{align}
\omega_{\mathrm G}(2^{[n]})
\le
n\sqrt{\frac2\pi}.
\notag
\end{align}
Thus
\begin{align}
\omega_{\mathrm G}(2^{[n]})\asymp n.
\notag
\end{align}
Since $S_L(\bx)=n$, the definition \eqref{eq:W-profile} gives
\begin{align}
\mathfrak W_{L-1}(\bx)
\ge
\frac{a_{L-1}}n
\omega_{\mathrm G}(2^{[n]})^2
\gtrsim
n^{1-2^{-(L-1)}}.
\notag
\end{align}
On the other hand, \eqref{eq:W-R-D-chain} and \Cref{thm:orthogonal} give
\begin{align}
\mathfrak W_{L-1}(\bx)
\le
\Lambda_L(\bx)^2
\lesssim
n^{1-2^{-(L-1)}}.
\notag
\end{align}
This proves \eqref{eq:orthogonal-W-equivalence}.  Substituting this equivalence into \eqref{eq:threshold-reverse-tau} shows that the lower and upper common-covariance Gaussian bounds differ by at most universal constants.
\end{proof}

\begin{lemma}[A macroscopic stable threshold gives nonvanishing Gaussian richness]\label{lem:macroscopic-W}
Assume $S_L(\bx)>0$ and suppose that \eqref{eq:macroscopic-stable-threshold} holds for some $A,t,\delta$ and $\kappa>0$.  Then
\begin{equation}\label{eq:macroscopic-W-lower}
\mathfrak W_{L-1}(\bx)
\ge
\frac{2\kappa}{\pi}.
\end{equation}
\end{lemma}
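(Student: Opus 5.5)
The plan is to bound $\mathfrak W_{L-1}(\bx)$ from below by the single term of the supremum in \eqref{eq:W-profile} corresponding to the given pair $(t,\delta)$. Then it suffices to lower-bound the Gaussian set width of the stable family by that of the single set $A$. Since $A\in\mathcal F_{L-1}^{\mathrm{st}}(t,\delta)$, the family is nonempty, and monotonicity of the maximum in \eqref{eq:set-gaussian-width} gives
\begin{align}
\omega_{\mathrm G}\bigl(\mathcal F_{L-1}^{\mathrm{st}}(t,\delta)\bigr)
\ge
\E_G\abs{\sum_{i\in A}G_i}.
\notag
\end{align}

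Next, I will evaluate the right-hand side exactly. The variable $\sum_{i\in A}G_i$ is centred Gaussian with variance $|A|$, so
\begin{align}
\E_G\abs{\sum_{i\in A}G_i}
=\sqrt{\frac{2}{\pi}}\,|A|^{1/2}.
\notag
\end{align}
Squaring gives $\omega_{\mathrm G}(\mathcal F_{L-1}^{\mathrm{st}}(t,\delta))^2\ge 2|A|/\pi$.

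Finally, I will insert this into \eqref{eq:W-profile} with the chosen $(t,\delta)$:
\begin{align}
\mathfrak W_{L-1}(\bx)
\ge
\frac{\delta}{S_L(\bx)}\cdot\frac{2|A|}{\pi}
\ge
\frac{2\kappa}{\pi},
\notag
\end{align}
where the last step is exactly the macroscopic condition \eqref{eq:macroscopic-stable-threshold}, $\delta|A|\ge\kappa S_L(\bx)$, together with $S_L(\bx)>0$.

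There is no genuine obstacle here. The only points to check are that $A$ is nonempty, so that the Gaussian variance computation is meaningful (which the definition of a macroscopic stable threshold guarantees), and that dropping the supremum over $(t,\delta)$ to a single instance is legitimate (which it is, since the supremum of nonnegative terms dominates each term).
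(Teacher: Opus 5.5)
Your proposal is correct and matches the paper's proof step for step. You lower-bound the supremum in \eqref{eq:W-profile} by the single term at the given $(t,\delta)$, bound the Gaussian set width below by $\E\abs{\sum_{i\in A}G_i}=\sqrt{2|A|/\pi}$, and conclude using $\delta|A|\ge\kappa S_L(\bx)$.
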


\begin{proof}
The family $\mathcal F_{L-1}^{\mathrm{st}}(t,\delta)$ contains $A$.  Therefore
\begin{align*}
\omega_{\mathrm G}
\left(\mathcal F_{L-1}^{\mathrm{st}}(t,\delta)\right)
&\ge
\E\abs{\sum_{i\in A}G_i}\\
&=
\sqrt{\frac{2|A|}{\pi}},
\end{align*}
because $\sum_{i\in A}G_i$ is centered Gaussian with variance $|A|$.  Hence
\begin{align*}
\mathfrak W_{L-1}(\bx)
&\ge
\frac\delta{S_L(\bx)}
\omega_{\mathrm G}
\left(\mathcal F_{L-1}^{\mathrm{st}}(t,\delta)\right)^2\\
&\ge
\frac\delta{S_L(\bx)}\frac{2|A|}{\pi}\\
&\ge
\frac{2\kappa}{\pi}.
\end{align*}
\end{proof}

\begin{proof}[Proof of \Cref{cor:minimax-hierarchical}]
The macroscopic-threshold assumption implies $S_L(\bx)>0$, so \Cref{lem:macroscopic-W} applies.  By that lemma,
\begin{align}
\mathfrak W_{L-1}(\bx)
\ge
\frac{2\kappa}{\pi}.
\notag
\end{align}
By \Cref{cor:hierarchical-threshold},
\begin{align}
\mathfrak D_{L-1}(\bx)
\le
q(H+1).
\notag
\end{align}
Insert these bounds into \eqref{eq:threshold-reverse-tau}:
\begin{align*}
\Ghat_{\bx}(B_L(r))
&\ge
\frac{r\sqrt{\tau_L(\bx)}}n
\sqrt{
\frac{2\kappa/\pi}
{2q(H+1)}
}\\
&=
\sqrt{\frac{\kappa}{\pi q(H+1)}}
\frac{r\sqrt{\tau_L(\bx)}}n.
\end{align*}
This is \eqref{eq:hierarchical-minimax-reverse}.  Comparing it with the upper bound in \eqref{eq:profile-bound-tau} gives the stated factor.
\end{proof}

\begin{proof}[Proof of \Cref{cor:minimax-ordered}]
The macroscopic-threshold assumption implies $S_L(\bx)>0$, so \Cref{lem:macroscopic-W} applies and gives
\begin{align}
\mathfrak W_{L-1}(\bx)
\ge
\frac{2\kappa}{\pi}.
\notag
\end{align}
By \Cref{cor:ordered-threshold},
\begin{align}
\mathfrak D_{L-1}(\bx)
\le
2qH_n^2.
\notag
\end{align}
Substitution into \eqref{eq:threshold-reverse-tau} gives
\begin{align*}
\Ghat_{\bx}(B_L(r))
&\ge
\frac{r\sqrt{\tau_L(\bx)}}n
\sqrt{
\frac{2\kappa/\pi}
{4qH_n^2}
}\\
&=
\sqrt{\frac{\kappa}{2\pi q}}
\frac1{H_n}
\frac{r\sqrt{\tau_L(\bx)}}n.
\end{align*}
This is \eqref{eq:ordered-minimax-reverse}.
\end{proof}

\subsection{Proofs for Random-Design Gaussian Richness}\label{app:random-design}

\begin{proof}[Proof of \Cref{prop:brownian-distortion}]
Fix an actual level-$\ell$ kernel $K_x$ from the family on $\bx$.  By the hypothesis in \eqref{eq:family-domination}, there is an actual level-$\ell$ kernel $K_y$ from the family on $\by$ such that
\begin{equation}\label{eq:distortion-base-pair}
K_x\preceq cK_y.
\end{equation}
If $K_x=0$, every unit-sphere element of the associated RKHS has zero trace on $\bx$, so every empirical kernel generated by the next Brownian layer is zero.  The required domination is then immediate.  Assume henceforth that $K_x\ne0$.  Equation \eqref{eq:distortion-base-pair} implies $K_y\ne0$.

Let $H_x$ and $H_y$ be the two actual level-$\ell$ RKHSs, and let $\mu$ be the probability measure on $S_1(H_x)$ that generates an actual level-$(\ell+1)$ kernel on $\bx$.  For $u\in S_1(H_x)$, write
\begin{align}
a(u):=\left(u(x_1),\ldots,u(x_n)\right).
\notag
\end{align}
The RKHS trace-ellipsoid identity gives $a(u)\in E(K_x)$.  Since \eqref{eq:distortion-base-pair} implies
\begin{align}
E(K_x)\subseteq E(cK_y)=\sqrt c\,E(K_y),
\notag
\end{align}
the vector
\begin{align}
b(u):=\frac{a(u)}{\sqrt c}
\notag
\end{align}
belongs to $E(K_y)$.  Define
\begin{align}
q(u):=\left(b(u)^\top K_y^\dagger b(u)\right)^{1/2}\in[0,1].
\notag
\end{align}

We construct a Borel unit-sphere element of $H_y$ for every $u$.  If $q(u)>0$, put
\begin{align}
s(u):=\frac{b(u)}{q(u)}
\notag
\end{align}
and define the minimum-norm interpolant
\begin{equation}\label{eq:distortion-interpolant}
v_u
:=
\sum_{i=1}^n
\left[K_y^\dagger s(u)\right]_i
k_y(\cdot,y_i).
\end{equation}
Because $b(u)\in\operatorname{ran}K_y$, also $s(u)\in\operatorname{ran}K_y$, and therefore the trace of $v_u$ on $\by$ is
\begin{align}
K_yK_y^\dagger s(u)=s(u).
\notag
\end{align}
The standard finite-sample RKHS norm formula gives
\begin{align}
\norm{v_u}_{H_y}^2
=s(u)^\top K_y^\dagger s(u)
=1.
\notag
\end{align}
If $q(u)=0$, then $b(u)=0$ because $b(u)\in\operatorname{ran}K_y$.  Choose once and for all an arbitrary $v_0\in S_1(H_y)$, set $v_u:=v_0$, and let $s(u):=(v_0(y_i))_{i=1}^n$.  The factor $q(u)$ multiplying $\mathbf B(s(u))$ below is then zero.  The map $u\mapsto a(u)$ is continuous, $b\mapsto q$ is continuous, and the formula in \eqref{eq:distortion-interpolant} is continuous on the set where $q>0$.  Hence $u\mapsto v_u$ is Borel measurable after the preceding definition on the Borel set $\set{q=0}$.

For $q(u)>0$, Brownian nonnegative homogeneity gives
\begin{align*}
\mathbf B(a(u))
&=\mathbf B\left(\sqrt c\,q(u)s(u)\right)\\
&=\sqrt c\,q(u)\mathbf B(s(u)).
\end{align*}
For $q(u)=0$, both sides of this equality are zero if the right side is interpreted with its factor $q(u)$.  Therefore the empirical next-layer kernel on $\bx$ satisfies
\begin{align*}
K_x^+
&=\int_{S_1(H_x)}\mathbf B(a(u))\,d\mu(u)\\
&=\sqrt c
\int_{S_1(H_x)}q(u)\mathbf B(s(u))\,d\mu(u)\\
&\preceq
\sqrt c
\int_{S_1(H_x)}\mathbf B(s(u))\,d\mu(u),
\end{align*}
because $(1-q(u))\mathbf B(s(u))\succeq0$ pointwise.  Let $\nu$ be the pushforward of $\mu$ under $u\mapsto v_u$.  It is a Borel probability measure on $S_1(H_y)$, and the final integral equals
\begin{align}
K_y^+
:=
\int_{S_1(H_y)}
\mathbf B\left(\left(v(y_i)\right)_{i=1}^n\right)
\,d\nu(v).
\notag
\end{align}
Thus $K_y^+$ is an actual level-$(\ell+1)$ kernel on $\by$ and
\begin{align}
K_x^+\preceq\sqrt c\,K_y^+.
\notag
\end{align}
This proves the domination of the actual level-$(\ell+1)$ families.

To pass to closures, let $A_m\in\sK_{\ell+1}^\circ(\bx)$ converge to $A\in\sK_{\ell+1}(\bx)$.  For every $m$, choose $B_m\in\sK_{\ell+1}^\circ(\by)$ with $A_m\preceq\sqrt c\,B_m$.  The closed family $\sK_{\ell+1}(\by)$ is compact, so a subsequence of $(B_m)$ converges to some $B\in\sK_{\ell+1}(\by)$.  Closedness of the positive-semidefinite cone gives $A\preceq\sqrt c\,B$.  This proves \eqref{eq:brownian-distortion-propagation}.
\end{proof}

\begin{proof}[Proof of \Cref{thm:well-conditioned-richness}]
At level one, \eqref{eq:well-conditioned-gram} gives
\begin{align}
\sK_1^\circ(\bx)&\preceq_{\lambda_+}\sK_1^\circ(\be_n),
\notag\\
\sK_1^\circ(\be_n)&\preceq_{\lambda_-^{-1}}\sK_1^\circ(\bx).
\notag
\end{align}
There are $L-1$ Brownian lifts from the linear Gram matrix to the top covariance family.  Repeated application of \Cref{prop:brownian-distortion} therefore yields
\begin{align}
\sK_L(\bx)
&\preceq_{\lambda_+^{\vartheta_L}}
\sK_L(\be_n),
\label{eq:well-conditioned-family-upper}\\
\sK_L(\be_n)
&\preceq_{\lambda_-^{-\vartheta_L}}
\sK_L(\bx).
\label{eq:well-conditioned-family-lower}
\end{align}

Let $N_e$ be an optimal common covariance for the orthogonal design.  Equation \eqref{eq:well-conditioned-family-upper} implies that $\lambda_+^{\vartheta_L}N_e$ dominates every kernel in $\sK_L(\bx)$, so
\begin{align}
\tau_L(\bx)
\le
\lambda_+^{\vartheta_L}\tau_L(\be_n).
\notag
\end{align}
Applying the same argument to \eqref{eq:well-conditioned-family-lower} gives
\begin{align}
\tau_L(\be_n)
\le
\lambda_-^{-\vartheta_L}\tau_L(\bx),
\notag
\end{align}
which is the lower inequality in \eqref{eq:well-conditioned-tau}.

For every $z\in\R^n$, \eqref{eq:support-covariance} and \eqref{eq:well-conditioned-family-upper} give
\begin{align*}
h_{L,\bx}(z)^2
&=\sup_{K\in\sK_L(\bx)}z^\top Kz\\
&\le
\lambda_+^{\vartheta_L}
\sup_{K\in\sK_L(\be_n)}z^\top Kz\\
&=\lambda_+^{\vartheta_L}h_{L,\be_n}(z)^2.
\end{align*}
The lower comparison follows analogously from \eqref{eq:well-conditioned-family-lower}.  Taking square roots proves \eqref{eq:well-conditioned-support}.

The diagonal entries of $G_{\bx}$ satisfy
\begin{align}
\lambda_-
\le
\norm{x_i}_2^2
\le
\lambda_+.
\notag
\end{align}
Since $d_{L,i}=\left(\norm{x_i}_2^2\right)^{\vartheta_L}$,
\begin{equation}\label{eq:well-conditioned-S}
n\lambda_-^{\vartheta_L}
\le
S_L(\bx)
\le
n\lambda_+^{\vartheta_L}.
\end{equation}
By \eqref{eq:well-conditioned-tau}, \eqref{eq:well-conditioned-S}, and \Cref{thm:orthogonal},
\begin{align*}
\Lambda_L(\bx)^2
&=\frac{\tau_L(\bx)}{S_L(\bx)}\\
&\le
\frac{\lambda_+^{\vartheta_L}\tau_L(\be_n)}{n\lambda_-^{\vartheta_L}}\\
&=\kappa_{\bx}^{\vartheta_L}\Lambda_L(\be_n)^2\\
&\le
C\kappa_{\bx}^{\vartheta_L}n^{1-\vartheta_L}.
\end{align*}

It remains to prove the lower bound for $\mathfrak W_{L-1}(\bx)$.  Fix $s\in\set{-1,+1}^n$ and define
\begin{align}
y_s:=\sqrt{\frac{\lambda_-}{n}}\,s.
\notag
\end{align}
The Gram matrix is positive definite, and $G_{\bx}^{-1}\preceq\lambda_-^{-1}I_n$.  Therefore
\begin{align*}
y_s^\top G_{\bx}^{-1}y_s
&\le
\lambda_-^{-1}\norm{y_s}_2^2\\
&=\lambda_-^{-1}\frac{\lambda_-}{n}\norm{s}_2^2\\
&=1.
\end{align*}
Thus $y_s\in E(G_{\bx})$.  Let $q_s:=\left(y_s^\top G_{\bx}^{-1}y_s\right)^{1/2}$.  Since $y_s\ne0$, one has $q_s>0$.  The vector $y_s/q_s$ is the sample trace of a unit linear function and has common sign amplitude at least
\begin{align}
a_1:=\sqrt{\frac{\lambda_-}{n}}.
\notag
\end{align}

Use the Brownian profile $\psi_a$ from \eqref{eq:psi-a}.  By \Cref{lem:pullback-contraction}, if a unit-sphere trace has common signed amplitude at least $a_\ell$, then the next-level unit sphere contains the same sign pattern with common amplitude at least
\begin{align}
a_{\ell+1}:=\sqrt{\frac{a_\ell}{2}}.
\notag
\end{align}
An induction gives, for every $m\ge1$,
\begin{equation}\label{eq:amplitude-recursion-solved}
a_m
\ge
2^{-\left(1-2^{-(m-1)}\right)}
a_1^{2^{-(m-1)}}.
\end{equation}
Indeed, the formula is exact at $m=1$.  If it holds at $m$, then
\begin{align*}
a_{m+1}
&\ge
2^{-1/2}a_m^{1/2}\\
&\ge
2^{-1/2}
2^{-\left(1-2^{-(m-1)}\right)/2}
a_1^{2^{-m}}\\
&=
2^{-\left(1-2^{-m}\right)}a_1^{2^{-m}}.
\end{align*}
Taking $m=L-1$ and using $2^{-\left(1-2^{-(L-2)}\right)}\ge1/2$ gives
\begin{equation}\label{eq:well-conditioned-sign-amplitude}
a_{L-1}
\ge
\frac12\lambda_-^{\vartheta_L}n^{-\vartheta_L}.
\end{equation}

For every subset $A\subseteq[n]$, choose the sign pattern that is positive on $A$ and negative on $A^c$.  Every threshold $u\in[a_{L-1}/2,a_{L-1}]$ then has positive superlevel set exactly $A$.  Hence
\begin{align}
2^{[n]}
\subseteq
\mathcal F_{L-1}^{+,\mathrm{st}}
\left(\frac{a_{L-1}}2,\frac{a_{L-1}}2\right).
\notag
\end{align}
For every Gaussian vector $G$,
\begin{align}
\max_{A\subseteq[n]}\abs{\sum_{i\in A}G_i}
\ge
\frac12\sum_{i=1}^n|G_i|.
\notag
\end{align}
Taking expectation gives
\begin{align}
\omega_{\mathrm G}(2^{[n]})
\ge
\frac n2\sqrt{\frac2\pi}
=
\frac n{\sqrt{2\pi}}.
\notag
\end{align}
Using \eqref{eq:W-profile}, \eqref{eq:well-conditioned-S}, and \eqref{eq:well-conditioned-sign-amplitude},
\begin{align*}
\mathfrak W_{L-1}(\bx)
&\ge
\frac{a_{L-1}/2}{S_L(\bx)}
\omega_{\mathrm G}(2^{[n]})^2\\
&\ge
\frac{
\frac14\lambda_-^{\vartheta_L}n^{-\vartheta_L}
}{n\lambda_+^{\vartheta_L}}
\frac{n^2}{2\pi}\\
&=
\frac1{8\pi}
\kappa_{\bx}^{-\vartheta_L}
n^{1-\vartheta_L}.
\end{align*}
Together with \eqref{eq:W-R-D-chain} and the profile upper bound proved above, this establishes \eqref{eq:well-conditioned-W-Lambda}.

By \eqref{eq:well-conditioned-support},
\begin{align}
W_{G,2}(L,\bx)
\ge
\lambda_-^{\vartheta_L/2}W_{G,2}(L,\be_n).
\notag
\end{align}
Equation \eqref{eq:well-conditioned-tau} gives
\begin{align}
\sqrt{\tau_L(\bx)}
\le
\lambda_+^{\vartheta_L/2}\sqrt{\tau_L(\be_n)}.
\notag
\end{align}
Therefore
\begin{align}
\chi_L(\bx)
\le
\kappa_{\bx}^{\vartheta_L/2}\chi_L(\be_n).
\notag
\end{align}
For the orthogonal design, $S_L(\be_n)=n$.  Since $W_{G,2}\ge W_{G,1}$, \eqref{eq:W-lower-first-moment} and \eqref{eq:orthogonal-W-equivalence} give
\begin{align*}
\chi_L(\be_n)^2
&=\frac{\tau_L(\be_n)}{W_{G,2}(L,\be_n)^2}\\
&\le\frac{\tau_L(\be_n)}{W_{G,1}(L,\be_n)^2}\\
&\le\frac{n\Lambda_L(\be_n)^2}{n\mathfrak W_{L-1}(\be_n)}\\
&\le C.
\end{align*}
Therefore $\chi_L(\be_n)\le C$ universally.  This proves \eqref{eq:well-conditioned-defect}.  Finally, take expectations in \eqref{eq:well-conditioned-support}, use \eqref{eq:empirical-width-support}, and apply \Cref{thm:orthogonal}; this gives \eqref{eq:well-conditioned-complexity}.
\end{proof}

\begin{proof}[Proof of \Cref{cor:spherical-random-richness}]
Define the $d\times n$ random matrix
\begin{align}
A:=\sqrt d\left[X_1\ \cdots\ X_n\right].
\notag
\end{align}
Its columns are independent isotropic subgaussian random vectors in $\R^d$, each with Euclidean norm $\sqrt d$ almost surely.  The nonasymptotic singular-value theorem for independent subgaussian columns~\citep[Theorem~5.58]{vershynin2012nonasymptotic} gives universal constants $c_0,C_0>0$ such that, for every $t\ge0$, with probability at least $1-2e^{-c_0t^2}$,
\begin{equation}\label{eq:spherical-singular-values}
\sqrt d-C_0\sqrt n-t
\le
s_{\min}(A)
\le
s_{\max}(A)
\le
\sqrt d+C_0\sqrt n+t.
\end{equation}
Take
\begin{align}
t:=\sqrt{\frac1{c_0}\log\frac2\delta}.
\notag
\end{align}
After increasing the universal constant in \eqref{eq:spherical-dimension-condition}, one has
\begin{align}
C_0\sqrt n+t
\le
\frac12\sqrt d.
\notag
\end{align}
On the event in \eqref{eq:spherical-singular-values},
\begin{align}
\frac14 I_n
\preceq
\frac1dA^\top A
=G_{X_{1:n}}
\preceq
\frac94 I_n.
\notag
\end{align}
Thus the condition number is at most nine.  Apply \Cref{thm:well-conditioned-richness}.  The same Gram event is independent of $L$, so all conclusions hold simultaneously for every depth $L\ge2$.  Since every $X_i$ has norm one, $S_L(X_{1:n})=n$.  This proves \eqref{eq:spherical-W-Lambda}--\eqref{eq:spherical-complexity}.
\end{proof}

\begin{proof}[Proof of \Cref{thm:random-repetition-transfer}]
Let $R\in\set{0,1}^{n\times K}$ be the replication matrix
\begin{align}
R_{ij}:=\bone_{\set{X_i=\xi_j}},
\notag
\end{align}
and put
\begin{align}
D:=R^\top R=\diag(N_1,\ldots,N_K).
\notag
\end{align}
For every canonical ladder, evaluating its base-support Gram matrix $K$ on the repeated sample gives $RKR^\top$.  Conversely, every repeated-sample Gram matrix is obtained in this way from the restriction of the same ladder to the base support.  Taking closures yields
\begin{equation}\label{eq:replicated-kernel-family}
\sK_L(X_{1:n})
=
\set{RKR^\top:K\in\sK_L(\boldsymbol\xi)}.
\end{equation}

Let $N_\xi$ be an optimal common covariance for the base design.  By \eqref{eq:replicated-kernel-family}, the matrix $RN_\xi R^\top$ dominates the entire repeated covariance family.  Moreover,
\begin{align*}
\tr\left(RN_\xi R^\top\right)
&=\tr\left(R^\top RN_\xi\right)\\
&=\tr(DN_\xi)\\
&=\sum_{j=1}^K N_j(N_\xi)_{jj}\\
&\le
N_+\tr N_\xi,
\end{align*}
because the diagonal entries of a positive-semidefinite matrix are nonnegative.  Hence
\begin{align}
\tau_L(X_{1:n})
\le
N_+\tau_L(\boldsymbol\xi).
\notag
\end{align}

For the reverse inequality, let $N$ be any common covariance for the repeated sample and define
\begin{align}
P:=D^{-1}R^\top.
\notag
\end{align}
The event $N_->0$ makes $D$ invertible, and $PR=I_K$.  Congruence of $RKR^\top\preceq N$ by $P$ gives
\begin{align}
K\preceq PNP^\top
\qquad
\text{for every }K\in\sK_L(\boldsymbol\xi).
\notag
\end{align}
Thus $PNP^\top$ is feasible for the base common-covariance problem.  Also,
\begin{align}
P^\top P
=RD^{-2}R^\top.
\notag
\end{align}
On the block corresponding to category $j$, this matrix equals $N_j^{-2}\bone\bone^\top$ and has largest eigenvalue $N_j^{-1}$.  Therefore
\begin{align}
\norm{P^\top P}_{\mathrm{op}}
=\frac1{N_-}.
\notag
\end{align}
Since $P^\top P\succeq0$ and $N\succeq0$,
\begin{align*}
\tr(PNP^\top)
&=\tr(P^\top PN)\\
&\le
\norm{P^\top P}_{\mathrm{op}}\tr N\\
&=\frac1{N_-}\tr N.
\end{align*}
Taking the infimum over repeated-sample feasible $N$ gives
\begin{align}
\tau_L(\boldsymbol\xi)
\le
\frac1{N_-}\tau_L(X_{1:n}),
\notag
\end{align}
which completes \eqref{eq:replication-tau}.

Assume now that $S_L(\boldsymbol\xi)>0$.  Since $N_->0$, the repeated normalizing scale is also positive.  It repeats exactly:
\begin{equation}\label{eq:replicated-S}
S_L(X_{1:n})
=
\sum_{j=1}^K N_j\rho_L(\xi_j)^2.
\end{equation}
Combining the upper inequality in \eqref{eq:replication-tau} with \eqref{eq:replicated-S} gives \eqref{eq:replication-Lambda}.

We next transfer stable-threshold Gaussian richness.  Let $\mathcal F$ be a stable threshold family on the base support for some interval $[t,t+\delta]$.  For $A\subseteq[K]$, define its lifted subset of observations by
\begin{align}
A^\uparrow
:=
\set{i\in[n]:X_i=\xi_j\text{ for some }j\in A}.
\notag
\end{align}
Repeating the base trace at every observation shows that $A^\uparrow$ is stable over the same interval.  Conditional on the observed sample, define the Gaussian block sums
\begin{align}
Z_j:=\sum_{i:X_i=\xi_j}G_i.
\notag
\end{align}
They are independent and satisfy $Z_j\sim\mathcal N(0,N_j)$.  For a finite family $\mathcal F\subseteq2^{[K]}$, define
\begin{align}
H_{\mathcal F}(z)
:=
\max_{A\in\mathcal F}
\abs{\sum_{j\in A}z_j}.
\notag
\end{align}
This function is convex because it is the maximum of finitely many absolute linear functionals.  In distribution,
\begin{align}
Z
=\sqrt{N_-}\,G'+H,
\notag
\end{align}
where $G'\sim\mathcal N(0,I_K)$ and $H$ is an independent centered Gaussian vector with covariance $\diag(N_j-N_-)$.  Conditional Jensen gives, for every realization of $G'$,
\begin{align}
\E_H H_{\mathcal F}\left(\sqrt{N_-}G'+H\right)
\ge
H_{\mathcal F}\left(\sqrt{N_-}G'\right).
\notag
\end{align}
Taking expectation in $G'$ yields
\begin{equation}\label{eq:replicated-gaussian-width}
\omega_{\mathrm G}(\mathcal F^\uparrow)
\ge
\sqrt{N_-}\,\omega_{\mathrm G}(\mathcal F).
\end{equation}
The threshold interval length is unchanged.  Multiply the square of \eqref{eq:replicated-gaussian-width} by $\delta/S_L(X_{1:n})$ and optimize over all base stable intervals.  This proves \eqref{eq:replication-W}.  Dividing \eqref{eq:replication-W} by \eqref{eq:replication-Lambda} proves \eqref{eq:replication-ratio}.  Finally, apply \eqref{eq:defect-from-W} to obtain \eqref{eq:replication-defect}.
\end{proof}

\begin{proof}[Proof of \Cref{cor:finite-support-random-richness}]
For every $j$, the count $N_j$ has the binomial distribution with mean $np_j$.  The multiplicative Chernoff inequalities with relative deviation $1/2$ give
\begin{align}
\Pp\left(N_j<\frac12np_j\right)
&\le
\exp\left(-\frac{np_j}{8}\right),
\notag\\
\Pp\left(N_j>\frac32np_j\right)
&\le
\exp\left(-\frac{np_j}{12}\right).
\notag
\end{align}
A union bound over $j\in[K]$ gives
\begin{align}
\Pp\left(
\exists j:\ N_j\notin\left[\frac12np_j,\frac32np_j\right]
\right)
\le
2K\exp\left(-\frac{np_{\min}}{12}\right).
\notag
\end{align}
Under \eqref{eq:occupancy-sample-size}, the right-hand side is at most $\delta$.  On the complementary event,
\begin{align*}
N_-
&\ge
\frac12np_{\min},\\
N_+
&\le
\frac32np_{\max},
\end{align*}
so \eqref{eq:occupancy-ratio} follows.  Assume now that $S_L(\boldsymbol\xi)>0$ and that \eqref{eq:base-richness-assumption} holds.  Combine \eqref{eq:occupancy-ratio} with \eqref{eq:replication-ratio} and \eqref{eq:base-richness-assumption} to obtain \eqref{eq:finite-support-W-Lambda}.  Equation \eqref{eq:finite-support-defect} follows from \eqref{eq:defect-from-W}.
\end{proof}

\begin{proof}[Proof of \Cref{cor:categorical-orthogonal}]
For the $K$-point orthogonal base design, \Cref{cor:minimax-orthogonal} gives, simultaneously for every depth,
\begin{align}
\mathfrak W_{L-1}(\boldsymbol\xi)
\asymp
\Lambda_L(\boldsymbol\xi)^2
\asymp
K^{1-\vartheta_L}.
\notag
\end{align}
The probability assumption \eqref{eq:balanced-categorical-probabilities} gives
\begin{align}
p_{\min}\ge\frac{c_-}{K},
\qquad
p_{\max}\le\frac{c_+}{K}.
\notag
\end{align}
After increasing $C$, condition \eqref{eq:categorical-sample-size} implies \eqref{eq:occupancy-sample-size}.  On the resulting occupancy event,
\begin{align}
N_-
\ge
\frac{c_-n}{2K},
\qquad
N_+
\le
\frac{3c_+n}{2K}.
\notag
\end{align}
Every support point has norm one, so
\begin{align}
S_L(\boldsymbol\xi)=K,
\qquad
S_L(X_{1:n})=n.
\notag
\end{align}
Equations \eqref{eq:replication-W} and \eqref{eq:replication-Lambda} therefore give
\begin{align*}
\mathfrak W_{L-1}(X_{1:n})
&\ge
\frac{c_-}{2}\mathfrak W_{L-1}(\boldsymbol\xi),\\
\Lambda_L(X_{1:n})^2
&\le
\frac{3c_+}{2}\Lambda_L(\boldsymbol\xi)^2.
\end{align*}
Together with $\mathfrak W_{L-1}\le\Lambda_L^2$ and the orthogonal base equivalence, these inequalities prove \eqref{eq:categorical-W-Lambda}.  Equation \eqref{eq:categorical-defect} follows either from \Cref{cor:finite-support-random-richness} or directly from \eqref{eq:defect-from-W}.

The profile upper bound and \eqref{eq:categorical-W-Lambda} give
\begin{align}
\Ghat_{X_{1:n}}(B_L(r))
\le
C\frac r n
\sqrt{nK^{1-\vartheta_L}}
=
C\frac{rK^{1/2-2^{-L}}}{\sqrt n}.
\notag
\end{align}
The stable-threshold lower bound \eqref{eq:threshold-two-sided-complexity} gives the matching lower estimate.  This proves \eqref{eq:categorical-complexity}.
\end{proof}

\begin{proof}[Proof of \Cref{cor:random-hierarchical-support}]
The macroscopic-threshold assumption implies $S_L(\boldsymbol\xi)>0$, so \Cref{lem:macroscopic-W} applies and the base support satisfies
\begin{align}
\mathfrak W_{L-1}(\boldsymbol\xi)
\ge
\frac{2\kappa_0}{\pi}.
\notag
\end{align}
By \Cref{cor:hierarchical-threshold},
\begin{align}
\Lambda_L(\boldsymbol\xi)^2
\le
2q(H+1).
\notag
\end{align}
Therefore
\begin{equation}\label{eq:base-hierarchical-ratio}
\frac{\mathfrak W_{L-1}(\boldsymbol\xi)}{\Lambda_L(\boldsymbol\xi)^2}
\ge
\frac{\kappa_0}{\pi q(H+1)}.
\end{equation}
On the high-probability occupancy event from \Cref{cor:finite-support-random-richness}, combine \eqref{eq:replication-ratio}, \eqref{eq:occupancy-ratio}, and \eqref{eq:base-hierarchical-ratio}.  This proves \eqref{eq:random-hierarchical-W}.  Apply \eqref{eq:defect-from-W} to obtain \eqref{eq:random-hierarchical-defect}.
\end{proof}

\subsection{Proofs for Perturbation Stability}

\begin{proof}[Proof of \Cref{thm:matched-sample-stability}]
Put $\varepsilon:=\eps(\bx,\by)$ and $\vartheta:=\vartheta_L$.  Let $u$ be a unit-sphere element of any admissible level-$(L-1)$ RKHS, and define
\begin{align}
a_i:=u(x_i),
\qquad
b_i:=u(y_i).
\notag
\end{align}
Since $\Chat_{L-1}(u)\le1$, the recursive H\"older estimate \eqref{eq:holder-scale} gives
\begin{equation}\label{eq:proof-trace-perturbation}
|a_i-b_i|
\le
\norm{x_i-y_i}_2^{\,2\vartheta}
\le
\varepsilon^{2\vartheta}.
\end{equation}

For $s\in\R$, let $\phi_s$ be the signed-interval Brownian feature
\begin{align}
\phi_s
:=
\begin{cases}
\bone_{[0,s]},&s\ge0,\\
-\bone_{[s,0]},&s<0.
\end{cases}
\notag
\end{align}
It satisfies
\begin{align}
\left\langle\phi_s,\phi_t\right\rangle_{L_2(\R)}
=\kB(s,t).
\notag
\end{align}
Consequently,
\begin{align}
\norm{\phi_s-\phi_t}_{L_2(\R)}^2
&=\kB(s,s)+\kB(t,t)-2\kB(s,t)\notag\\
&=|s-t|.
\notag
\end{align}
Equation \eqref{eq:proof-trace-perturbation} therefore implies
\begin{equation}\label{eq:proof-feature-perturbation}
\norm{\phi_{a_i}-\phi_{b_i}}_{L_2(\R)}
\le
\varepsilon^{\vartheta}.
\end{equation}

Fix $z\in\R^n$.  The feature representation of the Brownian covariance gives
\begin{align*}
\sqrt{z^\top\mathbf B(a)z}
&=
\norm{\sum_{i=1}^n z_i\phi_{a_i}}_{L_2(\R)}\\
&\le
\norm{\sum_{i=1}^n z_i\phi_{b_i}}_{L_2(\R)}
+
\norm{\sum_{i=1}^n z_i\left(\phi_{a_i}-\phi_{b_i}\right)}_{L_2(\R)}\\
&\le
\sqrt{z^\top\mathbf B(b)z}
+
\sum_{i=1}^n|z_i|\norm{\phi_{a_i}-\phi_{b_i}}_{L_2(\R)}\\
&\le
\sqrt{z^\top\mathbf B(b)z}
+
\varepsilon^\vartheta\norm z_1\\
&\le
\sqrt{z^\top\mathbf B(b)z}
+
\sqrt n\,\varepsilon^\vartheta\norm z_2.
\end{align*}
The same admissible function $u$ is evaluated on both samples.  Taking the supremum over all actual previous-layer unit functions, and then using continuity to pass to the trace closures in \Cref{thm:dirac-reduction}, gives
\begin{align}
h_{L,\bx}(z)
\le
h_{L,\by}(z)
+
\sqrt n\,\varepsilon^\vartheta\norm z_2.
\notag
\end{align}
Interchanging $\bx$ and $\by$ proves \eqref{eq:matched-support-stability}.

For $q=1$, take expectation in \eqref{eq:matched-support-stability} and use
\begin{align}
\E\norm G_2
\le
\left(\E\norm G_2^2\right)^{1/2}
=\sqrt n.
\notag
\end{align}
For $q=2$, take $L_2$-norms and apply Minkowski's inequality:
\begin{align*}
W_{G,2}(L,\bx)
&\le
W_{G,2}(L,\by)
+
\sqrt n\,\varepsilon^\vartheta
\left(\E\norm G_2^2\right)^{1/2}\\
&=
W_{G,2}(L,\by)
+n\varepsilon^\vartheta.
\end{align*}
The reverse inequalities follow by symmetry.  This proves \eqref{eq:matched-gaussian-stability}.

We next prove \eqref{eq:matched-tau-stability}.  Let $\nu\in\mathfrak P_{\preceq I_n}$.  Equation \eqref{eq:matched-support-stability} and Minkowski's inequality give
\begin{align*}
\left(\int h_{L,\bx}(z)^2\,d\nu(z)\right)^{1/2}
&\le
\left(\int h_{L,\by}(z)^2\,d\nu(z)\right)^{1/2}\\
&\quad+
\sqrt n\,\varepsilon^\vartheta
\left(\int\norm z_2^2\,d\nu(z)\right)^{1/2}.
\end{align*}
Because the covariance of $\nu$ is dominated by $I_n$,
\begin{align}
\int\norm z_2^2\,d\nu(z)
=
\tr\left(\int zz^\top\,d\nu(z)\right)
\le n.
\notag
\end{align}
Hence
\begin{align}
\left(\int h_{L,\bx}(z)^2\,d\nu(z)\right)^{1/2}
\le
\left(\int h_{L,\by}(z)^2\,d\nu(z)\right)^{1/2}
+n\varepsilon^\vartheta.
\notag
\end{align}
Take the supremum over $\nu$ and use \Cref{thm:robust-minimax}.  This gives
\begin{align}
\sqrt{\tau_L(\bx)}
\le
\sqrt{\tau_L(\by)}+n\varepsilon^\vartheta.
\notag
\end{align}
Symmetry proves \eqref{eq:matched-tau-stability}.

Put $r_i:=\norm{x_i}_2$, $s_i:=\norm{y_i}_2$, and $\beta:=2\vartheta\in(0,1]$.  For nonnegative $r\ge s$, subadditivity of the map $t\mapsto t^\beta$ gives
\begin{align}
r^\beta
=\left(s+(r-s)\right)^\beta
\le
s^\beta+(r-s)^\beta.
\notag
\end{align}
Thus
\begin{align}
|r^\beta-s^\beta|
\le
|r-s|^\beta.
\notag
\end{align}
The reverse triangle inequality yields $|r_i-s_i|\le\norm{x_i-y_i}_2\le\varepsilon$, so
\begin{align}
\abs{\norm{x_i}_2^{2\vartheta}-\norm{y_i}_2^{2\vartheta}}
\le
\varepsilon^{2\vartheta}.
\notag
\end{align}
Summing over $i$ proves \eqref{eq:matched-S-stability}.

If $S_L(\by)>n\varepsilon^{2\vartheta}$, then \eqref{eq:matched-tau-stability} and \eqref{eq:matched-S-stability} imply
\begin{align*}
\Lambda_L(\bx)^2
&=\frac{\tau_L(\bx)}{S_L(\bx)}\\
&\le
\frac{\left(\sqrt{\tau_L(\by)}+n\varepsilon^\vartheta\right)^2}
{S_L(\by)-n\varepsilon^{2\vartheta}},
\end{align*}
which is \eqref{eq:matched-profile-stability}.

Finally, assume \eqref{eq:matched-defect-condition}.  Equation \eqref{eq:matched-gaussian-stability} gives
\begin{align}
W_{G,2}(L,\bx)
\ge
(1-\rho)W_{G,2}(L,\by).
\notag
\end{align}
Equation \eqref{eq:matched-tau-stability} gives
\begin{align*}
\sqrt{\tau_L(\bx)}
&\le
\sqrt{\tau_L(\by)}+\rho W_{G,2}(L,\by)\\
&=
\left(\chi_L(\by)+\rho\right)W_{G,2}(L,\by).
\end{align*}
Divide the last inequality by the preceding lower bound on $W_{G,2}(L,\bx)$.  This proves \eqref{eq:matched-defect-stability}.
\end{proof}

\begin{proof}[Proof of \Cref{thm:threshold-perturbation-stability}]
Let
\begin{align}
\mathcal F_y
:=
\mathcal F_{L-1,\by}^{\mathrm{st}}(t,\delta).
\notag
\end{align}
We show that every set in $\mathcal F_y$ remains stable on $\bx$ after trimming both ends of the interval by $\eta$.

Take first $A\in\mathcal F_{L-1,\by}^{+,\mathrm{st}}(t,\delta)$.  By definition, there is a trace $a^y\in\sS_{L-1}(\by)$ such that
\begin{align}
\set{i:a_i^y\ge s}=A
\qquad
\text{for every }s\in[t,t+\delta].
\notag
\end{align}
Choose actual previous-layer unit functions $u_m$ whose traces on $\by$ converge to $a^y$.  Their traces on $\bx$ are bounded coordinatewise by the pointwise scale.  After passage to a subsequence, they converge to some $a^x\in\sS_{L-1}(\bx)$.  The recursive H\"older estimate gives, for every $m$ and every $i$,
\begin{align}
|u_m(x_i)-u_m(y_i)|
\le
\eta.
\notag
\end{align}
Passing to the limit yields
\begin{equation}\label{eq:proof-closure-trace-perturbation}
|a_i^x-a_i^y|
\le
\eta,
\qquad i\in[n].
\end{equation}
For $i\in A$, membership at the upper endpoint gives $a_i^y\ge t+\delta$.  For $i\notin A$, nonmembership at the lower endpoint gives $a_i^y<t$.  Hence, for every
\begin{align}
s\in[t+\eta,t+\delta-\eta],
\notag
\end{align}
equation \eqref{eq:proof-closure-trace-perturbation} gives $a_i^x\ge s$ when $i\in A$ and $a_i^x<s$ when $i\notin A$.  Therefore
\begin{align}
A\in
\mathcal F_{L-1,\bx}^{+,\mathrm{st}}
\left(t+\eta,\delta-2\eta\right)
\notag
\end{align}
whenever $\delta>2\eta$.  The negative-threshold case follows by applying the same argument to the traces $-a^y$ and $-a^x$.  Thus
\begin{equation}\label{eq:proof-stable-family-inclusion}
\mathcal F_y
\subseteq
\mathcal F_{L-1,\bx}^{\mathrm{st}}
\left(t+\eta,\delta-2\eta\right)
\end{equation}
when $\delta>2\eta$.  If $\delta\le2\eta$, the lower bound in \eqref{eq:W-perturbation-stability} is zero and there is nothing to prove.

Assume $\delta>2\eta$.  Gaussian set width is monotone under inclusion, so \eqref{eq:proof-stable-family-inclusion} gives
\begin{align*}
\mathfrak W_{L-1}(\bx)
&\ge
\frac{\delta-2\eta}{S_L(\bx)}
\omega_{\mathrm G}(\mathcal F_y)^2\\
&=
\frac{\delta-2\eta}{\delta}
\frac{S_L(\by)}{S_L(\bx)}
\frac{\delta\,\omega_{\mathrm G}(\mathcal F_y)^2}{S_L(\by)}\\
&\ge
(1-\zeta)
\frac{\delta-2\eta}{\delta}
\frac{S_L(\by)}{S_L(\bx)}
\mathfrak W_{L-1}(\by).
\end{align*}
This proves \eqref{eq:W-perturbation-stability}.

We now impose the additional relative-error assumptions.  Equations \eqref{eq:matched-tau-stability} and \eqref{eq:matched-S-stability} imply
\begin{align}
\tau_L(\bx)
&\le
(1+\rho)^2\tau_L(\by),
\notag\\
(1-\rho)S_L(\by)
&\le
S_L(\bx)
\le
(1+\rho)S_L(\by).
\notag
\end{align}
Consequently,
\begin{equation}\label{eq:proof-Lambda-perturbation-relative}
\Lambda_L(\bx)^2
\le
\frac{(1+\rho)^2}{1-\rho}
\Lambda_L(\by)^2.
\end{equation}
The threshold-margin condition gives
\begin{align}
\frac{\delta-2\eta}{\delta}
\ge
1-\rho.
\notag
\end{align}
Using also $S_L(\by)/S_L(\bx)\ge1/(1+\rho)$ in \eqref{eq:W-perturbation-stability}, we obtain
\begin{align*}
\mathfrak W_{L-1}(\bx)
&\ge
(1-\zeta)
\frac{1-\rho}{1+\rho}
\mathfrak W_{L-1}(\by)\\
&\ge
c(1-\zeta)
\frac{1-\rho}{1+\rho}
\Lambda_L(\by)^2.
\end{align*}
Rearranging \eqref{eq:proof-Lambda-perturbation-relative} gives
\begin{align}
\Lambda_L(\by)^2
\ge
\frac{1-\rho}{(1+\rho)^2}
\Lambda_L(\bx)^2.
\notag
\end{align}
Substitution proves \eqref{eq:W-Lambda-perturbation-stability}.
\end{proof}

\begin{proof}[Proof of \Cref{cor:noisy-repetition-transfer}]
By \eqref{eq:replication-defect},
\begin{align}
\chi_L(Y_{1:n})
\le
C_{\mathrm{rep}}.
\notag
\end{align}
Apply \Cref{thm:matched-sample-stability} to the matched samples $X_{1:n}$ and $Y_{1:n}$.  Condition \eqref{eq:noisy-repetition-perturbation-condition} is exactly \eqref{eq:matched-defect-condition}.  Equation \eqref{eq:matched-defect-stability} therefore gives \eqref{eq:noisy-repetition-defect}.  The final statement follows directly from \Cref{thm:threshold-perturbation-stability}.
\end{proof}

\subsection{Proofs for Integrated Threshold Profiles}

\begin{proof}[Proof of \Cref{thm:integrated-threshold-sandwich}]
If $S_L(\bx)=0$, then every previous-layer trace and every top covariance vanish by the pointwise scale, so all quantities in \eqref{eq:integrated-threshold-chain} are zero.  Assume $S_L(\bx)>0$.

We first prove the lower integrated bound.  Fix $a\in\sS_{L-1}(\bx)$.  The support identity gives, pointwise in $G$,
\begin{align}
h_{L,\bx}(G)^2
\ge
G^\top\mathbf B(a)G.
\notag
\end{align}
Taking expectation and using $\E GG^\top=I_n$,
\begin{align*}
W_{G,2}(L,\bx)^2
&=\E h_{L,\bx}(G)^2\\
&\ge
\E G^\top\mathbf B(a)G\\
&=\tr\mathbf B(a)\\
&=\sum_{i=1}^n\kB(a_i,a_i)\\
&=\sum_{i=1}^n|a_i|\\
&=\norm a_1.
\end{align*}
Taking the supremum over $a$ and dividing by $S_L(\bx)$ proves
\begin{align}
\mathfrak A_{L-1}(\bx)
\le
\frac{W_{G,2}(L,\bx)^2}{S_L(\bx)}.
\notag
\end{align}
The inequality $W_{G,2}(L,\bx)^2\le\tau_L(\bx)$ follows by applying an optimal common covariance $N_*$:
\begin{align*}
W_{G,2}(L,\bx)^2
&=\E h_{L,\bx}(G)^2\\
&\le\E G^\top N_*G\\
&=\tr N_*\\
&=\tau_L(\bx).
\end{align*}
After division by $S_L(\bx)$, this gives the middle inequality in \eqref{eq:integrated-threshold-chain}.

We next prove the integrated upper bound.  Let $(H_s)$ be an admissible threshold covariance field.  For every $a\in\sS_{L-1}(\bx)$, the signed Brownian layer-cake identity gives
\begin{align*}
\mathbf B(a)
&=\int_0^\infty
\left(
v_s^+(a)v_s^+(a)^\top
+
v_s^-(a)v_s^-(a)^\top
\right)ds\\
&\preceq
2\int_0^\infty H_s\,ds.
\end{align*}
The integral is an entrywise finite-dimensional integral of positive-semidefinite matrices.  Thus
\begin{align}
N_H:=2\int_0^\infty H_s\,ds
\notag
\end{align}
is positive semidefinite and dominates every Dirac Brownian matrix.  By \Cref{thm:dirac-reduction}, it is feasible for the full common-covariance problem.  Hence
\begin{align}
\tau_L(\bx)
\le
2\int_0^\infty\tr H_s\,ds.
\notag
\end{align}
Take the infimum over admissible fields and divide by $S_L(\bx)$ to obtain
\begin{align}
\Lambda_L(\bx)^2
\le
2\overline{\mathfrak T}_{L-1}(\bx).
\notag
\end{align}

It remains to compare the integrated profile with the global decomposition profile.  Let $\mathscr D$ attain $\mathfrak D_{L-1}(\bx)$, and abbreviate
\begin{align}
s:=s(\mathscr D),
\qquad
\Delta:=\Delta(\mathscr D).
\notag
\end{align}
Define
\begin{align}
H_{\mathscr D}
:=
s\sum_{D\in\mathscr D}\bone_D\bone_D^\top.
\notag
\end{align}
The proof of \Cref{thm:recursive-threshold} shows that
\begin{align}
\bone_A\bone_A^\top
\preceq
H_{\mathscr D}
\qquad
\text{for every }A\in\mathcal F_{L-1}(\bx),
\notag
\end{align}
and that every diagonal entry of $H_{\mathscr D}$ is at most $s\Delta$.
For $r\ge0$, put
\begin{align}
I_r:=\set{i:d_{L,i}\ge r},
\qquad
P_r:=\diag(\bone_{I_r}),
\qquad
H_r:=P_rH_{\mathscr D}P_r.
\notag
\end{align}
Every signed threshold vector at level $r$ is supported in $I_r$.  Congruence by $P_r$ therefore shows that $H_r$ dominates its rank-one outer product.  Thus $(H_r)$ is an admissible threshold covariance field.  Its trace satisfies
\begin{align}
\tr H_r
\le
s\Delta|I_r|.
\notag
\end{align}
The scalar layer-cake identity gives
\begin{align*}
\int_0^\infty|I_r|\,dr
&=\sum_{i=1}^n\int_0^\infty\bone_{\{r\le d_{L,i}\}}\,dr\\
&=\sum_{i=1}^n d_{L,i}\\
&=S_L(\bx).
\end{align*}
Therefore
\begin{align*}
\overline{\mathfrak T}_{L-1}(\bx)
&\le
\frac1{S_L(\bx)}
\int_0^\infty\tr H_r\,dr\\
&\le
s\Delta\\
&=\mathfrak D_{L-1}(\bx).
\end{align*}
This completes \eqref{eq:integrated-threshold-chain}.  Finally,
\begin{align*}
\chi_L(\bx)^2
&=\frac{\tau_L(\bx)}{W_{G,2}(L,\bx)^2}\\
&=\frac{\Lambda_L(\bx)^2}{W_{G,2}(L,\bx)^2/S_L(\bx)}\\
&\le
\frac{2\overline{\mathfrak T}_{L-1}(\bx)}{\mathfrak A_{L-1}(\bx)},
\end{align*}
which proves \eqref{eq:integrated-defect-bound}.
\end{proof}

\begin{proof}[Proof of \Cref{thm:integrated-ordered-laminar}]
We first treat ordered interval traces.  Pad the ordered index set to the next power of two and use the complete dyadic interval tree, exactly as in the proof of \Cref{cor:ordered-threshold}.  Every interval of sample indices is a disjoint union of at most $2H_n$ dyadic intervals, and every sample index belongs to at most $H_n$ dyadic intervals.

Fix $a\in\sS_{L-1}(\bx)$.  Choose actual previous-layer unit traces converging to $a$.  For every threshold $s$ that is different from all coordinates of $a$, the positive and negative threshold sets of the approximating traces agree with those of $a$ for all sufficiently large indices.  The ordered interval hypothesis therefore passes to $a$ at every such $s$.  Since a vector in $\R^n$ has only finitely many coordinate values, the exceptional thresholds have Lebesgue measure zero.  Thus, for every $a$, at almost every threshold $s$, each signed threshold set is a union of at most $q(s)$ ordinary intervals, and hence a disjoint union of at most
\begin{align}
2q(s)H_n
\notag
\end{align}
dyadic intervals.  Let $\mathscr D_n$ be the full dyadic dictionary and define
\begin{align}
M_s
:=
2q(s)H_n
\sum_{D\in\mathscr D_n}\bone_D\bone_D^\top.
\notag
\end{align}
The same Cauchy--Schwarz argument used in \eqref{eq:dict-rank-one-domination} shows that $M_s$ dominates the outer product of every signed threshold vector at level $s$.

Put
\begin{align}
I_s:=\set{i:p_i\ge s},
\qquad
P_s:=\diag(\bone_{I_s}),
\qquad
H_s:=P_sM_sP_s.
\notag
\end{align}
Every threshold vector at level $s$ is supported in $I_s$, so $(H_s)$ is an admissible threshold covariance field.  Since every active coordinate belongs to at most $H_n$ dyadic intervals,
\begin{align}
\tr H_s
\le
2q(s)H_n^2|I_s|.
\notag
\end{align}
Integrating and applying Tonelli's theorem,
\begin{align*}
\int_0^\infty q(s)|I_s|\,ds
&=\sum_{i=1}^n\int_0^{p_i}q(s)\,ds\\
&=\sum_{i=1}^nQ(p_i).
\end{align*}
Definition \eqref{eq:integrated-threshold-domination} therefore gives
\begin{align}
\overline{\mathfrak T}_{L-1}(\bx)
\le
2H_n^2
\frac{\sum_iQ(p_i)}{\sum_ip_i},
\notag
\end{align}
which is \eqref{eq:ordered-integrated-profile}.

For a fixed admissible witness $u_\star$, the trace
\begin{align}
a_\star:=\left(u_\star(x_i)\right)_{i=1}^n
\notag
\end{align}
belongs to $\sS_{L-1}(\bx)$.  Hence
\begin{align}
\mathfrak A_{L-1}(\bx)
\ge
\frac{\sum_i|u_\star(x_i)|}{\sum_ip_i}.
\notag
\end{align}
Substitute this estimate and \eqref{eq:ordered-integrated-profile} into \eqref{eq:integrated-defect-bound}.  The factors $\sum_ip_i$ cancel, giving \eqref{eq:ordered-integrated-defect}.

For the laminar model, let $\mathscr D$ be the collection of all tree nodes.  The same closure argument shows that, for every $a\in\sS_{L-1}(\bx)$ and almost every threshold $s$, every signed threshold set is a disjoint union of at most $q(s)$ dictionary nodes.  Each sample index belongs to at most $H+1$ nodes.  Repeating the preceding construction with
\begin{align}
M_s
:=
q(s)
\sum_{D\in\mathscr D}\bone_D\bone_D^\top
\notag
\end{align}
gives
\begin{align}
\tr(P_sM_sP_s)
\le
q(s)(H+1)|I_s|.
\notag
\end{align}
Integration proves \eqref{eq:laminar-integrated-profile}, and the same witness lower bound for $\mathfrak A_{L-1}$ proves \eqref{eq:laminar-integrated-defect}.
\end{proof}

\begin{proof}[Proof of \Cref{cor:non-atomic-integrated-richness}]
Non-atomicity of the law of $T$ implies that the sample parameters are pairwise distinct with probability one, so they admit the order required by \Cref{thm:integrated-ordered-laminar}.  The deterministic theorem gives
\begin{align}
\chi_L(X_{1:n})^2
\le
2C_{\mathrm{geom}}
\frac{n^{-1}\sum_{i=1}^nV_i}{n^{-1}\sum_{i=1}^nU_i},
\notag
\end{align}
where
\begin{align}
U_i:=|u_\star(X_i)|,
\qquad
V_i:=Q\left(\norm{X_i}_2^{2\vartheta_L}\right).
\notag
\end{align}

The bounded-variable form of Hoeffding's inequality~\citep{hoeffding1963probability} gives
\begin{align}
\Pp\left(
\frac1n\sum_{i=1}^nU_i<m_U-\epsilon_U
\right)
&\le\frac\delta2,
\notag\\
\Pp\left(
\frac1n\sum_{i=1}^nV_i>m_V+\epsilon_V
\right)
&\le\frac\delta2.
\notag
\end{align}
A union bound shows that both inequalities fail with probability at most $\delta$.  On their common complement, if $m_U>\epsilon_U$,
\begin{align}
\chi_L(X_{1:n})^2
\le
2C_{\mathrm{geom}}
\frac{m_V+\epsilon_V}{m_U-\epsilon_U},
\notag
\end{align}
which proves \eqref{eq:non-atomic-general-defect}.  If $\epsilon_U\le m_U/2$ and $\epsilon_V\le m_V/2$, then
\begin{align}
\frac{m_V+\epsilon_V}{m_U-\epsilon_U}
\le
\frac{3m_V/2}{m_U/2}
=
3\frac{m_V}{m_U}.
\notag
\end{align}
Substitution proves \eqref{eq:non-atomic-simplified-defect}.
\end{proof}

\section{Proofs for Approximation Widths}\label[appendix]{app:approximation}

\begin{proof}[Proof of \Cref{thm:kolmogorov-width}]
We separate the exact width identity from the common-covariance upper bounds.

We first reduce the distance to an orthogonal residual.
Fix a linear subspace $V\subseteq\R^n$ and let $P_V$ be the Euclidean orthogonal projection.  For every $a\in\R^n$, the closest point in $V$ is $P_Va$.  Therefore
\begin{equation}\label{eq:app-dist-proj}
\operatorname{dist}_n(a,V)
=\frac{1}{\sqrt n}\norm{P_{V^\perp}a}_2.
\end{equation}
Indeed, $a=P_Va+P_{V^\perp}a$ is an orthogonal decomposition.  The claim follows from the Pythagorean identity.

We next compute the worst residual on one RKHS ellipsoid.
Fix $K\succeq0$.  By \eqref{eq:ellipsoid}, every $a\in E(K)$ has the form $a=K^{1/2}u$ with $\norm{u}_2\le1$.  Hence
\begin{align}
\sup_{a\in E(K)}\norm{P_{V^\perp}a}_2^2
&=
\sup_{\norm{u}_2\le1}
\norm{P_{V^\perp}K^{1/2}u}_2^2
\nonumber\\
&=
\norm{P_{V^\perp}K^{1/2}}_{\op}^2
\nonumber\\
&=
\lambda_{\max}\!\bigl(
P_{V^\perp}K P_{V^\perp}
\bigr).
\label{eq:app-one-ellipsoid}
\end{align}
The second equality is the definition of the operator norm.  For the last equality, set $A=P_{V^\perp}K^{1/2}$.  Then $AA^\top=P_{V^\perp}KP_{V^\perp}$, and the squared operator norm of $A$ is the largest eigenvalue of $AA^\top$.

We then pass from one ellipsoid to the full adaptive trace body.
By \Cref{thm:trace-body},
\begin{align}
\cA_L(\bx)=\cl\bigcup_{K\in\sK_L(\bx)}E(K).
\notag
\end{align}
The map $a\mapsto\norm{P_{V^\perp}a}_2$ is continuous, so it has the same supremum on a set and on its closure.  By \eqref{eq:ball-scaling}, $B_L(r)=rB_L(1)$; hence the closed trace body of $B_L(r)$ is exactly $r\cA_L(\bx)$.  Therefore \eqref{eq:app-dist-proj} and \eqref{eq:app-one-ellipsoid} imply
\begin{align}
&\sup_{f\in B_L(r)}
\operatorname{dist}_n\!\bigl((f(x_i))_{i=1}^n,V\bigr)^2
\nonumber\\
&\qquad=
\frac{r^2}{n}
\sup_{K\in\sK_L(\bx)}
\lambda_{\max}(P_{V^\perp}KP_{V^\perp}).
\label{eq:app-fixed-V-width}
\end{align}
Taking the infimum over all $V$ with $\dim V\le m$ proves \eqref{eq:width-exact}.

We next use one common covariance.
Let $N_*$ minimize \eqref{eq:tau-def}.  Then $K\preceq N_*$ for every $K\in\sK_L(\bx)$.  Congruence by the orthogonal projection $P_{V^\perp}$ preserves Loewner order, so
\begin{equation}\label{eq:app-compression-order}
P_{V^\perp}K P_{V^\perp}
\preceq
P_{V^\perp}N_*P_{V^\perp}.
\end{equation}
Choose $V=V_m^*$ to be the span of eigenvectors of $N_*$ corresponding to its $m$ largest eigenvalues.  In an orthonormal eigenbasis of $N_*$, the compression to $(V_m^*)^\perp$ has eigenvalues
\begin{align}
\lambda_{m+1}(N_*),\ldots,\lambda_n(N_*).
\notag
\end{align}
Hence
\begin{equation}\label{eq:app-courant}
\lambda_{\max}\!\bigl(
P_{(V_m^*)^\perp}N_*P_{(V_m^*)^\perp}
\bigr)
=
\lambda_{m+1}(N_*).
\end{equation}
Combining \eqref{eq:width-exact}, \eqref{eq:app-compression-order}, and \eqref{eq:app-courant} proves \eqref{eq:width-spectral-envelope}.

Finally, we convert spectral decay into the trace and profile bounds.
Because the eigenvalues are nonincreasing and nonnegative,
\begin{align}
(m+1)\lambda_{m+1}(N_*)
&\le\sum_{j=1}^{m+1}\lambda_j(N_*)
\nonumber\\
&\le\sum_{j=1}^{n}\lambda_j(N_*)
=\tr N_*
=\tau_L(\bx).
\label{eq:app-eigen-trace}
\end{align}
Insert \eqref{eq:app-eigen-trace} into \eqref{eq:width-spectral-envelope} to obtain \eqref{eq:width-trace-envelope}.  The identity $\tau_L=\Lambda_L^2S_L$ gives \eqref{eq:width-profile-envelope}.  Finally,
\begin{align}
S_L(\bx)
\le n\bigl(\max_i\norm{x_i}_2\bigr)^{2^{-(L-2)}},
\notag
\end{align}
so taking square roots gives \eqref{eq:width-radius-envelope}.  If $m=n$, choose $V=\R^n$ in \eqref{eq:kolmogorov-width-def}; then every residual is zero and $d_{n,L}(\bx;r)=0$.
\end{proof}

\begin{proof}[Proof of \Cref{cor:recursive-spectral-width}]
Fix a decomposition dictionary $\mathscr D$.  The construction in the proof of \Cref{thm:recursive-threshold} produces the matrix $N_{\mathscr D}$ in \eqref{eq:N-D-def} and shows that
\begin{align}
N_{\mathscr D}\succeq\mathbf B(a)
\qquad
\text{for every }a\in\sS_{L-1}(\bx).
\notag
\end{align}
By \Cref{thm:dirac-reduction}, this is equivalent to
\begin{align}
N_{\mathscr D}\succeq K
\qquad
\text{for every }K\in\sK_L(\bx).
\notag
\end{align}
Thus the common-covariance argument in the proof of \Cref{thm:kolmogorov-width} applies with $N_{\mathscr D}$ in place of the optimizer $N_*$.  Choosing the first $m$ eigenvectors of $N_{\mathscr D}$ yields
\begin{align}
d_{m,L}(\bx;r)
\le\frac{r}{\sqrt n}
\sqrt{\lambda_{m+1}(N_{\mathscr D})},
\notag
\end{align}
which is \eqref{eq:recursive-spectral-width}.

For the second claim, \eqref{eq:app-eigen-trace} gives
\begin{align}
\lambda_{m+1}(N_{\mathscr D})
\le\frac{\tr N_{\mathscr D}}{m+1}.
\notag
\end{align}
The trace calculation in the recursive-threshold proof gives
\begin{align}
\tr N_{\mathscr D}
\le2s(\mathscr D)\Delta(\mathscr D)S_L(\bx).
\notag
\end{align}
Taking the best dictionary gives
\begin{align}
\lambda_{m+1}(N_{\mathscr D})
\le
\frac{2\mathfrak D_{L-1}(\bx)S_L(\bx)}{m+1}
\notag
\end{align}
for a minimizing dictionary.  Substitute this estimate into \eqref{eq:recursive-spectral-width} and use
\begin{align}
\frac{S_L(\bx)}{n}
\le
\bigl(\max_i\norm{x_i}_2\bigr)^{2^{-(L-2)}}
\notag
\end{align}
to obtain \eqref{eq:recursive-width-D}.
\end{proof}

\begin{lemma}[Spectrum of the balanced-tree incidence covariance]\label{lem:haar-spectrum}
Let $n=2^H$ and let $\mathscr T$ be the set of all nodes of a complete balanced binary partition tree on $[n]$.  Define
\begin{align}
C_{\mathscr T}:=\sum_{D\in\mathscr T}\bone_D\bone_D^\top.
\notag
\end{align}
Then the discrete Haar basis associated with $\mathscr T$ diagonalizes $C_{\mathscr T}$.  The constant vector has eigenvalue $2n-1$.  For every internal tree node $D$ of cardinality $s$, the Haar wavelet supported on $D$ and constant with opposite signs on its two children has eigenvalue $s-1$.  Consequently, if the eigenvalues are arranged in nonincreasing order, then
\begin{equation}\label{eq:haar-spectrum-bound}
\lambda_{m+1}(C_{\mathscr T})
\le\frac{2n}{m+1},
\qquad
m=0,\ldots,n-1.
\end{equation}
\end{lemma}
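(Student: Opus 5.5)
We verify the eigenvector claims directly by computing $C_{\mathscr T}h$ for each Haar vector $h$, using only the laminar structure of $\mathscr T$. For the constant vector $\bone$, each node $D$ contributes $\bone_D(\bone_D^\top\bone)=|D|\bone_D$. Summing over nodes at a fixed level $\ell$ (with $2^\ell$ nodes of size $n2^{-\ell}$ partitioning $[n]$) gives $n2^{-\ell}\bone$. Summing over $\ell=0,\ldots,H$ then yields $n\sum_{\ell=0}^H2^{-\ell}\bone=(2n-1)\bone$, since $n=2^H$.

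For an internal node $D$ with children $D_1,D_2$ of size $s/2$, put $h_D:=\bone_{D_1}-\bone_{D_2}$. A node $E$ contributes $\bone_E(\bone_E^\top h_D)$, and $\bone_E^\top h_D=|E\cap D_1|-|E\cap D_2|$. By laminarity there are three cases.
\begin{enumerate}[(i)]
\item If $E\supseteq D$ or $E\cap D=\varnothing$, the inner product is $0$.
\item If $E\subseteq D_1$ it equals $|E|$; if $E\subseteq D_2$ it equals $-|E|$.
\end{enumerate}
Hence
\begin{align}
C_{\mathscr T}h_D=\sum_{E\subseteq D_1}|E|\bone_E-\sum_{E\subseteq D_2}|E|\bone_E,
\notag
\end{align}
where $E$ ranges over nodes of the subtrees rooted at $D_1$ and $D_2$. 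Apply the level-by-level computation to the subtree rooted at $D_1$: it is a complete balanced tree on $s/2=2^{h}$ points. Each of its levels contributes $(s/2)2^{-\ell}\bone_{D_1}$, so the total is $(2\cdot\tfrac s2-1)\bone_{D_1}=(s-1)\bone_{D_1}$. The same holds for $D_2$, so $C_{\mathscr T}h_D=(s-1)h_D$.

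The $n-1$ internal-node wavelets together with $\bone$ are mutually orthogonal and nonzero, so they form an orthogonal basis of $\R^n$. This basis therefore diagonalizes $C_{\mathscr T}$ with the stated eigenvalues.

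For \eqref{eq:haar-spectrum-bound}, sort the spectrum. The top eigenvalue is $2n-1$. Next, at depth $\ell=0,\ldots,H-1$ there are $2^\ell$ internal nodes, each with eigenvalue $n2^{-\ell}-1$; these values decrease with $\ell$. Fix $m\ge1$ and let $\ell$ be the unique depth with $2^\ell\le m<2^{\ell+1}$. The eigenvalues ranked $2,\ldots,2^{\ell+1}$ are exactly $\bone$'s successors through depth $\ell$, because $1+\sum_{k<\ell+1}2^k=2^{\ell+1}$. Since $m+1\le 2^{\ell+1}$, the $(m+1)$-st eigenvalue is at most the depth-$\ell$ value, so
\begin{align}
\lambda_{m+1}\le n2^{-\ell}-1<n2^{-\ell}\le \frac{2n}{m+1},
\notag
\end{align}
where the last inequality uses $m+1\le2^{\ell+1}$. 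For $m=0$, the bound reads $2n-1\le 2n$, which holds.

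The main obstacle is the bookkeeping in the sorting step, namely aligning ranks with tree levels and handling the possibility $m\ge 2^H$. That case cannot occur, since $m\le n-1<2^H$, so the depth $\ell\le H-1$ always exists. The eigenvector verification itself is a routine laminarity argument.
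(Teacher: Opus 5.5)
Your proof is correct and follows essentially the same route as the paper's: the same laminarity case split for $C_{\mathscr T}h_D$, the same orthogonal Haar basis, and the same rank-to-depth bookkeeping for the ordered spectrum (your reduction of the descendant sum to the constant-vector computation on each child subtree just repackages the paper's level-by-level sum $s/2+s/4+\cdots+1=s-1$). One small slip in the bookkeeping: $\lambda_{m+1}\le n2^{-\ell}-1$ follows from $m+1\ge 2^{\ell}+1$ (ranks beyond $2^\ell$ lie at depth at least $\ell$), not from $m+1\le 2^{\ell+1}$, which gives the reverse inequality; together they show $\lambda_{m+1}=n2^{-\ell}-1$ exactly, and $m+1\le 2^{\ell+1}$ is needed only for the final step $n2^{-\ell}\le 2n/(m+1)$.
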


\begin{proof}
We prove the eigenvalue formulas directly.

We first consider the constant vector.
Let $\bone$ be the all-ones vector.  For a node $D$,
\begin{align}
\bone_D\bone_D^\top\bone=|D|\bone_D.
\notag
\end{align}
Fix a leaf index $i$.  Exactly one node of each size $n,n/2,\ldots,1$ contains $i$.  Therefore the $i$-th coordinate of $C_{\mathscr T}\bone$ is
\begin{align}
n+\frac n2+\frac n4+\cdots+1=2n-1.
\notag
\end{align}
This holds for every $i$, so
\begin{align}
C_{\mathscr T}\bone=(2n-1)\bone.
\notag
\end{align}

We next consider one Haar wavelet.
Let $D$ be an internal node of size $s$, with children $D_+$ and $D_-$ of size $s/2$.  Let $h_D$ be the normalized vector supported on $D$, constant and positive on $D_+$, constant and negative on $D_-$, and satisfying $\norm{h_D}_2=1$.  Every ancestor $A\supseteq D$ and the node $D$ itself satisfy
\begin{align}
\bone_A^\top h_D=0,
\notag
\end{align}
because the entries of $h_D$ sum to zero on $D$.  Every node disjoint from $D$ also gives zero.  It remains to sum over the proper descendants of $D$.

For an integer $\ell\in\{1,\ldots,\log_2 s\}$, let $\mathscr P_\ell(D)$ be the $2^\ell$ descendants of $D$ at depth $\ell$ below $D$.  They partition $D$, every $A\in\mathscr P_\ell(D)$ has cardinality $s/2^\ell$, and $h_D$ is constant on $A$.  If that constant is $c_A$, then
\begin{align}
\bone_A^\top h_D
=\sum_{i\in A}(h_D)_i
=|A|c_A
=\frac{s}{2^\ell}c_A.
\notag
\end{align}
Therefore
\begin{align*}
\sum_{A\in\mathscr P_\ell(D)}
\bone_A\bone_A^\top h_D
&=
\sum_{A\in\mathscr P_\ell(D)}
\bigl(\frac{s}{2^\ell}c_A\bigr)\bone_A\\
&=\frac{s}{2^\ell}
\sum_{A\in\mathscr P_\ell(D)}c_A\bone_A\\
&=\frac{s}{2^\ell}h_D.
\end{align*}
The first equality uses the preceding inner-product calculation, and the last equality uses that the sets in $\mathscr P_\ell(D)$ partition the support of $h_D$ and $h_D=c_A$ on each such set.  Summing these contributions over all proper descendant levels $\ell=1,\ldots,\log_2 s$ gives
\begin{align}
C_{\mathscr T}h_D
=
\bigl(\frac s2+\frac s4+\cdots+1\bigr)h_D
=(s-1)h_D.
\notag
\end{align}
Thus every Haar wavelet is an eigenvector with the claimed eigenvalue.  The constant vector together with the $n-1$ Haar wavelets forms an orthonormal basis, so these are all eigenvalues.

Finally, we order the eigenvalues.
For $j=0,\ldots,H-1$, there are $2^j$ internal nodes of size $n/2^j$, hence $2^j$ Haar eigenvalues equal to $n/2^j-1$.  Together with the constant eigenvalue, the level-$j$ wavelets occupy indices from $2^j+1$ through $2^{j+1}$ in the nonincreasing ordering.  If $m+1$ lies in this range, then
\begin{align}
\lambda_{m+1}(C_{\mathscr T})
=\frac{n}{2^j}-1
\le\frac{n}{2^j}.
\notag
\end{align}
Because $m+1\le2^{j+1}$,
\begin{align}
\frac{n}{2^j}
\le\frac{2n}{m+1}.
\notag
\end{align}
For $m=0$, the bound reads $2n-1\le2n$.  This proves \eqref{eq:haar-spectrum-bound} in every case.
\end{proof}

\begin{proof}[Proof of \Cref{thm:haar-width}]
Let $\mathscr D=\mathscr T$ be the set of all tree nodes.  By assumption every realizable signed threshold set is a disjoint union of at most $q$ nodes, hence
\begin{align}
s(\mathscr T)\le q.
\notag
\end{align}
Since all input norms equal $R$,
\begin{align}
p_i=d_{L,i}=p:=R^{\,2^{-(L-2)}}
\qquad\text{for every }i.
\notag
\end{align}
For $0\le t\le p$, the active set $I_t$ is all of $[n]$ up to the endpoint $t=p$, which does not affect the integral.  Thus $P_t=I_n$ almost everywhere on $[0,p]$ and $P_t=0$ for $t>p$.  From \eqref{eq:approx-HD}--\eqref{eq:approx-ND},
\begin{align}
N_{\mathscr T}
&=2\int_0^p H_{\mathscr T}\,dt
\nonumber\\
&=2p\,s(\mathscr T)
\sum_{D\in\mathscr T}\bone_D\bone_D^\top
\nonumber\\
&\preceq2pq\,C_{\mathscr T}.
\label{eq:haar-N-bound}
\end{align}
The last relation is scalar comparison, since $s(\mathscr T)\le q$.

By \Cref{cor:recursive-spectral-width,lem:haar-spectrum},
\begin{align}
d_{m,L}(\bx;r)
&\le\frac{r}{\sqrt n}
\sqrt{\lambda_{m+1}(N_{\mathscr T})}
\nonumber\\
&\le\frac{r}{\sqrt n}
\sqrt{2pq\,\lambda_{m+1}(C_{\mathscr T})}
\nonumber\\
&\le\frac{r}{\sqrt n}
\sqrt{2pq\,\frac{2n}{m+1}}
\nonumber\\
&=2r\sqrt p\sqrt{\frac{q}{m+1}}.
\end{align}
Since
\begin{align}
\sqrt p=R^{\,2^{-(L-1)}},
\notag
\end{align}
this is exactly \eqref{eq:haar-width}.  The eigenspaces of $C_{\mathscr T}$ are the constant mode and Haar wavelets ordered from coarse to fine by \Cref{lem:haar-spectrum}; therefore their leading $m$ modes give the stated sample-dependent but target-independent trace subspace.
\end{proof}

\begin{proof}[Proof of \Cref{prop:stable-width-lower}]
We first establish directly the rank-one implication used below.  Let $K\succeq0$ and suppose $yy^\top\preceq K$.  For every $\varepsilon>0$,
\begin{align}
yy^\top\preceq K+\varepsilon I.
\notag
\end{align}
Congruence by $(K+\varepsilon I)^{-1/2}$ gives
\begin{align}
(K+\varepsilon I)^{-1/2}yy^\top(K+\varepsilon I)^{-1/2}\preceq I.
\notag
\end{align}
The left side is rank one, with only nonzero eigenvalue $y^\top(K+\varepsilon I)^{-1}y$.  Hence
\begin{align}
y^\top(K+\varepsilon I)^{-1}y\le1
\qquad\text{for every }\varepsilon>0.
\notag
\end{align}
If $y$ had a nonzero component in $\ker K$, this quantity would diverge as $\varepsilon\downarrow0$.  Thus $y\in\ran K$, and passage to the limit on $\ran K$ gives
\begin{align}
y^\top K^\dagger y\le1.
\notag
\end{align}
By \eqref{eq:ellipsoid}, $y\in E(K)$.

For each $j$, choose the trace $a^{(j)}$ and the stable sign specified in the hypothesis.  We write the positive-threshold case; the negative-threshold case is identical.

We first show that each stable block creates one vector in the full depth-$L$ trace body.
The Brownian threshold identity gives
\begin{align}
\mathbf B(a^{(j)})
&\succeq
\int_{t_j}^{t_j+\delta}
 v_u^+(a^{(j)})v_u^+(a^{(j)})^\top\,du
\nonumber\\
&=
\int_{t_j}^{t_j+\delta}
\bone_{A_j}\bone_{A_j}^\top\,du
\nonumber\\
&=
\delta\bone_{A_j}\bone_{A_j}^\top.
\label{eq:stable-block-rankone}
\end{align}
Set
\begin{align}
y_j:=\sqrt\delta\,\bone_{A_j}.
\notag
\end{align}
Then $y_jy_j^\top\preceq\mathbf B(a^{(j)})$, so the rank-one implication just proved gives
\begin{align}
y_j\in E(\mathbf B(a^{(j)})).
\notag
\end{align}
Because $a^{(j)}\in\sS_{L-1}(\bx)$ may be a limit of actual unit traces, \Cref{thm:dirac-reduction} places $\mathbf B(a^{(j)})$ in the compact covariance closure $\sK_L(\bx)$; it need not be the Gram matrix of one realized ladder.  By \Cref{thm:trace-body}, however, $E(\mathbf B(a^{(j)}))\subseteq\cA_L(\bx)$, so $y_j\in\cA_L(\bx)$.  The distance to a fixed subspace is continuous, and therefore its supremum over the actual radius-$r$ traces equals its supremum over their closure $r\cA_L(\bx)$.  Thus the vectors $ry_j$ may be used in the width lower bound without an attainment assumption.

We then show that no $m$-dimensional space can approximate all disjoint block vectors too well.
Because the sets $A_1,\ldots,A_M$ are pairwise disjoint and all have cardinality $s$, the normalized vectors
\begin{align}
u_j:=\frac{1}{\sqrt s}\bone_{A_j},
\qquad j=1,\ldots,M,
\notag
\end{align}
form an orthonormal family.  Let $V\subseteq\R^n$ satisfy $\dim V\le m$.  Then
\begin{align}
\sum_{j=1}^M\norm{P_Vu_j}_2^2
&=\sum_{j=1}^M\ip{u_j}{P_Vu_j}
\nonumber\\
&\le\tr P_V
\le m.
\label{eq:stable-proj-budget}
\end{align}
To justify the first inequality, extend $u_1,\ldots,u_M$ to an orthonormal basis and write the trace of the positive semidefinite projection $P_V$ in that basis; the omitted terms are nonnegative.

Since $\norm{u_j}_2=1$,
\begin{align}
\norm{P_{V^\perp}u_j}_2^2
=1-\norm{P_Vu_j}_2^2.
\notag
\end{align}
Summing and using \eqref{eq:stable-proj-budget},
\begin{align}
\sum_{j=1}^M\norm{P_{V^\perp}u_j}_2^2
\ge M-m.
\notag
\end{align}
Therefore at least one index $j$ satisfies
\begin{align}
\norm{P_{V^\perp}u_j}_2^2
\ge1-\frac{m}{M}.
\notag
\end{align}
For that $j$,
\begin{align}
\operatorname{dist}_n(ry_j,V)^2
&=\frac{r^2}{n}\norm{P_{V^\perp}y_j}_2^2
\nonumber\\
&=\frac{r^2\delta s}{n}
\norm{P_{V^\perp}u_j}_2^2
\nonumber\\
&\ge
\frac{r^2\delta s}{n}
\bigl(1-\frac{m}{M}\bigr).
\end{align}
Since $V$ was arbitrary, taking the infimum over $\dim V\le m$ proves \eqref{eq:stable-width-lower}.

If the $A_j$ form an equal partition and $M=2m$, then $s=n/(2m)$ and $1-m/M=1/2$.  Substitution gives
\begin{align}
d_{m,L}(\bx;r)^2
\ge\frac{r^2\delta}{4m},
\notag
\end{align}
which is \eqref{eq:stable-width-partition}.
\end{proof}

\section{Proof of the Learning Consequence}\label[appendix]{app:learning}

\begin{proof}[Proof of \Cref{cor:generalization}]
Fix the realized sample $S=((X_i,Y_i))_{i=1}^n$.  Apply the one-sided empirical Rademacher uniform-deviation theorem for functions with range length $M_\ell$~\cite{bartlett2002rademacher} first to the loss class $\ell\circ B_L(r)$ with failure probability $\delta/2$, and then to its negative with the same failure probability.  The Rademacher complexities of a class and its negative are equal, because replacing every Rademacher sign by its negative preserves the joint law.  A union bound therefore gives, with probability at least $1-\delta$,
\begin{equation}\label{eq:app-uniform-rad}
\sup_{f\in B_L(r)}|R(f)-R_n(f)|
\le2\Rhat_S(\ell\circ B_L(r))
+3M_\ell\sqrt{\frac{\log(4/\delta)}{2n}}.
\end{equation}
Here
\begin{align}
\Rhat_S(\ell\circ B_L(r))
:=\E_\sigma\sup_{f\in B_L(r)}
\frac1n\sum_{i=1}^n\sigma_i\ell(f(X_i),Y_i).
\notag
\end{align}
For each $i$, define
$\varphi_i(t):=\ell(t,Y_i)-\ell(0,Y_i)$.  Then $\varphi_i(0)=0$ and $\varphi_i$ is $L_\ell$-Lipschitz.  The term
$n^{-1}\sum_i\sigma_i\ell(0,Y_i)$ does not depend on $f$ and has zero expectation with respect to $\sigma$.  The contraction inequality therefore yields
\begin{equation}\label{eq:app-contraction}
\Rhat_S(\ell\circ B_L(r))
\le L_\ell\Rhat_{X_{1:n}}(B_L(r)).
\end{equation}

We next compare Rademacher and Gaussian complexities explicitly.  Let $A$ be the sample trace set of $B_L(r)$ and let $h_A$ be its support function.  Write a standard Gaussian vector as
$G_i=\sigma_i R_i$, where the signs $\sigma_i$ are independent Rademacher variables, the magnitudes $R_i=|G_i|$ are independent of the signs, and
$\E R_i=\sqrt{2/\pi}$.  For fixed $\sigma$, convexity of the support function and Jensen's inequality in $R$ give
\begin{align}
\E_R h_A(\sigma\odot R)
\ge h_A(\sigma\odot\E R)
=\sqrt{\frac{2}{\pi}}\,h_A(\sigma).
\notag
\end{align}
Averaging over $\sigma$ and dividing by $n$ gives
\begin{equation}\label{eq:app-rad-gauss}
\Rhat_{X_{1:n}}(B_L(r))
\le\sqrt{\frac{\pi}{2}}\,
\Ghat_{X_{1:n}}(B_L(r)).
\end{equation}
Combine \eqref{eq:app-uniform-rad}--\eqref{eq:app-rad-gauss} with \Cref{thm:profile-complexity}.  Absorbing the numerical factor $2\sqrt{\pi/2}$ into a universal constant $C$ proves \eqref{eq:generalization}.  Equation \eqref{eq:generalization-lambda} follows from \eqref{eq:profile-bound-radius}.
\end{proof}

\section{Proofs for Covariance Certification}\label[appendix]{app:optimization}

\subsection{Sparse Contact Certificates}

\begin{proof}[Proof of \Cref{thm:sparse-contact}]
Fix an optimizer $N_*$ of the full problem.  We construct a finite isotropic certificate.

We first approximate the $2$-summing optimum by a finite sequence.
By \Cref{thm:pi2}, for every integer $k\ge1$ there are vectors
$z_{k,1},\ldots,z_{k,r_k}\in\R^n$ satisfying
\begin{equation}\label{eq:contact-approx-moment}
\sum_{r=1}^{r_k}z_{k,r}z_{k,r}^\top\preceq I_n
\end{equation}
and
\begin{equation}\label{eq:contact-approx-value}
\sum_{r=1}^{r_k}h_{L,\bx}(z_{k,r})^2
\ge\tau_L(\bx)-\frac1k.
\end{equation}
The compactness of $\sS_{L-1}(\bx)$ and continuity of
$a\mapsto z^\top\mathbf B(a)z$ imply that the supremum in \eqref{eq:dirac-support} is attained.  Hence, for every $r$, choose
$a_{k,r}\in\sS_{L-1}(\bx)$ such that
\begin{equation}\label{eq:contact-maximizer}
h_{L,\bx}(z_{k,r})^2
=z_{k,r}^\top\mathbf B(a_{k,r})z_{k,r}.
\end{equation}

We next fill the isotropic slack.
Set
\begin{align}
R_k:=I_n-\sum_{r=1}^{r_k}z_{k,r}z_{k,r}^\top\succeq0.
\notag
\end{align}
By the spectral theorem, there are vectors $y_{k,1},\ldots,y_{k,s_k}$ such that
\begin{align}
R_k=\sum_{s=1}^{s_k}y_{k,s}y_{k,s}^\top.
\notag
\end{align}
For every nonzero $y_{k,s}$, choose
$b_{k,s}\in\sS_{L-1}(\bx)$ attaining
$h_{L,\bx}(y_{k,s})^2$.  Append these vectors and traces to the finite family selected above.  Denote the combined vectors by $w_{k,j}$ and the corresponding traces by $c_{k,j}$.  Then
\begin{equation}\label{eq:contact-exact-moment-k}
\sum_jw_{k,j}w_{k,j}^\top=I_n.
\end{equation}
The combined objective
\begin{align}
J_k:=\sum_jw_{k,j}^\top\mathbf B(c_{k,j})w_{k,j}
\notag
\end{align}
is at least the left side of \eqref{eq:contact-approx-value}.  On the other hand, the combined sequence is admissible in \eqref{eq:pi2-def}, so \Cref{thm:pi2} gives $J_k\le\tau_L(\bx)$.  Consequently,
\begin{equation}\label{eq:contact-J-limit}
\tau_L(\bx)-\frac1k\le J_k\le\tau_L(\bx).
\end{equation}

We then rewrite the certificate in a fixed finite-dimensional cone.
Discard zero vectors.  Write
\begin{align}
w_{k,j}=\sqrt{\lambda_{k,j}}u_{k,j},
\qquad
\lambda_{k,j}:=\norm{w_{k,j}}_2^2>0,
\qquad
u_{k,j}:=\frac{w_{k,j}}{\norm{w_{k,j}}_2}\in S^{n-1}.
\notag
\end{align}
Let $\operatorname{vec}_{\mathrm{sym}}$ be any fixed linear identification of $\mathbb S^n$ with $\R^{d_n}$ and define
\begin{align}
\mathscr C(u,a)
:=\bigl(
\operatorname{vec}_{\mathrm{sym}}(uu^\top),
\ u^\top\mathbf B(a)u
\bigr)
\in\R^{d_n+1}.
\notag
\end{align}
Equations \eqref{eq:contact-exact-moment-k} and the definition of $J_k$ give
\begin{equation}\label{eq:contact-cone-representation}
\bigl(
\operatorname{vec}_{\mathrm{sym}}(I_n),J_k
\bigr)
=\sum_j\lambda_{k,j}\mathscr C(u_{k,j},c_{k,j}).
\end{equation}

We next reduce to at most $d_n+1$ terms.  We use the following finite-dimensional conic argument directly.  Suppose
\begin{align}
x=\sum_{j=1}^m\lambda_ja_j,
\qquad
\lambda_j>0,
\qquad
a_j\in A\subset\R^p,
\notag
\end{align}
with $m>p$.  The vectors $a_1,\ldots,a_m$ are linearly dependent, so choose a nonzero $c=(c_1,\ldots,c_m)$ with $\sum_jc_ja_j=0$.  Reverse its sign if necessary so that some $c_j>0$, and put
\begin{align}
t_*:=\min_{j:c_j>0}\frac{\lambda_j}{c_j}.
\notag
\end{align}
The coefficients $\lambda_j':=\lambda_j-t_*c_j$ are nonnegative, at least one is zero, and $\sum_j\lambda_j'a_j=x$.  Deleting zero coefficients and repeating gives a conic representation with at most $p$ terms.

Apply this argument in dimension $p=d_n+1$ to \eqref{eq:contact-cone-representation}.  For every $k$, we obtain a representation with at most $d_n+1$ nonzero coefficients.  Pad shorter representations with zero coefficients so that every representation has exactly $d_n+1$ indexed positions; at a zero coefficient, choose the accompanying sphere point and trace arbitrarily from the nonempty compact index sets.  Taking traces in the matrix part of \eqref{eq:contact-cone-representation} gives
\begin{align}
\sum_{j=1}^{d_n+1}\lambda_{k,j}=\tr I_n=n.
\notag
\end{align}
Thus every coefficient belongs to $[0,n]$.

We then pass to a limiting finite certificate.
The sphere $S^{n-1}$ is compact.  The trace set $\sS_{L-1}(\bx)$ is closed by definition and bounded coordinatewise by the preceding-layer evaluation scale, hence it is compact in $\R^n$.  The coefficient vectors belong to the compact simplex
$\set{\lambda\ge0:\sum_j\lambda_j=n}$.  Hence, after passing to a subsequence, for every $j$,
\begin{align}
\lambda_{k,j}\longrightarrow\lambda_j,
\qquad
u_{k,j}\longrightarrow u_j,
\qquad
c_{k,j}\longrightarrow a_j.
\notag
\end{align}
Continuity of $\mathscr C$ and \eqref{eq:contact-J-limit} imply
\begin{align}
I_n&=\sum_{j=1}^{d_n+1}\lambda_ju_ju_j^\top,\label{eq:contact-limit-moment}\\
\tau_L(\bx)&=\sum_{j=1}^{d_n+1}\lambda_j u_j^\top\mathbf B(a_j)u_j.\label{eq:contact-limit-value}
\end{align}
Delete the zero coefficients and define $z_j:=\sqrt{\lambda_j}u_j$.  This proves \eqref{eq:contact-isotropy} and \eqref{eq:contact-value}, with $m\le d_n+1$.

We next prove the contact equations.
For every $j$, full feasibility of $N_*$ and \Cref{thm:dirac-reduction} give
$N_*-\mathbf B(a_j)\succeq0$.  Therefore
\begin{align}
z_j^\top\bigl(N_*-\mathbf B(a_j)\bigr)z_j\ge0.
\notag
\end{align}
Summing these nonnegative terms and using \eqref{eq:contact-isotropy}, \eqref{eq:contact-value}, and $\tr N_*=\tau_L(\bx)$ yields
\begin{align*}
\sum_{j=1}^m z_j^\top\bigl(N_*-\mathbf B(a_j)\bigr)z_j
&=\tr\!\bigl(N_*\sum_{j=1}^m z_jz_j^\top\bigr)
-\sum_{j=1}^m z_j^\top\mathbf B(a_j)z_j\\
&=\tr N_*-\tau_L(\bx)\\
&=0.
\end{align*}
Every summand is nonnegative, so each one is zero.  If $A\succeq0$ and $z^\top Az=0$, then
$\norm{A^{1/2}z}_2^2=0$, hence $Az=0$.  Apply this with
$A=N_*-\mathbf B(a_j)$ to obtain \eqref{eq:contact-equations}.

Finally, we prove exactness of the finite family.
Set $Z_j:=z_jz_j^\top$.  Equation \eqref{eq:contact-isotropy} makes $(Z_j)_{j=1}^m$ feasible for the finite dual \eqref{eq:finite-dual} associated with $\mathcal F_*$.  Its objective value is $\tau_L(\bx)$ by \eqref{eq:contact-value}.  Therefore finite strong duality gives
\begin{align}
\tau(\mathcal F_*)\ge\tau_L(\bx).
\notag
\end{align}
The reverse inequality holds because $\mathcal F_*\subset\sK_L(\bx)$ and the finite primal has fewer constraints than the full primal.  Hence
$\tau(\mathcal F_*)=\tau_L(\bx)$, proving \eqref{eq:finite-exact-profile}.
\end{proof}

\begin{proof}[Proof of \Cref{cor:finite-multiplier-attainment}]
Let $m\le n(n+1)/2+1$, $a_1,\ldots,a_m$, and $z_1,\ldots,z_m$ be the contact certificate from \Cref{thm:sparse-contact}.  The isotropy relation gives
\begin{align}
\sum_{j=1}^m z_jz_j^\top=I_n.
\notag
\end{align}
For every $j$, feasibility of $N_*$ and the support identity imply
\begin{align}
z_j^\top\mathbf B(a_j)z_j
\le
h_{L,\bx}(z_j)^2
\le
z_j^\top N_*z_j.
\notag
\end{align}
The contact equation $\bigl(N_*-\mathbf B(a_j)\bigr)z_j=0$ makes both inequalities equalities.  Summing and using \eqref{eq:contact-value} gives
\begin{align}
\sum_{j=1}^m h_{L,\bx}(z_j)^2=\tau_L(\bx).
\notag
\end{align}
Define
\begin{align}
\nu_*:=\frac1{2m}\sum_{j=1}^m
\bigl(\delta_{\sqrt m z_j}+\delta_{-\sqrt m z_j}\bigr).
\notag
\end{align}
This law is symmetric and centered.  Its covariance is
\begin{align}
\int zz^\top\,d\nu_*(z)
=\sum_{j=1}^m z_jz_j^\top
=I_n.
\notag
\end{align}
Absolute homogeneity of the support function yields
\begin{align}
\int h_{L,\bx}(z)^2\,d\nu_*(z)
=\sum_{j=1}^m h_{L,\bx}(z_j)^2
=\tau_L(\bx).
\notag
\end{align}
Thus $\nu_*$ attains the supremum in \eqref{eq:robust-minimax} and has at most $2m\le2\bigl(n(n+1)/2+1\bigr)$ support points.
\end{proof}

\subsection{Certified Active-Set Brackets}

\begin{proof}[Proof of \Cref{prop:certified-computation}]
We first prove \eqref{eq:floor-bias}.  Every matrix feasible for the floored problem is feasible for the original problem, so
$\tau_L(\bx)\le\tau_{L,\varepsilon}(\bx)$.  Conversely, if $N_*$ solves the original problem, then
$N_*+\varepsilon I_n\succeq\varepsilon I_n$ and
$N_*+\varepsilon I_n\succeq K$ for every $K\in\sK_L(\bx)$.  Hence
\begin{align}
\tau_{L,\varepsilon}(\bx)
\le\tr(N_*+\varepsilon I_n)
=\tau_L(\bx)+n\varepsilon.
\notag
\end{align}

Now let $\mathcal F$ and $N_{\mathcal F}$ be as in the proposition.  Since the finite problem imposes only a subset of the full constraints,
\begin{align}
\tr N_{\mathcal F}=\tau(\mathcal F)\le\tau_L(\bx).
\notag
\end{align}
Put $\widetilde N:=N_{\mathcal F}+\varepsilon I_n\succ0$ and
$\alpha:=\operatorname{sep}_L(\widetilde N;\bx)$.  For every $a\in\sS_{L-1}(\bx)$,
\begin{align}
\lambda_{\max}(\widetilde N^{-1/2}\mathbf B(a)\widetilde N^{-1/2})\le\alpha
\notag
\end{align}
is equivalent to
\begin{align}
\widetilde N^{-1/2}\mathbf B(a)\widetilde N^{-1/2}\preceq\alpha I_n.
\notag
\end{align}
Congruence by $\widetilde N^{1/2}$ gives
$\mathbf B(a)\preceq\alpha\widetilde N$.  By \Cref{thm:dirac-reduction},
$\alpha\widetilde N$ dominates every matrix in $\sK_L(\bx)$ and is full-family feasible.  Therefore
\begin{align}
\tau_L(\bx)
\le\alpha\tr\widetilde N
=\operatorname{sep}_L(N_{\mathcal F}+\varepsilon I_n;\bx)
(\tr N_{\mathcal F}+n\varepsilon),
\notag
\end{align}
which is \eqref{eq:certified-bracket}.  Equation \eqref{eq:eta-bracket} follows by substituting the verified bound $\alpha\le1+\eta$.  Finally, if
$\mathcal F_t\subset\mathcal F_{t+1}$, then the feasible set of the $(t+1)$-st finite primal is contained in that of the $t$-th, so
$\tau(\mathcal F_t)\le\tau(\mathcal F_{t+1})$.
\end{proof}

\subsection{Convex Resistance Design}

\begin{proof}[Proof of \Cref{thm:resistance-design}]
We first characterize the finite-objective domain.
For every $w\in\mathcal W_{\mathcal E}$, $L(w)\succeq0$ and $L(w)\bone=0$.  The matrix
$L(w)+J_n$ is positive definite if and only if the positive-weight support graph is connected.  Indeed, on $\operatorname{span}\{\bone\}$, $J_n$ acts as the identity and $L(w)$ vanishes.  On $\bone^\perp$, $J_n$ vanishes, and $L(w)$ is positive definite exactly when its nullspace is $\operatorname{span}\{\bone\}$, which is equivalent to connectedness.

If the support is connected, the orthogonal decomposition
$\R^n=\operatorname{span}\{\bone\}\oplus\bone^\perp$ gives
\begin{equation}\label{eq:inverse-lap-j}
\bigl(L(w)+J_n\bigr)^{-1}=L(w)^\dagger+J_n.
\end{equation}
Taking traces and using $\tr J_n=1$ proves
$\Phi_{\mathcal E}(w)=\tr L(w)^\dagger$.

We next prove convexity and existence.
On the positive-definite cone, let $A(t)=A+tH$ with $A\succ0$ and $H=H^\top$.  Differentiating the identity
$A(t)A(t)^{-1}=I$ gives
\begin{align}
\frac{d}{dt}A(t)^{-1}=-A(t)^{-1}HA(t)^{-1}.
\notag
\end{align}
Differentiating once more and taking traces yields
\begin{align}
\frac{d^2}{dt^2}\tr A(t)^{-1}
=2\tr\bigl(A(t)^{-1}HA(t)^{-1}HA(t)^{-1}\bigr).
\notag
\end{align}
Set $C=A(t)^{-1/2}HA(t)^{-1/2}$.  Direct multiplication gives
\begin{align}
A(t)^{-1}HA(t)^{-1}HA(t)^{-1}
=A(t)^{-1/2}C^2A(t)^{-1/2}\succeq0.
\notag
\end{align}
Therefore the second derivative is nonnegative.  Since $w\mapsto L(w)+J_n$ is affine, $\Phi_{\mathcal E}$ is convex on its finite domain; assigning $+\infty$ to the singular boundary gives an extended convex function on the simplex.

The simplex $\mathcal W_{\mathcal E}$ is compact and contains a point with all edge weights positive, hence a point with finite objective.  Let $(w_k)$ be a minimizing sequence and pass to a convergent subsequence $w_k\to w_*$.  If $L(w_*)+J_n$ were singular, its smallest eigenvalue would be zero.  Continuity of eigenvalues would give
$\lambda_{\min}(L(w_k)+J_n)\to0$, and hence
\begin{align}
\Phi_{\mathcal E}(w_k)+1
=\tr(L(w_k)+J_n)^{-1}
\ge\frac{1}{\lambda_{\min}(L(w_k)+J_n)}\longrightarrow\infty,
\notag
\end{align}
contradicting the existence of a finite feasible value.  Thus $w_*$ has connected support and continuity on the positive-definite cone gives attainment.

We then derive the full-family covariance certificate.
For a feasible connected $w$, apply \Cref{thm:graph-majorant} to the weighted graph with Laplacian $L(w)$.  Equations \eqref{eq:M-bound} and \eqref{eq:V-bound} give
\begin{align}
M_{L-1,1}(\bx)\le S_L(\bx),
\qquad
V_{L-1,G}(\bx)
\le\sum_{e\in\mathcal E}c_ew_e
=1.
\notag
\end{align}
Therefore \eqref{eq:graph-tau} gives
\begin{align}
\tau_L(\bx)
\le2S_L(\bx)+2\tr L(w)^\dagger
=2S_L(\bx)+2\Phi_{\mathcal E}(w).
\notag
\end{align}
Minimize over $w$ to obtain \eqref{eq:resistance-tau}.  Divide by $S_L(\bx)>0$ to obtain \eqref{eq:resistance-lambda}.

We next compute the gradient.
Fix a connected feasible $w$ and set $A=L(w)+J_n$.  Since
$\partial A/\partial w_e=b_eb_e^\top$, the first-derivative calculation above gives
\begin{align*}
\frac{\partial\Phi_{\mathcal E}}{\partial w_e}(w)
&=-\tr\bigl(A^{-1}b_eb_e^\top A^{-1}\bigr)\\
&=-b_e^\top A^{-2}b_e.
\end{align*}
Because $b_e\perp\bone$, equation \eqref{eq:inverse-lap-j} gives
$A^{-1}b_e=L(w)^\dagger b_e$.  Thus
\begin{align}
b_e^\top A^{-2}b_e
=\norm{A^{-1}b_e}_2^2
=b_e^\top(L(w)^\dagger)^2b_e
=q_e(w),
\notag
\end{align}
which proves \eqref{eq:resistance-gradient}.

We then prove the Euler identity.
Using $L(w)=\sum_e w_eb_eb_e^\top$ and cyclicity of the trace,
\begin{align*}
\sum_e w_eq_e(w)
&=\sum_e w_e\tr\bigl((L(w)^\dagger)^2b_eb_e^\top\bigr)\\
&=\tr\bigl((L(w)^\dagger)^2L(w)\bigr).
\end{align*}
In an eigenbasis of $L(w)$, a positive eigenvalue $\lambda$ contributes
$\lambda^{-2}\lambda=\lambda^{-1}$, while the zero eigenvalue contributes zero.  Hence
\begin{align}
\tr\bigl((L(w)^\dagger)^2L(w)\bigr)=\tr L(w)^\dagger=\Phi_{\mathcal E}(w),
\notag
\end{align}
which proves \eqref{eq:resistance-euler}.

We next derive the optimality conditions.
The Lagrangian for the finite-objective convex program is
\begin{align}
\mathcal L(w,\lambda,\mu)
=\Phi_{\mathcal E}(w)
+\lambda\bigl(\sum_ec_ew_e-1\bigr)
-\sum_e\mu_ew_e,
\qquad \mu_e\ge0.
\notag
\end{align}
Stationarity and \eqref{eq:resistance-gradient} give
\begin{align}
-q_e(w)+\lambda c_e-\mu_e=0.
\notag
\end{align}
If $w_e>0$, complementary slackness gives $\mu_e=0$, so
$q_e(w)/c_e=\lambda$.  If $w_e=0$, then $\mu_e\ge0$ gives
$q_e(w)/c_e\le\lambda$.  Multiply the equality on active edges by $c_ew_e$ and sum.  Since inactive edges have zero weight and $\sum_ec_ew_e=1$,
\begin{align}
\lambda
=\sum_e w_eq_e(w)
=\Phi_{\mathcal E}(w)
\notag
\end{align}
by \eqref{eq:resistance-euler}.  This proves the necessity of \eqref{eq:resistance-kkt}.  Conversely, if \eqref{eq:resistance-kkt} holds, set
$\lambda=\Phi_{\mathcal E}(w)$ and
$\mu_e=\lambda c_e-q_e(w)\ge0$.  Stationarity, primal feasibility, dual feasibility, and complementary slackness all hold.  Convexity therefore makes $w$ optimal.

Finally, we prove the computable gap.
Let $w_*$ be an optimizer.  Convexity gives
\begin{align}
\Phi_{\mathcal E}(w_*)
\ge\Phi_{\mathcal E}(w)
+\ip{\nabla\Phi_{\mathcal E}(w)}{w_*-w}.
\notag
\end{align}
Rearranging,
\begin{align}
\Phi_{\mathcal E}(w)-\Psi_{L,\mathcal E}(\bx)
\le
\ip{\nabla\Phi_{\mathcal E}(w)}{w-w_*}
\le
\max_{u\in\mathcal W_{\mathcal E}}
\ip{\nabla\Phi_{\mathcal E}(w)}{w-u}.
\notag
\end{align}
By \eqref{eq:resistance-gradient} and \eqref{eq:resistance-euler},
\begin{align}
\ip{\nabla\Phi_{\mathcal E}(w)}{w}
=-\sum_e w_eq_e(w)
=-\Phi_{\mathcal E}(w).
\notag
\end{align}
The minimum of the linear functional
$u\mapsto\ip{\nabla\Phi_{\mathcal E}(w)}{u}$ over the cost simplex is attained at an edge vertex
$u=c_e^{-1}\mathbf1_{\{e\}}$, and its minimum is
$-\max_e q_e(w)/c_e$.  Therefore the last maximum equals
$\max_e q_e(w)/c_e-\Phi_{\mathcal E}(w)=g_{\mathcal E}(w)$.  This proves \eqref{eq:resistance-gap-bound}; nonnegativity follows from the left side or directly from the weighted average identity \eqref{eq:resistance-euler}.
\end{proof}

\begin{proof}[Proof of \Cref{cor:tree-design}]
For vertices $i,j$, define the effective resistance
\begin{align}
R_{ij}:=(e_i-e_j)^\top L_T(w)^\dagger(e_i-e_j).
\notag
\end{align}
First we prove
\begin{equation}\label{eq:kirchhoff-trace}
\sum_{1\le i<j\le n}R_{ij}=n\tr L_T(w)^\dagger.
\end{equation}
Indeed,
\begin{align}
\sum_{i<j}(e_i-e_j)(e_i-e_j)^\top=nI_n-\bone\bone^\top.
\notag
\end{align}
Therefore, using $L_T(w)^\dagger\bone=0$,
\begin{align*}
\sum_{i<j}R_{ij}
&=\tr\!\bigl(L_T(w)^\dagger
\sum_{i<j}(e_i-e_j)(e_i-e_j)^\top\bigr)\\
&=\tr\bigl(L_T(w)^\dagger(nI_n-\bone\bone^\top)\bigr)\\
&=n\tr L_T(w)^\dagger.
\end{align*}

We now verify the tree resistance formula directly.  Choose an arbitrary orientation of the tree and let $B_T\in\R^{n\times(n-1)}$ be its signed incidence matrix, so
$L_T(w)=B_TWB_T^\top$ with $W=\diag((w_e)_{e\in T})$.  For a fixed pair $i,j$, let $\chi_{ij}\in\R^{n-1}$ be the signed indicator of the unique oriented path from $i$ to $j$.  Then
$B_T\chi_{ij}=e_i-e_j$.  The columns of a tree incidence matrix are linearly independent, so this is the unique edge vector $y$ satisfying $B_Ty=e_i-e_j$.

Let $x=L_T(w)^\dagger(e_i-e_j)$.  Since $e_i-e_j\perp\bone$, one has
$L_T(w)x=e_i-e_j$.  Set $y:=WB_T^\top x$.  Then
$B_Ty=B_TWB_T^\top x=e_i-e_j$, so uniqueness gives $y=\chi_{ij}$.  Therefore
\begin{align*}
R_{ij}
&=(e_i-e_j)^\top x\\
&=x^\top L_T(w)x\\
&=(B_T^\top x)^\top W(B_T^\top x)\\
&=y^\top W^{-1}y\\
&=\sum_{e\text{ on the }i\text{--}j\text{ path}}\frac1{w_e}.
\end{align*}
An edge $e$ lies on exactly $s_e(n-s_e)$ unordered vertex-pair paths.  Hence
\begin{align}
\sum_{i<j}R_{ij}
=\sum_{e\in T}\frac{s_e(n-s_e)}{w_e}.
\notag
\end{align}
Combine this identity with \eqref{eq:kirchhoff-trace} to obtain
\begin{equation}\label{eq:tree-trace-resistance}
\tr L_T(w)^\dagger
=\sum_{e\in T}\frac{a_e}{w_e}.
\end{equation}

Under $\sum_ec_ew_e=1$, Cauchy--Schwarz gives
\begin{align*}
\bigl(\sum_{e\in T}\sqrt{a_ec_e}\bigr)^2
&=\bigl(\sum_{e\in T}\sqrt{\frac{a_e}{w_e}}\sqrt{c_ew_e}\bigr)^2\\
&\le\bigl(\sum_{e\in T}\frac{a_e}{w_e}\bigr)
\bigl(\sum_{e\in T}c_ew_e\bigr)\\
&=\tr L_T(w)^\dagger.
\end{align*}
Equality holds exactly when the two Cauchy--Schwarz vectors are proportional, equivalently
$\sqrt{a_e/w_e}=C\sqrt{c_ew_e}$ for every edge.  Thus
$w_e=C^{-1}\sqrt{a_e/c_e}$.  The cost normalization determines
$C=\sum_f\sqrt{a_fc_f}$ and proves \eqref{eq:tree-optimal-weights}.  Substitution into \eqref{eq:tree-trace-resistance} proves \eqref{eq:tree-optimal-value}.  Finally, apply \eqref{eq:resistance-tau} to the fixed-tree optimum to obtain \eqref{eq:tree-profile-certificate}.
\end{proof}

\subsection{Brownian Separation Computations}

\begin{proof}[Proof of \Cref{prop:brownian-oracle}]
For fixed $a$, the Rayleigh--Ritz formula gives
\begin{align}
\lambda_{\max}(N^{-1/2}\mathbf B(a)N^{-1/2})
=\sup_{\norm{v}_2=1}v^\top N^{-1/2}\mathbf B(a)N^{-1/2}v.
\notag
\end{align}
Setting $q=N^{-1/2}v$ and then taking the supremum over $a$ proves \eqref{eq:oracle-minmax}.

We next prove the sorting formula.  If $a_i$ and $a_j$ have opposite signs, then
$\mathbf B(a)_{ij}=0$.  If they have the same positive sign, then
$\mathbf B(a)_{ij}=\min\{|a_i|,|a_j|\}$; the same identity holds within the negative-sign block.  Therefore it is enough to consider one sign block.  Let its magnitudes be ordered as
$0<t_1\le\cdots\le t_m$, with corresponding coefficients $q_1,\ldots,q_m$.  The scalar layer-cake identity gives
\begin{align*}
\sum_{i,j=1}^m q_iq_j\min\{t_i,t_j\}
&=\int_0^\infty\bigl(\sum_{i:t_i\ge u}q_i\bigr)^2du\\
&=\sum_{k=1}^m\int_{t_{k-1}}^{t_k}
\bigl(\sum_{\ell=k}^m q_\ell\bigr)^2du\\
&=\sum_{k=1}^m(t_k-t_{k-1})
\bigl(\sum_{\ell=k}^m q_\ell\bigr)^2.
\end{align*}
Apply this identity to the positive and negative blocks and add them to obtain \eqref{eq:oracle-sorting}.  Sorting dominates the computational cost, and the suffix sums and final summation are linear-time operations.

For the derivative, expand the Brownian quadratic form as
\begin{align}\label{eq:oracle-expanded}
F_q(a)
&=\frac12\sum_{i,j}q_iq_j
\bigl(|a_i|+|a_j|-|a_i-a_j|\bigr)\\
&=Q\sum_{i=1}^n q_i|a_i|
-\frac12\sum_{i,j=1}^nq_iq_j|a_i-a_j|,
\nonumber
\end{align}
where $Q=\sum_jq_j$.  Under the stated nonzero and no-tie assumptions, every absolute-value term is differentiable.  Differentiating the first term in \eqref{eq:oracle-expanded} with respect to $a_k$ gives
$Qq_k\operatorname{sign}(a_k)$.  In the double sum, the terms with first index $k$ contribute
\begin{align}
-\frac12\sum_jq_kq_j\operatorname{sign}(a_k-a_j),
\notag
\end{align}
while the terms with second index $k$ contribute the same quantity.  The $i=j=k$ term is identically zero and contributes zero.  Adding the two contributions proves \eqref{eq:oracle-gradient-a}.

\enlargethispage{2\baselineskip}
Finally, let
$M(\theta)=N^{-1/2}\mathbf B(a(\theta))N^{-1/2}$.  For a simple largest eigenvalue with normalized eigenvector $v$, differentiating
$M(\theta)v(\theta)=\lambda(\theta)v(\theta)$ and using $v^\top v=1$ gives the standard identity
\begin{align}
d\lambda=v^\top(dM)v.
\notag
\end{align}
At the point under consideration, put $q=N^{-1/2}v$.  Then
$d\lambda=q^\top(d\mathbf B(a))q=\ip{\nabla_aF_q(a)}{da}$.  Since $da=J_a(\theta)d\theta$, the chain rule yields \eqref{eq:oracle-chain-rule}.
\end{proof}

\section{Finite Covariance-Path Proofs}
\label{app:cove-finite-path}

\subsection{Proof of the exact spherical Brownian mixture}

\begin{proof}
Let $u\in\R^d$.  Rotational invariance of the uniform measure on
$\mathbb S^{d-1}$ implies that the distribution of
$\ip{\Theta}{u}$ depends on $u$ only through $\norm{u}_2$.  If $u\ne0$, choose
an orthogonal matrix $Q$ satisfying $Qu=\norm{u}_2e_1$.  Since $Q\Theta$ has the
same distribution as $\Theta$,
\begin{align}
 \E\abs{\ip{\Theta}{u}}
 &=
 \E\abs{\ip{Q\Theta}{Qu}}
 =
 \norm{u}_2\E\abs{\Theta_1}
 =
 c_d\norm{u}_2.
 \label{cove:eq:spherical-absolute-moment}
\end{align}
The same identity is immediate for $u=0$.

Apply \Cref{cove:eq:spherical-absolute-moment} to
$\Sigma^{1/2}z$, $\Sigma^{1/2}z'$, and
$\Sigma^{1/2}\bigl(z-z'\bigr)$.  By linearity of expectation,
\begin{align}
 &\frac{1}{c_d}
 \E_{\Theta}
 \kB\bigl(
 \ip{\Theta}{\Sigma^{1/2}z},
 \ip{\Theta}{\Sigma^{1/2}z'}
 \bigr)
 \\
 &\quad=
 \frac{1}{2c_d}
 \E_{\Theta}
 \bigl(
 \abs{\ip{\Theta}{\Sigma^{1/2}z}}
 +
 \abs{\ip{\Theta}{\Sigma^{1/2}z'}}
 -
 \abs{\ip{\Theta}{\Sigma^{1/2}\bigl(z-z'\bigr)}}
 \bigr)
 \\
 &\quad=
 \frac{1}{2}
 \bigl(
 \norm{\Sigma^{1/2}z}_2
 +
 \norm{\Sigma^{1/2}z'}_2
 -
 \norm{\Sigma^{1/2}\bigl(z-z'\bigr)}_2
 \bigr)
 \\
 &\quad=
 k_{\Sigma}(z,z').
\end{align}
Each integrand is positive semidefinite by the scalar layer-cake identity
underlying \Cref{eq:threshold-matrix}.  An expectation of positive semidefinite kernels
is positive semidefinite, which proves the final assertion.
\end{proof}

\subsection{Proof of exact two-basis path evaluation}

\begin{proof}
By \Cref{cove:eq:covariance-path},
\begin{align}
 u^{\top}\Sigma_su
 &=
 u^{\top}\bigl(\bigl(1-s\bigr)I+s\Sigma_{\mathrm F}\bigr)u
 \\
 &=
 \bigl(1-s\bigr)u^{\top}u
 +
 su^{\top}\Sigma_{\mathrm F}u.
\end{align}
This is the first identity in \Cref{cove:eq:path-affine-squares}.  Replacing $u$ by
$u-v$ gives the second.  The kernel in
\Cref{cove:eq:covariance-brownian-kernel} is obtained by taking the nonnegative
square roots of these three affine squared quantities and forming their stated
linear combination.  Therefore the identity and Fisher squared components determine
every path member exactly.
\end{proof}

\subsection{Proof of training-only selection}

\begin{proof}
For fixed split seeds, every fold index is a deterministic function of the
training labels and seeds.  Inside a fold, the scaler, class means,
\Cref{cove:eq:fisher-diagonal}, kernel matrices, and fitted classifier are functions
only of fold-training features and labels.  The validation correctness values
$q_{i,\vartheta}$ are functions only of the labelled training sample.  The finite
maximization of \Cref{cove:eq:crossfit-accuracy}, including deterministic tie
breaking, is therefore measurable with respect to the labelled training sample
and split seeds.  No operation in this construction reads a test label.
\end{proof}

\subsection{Proof of explicit identity repair}

\begin{proof}
For every $t\in\mathcal S$,
\begin{align}
 \lambda_{\min}\bigl(\widehat N-K_t\bigr)
 &=
 \lambda_{\min}\bigl(
 \widehat N_{\mathrm{raw}}-K_t+\rho I
 \bigr)
 \\
 &=
 \lambda_{\min}\bigl(
 \widehat N_{\mathrm{raw}}-K_t
 \bigr)+\rho
 \\
 &\ge
 \widehat\lambda_{\min}
 +
 \max\bigl\{0,\mu-\widehat\lambda_{\min}\bigr\}
 \\
 &\ge
 \mu.
\end{align}
Thus $\widehat N-K_t\succeq\mu I$ for every path member.
\end{proof}

\subsection{Scope of the finite certificate}
The active family consists only of the five path kernels on one anchor sample.  The numerical repair proves finite-matrix domination at the reported tolerance.  It neither proves optimality for the unrestricted empirical BKL covariance family in \Cref{sec:setup,sec:profile} nor supplies a global recursive separation oracle.

\footnotesize
\setlength{\bibsep}{0pt}
\setlength{\baselineskip}{12pt}

\end{document}